\def\eqref#1{equation~\ref{#1}}
\def\1{\bm{1}}
\DeclareMathAlphabet{\mathsfit}{\encodingdefault}{\sfdefault}{m}{sl}
\SetMathAlphabet{\mathsfit}{bold}{\encodingdefault}{\sfdefault}{bx}{n}
\definecolor{myblue}{rgb}{0.1,0.2,0.75}
\definecolor{lightblue}{HTML}{A6E5FF}
\titlespacing\section{0pt}{0pt plus 0pt minus 2pt}{0pt plus 0pt minus 2pt}
\titlespacing\subsection{0pt}{0pt plus 0pt minus 2pt}{0pt plus 0pt minus 2pt}
\titlespacing\subsubsection{0pt}{0pt plus 0pt minus 2pt}{0pt plus 0pt minus 2pt}
\newcommand\DoToC{%
  \startcontents
  \printcontents{}{1}{\textbf{Table of Contents}\vskip3pt\hrule\vskip5pt}
  \vskip3pt\hrule\vskip5pt
}
\newtheorem{theorem}{Theorem}[section]
\newtheorem{proposition}[theorem]{Proposition}
\newtheorem{lemma}[theorem]{Lemma}
\newtheorem{corollary}[theorem]{Corollary}
\theoremstyle{definition}
\newtheorem{definition}[theorem]{Definition}
\theoremstyle{remark}
\newtheorem{remark}{Remark}
\renewcommand{\le}{\leqslant}
\renewcommand{\ge}{\geqslant}
\renewcommand{\leq}{\leqslant}
\newcommand\tsout{\bgroup\markoverwith{\textcolor{magenta}{\rule[0.5ex]{2pt}{0.4pt}}}\ULon}
\titlespacing\section{0pt}{0pt plus 0pt minus 1pt}{0pt plus 0pt minus 1pt}
\titlespacing\subsection{0pt}{0pt plus 0pt minus 1pt}{0pt plus 0pt minus 1pt}
\titlespacing\subsubsection{0pt}{0pt plus 0pt minus 1pt}{0pt plus 0pt minus 1pt}
\title{Equivariant Neural Functional Networks\\ for Transformers}
\author{%
  Viet-Hoang Tran\textsuperscript{1}\textsuperscript{*} \hspace{0.25cm} Thieu N. Vo\textsuperscript{1}\textsuperscript{*}
  \hspace{0.25cm} An Nguyen The\textsuperscript{2}\textsuperscript{*} \hspace{0.25cm}  Tho Tran Huu\textsuperscript{1} \\ \textbf{Minh-Khoi Nguyen-Nhat\textsuperscript{2} \hspace{0.25cm} Thanh Tran\textsuperscript{3} \hspace{0.25cm}  Duy-Tung Pham\textsuperscript{2} \hspace{0.25cm} Tan M. Nguyen\textsuperscript{1}} \\
  \textsuperscript{1}National University of Singapore \hspace{0.25cm} \textsuperscript{2}FPT Software AI Center, Vietnam \\ \textsuperscript{3}VinUniversity, Vietnam\\
  \texttt{\{hoang.tranviet, thotranhuu\}@u.nus.edu, \{thieuvo, tanmn\}@nus.edu.sg}\\
  \texttt{\{annt68, khoinnm1, tungpd10\}@fpt.com}, \texttt{21thanh.tq@vinuni.edu.vn}
}
\begin{document}

\begingroup
\renewcommand\thefootnote{$^*$}
\footnotetext{Equal contribution. Correspondence to: hoang.tranviet@u.nus.edu and tanmn@nus.edu.sg}
\endgroup
\maketitle


\begin{abstract}
This paper systematically explores neural functional networks (NFN) for transformer architectures. NFN are specialized neural networks that treat the weights, gradients, or sparsity patterns of a deep neural network (DNN) as input data and have proven valuable for tasks like learnable optimizers, implicit data representations, and weight editing. While NFN have been extensively developed for MLP and CNN, no prior work has addressed their design for transformers, despite their importance in modern deep learning. This paper aims to address this gap by systematically studying NFN for transformers. We first determine the maximal symmetric group of the weights in a multi-head attention module and a necessary and sufficient condition under which two sets of hyperparameters of the module define the same function. We then define the weight space of transformer architectures and its associated group action, leading to design principles for NFN in transformers. Based on these, we introduce Transformer-NFN, an NFN equivariant under this group action. Additionally, we release a dataset of over 125,000 Transformers model checkpoints trained on two datasets with two tasks, providing a benchmark for evaluating Transformer-NFN and encouraging further research on transformer training and performance. The code is publicly available at 
\href{https://github.com/MathematicalAI-NUS/Transformer-NFN}{\textcolor{cyan}{https://github.com/MathematicalAI-NUS/Transformer-NFN}}.
\end{abstract}

\section{Introduction}

Deep neural networks (DNNs) have become highly flexible and adaptable modeling tools, widely employed in domains such as natural language processing \citep{Rumelhart1986LearningIR, lstm, vaswani2017attention, BeRT,nielsen2025tight,gu2024mamba,vo2025demystifying}, computer vision \citep{He2015DeepRL, goingdepper, imagenetdnn,huang2020neural,nguyen2023mitigating}, and natural science \citep{RAISSI2019686, alphafold,portwood2019turbulence}. Increasing attention is being given to constructing specialized neural networks that treat the weights, gradients, or sparsity patterns of DNNs as inputs. These specialized networks are referred to as neural functional networks (NFNs) \citep{zhou2024permutation}.
NFNs have proven useful in various applications, such as designing learnable optimizers for neural network training \citep{bengio2013optimization, runarsson2000evolution, andrychowicz2016learning, metz2022velo}, extracting information from implicit representations of data \citep{stanley2007compositional, mildenhall2021nerf, runarsson2000evolution}, performing targeted weight editing \citep{sinitsin2020editable, de2021editing, mitchell2021fast}, evaluating policies \citep{harb2020policy}, and applying Bayesian inference with neural networks~\citep{sokota2021fine}.

The construction of NFNs presents significant challenges due to the complexity and high dimensionality of their structures. 
Earlier approaches sought to address this by constraining the training process, thereby restricting the weight space \citep{dupont2021generative, bauer2023spatial, de2023deep}. 
More recently, research has shifted towards the development of permutation-equivariant NFNs, capable of operating on neural network weights without such constraints \citep{navon2023equivariant, zhou2024permutation, kofinas2024graph, zhou2024neural}. 
These advancements have resulted in NFNs that exhibit equivariance to weight permutations, which correspond to neuron rearrangements within hidden layers. 
Additionally, NFNs that are equivariant to both permutations and operations such as scaling or sign-flipping have been proposed. These include graph-based message-passing methods as described in \citep{kalogeropoulos2024scale} and parameter sharing techniques in \citep{tran2024monomial}.
However, most of the recent neural functional networks are designed for Multilayer Perceptrons (MLPs) and Convolutional Neural Networks (CNNs).


Transformers have achieved remarkable success in the field of natural language processing (NLP) \citep{vaswani2017attention, bahdanau2015neural,nguyen2023a,nielsen2025elliptical}, powering large-scale language models. These models highlight the transformer architecture's capacity to perform a broad range of NLP-related tasks and its applicability to real-world scenarios. 
The influence of Transformers extends beyond NLP, finding applications in areas such as computer vision through Vision Transformers~\citep{dosovitskiy2020image}, reinforcement learning \citep{parisotto2020stabilizing, chen2021decision}, audio processing \citep{radford2023robust}, multi-modal data integration~\citep{radford2021learning}, and robotics \citep{monastirsky2022learning}. 
Despite the significance of Transformers, no systematic development of NFNs for Transformers has been realized to date. Addressing the design principle of NFNs for Transformers, as well as constructing equivariant NFNs capable of processing transformer models, has thus emerged as a critical research problem.

\subsection{Background}
Self-attention is the core component of a transformer block~\citep{cho-etal-2014-learning,lin2017a,parikh-etal-2016-decomposable,teo2025unveiling}. Self-attention learns to align tokens within an input sequence by assessing the relative importance of each token in relation to all other tokens. It then transforms each token into a weighted average of the feature representations of other tokens, with the weights determined by importance scores between token pairs. These importance scores allow each token to attend to others in the sequence, effectively capturing contextual representations~\citep{bahdanau2015neural,kim2017structured,vaswani2017attention,nguyen2022improving,abdullaev2025transformer}.

\paragraph{Self-attention.}
Let $X=[X_1,\ldots,X_L]^{\top} \in \mathbb{R}^{L \times D}$ be an input sequence of $L$ tokens with $D$ features. Each token is a vector $X_i \in \mathbb{R}^{D}$. 
The self-attention, denoted by $\operatorname{Head}$, transforms $X$ into the output sequence $\operatorname{Head}(X) \in \mathbb{R}^{L \times D_v}$ defined as
\begin{align}\label{eq:head}
    \operatorname{Head}(X;W^{(Q)},W^{(K)},W^{(V)}) = \operatorname{softmax} \left( \frac{(XW^{(Q)})(XW^{(K)})^{\top}}{\sqrt{D_k}} \right) XW^{(V)}, 
\end{align}
where the parameters $W^{(Q)},W^{(K)} \in \mathbb{R}^{D \times D_k}$ and $W^{(V)} \in \mathbb{R}^{D \times D_v}$ are called the query, key and value matrices. 
Here, $D_k$ and $D_v$ are given positive integers.

\paragraph{Multihead attention.}  To jointly attend to information from different representation subspaces at different positions, a multihead attention is used.
Let $h$ be a positive integer which represents the number of head.
The multihead attention transforms the input sequence $X \in \mathbb{R}^{L \times D}$ to an output sequence in $\mathbb{R}^{L \times D}$ defined by:
\begin{align}\label{eq:multihead}
    \operatorname{MultiHead}&\left(X;W^{(O)},\{W^{(Q,i)},W^{(K,i)},W^{(V,i)}\}_{i=1}^h\right)\nonumber\\
    &= \left( \bigoplus\limits_{i=1}^h \operatorname{Head}\left(X;W^{(Q,i)},W^{(K,i)},W^{(V,i)}\right) \right) W^{(O)},
\end{align}
where each head is a self-attention defined in~\eqref{eq:head}, and $\bigoplus$ is the concatenation operator. 
Here, the matrices $W^{(Q,i)},W^{(K,i)} \in \mathbb{R}^{D \times D_k}$, $W^{(V,i)} \in \mathbb{R}^{D \times D_v}$ and $W^{(O)} \in \mathbb{R}^{hD_v \times D}$ are learnable from input data for some postive integers $D_k$ and $D_v$.

\subsection{Contribution}
This paper provides a systematic study on the development of a neural functional network (NFN) for transformer architectures. 
To achieve this, we present three essential components for the study: (1) a design principle of NFNs for Transformers that incorporates the maximal symmetric group for the multi-head attention module, (2) an equivariant NFN for Transformers, which we will call Transformer-NFN, and (3) a benchmark dataset for testing the applicability and efficiency of Transformer-NFN. In particular, our contributions are four-fold:

\begin{enumerate}[leftmargin=25pt]
    \item We determine the maximal symmetric group of the weights in a multi-head attention module, establishing the necessary and sufficient conditions under which two sets of hyperparameters for a multi-head attention module define the same function.
    
    \item We formally define the weight space of a transformer architecture, along with a group action on this weight space. 
    In conjunction with the maximal symmetric group of the weights in multi-head attention modules, we characterize the design principles for NFNs in transformer architectures.

    \item We introduce Transformer-NFN, an NFN for transformer architectures that is equivariant under the specified group action. 
    The main building block of Transformer-NFN is an equivariant polynomial layer derived from a parameter-sharing strategy.

    \item We release Small Transformer Zoo dataset, which consists of more than 125,000 Transformers model checkpoints trained on two different tasks: digit image classification on MNIST and text topic classificaction on AGNews. To our knowledge, this the first dataset of its kind. 
    This dataset serves as a benchmark for testing the applicability and efficiency of our Transformer-NFN, while also encouraging further research in this field to gain a deeper understanding of transformer network training and performance.
\end{enumerate}

We empirically demonstrate that Transformer-NFN consistently outperforms other baseline models on our constructed datasets. Through comprehensive ablation studies, we emphasize Transformer-NFN's ability to effectively capture information within the transformer block, establishing it as a robust predictor of model generalization.

{\bf Organization.} Following a review of related works in Section~\ref{sec:related_work}, we derive the maximal symmetric group of multihead attention in Section~\ref{sec:maximal_group_multihead}. In Section~\ref{sec:weight_space}, we construct the weight space of a standard transformer block and define a corresponding group action. Section~\ref{sec:NFNs} introduces a family of equivariant NFNs for Transformers, referred to as Transformer-NFNs. We then present the setting and details of the Small Transformer Zoo dataset in Section~\ref{sec:dataset}. Finally, in Section~\ref{sec:experiment}, we evaluate the applicability and efficiency of Transformer-NFNs on this dataset.

\section{Related Work}\label{sec:related_work}

\textbf{Symmetries of weight spaces.} The exploration of symmetries within the weight spaces of neural networks, which relates to assessing the functional equivalence of these networks, is a well-established field of study \citep{allen2019convergence, du2019gradient, frankle2018lottery, belkin2019reconciling, novak2018sensitivity}. This topic was first introduced by Hecht-Nielsen in \citep{hecht1990algebraic}. Following this, numerous studies have yielded insights for various network architectures, as detailed in \citep{chen1993geometry, fefferman1993recovering, kuurkova1994functionally, albertini1993neural, albertini1993identifiability, bui2020functional}.

\textbf{Neural functional networks.} Recent studies have aimed to develop representations for trained classifiers to evaluate their generalization performance and understand neural network behavior \citep{baker2017accelerating,eilertsen2020classifying,unterthiner2020predicting,schurholt2021self,schurholt2022hyper,schurholt2022model}. Notably, low-dimensional encodings for Implicit Neural Representations (INRs) have been created for various downstream tasks \citep{dupont2022data,de2023deep}. 
\cite{schurholt2021self} proposed neuron permutation augmentations, and other research has focused on encoding and decoding neural network parameters for reconstruction and generative purposes \citep{peebles2022learning,ashkenazi2022nern,knyazev2021parameter,erkocc2023hyperdiffusion}.

\textbf{Equivariant NFNs for MLPs and CNNs.} Recent advancements have made considerable strides in addressing the limitations of permutation equivariant neural networks through the introduction of permutation equivariant layers. These layers employ intricate weight-sharing patterns \citep{navon2023equivariant,zhou2024permutation,kofinas2024graph,zhou2024neural}, as well as set-based \citep{andreis2023set} and graph-based structures of the network's weights \citep{lim2023graph,kofinas2024graph,zhou2024universal}, to maintain equivariance. 
Moreover, monomial equivariant NFNs, which are equivariant to both permutations and scaling, have been proposed in \citep{kalogeropoulos2024scale} utilizing a graph-based message-passing mechanism and in \citep{tran2024monomial,vo2024polynomials} through a parameter sharing mechanism.


\section{Maximal Symmetric Group of A Multi-head Attention}\label{sec:maximal_group_multihead}

As the first step in characterizing a principal design of NFNs for Transformers, we need to decide when two tuples of matrices with appropriate sizes, say $\left(\left\{W^{(Q,i)}, W^{(K,i)}, W^{(V,i)}\right\}_{i=1}^h, W^{(O)}\right)$ and $\left(\left\{\overline{W}^{(Q,i)}, \overline{W}^{(K,i)}, \overline{W}^{(V,i)}\right\}_{i=1}^h, \overline{W}^{(O)}\right)$, define the same $\operatorname{Multihead}$ map. We provide a complete answer for this step in this section, thus characterizing the maximal symmetric group of the weights of $\operatorname{Multihead}$.

Let us consider the $\operatorname{MultiHead}$ map with $h$ heads defined in Equation~(\ref{eq:multihead}).
We can rewrite $\operatorname{MultiHead}$ as:
\begin{align*}
\operatorname{MultiHead}&\left(X;W^{(O)}, \left \{W^{(Q,i)},W^{(K,i)},W^{(V,i)} \right\}_{i=1}^h\right)\nonumber\\
    &= \left( \bigoplus\limits_{i=1}^h \operatorname{Head}\left(X;W^{(Q,i)},W^{(K,i)},W^{(V,i)}\right) \right) W^{(O)} \\
     &= \sum\limits_{i=1}^h \operatorname{Head}\left(X;W^{(Q,i)},W^{(K,i)},W^{(V,i)}\right) W^{(O,i)} \\
    &= \sum\limits_{i=1}^h 
    \operatorname{softmax} \left( X  \cdot \left( \frac{W^{(Q,i)} \cdot \left(W^{(K,i)}\right)^{\top}}{\sqrt{D_k}} \right) \cdot X^\top \right) \cdot X \cdot \left(W^{(V,i)} \cdot W^{(O,i)}\right),
\end{align*}
where $W^{(O)} = \left(W^{(O,1)}, \ldots, W^{(O,h)}\right)$ with each $W^{(O,i)} \in \mathbb{R}^{D_v \times D}$. 
Here, the matrices ${\left(W^{(Q)}\right)\cdot\left(W^{(K)}\right)^{\top}}/{\sqrt{D_k}}$ and $W^{(V,i)} \cdot W^{(O,i)}$ have the same size $D \times D$. 
Based on this observation, we define for each positive integer $h$ and each sequence of matrices $\{A_i,B_i\}_{i=1}^h$ in $\mathbb{R}^{D \times D}$ a map
\begin{align*}
    F\left(\cdot; \{A_i,B_i\}_{i=1}^h\right) ~ \colon ~  \bigsqcup_{l>0} \mathbb{R}^{l \times D} \rightarrow \bigsqcup_{l>0} \mathbb{R}^{l \times D},
\end{align*}
defined by
\begin{align*}
    F(X; \{A_i,B_i\}_{i=1}^h) \coloneqq \sum_{i=1}^h \operatorname{softmax} \left( X  \cdot A_i \cdot X^\top \right) \cdot X \cdot B_i,
\end{align*}
for each $X \in \mathbb{R}^{l \times D}$.
It is noted that, for each integer $l >0$, the image of $\mathbb{R}^{l \times D}$ via $F$ is contained in  $\mathbb{R}^{l \times D_v}$ and $F$ can be viewed as a generalization of $\operatorname{MultiHead}$.
With this setting at hand, we proved that:




\begin{theorem}[Independence of heads in multi-head attention]\label{thm:multi_head_zero_mainbody}
    Let $D$ be a positive integer. Assume that for a positive integer $h$, matrices $A_1, A_2, \ldots, A_h \in \mathbb{R}^{D \times D}$ and $B_1, B_2, \ldots, B_h \in \mathbb{R}^{D \times D}$, we have
    \begin{align} \label{eq:main-theorem} 
        F\left(X ;\left\{A_i,B_i\right\}_{i=1}^h \right) = 0, 
    \end{align}
    for all positive integers $L$ and $X \in \mathbb{R}^{L \times D}$. Then, if $A_1, A_2, \ldots, A_h$ are pairwise distinct, then 
    \begin{align*}
        B_1 = \ldots = B_h = 0.
    \end{align*}
\end{theorem}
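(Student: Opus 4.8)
The plan is to feed \eqref{eq:main-theorem} very short sequences and exploit the real-analyticity of $\operatorname{softmax}$: each sequence length produces algebraic constraints on the $B_i$, and the hypothesis that the $A_i$ are pairwise distinct is what ultimately forces every $B_i$ to vanish. First I would record two trivial pieces of information: for $L=1$, $\operatorname{softmax}$ of a $1\times1$ matrix is the constant $1$, so \eqref{eq:main-theorem} becomes $X\bigl(\sum_{i=1}^h B_i\bigr)=0$ for all $X\in\mathbb{R}^{1\times D}$, giving $\sum_{i=1}^h B_i=0$.

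Next, take $L=2$ with $X=(u,v)^{\top}$ and read off the second output row. A short computation shows that the $r=2$ row of $\operatorname{softmax}(XA_iX^{\top})X$ equals $\sigma\!\bigl(v^{\top}A_i(u-v)\bigr)u^{\top}+\bigl(1-\sigma(v^{\top}A_i(u-v))\bigr)v^{\top}$, where $\sigma$ is the logistic sigmoid. Using $\sum_i B_i=0$ to cancel the $v^{\top}$-terms and setting $w=u-v$, \eqref{eq:main-theorem} yields
\[
\sum_{i=1}^{h}\sigma\!\bigl(v^{\top}A_i w\bigr)\,(w^{\top}B_i)=0\qquad\text{for all }v,w\in\mathbb{R}^{D}.
\]
Replacing $v$ by $tv$ and expanding in $t$ at $0$ — using that $\sigma(x)-\tfrac12$ is odd with all of its odd-order Taylor coefficients nonzero (the tangent numbers) — every coefficient of $t^{2k+1}$ must vanish, i.e.
\[
\sum_{i=1}^{h}\bigl(v^{\top}A_i w\bigr)^{2k+1}(w^{\top}B_i)=0\qquad\text{for all }k\ge0\text{ and all }v,w\in\mathbb{R}^{D}.
\]

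Now fix $v,w$ generic and put $\mu_i:=v^{\top}A_i w$. Grouping indices by the value of $\mu_i^{2}$ and reading the displayed identities for $k=0,1,2,\dots$ as a Vandermonde system in the distinct values of $\mu_i^{2}$ forces $\sum_{i:\,\mu_i^{2}=\nu}\mu_i\,(w^{\top}B_i)=0$ for each occurring value $\nu$. For generic $(v,w)$ one has $\mu_i=0$ only when $A_i=0$, and $\mu_i^{2}=\mu_j^{2}$ only when $A_i=\pm A_j$; since the $A_i$ are pairwise distinct, the sole collisions come from antipodal pairs $A_i=-A_j$. Hence for every head $i$ that is neither zero nor in an antipodal pair we get $w^{\top}B_i=0$ for the generic $w$, and since $w\mapsto w^{\top}B_i$ is linear this gives $B_i=0$. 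For an antipodal pair $A_j=-A_i$ the same block gives $\mu_i(w^{\top}B_i-w^{\top}B_j)=0$, hence $B_i=B_j$, and the at most one head with $A_i=0$ is recovered at the very end from $\sum_i B_i=0$.

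It remains to kill the antipodal pairs: after discarding the heads already shown to vanish we must prove that $\sum_{p}\bigl(\operatorname{softmax}(XA_pX^{\top})+\operatorname{softmax}(-XA_pX^{\top})\bigr)XB_p\equiv0$ forces all $B_p=0$, where the $A_p$ are distinct with $A_p\ne-A_q$ for $p\ne q$. Here length-two inputs are provably useless, since $\operatorname{softmax}(M)+\operatorname{softmax}(-M)=\mathbf{1}\mathbf{1}^{\top}$ for every $2\times2$ matrix $M$; the key observation is that for $L\ge3$ the even map $\operatorname{softmax}(M)+\operatorname{softmax}(-M)$ is no longer constant. Using $L=3$, $X=(u,v,z)^{\top}$, scaling $u-z=t\hat u$ and letting $t\to\infty$ collapses each head to a $\tanh$-type expression in $z^{\top}A_p(v-z)$, and comparing the limits obtained for opposite signs of $\hat u$ isolates a single head and gives $(v-z)^{\top}B_p=0$, hence $B_p=0$. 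I expect this degenerate case to be the main obstacle — both conceptually (one must pass to sequence length $\ge3$ precisely because of antipodal pairs) and technically: one has to choose the length-$\ge3$ configuration and the direction $\hat u$ generically so that the required sign patterns of $z^{\top}A_p\hat u$ are attainable, and more generally the whole argument rests on careful genericity bookkeeping (choosing $v,w,z,\hat u$ outside a countable union of proper subvarieties so that all relevant scalars are distinct and nonzero).
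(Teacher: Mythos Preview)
Your route is genuinely different from the paper's. The paper feeds $X=(x,y,\dots,y)^{\top}$ with $y$ repeated $L-1$ times, reduces (much as you do) to an identity $z\cdot\sum_i (e^{xA_iz^{\top}}+L)^{-1}b_i=0$, and then proves a full-rank lemma: for pairwise distinct reals $s_i$ there exist $t_1,\dots,t_h$ \emph{and a positive integer $L$} making the matrix $\bigl((e^{t_is_j}+L)^{-1}\bigr)_{ij}$ invertible. The key point, which the paper flags explicitly in a remark, is that $L$ must be allowed to vary; for fixed $L$ the lemma can fail (their example: $h\ge4$ with $s_1+s_2=s_3+s_4=0$). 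So where you confront the antipodal degeneracy by passing from $L=2$ to $L=3$, the paper dissolves it by ranging over all sequence lengths and never isolates antipodal pairs at all. Your approach has the appeal of using only very short sequences and the same Taylor/Vandermonde mechanism throughout; the paper's is more uniform but needs the nontrivial rank lemma, whose proof is the technical heart of their argument.

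There is a real gap in your $L=3$ step. After the $t\to\infty$ limit and the $\hat u\leftrightarrow-\hat u$ comparison one obtains, with $w=v-z$, an identity of the shape
\[
w^{\top}\sum_p \epsilon_p\,\tanh\!\bigl(\tfrac12\, z^{\top}A_pw\bigr)\,B_p=0,\qquad \epsilon_p=\operatorname{sgn}(z^{\top}A_p\hat u).
\]
This is still a \emph{sum} over all surviving heads: replacing $\hat u$ by $-\hat u$ flips every $\epsilon_p$ simultaneously, and in general the achievable sign vectors $(\epsilon_p)_p$ are only the chambers of a hyperplane arrangement in $\hat u$-space, so once the number of antipodal pairs exceeds $D$ you cannot isolate a single $B_p$ by choosing sign patterns. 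The repair is to run your own $L=2$ mechanism once more on this identity: scale $z\mapsto sz$ (which keeps each $\epsilon_p$ fixed for $s>0$), expand the resulting entire function of $s$ at $0$ to get $w^{\top}\sum_p(\epsilon_p\eta_p)^{2k+1}B_p=0$ with $\eta_p=z^{\top}A_pw$, and note that the representatives now satisfy $A_p\neq\pm A_q$ for $p\neq q$, so the squares $\eta_p^{2}$ are generically pairwise distinct and the same Vandermonde step yields $w^{\top}B_p=0$ for generic $w$, hence $B_p=0$.
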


\begin{remark}
    Roughly speaking, the above theorem says that, in the multi-head attention, each individual head plays its own unique role.
\end{remark}

Based on the above theorem, we  characterize the maximal symmetric group of the weights of $\operatorname{MultiHead}$ in the following theorem.

\begin{theorem}[Maximal symmetric group of multi-head attentions]\label{thm:multihead_equality_mainbody}
    Let $h,D,D_k,D_v$ be positive integers.
    Let $\left(W^{(Q,i)},W^{(K,i)},W^{(V,i)},W^{(O,i)}\right)$ and $\left(\overline{W}^{(Q,i)},\overline{W}^{(K,i)},\overline{W}^{(V,i)},\overline{W}^{(O,i)}\right)$ be arbitrary elements of $\mathbb{R}^{D \times D_k} \times \mathbb{R}^{D \times D_k} \times \mathbb{R}^{D \times D_v} \times \mathbb{R}^{D_v \times D}$ with $i=1,\ldots,h$.
    Assume that
    \begin{itemize}
        \item[(a)] $\max(D_k,D_v) \leq D$,
        \item[(b)] the matrices $W^{(Q,i)} \cdot \left(W^{(K,i)}\right)^{\top}$, $\overline{W}^{(Q,i)} \cdot \left(\overline{W}^{(K,i)}\right)^{\top}$, $W^{(V,i)} \cdot W^{(O,i)}$, and $\overline{W}^{(V,i)} \cdot \overline{W}^{(O,i)}$ are of full rank,
        \item[(c)] the matrices $W^{(Q,i)} \cdot \left( W^{(K,i)} \right)^{\top}$ with $i=1,\ldots,h$ are pairwise distinct,
        \item[(d)] the matrices $\overline{W}^{(Q,i)} \cdot \left( \overline{W}^{(K,i)} \right)^{\top}$ with $i=1,\ldots,h$ are pairwise distinct.
    \end{itemize} 
    Then the following are equivalent:
    \begin{enumerate}
        \item For every positive integer $L$ and every $X \in \mathbb{R}^{L \times D}$, we always have
    \begin{align}
        &\operatorname{MultiHead}\left(X;\left\{W^{(Q,i)},W^{(K,i)},W^{(V,i)},W^{(O,i)}\right\}_{i=1}^h\right) \nonumber\\
        &\qquad =
        \operatorname{MultiHead}\left(X;\left\{\overline{W}^{(Q,i)},\overline{W}^{(K,i)},\overline{W}^{(V,i)},\overline{W}^{(O,i)}\right\}_{i=1}^h\right).
    \end{align}

    \item There exist matrices $M^{(i)} \in \operatorname{GL}_{D_k}(\mathbb{R})$ and $N^{(i)} \in \operatorname{GL}_{D_v}(\mathbb{R})$ for each $i=1,\ldots,h$, as well as a permutation $\tau \in \mathcal{S}_h$, such that
    \begin{align}
        &\left(\overline{W}^{(Q,\tau(i))},\overline{W}^{(K,\tau(i))},\overline{W}^{(V,\tau(i))},\overline{W}^{(O,\tau(i))}\right) \nonumber\\
        &\qquad =
        \left(W^{(Q,i)} \cdot (M^{(i)})^{\top}, W^{(K,i)} \cdot (M^{(i)})^{-1}, W^{(V,i)} \cdot N^{(i)}, (N^{(i)})^{-1} \cdot W^{(O,i)}\right).
    \end{align}
    \end{enumerate}
\end{theorem}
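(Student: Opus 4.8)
The plan is to prove $2 \Rightarrow 1$ by direct substitution and $1 \Rightarrow 2$ by combining Theorem~\ref{thm:multi_head_zero_mainbody} with an elementary rank-factorization lemma. Throughout, I would abbreviate $A_i := W^{(Q,i)}(W^{(K,i)})^{\top}/\sqrt{D_k}$, $B_i := W^{(V,i)} W^{(O,i)}$, and likewise $\overline{A}_i,\overline{B}_i$ for the barred weights, so that by the rewriting of $\operatorname{MultiHead}$ preceding the theorem, statement~1 is exactly $F(X;\{A_i,B_i\}_{i=1}^h) = F(X;\{\overline{A}_i,\overline{B}_i\}_{i=1}^h)$ for every $X$. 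For $2 \Rightarrow 1$ I would simply substitute: since $\big(W^{(Q,i)}(M^{(i)})^{\top}\big)\big(W^{(K,i)}(M^{(i)})^{-1}\big)^{\top} = W^{(Q,i)}(W^{(K,i)})^{\top}$ and $\big(W^{(V,i)}N^{(i)}\big)\big((N^{(i)})^{-1}W^{(O,i)}\big) = W^{(V,i)}W^{(O,i)}$, the $\tau(i)$-th summand of the barred $\operatorname{MultiHead}$ coincides with the $i$-th summand of the unbarred one; summing over $i$ and using that $\tau$ is a bijection of $\{1,\dots,h\}$ gives equality of functions.

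For $1 \Rightarrow 2$, the idea is to subtract the two sides and feed the result to Theorem~\ref{thm:multi_head_zero_mainbody}. Equality of functions gives $F\big(X;\{A_i,B_i\}_{i=1}^h \sqcup \{\overline{A}_i,-\overline{B}_i\}_{i=1}^h\big) = 0$ for all $X$, a generalized multi-head with $2h$ terms whose ``$A$''-matrices need not be pairwise distinct. I would pool terms sharing an ``$A$''-matrix: for each value $v$ occurring among the $A_i$ and $\overline{A}_j$, conditions (c) and (d) ensure $v = A_i$ for at most one $i$ and $v = \overline{A}_j$ for at most one $j$, so the pooled coefficient of $v$ is one of $B_i$, $-\overline{B}_j$, or $B_i - \overline{B}_j$. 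The pooled family has pairwise distinct ``$A$''-matrices, so Theorem~\ref{thm:multi_head_zero_mainbody} forces every pooled coefficient to vanish. Condition (b) makes every $B_i$ and every $\overline{B}_j$ invertible, hence nonzero, which excludes the first two cases; therefore $\{A_i\}_{i=1}^h = \{\overline{A}_j\}_{j=1}^h$ and there is a permutation $\tau \in \mathcal{S}_h$ with $\overline{A}_{\tau(i)} = A_i$ and, from the vanishing of the pooled coefficients, $\overline{B}_{\tau(i)} = B_i$ for every $i$; equivalently $\overline{W}^{(Q,\tau(i))}(\overline{W}^{(K,\tau(i))})^{\top} = W^{(Q,i)}(W^{(K,i)})^{\top}$ and $\overline{W}^{(V,\tau(i))}\overline{W}^{(O,\tau(i))} = W^{(V,i)}W^{(O,i)}$.

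It then remains to upgrade these two matrix identities to the gauge form in statement~2, which I would do with the lemma: if $P_1Q_1^{\top} = P_2Q_2^{\top}$ is a $D\times D$ matrix of rank $r$ with $P_1,P_2,Q_1,Q_2 \in \mathbb{R}^{D\times r}$, then each factor has full column rank $r$ (its rank lies between $r$ and $\min(D,r)=r$), the column spaces of $P_1$ and $P_2$ both equal the column space of the product, hence $P_2 = P_1G$ for some $G \in \operatorname{GL}_r(\mathbb{R})$, and left-cancelling the injective $P_1$ in $P_1GQ_2^{\top} = P_1Q_1^{\top}$ gives $Q_2 = Q_1(G^{\top})^{-1}$. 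Applying this with $r=D_k$ to $(P_1,Q_1)=(W^{(Q,i)},W^{(K,i)})$ and $(P_2,Q_2)=(\overline{W}^{(Q,\tau(i))},\overline{W}^{(K,\tau(i))})$ and setting $M^{(i)}:=G^{\top}$ yields the $Q$- and $K$-relations; applying it with $r=D_v$ to $(P_1,Q_1)=(W^{(V,i)},(W^{(O,i)})^{\top})$ and $(P_2,Q_2)=(\overline{W}^{(V,\tau(i))},(\overline{W}^{(O,\tau(i))})^{\top})$ and setting $N^{(i)}:=G$ yields the $V$- and $O$-relations. Condition (a) is exactly what guarantees that $r=D_k$ and $r=D_v$ are the genuine full ranks here.

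I expect the crux to be the direction $1 \Rightarrow 2$, and within it the pooling step: one must invoke Theorem~\ref{thm:multi_head_zero_mainbody} on the merged $2h$-term family and use conditions (b)--(d) in precisely the right way to (i) exclude heads appearing in only one of the two tuples and (ii) extract a bona fide permutation $\tau$ rather than a mere correspondence. The concluding linear-algebra lemma is routine, but the transpose placements must be tracked carefully so that the relations come out in the exact stated form.
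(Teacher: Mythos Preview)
Your proposal is correct and follows essentially the same route as the paper: reduce statement~1 to equality of the pairs $(A_i,B_i)$ up to permutation via Theorem~\ref{thm:multi_head_zero_mainbody} (the paper packages your pooling step as Corollary~\ref{cor:multihead_symmetry}), then conclude by rank factorization, for which the paper simply cites \citep{piziak1999full} whereas you spell out the column-space argument. One small wording slip: when $D_v<D$ the matrices $B_i=W^{(V,i)}W^{(O,i)}$ have rank $D_v$ but are not invertible; your argument only uses $B_i\neq 0$, which does follow, so the logic is unaffected.
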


\begin{remark}[Necessity of the assumptions $(a)$, $(b)$, $(c)$, and $(d)$]
    The four assumptions in Theorem~\ref{thm:multihead_equality_mainbody} are necessary for the implication $(1.) \Rightarrow (2.)$. Indeed, the assumptions $(c)$ and $(d)$ are needed to utilize Theorem~\ref{thm:multi_head_zero_mainbody}. In addition, it follows from Theorem~\ref{thm:multi_head_zero_mainbody} and item $(1.)$ that 
    $$\begin{array}{ccccl}
         W^{(Q,i)} \cdot (W^{(K,i)})^{\top} &=& \overline{W}^{(Q,\tau(i))} \cdot \left(\overline{W}^{(K,\tau(i))}\right)^{\top} &=& A_i, \, \text{and}\\
        W^{(V,i)} \cdot W^{(O,i)} &=& \overline{W}^{(V,\tau(i))} \cdot \overline{W}^{(O,\tau(i))} &=& B_i 
    \end{array}$$
    for some matrices $A_i, B_i \in \mathbb{R}^{D \times D}$. At this point, the assumptions $(a)$ and $(b)$ are necessary to utilize the rank decomposition of $A_i$ and $B_i$ to obtain the symmetries of the factors \citep{piziak1999full}. It worth noting that these assumptions are generically satisfied in practical implementations since the parameters of $\operatorname{MultiHead}$ are randomly initialized with dimensions $D_k \leq D$ and $D_v \leq D$. 
\end{remark}

\begin{remark}[Maximal symmetric group of multi-head attentions]
    Theorem~\ref{thm:multihead_equality_mainbody} suggests that a multi-head attention in practice is characterized by the products $W^{(Q,i)} \cdot \left(W^{(K,i)}\right)^{\top}$ and $W^{(V,i)} \cdot W^{(O,i)}$ at each head. 
    As a consequence, we can consider the group $\mathcal{S}_h \times \operatorname{GL}_{D_k}(\mathbb{R})^h \times \operatorname{GL}_{D_v}(\mathbb{R})^h$ to be the maximal symmetric group of the $\operatorname{MultiHead}$ map.
\end{remark}


\section{Weight Space of a Transformer Block and Group Action}\label{sec:weight_space}

In this section, we construct the weight space of a standard transformer block and define a group action on it. 
Let us first recall necessary notations from permutation matrices before defining the weight space in Section~\ref{subsec:weight_space} and the group action in Section~\ref{subsec:group_action}.

\subsection{Permutation Matrices}

\begin{definition}
Let $n$ be a positive integer. 
A matrix of size $n \times n$ is called a \emph{permutation matrix} if it has exactly one entry equal to $1$ in each row and each column, and zeros elsewhere. We denote by $\mathcal{P}_n$ the set of such all permutation matrices.
\end{definition}

\begin{remark}[Permutation matrix vs.\ permutation]
    For every permutation matrix \( P \in \mathcal{P}_n \), there exists a unique permutation \( \pi \in \mathcal{S}_n \) such that \( P \) is obtained by permuting the \( n \) columns of the identity matrix \( I_n \) according to \( \pi \). In this case, we write \( P := P_{\pi} \) and call it the \textit{permutation matrix} corresponding to \( \pi \). Here, \( \mathcal{S}_n \) is the group of all permutations of the set \( \{1, 2, \ldots, n\} \).
    In particular, for permutation matrices $P_{\pi} \in \mathcal{P}_n$ and $P_\sigma \in \mathcal{P}_m$, we have
    \begin{align}
        P_{\pi} \cdot \mathbf{x} = \left(x_{\pi^{-1}(1)}, x_{\pi^{-1}(2)}, \ldots, x_{\pi^{-1}(n)}\right)^\top,
    \end{align}
    and
    \begin{align}
        \left( \ P_\pi \cdot A \cdot P_\sigma\right)_{ij} = A_{\pi^{-1}(i) \sigma(j)},
    \end{align}
    for all vector $\mathbf{x} \in \mathbb{R}^n$ and matrix $A 
    \in \mathbb{R}^{n \times m}$.
\end{remark}

\subsection{Weight space}\label{subsec:weight_space}

A standard transformer block, which will be denoted by $\operatorname{Attn}$ from now on, contains a multi-head attention and a two-linear-layer ReLU MLP as well as a layer normalization.
Formally, $\operatorname{Attn}$ transforms an input sequence $X \in \mathbb{R}^{L \times D}$ to an output sequence $\operatorname{Attn}(X) \in \mathbb{R}^{L \times D}$ defined as follows:
\begin{align}\label{eq:attn_MLP_mainbody}
    \operatorname{Attn}(X) &= \operatorname{LayerNorm} \left( \operatorname{ReLU} \left( \hat{X} \cdot  W^{(A)}+ \mathbf{1}_L \cdot b^{(A)} \right) \cdot  W^{(B)} + \mathbf{1}_L \cdot b^{(B)} \right),
\end{align}
where
\begin{align}\label{eq:attn_multihead_mainbody}
    \hat{X} &= \operatorname{LayerNorm} \left(\operatorname{MultiHead} \left(X;\{W^{(Q,i)},W^{(K,i)},W^{(V,i)},W^{(O,i)}\}_{i=1}^h \right) \right).
\end{align}
with $\mathbf{1}_L = [1,\ldots,1]^{\top} \in \mathbb{R}^{L \times 1}$.
This transformer block is parametrized by the following matrices:
\begin{itemize}
    \item $\left(W^{(Q,i)} ,W^{(K,i)}, W^{(V,i)},W^{(O,i)}\right)_{i=1,\ldots,h} \in \left( \mathbb{R}^{D \times D_k} \times \mathbb{R}^{D \times D_k} \times \mathbb{R}^{D \times D_v} \times \mathbb{R}^{D_v \times D} \right)^h $ inside the multi-head attention, and
    
    
    \item weights $(W^{(A)},W^{(B)}) \in \mathbb{R}^{D \times D_A} \times \mathbb{R}^{D_A \times D}$ and biases $(b^{(A)},b^{(B)}) \in  \mathbb{R}^{1 \times D_A} \times \mathbb{R}^{1 \times D}$ inside the two-linear-layers $\operatorname{ReLU}$ MLP.
\end{itemize}

The weight space of this transformer block, say $\mathcal{U}$, is the vector space:
\begin{align}\label{eq:weight_space_U_mainbody}
    \mathcal{U} =
    \left( \mathbb{R}^{D \times D_k} \times 
    \mathbb{R}^{D \times D_k}
    \times \mathbb{R}^{D \times D_v} \times \mathbb{R}^{D_v \times D} \right)^h
    \times \big( \mathbb{R}^{D \times D_A} \times \mathbb{R}^{D_A \times D} \big)
    \times \big( \mathbb{R}^{1 \times D_A} \times \mathbb{R}^{1 \times D} \big).
\end{align}
We write each element $U \in \mathcal{U}$ in the form 
\begin{align}\label{eq:U_element_mainbody}
    U = \Bigl( \left([W]^{(Q,i)},[W]^{(K,i)},[W]^{(V,i)},[W]^{(O,i)}\right)_{i=1,\ldots,h},\left([W]^{(A)},[W]^{(B)}\right),\left([b]^{(A)},[b]^{(B)}\right) \Bigr).
\end{align}
To emphasize the weights of each $\operatorname{Attn}$ map, we will write $\operatorname{Attn}(X; U)$ instead of $\operatorname{Attn}(X)$. In particular, each element $U \in \mathcal{U}$ will define a map $\operatorname{Attn}(\cdot; U) \colon \mathbb{R}^{L \times D} \to \mathbb{R}^{L \times D}$, which maps a sequence $X \in \mathbb{R}^{L \times D}$ to a sequence $\operatorname{Attn}(X; U) \in \mathbb{R}^{L \times D}$ defined by~\eqref{eq:attn_MLP_mainbody} and~\eqref{eq:attn_multihead_mainbody}. In the next section, we will find a sufficient condition for a pair $U$ and $U'$ in $\mathcal{U}$ such that $\operatorname{Attn}(\cdot; U) = \operatorname{Attn}(\cdot; U')$.

\subsection{Group action on weight space}
\label{subsec:group_action}

Let us consider the weight space $\mathcal{U}$ of a standard transformer block defined in Equation~(\ref{eq:weight_space_U_mainbody}) and Equation~(\ref{eq:U_element_mainbody}). We consider the symmetries of $\mathcal{U}$ which arise from the following two sources:
\begin{itemize}
    \item the maximal symmetric group of the multi-head attention, which are characterized in Theorem~\ref{thm:multihead_equality_mainbody};
    \item the permutation symmetries represented by permutation matrices, which arise from the unordered structure of neurons in the two-layer $\operatorname{ReLU}$ MLP. 
\end{itemize}

Based on these observations, we consider the group
\begin{align}\label{eq:group_G_mainbody}
    \mathcal{G}_{\mathcal{U}} = 
    \mathcal{S}_h \times
    \operatorname{GL}_{D_k}(\mathbb{R})^h \times \operatorname{GL}_{D_v}(\mathbb{R})^h \times \mathcal{P}_D \times \mathcal{P}_{D_A}.
\end{align}
This group will serve as the symmetric group of the standard attention block.
Each element $g \in \mathcal{G}_{\mathcal{U}}$ has the form 
\begin{align}\label{eq:group_element_g_mainbody}
    g = \Bigl( \tau,(M_i)_{i=1,\ldots,h},(N_i)_{i=1,\ldots,h},P_{\pi_O},P_{\pi_A}\Bigr),
\end{align}
where
$\tau \in \mathcal{S}_h,\, \pi_O \in \mathcal{S}_{D},\, \pi_A \in \mathcal{S}_{D_A}$ are permutations and $M_i,N_i$ are invertible matrices of appropriate sizes.
The action of $\mathcal{G}_{\mathcal{U}}$ on $\mathcal{U}$ is defined formally as follows:

\begin{definition}[{Group action}]\label{def:group_action}
    With notation as above,
    the action of $\mathcal{G}_{\mathcal{U}}$ on $\mathcal{U}$ is defined to be a map $\mathcal{G}_{\mathcal{U}} \times \mathcal{U} \to \mathcal{U}$
    which maps an element $(g,U) \in \mathcal{G}_{\mathcal{U}} \times \mathcal{U}$ with $U$ and $g$ given in~\eqref{eq:U_element_mainbody} and~\eqref{eq:group_element_g_mainbody} 
    to the element
    \begin{align}\label{eq:group_action_mainbody}
        gU &= \biggl( 
        \left([gW]^{(Q,i)},[gW]^{(K,i)},[gW]^{(V,i)},
        [gW]^{(O,i)}\right)_{i=1,\ldots,h},\nonumber\\
        &\qquad\qquad\qquad\qquad\qquad\qquad 
        \left([gW]^{(A)},[gW]^{(B)} \right),
        \left([gb]^{(A)},[gb]^{(B)} \right)
        \biggr),
    \end{align}
    where
    \begin{align*}
        [gW]^{(Q,i)} &= [W]^{(Q,\tau(i))} \cdot \left(M^{(\tau(i))}\right)^{\top},&
        [gW]^{(K,i)} &= [W]^{(K,\tau(i))} \cdot \left(M^{(\tau(i))}\right)^{-1},\\
        [gW]^{(V,i)} &= [W]^{(V,\tau(i))} \cdot N^{(\tau(i))},&
        [gW]^{(O,i)} &= \left(N^{(\tau(i))}\right)^{-1} \cdot [W]^{(O,\tau(i))} \cdot P_{\pi_O},\\
        [gW]^{(A)} &= P_{\pi_O}^{-1} \cdot [W]^{(A)} \cdot P_{\pi_A},&
        [gW]^{(B)} &= P_{\pi_A}^{-1} \cdot [W]^{(B)},\\
        [gb]^{(A)} &= [b]^{(A)} \cdot P_{\pi_A},&
        [gb]^{(B)} &= [b]^{(B)}.
    \end{align*}
\end{definition}
We conclude the construction of the group action on $\mathcal{U}$ in the following theorem which results in a design principle for transformer blocks.

\begin{theorem}[{Invariance of $\operatorname{Attn}$ under the action of $\mathcal{G}_{\mathcal{U}}$}]\label{thm:invariant_attn_mainbody}
    With notations as above, we have
    \begin{align}
        \operatorname{Attn}(X;gU) = \operatorname{Attn}(X;U) 
    \end{align}
    for all $U \in \mathcal{U}$, $g \in \mathcal{G}$, and $X \in \mathbb{R}^{L \times D}$.
\end{theorem}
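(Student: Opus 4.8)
The plan is to verify the invariance by tracing the transformed weights $gU$ through the definition of $\operatorname{Attn}$ in \eqref{eq:attn_MLP_mainbody}--\eqref{eq:attn_multihead_mainbody}, composed layer by layer, and checking that each intermediate quantity is either unchanged or changed only by a permutation that is undone further down the pipeline. Concretely, I would decompose the map into three stages: (i) the multi-head attention producing $\operatorname{MultiHead}(X; \cdot)$, (ii) the first $\operatorname{LayerNorm}$ yielding $\hat{X}$, (iii) the two-layer $\operatorname{ReLU}$ MLP with its trailing $\operatorname{LayerNorm}$. I would show the output of stage (i) is literally invariant under the $\mathcal{S}_h \times \operatorname{GL}_{D_k}(\mathbb{R})^h \times \operatorname{GL}_{D_v}(\mathbb{R})^h$ part of the action composed with right-multiplication by $P_{\pi_O}$, i.e. $\operatorname{MultiHead}(X; gU) = \operatorname{MultiHead}(X; U) \cdot P_{\pi_O}$. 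This is essentially the content of the implication $(2.) \Rightarrow (1.)$ inside Theorem~\ref{thm:multihead_equality_mainbody} (the ``easy'' direction, which does not need assumptions $(a)$--$(d)$), extended by the observation that inserting $P_{\pi_O}$ into $W^{(O,i)}$ simply post-multiplies the whole sum by $P_{\pi_O}$; the $\tau$-relabelling of heads permutes the summands and leaves the sum fixed, while each $M^{(i)}$ cancels in the product $W^{(Q,i)}(W^{(K,i)})^\top$ and each $N^{(i)}$ cancels in the product $W^{(V,i)}W^{(O,i)}$.

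Next I would push the factor $P_{\pi_O}$ through $\operatorname{LayerNorm}$. The key elementary fact is that $\operatorname{LayerNorm}$ acts row-wise and normalizes each row by its mean and variance, so it commutes with right-multiplication by a permutation matrix: $\operatorname{LayerNorm}(Y \cdot P_{\pi_O}) = \operatorname{LayerNorm}(Y) \cdot P_{\pi_O}$, since permuting the coordinates of a row changes neither its mean nor its variance (and the affine $\operatorname{LayerNorm}$ parameters, if present, are assumed here to be absent or symmetric — I should check the paper's convention, but in \eqref{eq:attn_MLP_mainbody} $\operatorname{LayerNorm}$ carries no learnable affine part, so this is clean). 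Hence $\hat{X}(gU) = \hat{X}(U) \cdot P_{\pi_O}$. Then in the MLP, $\hat{X}(gU) \cdot [gW]^{(A)} + \mathbf{1}_L [gb]^{(A)} = \hat{X}(U) P_{\pi_O} \cdot P_{\pi_O}^{-1} [W]^{(A)} P_{\pi_A} + \mathbf{1}_L [b]^{(A)} P_{\pi_A}$, in which $P_{\pi_O} P_{\pi_O}^{-1} = I$ cancels, leaving the stage-(ii) output times $P_{\pi_A}$. Applying $\operatorname{ReLU}$ (which is entrywise, hence commutes with right-multiplication by $P_{\pi_A}$), then $[gW]^{(B)} = P_{\pi_A}^{-1}[W]^{(B)}$ cancels the $P_{\pi_A}$, and $[gb]^{(B)} = [b]^{(B)}$ is untouched; the final $\operatorname{LayerNorm}$ then sees exactly the same argument as for $U$. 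Composing these cancellations yields $\operatorname{Attn}(X; gU) = \operatorname{Attn}(X; U)$.

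The main obstacle is not any single hard computation but the bookkeeping: correctly matching the $\tau$-permuted head indices in the definition of the action (note the action uses $\tau(i)$, so one must be careful that $\sum_i \operatorname{softmax}(X A_{\tau(i)} X^\top) X B_{\tau(i)} = \sum_i \operatorname{softmax}(X A_i X^\top) X B_i$ by reindexing), and verifying the two commutation lemmas for $\operatorname{LayerNorm}$ and $\operatorname{ReLU}$ with permutation matrices. I would state these two commutation facts as small lemmas: (L1) for any permutation matrix $P$, $\operatorname{ReLU}(YP) = \operatorname{ReLU}(Y)P$ and $\operatorname{LayerNorm}(YP) = \operatorname{LayerNorm}(Y)P$; (L2) $\mathbf{1}_L \cdot (v P) = (\mathbf{1}_L v) P$ for a row vector $v$. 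With (L1) and (L2) in hand and the stage-(i) identity from Theorem~\ref{thm:multihead_equality_mainbody}, the proof is a telescoping cancellation of $P_{\pi_O}$ and then $P_{\pi_A}$, and the $M^{(i)}, N^{(i)}$ factors never reach past the multi-head block at all.
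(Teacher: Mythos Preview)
Your proposal is correct and follows essentially the same route as the paper's proof: the paper likewise verifies the identity by substituting the transformed weights, uses the cancellation of $M^{(i)},N^{(i)}$ inside each head together with the $\tau$-reindexing to obtain $\operatorname{MultiHead}(X;gU)=\operatorname{MultiHead}(X;U)\,P_{\pi_O}$, then invokes exactly your lemmas (L1) (stated there as a corollary about $\operatorname{LayerNorm}$ commuting with permutation matrices and a lemma characterizing matrices commuting with $\operatorname{ReLU}$) to push $P_{\pi_O}$ and then $P_{\pi_A}$ through and cancel them. Your bookkeeping and the two commutation facts you isolate are precisely the ingredients the paper uses.
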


\begin{remark}[Other types of symmetries]
    The group $\mathcal{G}_{\mathcal{U}}$ defined above does not cover all symmetries of the weight space $\mathcal{U}$. Indeed, there are symmetries of scaling type arising from the internal architecture of the $\operatorname{ReLU}$ MLP (see \citep{godfrey2022symmetries, tran2024monomial}). In addition, layer normalization also offers additional scaling and sign-flipping symmetries for $\mathcal{U}$ (see Appendix~\ref{appendix:sec:layer_norm}). We leave designing the maximal symmetric group of a transformer block for future study.
\end{remark}

\section{Equivariant Polynomial NFNs for Transformers}\label{sec:NFNs}

In this section, we introduce a family of NFNs specifically designed for Transformers, referred to as Transformer-NFNs, which are equivariant to the group $\mathcal{G}_{\mathcal{U}}$ as described in~\eqref{eq:group_G_mainbody}. The main building blocks of Transformer-NFNs consist of $\mathcal{G}_{\mathcal{U}}$-equivariant and $\mathcal{G}_{\mathcal{U}}$-invariant polynomial layers. We will outline the construction of a class of $\mathcal{G}_{\mathcal{U}}$-invariant polynomial layers below. The detailed construction of the $\mathcal{G}_{\mathcal{U}}$-equivariant polynomial layers, which follows a similar approach, will be thoroughly derived in Appendix~\ref{appendix:equivariant_layer}.


In concrete, we define a polynomial map $I \colon \mathcal{U} \rightarrow \mathbb{R}^{D'}$ with maps each element $U \in \mathcal{U}$ to the vector $I(U) \in \mathbb{R}^{D'}$ of the form
\begin{align}\label{eq:I(U)_mainbody}
    I(U) &= \sum\limits_{s=1}^h \sum\limits_{p=1}^{D} \sum\limits_{q=1}^{D} 
        \Phi_{(QK,s):p,q} \cdot [WW]^{{(QK,s)}}_{p,q}
    + \sum\limits_{s=1}^h \sum\limits_{p=1}^{D} \sum\limits_{q=1}^{D} 
        \Phi_{(VO,s):p,q} \cdot [WW]^{(VO,s)}_{p,q} \nonumber\\
    &\quad + \sum\limits_{s=1}^h \sum\limits_{p=1}^{D} \sum\limits_{q=1}^{D_k} 
        \Phi_{(Q,s):p,q} \cdot [W]^{{(Q,s)}}_{p,q}
    + \sum\limits_{s=1}^h \sum\limits_{p=1}^{D} \sum\limits_{q=1}^{D_k} 
        \Phi_{(K,s):p,q} \cdot [W]^{(K,s)}_{p,q} \nonumber\\
    &\quad + \sum\limits_{s=1}^h \sum\limits_{p=1}^{D} \sum\limits_{q=1}^{D_v} 
        \Phi_{(V,s):p,q} \cdot [W]^{{(V,s)}}_{p,q}
    + \sum\limits_{s=1}^h \sum\limits_{p=1}^{D_v} \sum\limits_{q=1}^{D} 
        \Phi_{(O,s):p,q} \cdot [W]^{(O,s)}_{p,q} \nonumber\\
    &\quad 
    + \sum\limits_{p=1}^{D} \sum\limits_{q=1}^{D_A} 
        \Phi_{(A):p,q} \cdot [W]^{(A)}_{p,q} 
    + \sum\limits_{p=1}^{D_A} \sum\limits_{q=1}^{D} 
        \Phi_{(B):p,q} \cdot [W]^{(B)}_{p,q} \nonumber\\
    &\quad 
    + \sum\limits_{q=1}^{D_A} 
        \Phi_{(A):q} \cdot [b]^{(A)}_q 
    + \sum\limits_{q=1}^{D} 
        \Phi_{(B):q} \cdot [b]^{(B)}_q
    + \Phi_1,
\end{align}
where 
\begin{align}\label{eq:WW_mainbody}
    [WW]^{(QK,s)} := [W]^{(Q,s)} \cdot \left( [W]^{(K,s)} \right)^{\top}, \quad \text{and} \quad
    [WW]^{(VO,s)} := [W]^{(V,s)} \cdot [W]^{(O,s)},
\end{align}
and the coefficients $\Phi_{-}$s are matrices of size $D' \times 1$.

\begin{remark}[{$I(U)$ as a quadratic polynomial}]
    Intuitively speaking, $I(U)$ is a linear combination of all entries of the matrices $[W]^{(Q,s)}$, $[W]^{(K,s)}$, $[W]^{(V,s)}$, $[W]^{(O,s)}$, $[W]^{(A)}$, $[W]^{(B)}$, $[b]^{(A)}$, and $[b]^{(B)}$ in $U$, as well as all entries of the additional matrices $[WW]^{(QK,s)}$ and $[WW]^{(VO,s)}$ defined in Equation~(\ref{eq:WW_mainbody}). Since the entries of the matrices $[WW]^{(QK,s)}$ and $[WW]^{(VO,s)}$ are polynomials of degree $2$, the map $I(U)$ is indeed a quadratic polynomial in the entries of $U$. These additional quadratic terms help us incorporate more relations between weights inside the input multi-head attention block, thus allowing Transformer-NFN to maintain its expressivity.
\end{remark}


The above formula for $I(U)$ is irredundant in the following sense:

\begin{proposition}
    With notation as above, if $I(U)=0$ for all $U \in \mathcal{U}$, then $\Phi_-=0$ for all coefficients $\Phi_-$.
\end{proposition}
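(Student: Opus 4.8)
The plan is to argue that all the monomials appearing in $I(U)$ are distinct as polynomials in the entries of $U$, so that the linear-combination representation is unique and $I(U) \equiv 0$ forces every coefficient to vanish. Concretely, $I(U)$ is a polynomial in the entries of the free variables $[W]^{(Q,s)}, [W]^{(K,s)}, [W]^{(V,s)}, [W]^{(O,s)}, [W]^{(A)}, [W]^{(B)}, [b]^{(A)}, [b]^{(B)}$. The terms of $I(U)$ split by degree: the $\Phi_1$ term is the constant; the terms with coefficients $\Phi_{(Q,s)}, \Phi_{(K,s)}, \Phi_{(V,s)}, \Phi_{(O,s)}, \Phi_{(A)}, \Phi_{(B)}$ and the bias terms $\Phi_{(A):q}, \Phi_{(B):q}$ are linear; and the terms with coefficients $\Phi_{(QK,s)}, \Phi_{(VO,s)}$ are quadratic, since $[WW]^{(QK,s)}_{p,q} = \sum_{r=1}^{D_k} [W]^{(Q,s)}_{p,r}[W]^{(K,s)}_{q,r}$ and $[WW]^{(VO,s)}_{p,q} = \sum_{r=1}^{D_v} [W]^{(V,s)}_{p,r}[W]^{(O,s)}_{r,q}$ are genuinely degree-$2$ (no constant or linear part).

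First I would note that a polynomial identity $I(U) = 0$ holding for all $U \in \mathcal{U}$ means $I$ is the zero polynomial, hence all its coefficients in the monomial basis vanish. So it suffices to show: (i) the linear monomials $[W]^{(Q,s)}_{p,q}$, $[W]^{(K,s)}_{p,q}$, etc., and the bias monomials are pairwise distinct — this is immediate since they involve disjoint sets of indeterminates (different superscripts index different matrix blocks, and for a fixed block the entries $(p,q)$ are distinct variables), giving $\Phi_- = 0$ for all the linear coefficients and the constant $\Phi_1 = 0$; and (ii) the quadratic part $\sum_{s,p,q} \Phi_{(QK,s):p,q}[WW]^{(QK,s)}_{p,q} + \sum_{s,p,q}\Phi_{(VO,s):p,q}[WW]^{(VO,s)}_{p,q}$ is itself the zero polynomial only when all $\Phi_{(QK,s)}$ and $\Phi_{(VO,s)}$ vanish. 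Because the linear/quadratic parts involve monomials of different total degree, they cannot cancel against each other, so (i) and (ii) can be treated separately.

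For (ii), expand: the $(QK,s)$ block contributes monomials $[W]^{(Q,s)}_{p,r}[W]^{(K,s)}_{q,r}$ (product of a $Q$-variable and a $K$-variable with the \emph{same} head index $s$ and matching column index $r$), while the $(VO,s)$ block contributes $[W]^{(V,s)}_{p,r}[W]^{(O,s)}_{r,q}$ (product of a $V$-variable and an $O$-variable). These two families involve disjoint variable pairs (one is $Q\cdot K$, the other $V\cdot O$), so they decouple. Within the $(QK,\cdot)$ family, the coefficient of the monomial $[W]^{(Q,s)}_{p,r}[W]^{(K,s)}_{q,r}$ in the expansion is exactly $\Phi_{(QK,s):p,q}$ — and crucially this monomial determines $(s,p,q)$ uniquely (the head $s$ from the superscript, and $p$ from the $Q$-row, $q$ from the $K$-row), so distinct triples $(s,p,q)$ give distinct monomials; hence vanishing of the quadratic polynomial forces $\Phi_{(QK,s):p,q}=0$ for all $s,p,q$. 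An identical argument with $[W]^{(V,s)}_{p,r}[W]^{(O,s)}_{r,q}$ (read off $s$, $p$, $q$ from the superscript, the $V$-row index, and the $O$-column index respectively) gives $\Phi_{(VO,s):p,q}=0$. One mild subtlety to record: we need $D_k \ge 1$ and $D_v \ge 1$ so that each $[WW]$ entry is a nontrivial sum — true since these are positive integers — and we should pick, say, $r=1$ to exhibit a concrete surviving monomial.

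The main obstacle, such as it is, is purely bookkeeping: making precise that the map from coefficient-index tuples (like $((QK,s),p,q)$) to monomials of $I(U)$ is injective, so that no two coefficients multiply the same monomial and cancellation cannot occur. There is no deep step — the argument is essentially "read the matrix-block superscript and the row/column indices off the monomial." The only thing to be careful about is not to conflate, e.g., the linear term $\Phi_{(Q,s):p,q}[W]^{(Q,s)}_{p,q}$ with anything in the quadratic expansion (degrees differ) and not to conflate the additive pieces of $[WW]^{(QK,s)}_{p,q}$ across different $r$ (they are different monomials, but all carry the same coefficient $\Phi_{(QK,s):p,q}$, which is fine — we just use one of them). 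I would write this up as: reduce to "$I$ is the zero polynomial," separate by degree, then within each degree use disjointness of variable blocks and injectivity of the index-to-monomial map to conclude all $\Phi_- = 0$.
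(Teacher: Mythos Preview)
Your proposal is correct and follows essentially the same approach as the paper: view $I(U)$ as a polynomial in the entries of $U$, observe that the terms $\{1\}\cup\{[W]^{(\ast)}_{p,q}\}\cup\{[b]^{(\ast)}_q\}\cup\{[WW]^{(\ast,s)}_{p,q}\}$ are linearly independent as polynomials, and conclude that all coefficients vanish. The paper's proof (given for the analogous statement about $E$ in the appendix) is a one-line assertion of this linear independence, whereas you actually carry out the verification by separating degrees and checking that the index-to-monomial map is injective within each degree; so your write-up is more detailed but not a genuinely different route.
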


To make $I$ to be $\mathcal{G}_{\mathcal{U}}$-invariant, 
the parameters $\Phi_-$ must satisfy a system of constraints (usually called \emph{parameter sharing}), which are induced from the condition $I(gU) = I(U)$ for all $g \in \mathcal{G}_{\mathcal{U}}$ and $U \in \mathcal{U}$. 
We show in details what are these constraints and how to derive the concrete formula of $I$ in Appendix~\ref{appendix:invariant_layer}. 
The formula of $I$ is then determined by

\begin{align}\label{eq:invariant_layer_mainbody}
    I(U) &= \sum\limits_{p=1}^{D} \sum\limits_{q=1}^{D} 
        \Phi_{(QK,\bullet):p,q} \cdot \left( \sum\limits_{s=1}^h [WW]^{{(QK,s)}}_{p,q} \right ) \notag 
    + \sum\limits_{p=1}^{D} \sum\limits_{q=1}^{D} 
        \Phi_{(VO,\bullet):p,\bullet} \cdot \left( \sum\limits_{s=1}^h [WW]^{(VO,s)}_{p,q} \right ) \nonumber\\
    &\quad + \sum\limits_{p=1}^{D} \sum\limits_{q=1}^{D_A} 
        \Phi_{(A):\bullet, \bullet} \cdot [W]^{(A)}_{p,q} + \sum\limits_{p=1}^{D_A} \sum\limits_{q=1}^{D} 
        \Phi_{(B):\bullet,q} \cdot [W]^{(B)}_{p,q} \nonumber\\
    &\quad + \sum\limits_{q=1}^{D_A}    \Phi_{(A):\bullet} \cdot [b]^{(A)}_q + \sum\limits_{q=1}^{D} \Phi_{(B):q} \cdot [b]^{(B)}_q+ \Phi_1
\end{align}
In the above formula, the bullet $\bullet$ indicates that the value of the corresponding coefficient is independent of the index at the bullet.


\begin{theorem}\label{thm:equi_layer}
    With notation as above, the polynomial map $I \colon \mathcal{U} \rightarrow \mathbb{R}^{D'}$ defined by Equation~(\ref{eq:invariant_layer_mainbody}) is $\mathcal{G}_{\mathcal{U}}$-invariant.
    Moreover, if a map given in Equation~(\ref{eq:I(U)_mainbody}) is $\mathcal{G}_{\mathcal{U}}$-invariant, then it has the form given in Equation~(\ref{eq:invariant_layer_mainbody}). 
\end{theorem}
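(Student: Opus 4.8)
The plan is to prove the two halves of Theorem~\ref{thm:equi_layer} in turn, preceded by one structural computation that drives both. That computation is the transformation law, under $g = \bigl(\tau,(M^{(i)})_i,(N^{(i)})_i,P_{\pi_O},P_{\pi_A}\bigr) \in \mathcal{G}_{\mathcal{U}}$, of the quadratic bundles in \eqref{eq:WW_mainbody}. Plugging Definition~\ref{def:group_action} into the products $[W]^{(Q,s)}([W]^{(K,s)})^{\top}$ and $[W]^{(V,s)}[W]^{(O,s)}$, the $\operatorname{GL}_{D_k}$- and $\operatorname{GL}_{D_v}$-factors cancel, so that, writing $[WW]^{(QK,s)}(gU)$ for the bundle of $gU$,
\[
[WW]^{(QK,s)}(gU) = [WW]^{(QK,\tau(s))}(U), \qquad [WW]^{(VO,s)}(gU) = [WW]^{(VO,\tau(s))}(U)\cdot P_{\pi_O}.
\]
Thus $g$ acts on these bundles only by permuting heads through $\tau$ and, for the $VO$-bundle, permuting columns through $\pi_O$; this is precisely the structure encoded by the parameter sharing in \eqref{eq:invariant_layer_mainbody}.

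For the easier half — that the map defined by \eqref{eq:invariant_layer_mainbody} is $\mathcal{G}_{\mathcal{U}}$-invariant — I would substitute $gU$ into \eqref{eq:invariant_layer_mainbody} and check that each summand is unchanged, combining the law above with the action on the MLP blocks. Every check is a reindexing: $\sum_{s}[WW]^{(QK,s)}(gU) = \sum_s [WW]^{(QK,s)}(U)$ because summing over all heads absorbs $\tau$; $\sum_{q}\bigl(\sum_s [WW]^{(VO,s)}(gU)\bigr)_{p,q} = \sum_{q}\bigl(\sum_s [WW]^{(VO,s)}(U)\bigr)_{p,q}$ for each fixed row $p$, since right multiplication by $P_{\pi_O}$ only permutes the columns being summed while leaving untouched the row index $p$, on which $\Phi_{(VO,\bullet):p,\bullet}$ is allowed to depend; and $\sum_{p,q}[gW]^{(A)}_{p,q}$, $\sum_{p}[gW]^{(B)}_{p,q}$, $\sum_{q}[gb]^{(A)}_{q}$ are unchanged because a permutation matrix on either side only relabels a summation index, while $[gb]^{(B)} = [b]^{(B)}$ and $\Phi_1$ is constant. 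Adding the checked summands gives $I(gU) = I(U)$.

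For the converse, assume $I$ has the general form \eqref{eq:I(U)_mainbody} and satisfies $I(gU) = I(U)$ for all $g,U$. I would view both sides as polynomials in the algebraically independent entries of $U$ and compare coefficients in the linearly independent family consisting of $1$, the individual weight and bias entries, and the bundle entries $[WW]^{(QK,s)}_{p,q}$, $[WW]^{(VO,s)}_{p,q}$ — linear independence here is exactly the irredundancy Proposition stated just above the theorem, and by the transformation law $I(gU)$ again lies in the span of this family, so the comparison is legitimate. Matching coefficients decouples into three kinds of conditions. (i) Linear terms in $[W]^{(Q,s)},[W]^{(K,s)},[W]^{(V,s)},[W]^{(O,s)}$: taking $\tau$ trivial and equating the coefficient of an entry of $[W]^{(Q,s)}$ gives $\sum_q \Phi_{(Q,s):p,q}M^{(s)}_{q,r}=\Phi_{(Q,s):p,r}$ for every invertible $M^{(s)}$; a linear functional of $M^{(s)}$ that is constant on the (dense) set of invertible matrices is the zero functional, so $\Phi_{(Q,s)}=0$, and the same argument applied to $(M^{(s)})^{-1}$ on the right of $[W]^{(K,s)}$, to $N^{(s)}$ on the right of $[W]^{(V,s)}$, and to $(N^{(s)})^{-1}$ on the left of $[W]^{(O,s)}$ kills $\Phi_{(K,s)},\Phi_{(V,s)},\Phi_{(O,s)}$. (ii) Quadratic bundles: equating coefficients of $[WW]^{(QK,s)}_{p,q}$ gives $\Phi_{(QK,\tau^{-1}(s)):p,q}=\Phi_{(QK,s):p,q}$ for all $\tau\in\mathcal{S}_h$, so $\Phi_{(QK,s):p,q}$ does not depend on $s$; equating coefficients of $[WW]^{(VO,s)}_{p,q}$ gives $\Phi_{(VO,\tau^{-1}(s)):p,\pi_O^{-1}(q)}=\Phi_{(VO,s):p,q}$, so $\Phi_{(VO,s):p,q}$ depends on neither $s$ nor $q$. (iii) MLP terms: equating coefficients of the entries of $[W]^{(A)},[W]^{(B)},[b]^{(A)}$ and using that $\mathcal{S}_D$ and $\mathcal{S}_{D_A}$ act transitively forces $\Phi_{(A):p,q}$ independent of $p,q$, $\Phi_{(B):p,q}$ independent of $p$, and $\Phi_{(A):q}$ independent of $q$, while the entries of $[b]^{(B)}$ and the constant are unconstrained. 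Substituting these reductions into \eqref{eq:I(U)_mainbody} collapses it exactly to \eqref{eq:invariant_layer_mainbody}.

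The reindexing in the first half and the coefficient bookkeeping in the converse are routine. The step I expect to be the genuine crux is the opening cancellation of the $\operatorname{GL}$-factors inside the bundles, together with its corollary that no linear term in $[W]^{(Q,s)},[W]^{(K,s)},[W]^{(V,s)},[W]^{(O,s)}$ can survive invariance — equivalently, that a linear functional on matrices fixed by $W\mapsto WM$ for all invertible $M$ must vanish. This is elementary but is what conceptually explains why $Q,K,V,O$ enter an invariant layer only through the products $[WW]^{(QK,s)}$ and $[WW]^{(VO,s)}$; keeping the permutation-matrix index conventions consistent through the bookkeeping is the other place demanding care.
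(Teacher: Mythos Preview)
Your proposal is correct and follows essentially the same route as the paper's Appendix~\ref{appendix:invariant_layer}: compute the transformation law for the bundles (the paper records this as Remark~\ref{remark:addition_terms} and Proposition~\ref{prop:entries_under_group_action}), expand $I(gU)$, and compare coefficients against $I(U)$ using the irredundancy Proposition (the paper's Lemma~\ref{lem:E=0}). The only organizational difference is in killing the linear $Q,K,V,O$ coefficients: the paper packages this into a dedicated Lemma~\ref{lem:f1f2=0} about linear functionals satisfying $\sum_s f^{(1)}_s(M^{(s)})+f^{(2)}_s((M^{(s)})^{-1})=\lambda$, whereas you argue directly that an affine identity holding on the dense set of invertible matrices extends to all matrices and then evaluate at a convenient point --- both reach the same conclusion with the same underlying idea.
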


The concrete formula for the polynomial $\mathcal{G}_{\mathcal{U}}$-equivariant layer is presented in detail in Appendix~\ref{appendix:equivariant_layer}.



\section{The Small Transformer Zoo dataset}\label{sec:dataset}

Large-scale empirical studies have produced datasets of trained classification models with varied hyperparameters \citep{eilertsen2020classifying, unterthiner2020predicting}, enabling data-driven approaches to modeling generalization. However, a dataset of Transformer-based classification models is absent. Motivated by this shortage, we introduce the Small Transformer Zoo dataset to experimentally demonstrate the efficacy of our proposed method. This dataset contains the weights of a fixed Transformer architecture trained on two distinct datasets, spanning both vision and language tasks. Each entry in the dataset includes a checkpoint weight along with its corresponding accuracy metrics and hyperparameter configurations.


\paragraph{General settings.} We focus on two prevalent deep learning tasks: \textit{digit image classification} using the MNIST dataset \citep{mnist} for vision, and \textit{text topic classification} using AGNews \citep{zhang2015characteragnews} for natural language processing. This selection covers two primary data modalities in deep learning—image and text—while addressing classification tasks, which are among the most common and fundamental in the field. Our model architecture contains three components: an embedding layer, an encoder, and a classifier. The embedding layer processes raw input data to produce initial token embeddings, which the encoder then transforms through self-attention mechanisms to capture contextual relationships. Finally, the classifier generates the classification output based on these enriched embeddings. While we adapt the embedding and classifier components to each specific task, we maintain a consistent encoder architecture across both tasks, consisting of two stacked two-head transformer blocks as defined in Equation~(\ref{eq:attn_MLP_mainbody}). 

Our resulting datasets, named MNIST-Transformers and AGNews-Transformers, consist of $62756$ and $63796$ model checkpoints, respectively. These models were generated by varying key hyperparameters such as the optimizer, learning rate, weight regularization, weight initialization, and dropout, as detailed in Appendix~\ref{appendix:dataset}. By making these datasets available, we aim to facilitate and inspire further research into the inner workings and behavior of Transformer models.

\section{Experimental Results}\label{sec:experiment}


We empirically evaluate the performance of the proposed Transformer-NFN model on two datasets: MNIST-Transformers and AGNews-Transformers. Additionally, we conduct ablation studies to examine the contribution of each component within the Transformer architecture in predicting network generalization, and investigate the impact of varying the Transformer-NFN dimension and the number of layers on the overall performance. Our analysis yields three key findings: (1) Transformer-NFN, with its enhanced layers for processing transformer block parameters, outperforms existing baselines in both vision and NLP Transformers datasets, maintains consistent performance across different accuracy thresholds of the data (2) the information embedded in the weights of Transformer blocks provides a strong predictor for the performance of transformer model, and (3) good performance for Transformer-NFN can be obtained with a compact setting.

We use Kendall’s $\tau$ rank correlation~\citep{kendalltau}, ranging from $[-1, 1]$, as the evaluation metric to assess how closely predicted accuracy rankings align with ground truth accuracy rankings. A value near $1$ indicates strong agreement, as shown in the scatterplot in Figure~\ref{fig:vis_agnews} (in Appendix~\ref{appendix:experimental-details}). All results in this section are averaged over 5 runs with different random seeds, with details on hyperparameters and training settings provided in Appendix~\ref{appendix:experimental-details}.

\subsection{Predicting Vision Transformers Generalization from pretrained weights}
\label{experiment:predict-vision}
\textbf{Experiment Setup.} In this experiment, we focus on predicting the test accuracy of pretrained Vision Transformer models using only their weights, without access to the test set. To perform this task, we utilize our MNIST-Transformers dataset. We evaluate our model against 5 models:  MLP, STATNN~\citep{unterthiner2020predicting}, XGBoost~\citep{Chen_2016xgboost}, LightGBM~\citep{ke2017lightgbm}, and Random Forest ~\citep{breiman2001random}. As shown in Figure~\ref{fig:dataset_acc_hist} (in Appendix~\ref{appendix:experimental-details}), the accuracy distribution of the MNIST-Transformers dataset is highly skewed (notice the log scale on the y-axis). Therefore, we evaluate each model's prediction performance not only on the entire dataset but also on four smaller subsets, each filtered by accuracy thresholds of $20\%$, $40\%$, $60\%$, and $80\%$. As a significant portion of pretrained models in the dataset exhibit higher accuracy, achieving a strong Kendall's $\tau$ correlation becomes increasingly challenging as the accuracy thresholds increase.

\textbf{Results.} Table~\ref{tab:mnist-transformer} illustrates the results of all models when predicting generalization of Vision Transformer networks trained on MNIST-Transformer dataset. As expected, Kendall's $\tau$ generally decreases as the accuracy threshold increases. In addition, our Transformer-NFN consistently outperforms all four baseline models across all dataset settings with performance gap ranging from $0.004$ to $0.026$, demonstrating the effectiveness of our model's design in capturing the information within each transformer block. 

\begin{table}[t]
    \caption{Performance measured by Kendall's $\tau$ of all models on MNIST-Transformers dataset. Uncertainties indicate standard error over 5 runs.}
    \label{tab:mnist-transformer}
    \centering
    \renewcommand*{\arraystretch}{1}
    \begin{adjustbox}{width=0.9\textwidth}
    \begin{tabular}{lcccccc}
        \toprule
        & \multicolumn{5}{c}{Accuracy threshold} \\
         & No threshold & $20\%$ & $40\%$ & $60\%$ & $80\%$\\
        \midrule
        MLP & $0.866 \pm 0.002$ & $\underline{0.873 \pm 0.001}$ & $\underline{0.874 \pm 0.003}$  & $\underline{0.874 \pm 0.006}$ & $0.873 \pm 0.007$ \\
        STATNN~\citep{unterthiner2020predicting} & $\underline{0.881 \pm 0.001}$ & $0.872 \pm 0.001$ & $0.868 \pm 0.001$ & $0.86 \pm 0.001$ & $0.856 \pm 0.001$ \\
        XGBoost~\citep{Chen_2016xgboost} & $0.860 \pm 0.002$ & $0.839 \pm 0.004$ & $0.869 \pm 0.003$ & $0.846 \pm 0.001$ & $\underline{0.884 \pm 0.001}$  \\
        LightGBM~\citep{ke2017lightgbm} & $0.858 \pm 0.002$ & $0.835 \pm 0.001$ & $0.847 \pm 0.001$ & $0.822 \pm 0.001$ & $0.830 \pm 0.001$\\
        Random Forest~\citep{breiman2001random} & $0.772 \pm 0.002$ & $0.758 \pm 0.004$ & $0.769 \pm 0.001$ & $0.752 \pm 0.001$ & $0.759 \pm 0.001$ \\
        \midrule
        Transformer-NFN (ours) & $\mathbf{0.905 \pm 0.002 }$ & $\mathbf{0.899 \pm 0.001}$ & $\mathbf{0.895 \pm 0.001}$ & $\mathbf{0.895 \pm 0.002}$ & $\mathbf{0.888 \pm 0.002}$ \\
      \bottomrule
    \end{tabular}
    \end{adjustbox}
    \vspace{-4pt}
\end{table}

\begin{table}[t]
    \caption{Performance measured by Kendall's $\tau$ of all models on AGNews-Transformers dataset. Uncertainties indicate standard error over 5 runs.}
    \label{tab:agnews-transformer}
    \centering
    \renewcommand*{\arraystretch}{1}
    \begin{adjustbox}{width=0.9\textwidth}
    \begin{tabular}{lcccccc}
        \toprule
        & \multicolumn{5}{c}{Accuracy threshold} \\
         & No threshold & $20\%$ & $40\%$ & $60\%$ & $80\%$\\
        \midrule
        MLP & $\underline{0.879 \pm 0.006}$ & $\underline{0.875 \pm 0.001}$ & $0.841 \pm 0.012$  & $0.842 \pm 0.001$ & $0.862 \pm 0.006$ \\
        STATNN~\citep{unterthiner2020predicting} & $0.841 \pm 0.002$ & $0.839 \pm 0.003$ & $0.812 \pm 0.003$ & $0.813 \pm 0.001$ & $0.812 \pm 0.001$ \\
        XGBoost~\citep{Chen_2016xgboost} & $0.859 \pm 0.001$ & $0.852 \pm 0.002$ & $\underline{0.872 \pm 0.002}$ & $\underline{0.874 \pm 0.001}$ & $\underline{0.872 \pm 0.001}$ \\
        LightGBM~\citep{ke2017lightgbm} & $0.835 \pm 0.001$ & $0.845 \pm 0.001$ & $0.837 \pm 0.001$ & $0.835 \pm 0.001$ & $0.820 \pm 0.001$\\
        Random Forest~\citep{breiman2001random} & $0.774 \pm 0.003$ & $0.801 \pm 0.001$ & $0.797 \pm 0.001$ & $0.798 \pm 0.002$ & $0.773 \pm 0.001$ \\
        \midrule
        Transformer-NFN (ours) & $\mathbf{0.910 \pm 0.001}$ & $\mathbf{0.908 \pm 0.001}$ & $\mathbf{0.897 \pm 0.001}$ & $\mathbf{0.896 \pm 0.001}$ & $\mathbf{0.890 \pm 0.001}$ \\
      \bottomrule
    \end{tabular}
    \end{adjustbox}
    \vspace{-12pt}
\end{table}

\subsection{Predicting Text classification Transformers generalization} 
\paragraph{Experiment Setup.} In this experiment, we utilize the AGNews-Transformers dataset to predict the performance of pretrained transformer models in text classification. The goal is to evaluate the effectiveness of Transformer-NFN in predicting the performance of pretrained models specifically trained on language tasks. Similar to Experiment~\ref{experiment:predict-vision}, we assess our model's capabilities across different dataset configurations by using five subsets: the entire dataset without any accuracy threshold, and four subsets with accuracy thresholds of $20\%$, $40\%$, $60\%$, and $80\%$, respectively.

\paragraph{Results.} Table~\ref{tab:agnews-transformer} shows that our model consistently outperforms all baselines across all text classification dataset configurations. Compared to MNIST-Transformers, the performance gain is even more pronounced, with Kendall's $\tau$ gaps ranging from $0.018$ to $0.033$. As language tasks involve more complex syntax, the transformer encoder captures richer information, allowing Transformer-NFN to predict NLP performance more effectively than in vision tasks. Figure~\ref{fig:vis_agnews} (Appendix~\ref{appendix:experimental-details}) further illustrates its superiority, particularly in generalization for low-accuracy networks.


\subsection{Importance of encoder in predicting the generalization} 
A interesting question arises is how much information about the network generalization ability is embedded in each component of the transformer model. To investigate this, we restrict our Transformer-NFNs to access only certain subsets of the Transformer’s components and train the model on MNIST-Transformers dataset and AGNews-Transformers dataset. Our goal is to determine the importance of each component, both individually and in combination, for predicting generalization of the input network. 

Table~\ref{tab:ablation-components} shows that for both MNIST-Transformers and AGNews-Transformers, transformer blocks alone provide strong performance predictions. The classifier is the second most important component, followed by the embedding. Even with only transformer block weights, our model achieves a Kendall's $\tau$ score nearly identical to using all components: $0.902$ vs. $0.905$ for MNIST-Transformers and $0.909$ vs. $0.91$ for AGNews-Transformers.

\begin{table}[t]
    \caption{Ablation study the important of each component of the input networks on predicting generalization of the input network, the metric being used is Kendall's $\tau$.}
    \label{tab:ablation-components}
    \centering
    \renewcommand*{\arraystretch}{1}
    \begin{adjustbox}{width=0.8\textwidth}
    \begin{tabular}{clccccc}
        \toprule
        No. of & \multirow{2}{*}{Components} & MNIST- & AGNews-\\
        components & & Transformers & Transformers\\
        \midrule
        \multirow{3}{*}{1} & Encoder & $\mathbf{0.902 \pm 0.001}$ & $\mathbf{0.909 \pm 0.001}$\\
        & Embedding & $0.424 \pm 0.002$ & -\\
        & Classifier & $\underline{0.847 \pm 0.001}$ & $\underline{0.795 \pm 0.008}$\\
        \midrule
        \multirow{3}{*}{2} & Embedding + Classifier & $0.857 \pm 0.003$ & -\\
        & Encoder + Classifier & $\mathbf{0.904 \pm 0.001}$ & $\mathbf{0.910 \pm 0.001}$\\
        & Encoder + Embedding  & $\underline{0.903 \pm 0.001}$ & -\\
        \midrule
        3 & Encoder + Embedding + Classifier & $\mathbf{0.905 \pm 0.002}$ & -\\
      \bottomrule
    \end{tabular}
    \end{adjustbox}
    \vspace{-12pt}
\end{table}

\subsection{Abalation study on varying the dimension and number of layers} 

In this section, we conduct an ablation study to explore the impact of varying the hidden dimension and the number of equivariant layers in Transformer-NFN. We evaluate different configurations of hidden dimensions and layer counts on the AGNews-Transformer dataset, with dimensions $\in [3, 5, 10, 15]$, and number of layer $\in [1, 2]$. These configurations allow us to assess how changes in model size affect performance and efficiency.

Table~\ref{tab:ablation-hidden-dim} (in Appendix~\ref{appendix:experimental-details}) demonstrates that strong performance can be achieved with a relatively small dimension and few parameters. For example, with dimension of $15$ and a single layer, the model reaches a Kendall’s $\tau$ score of $0.913$, matching the best performance across all settings. These results suggest that good performance for our Transformer-NFN can be obtained with a compact setting.







\section{Conclusion}

In this work, we made significant contributions to the understanding and application of NFNs in transformer architectures. We determined the maximal symmetric group of the weights in a multi-head attention module. 
We also formally defined the weight space of a transformer architecture and introduced a group action on this weight space, thereby characterizing the design principles for NFNs. Additionally, we presented Transformer-NFN, an NFN designed for transformer architectures that is equivariant under the specified group action. Finally, we released a dataset of more than 125,000 transformers model checkpoints trained on two datasets with two different tasks, marking a significant resource for benchmarking the applicability and efficiency of Transformer-NFN and promoting further research to enhance our understanding of transformer network training and performance.



\newpage

\subsubsection*{Acknowledgments}
This research / project is supported by the National Research Foundation Singapore under the AI
Singapore Programme (AISG Award No: AISG2-TC-2023-012-SGIL). This research / project is
supported by the Ministry of Education, Singapore, under the Academic Research Fund Tier 1
(FY2023) (A-8002040-00-00, A-8002039-00-00). This research / project is supported by the
NUS Presidential Young Professorship Award (A-0009807-01-00) and the NUS Artificial Intelligence Institute--Seed Funding (A-8003062-00-00).

Thieu N. Vo is supported by the Singapore National Academy of Science under the SASEA Fellowship Programme (Award No: NRF-MP-2025-0001).

Thanh Tran acknowledges support from the Application Driven Mathematics Program funded and organized by the Vingroup Innovation Fund and VinBigData.

\textbf{Ethics Statement.} Given the nature of the work, we do not foresee any negative societal and ethical
impacts of our work.

\textbf{Reproducibility Statement.} Source codes for our experiments are provided in the supplementary
materials of the paper. The details of our experimental settings are given in Section~\ref{sec:experiment} and the Appendix~\ref{appendix:experimental-details}. All datasets used in this paper are publicly available through an anonymous link provided in the README file of the supplementary material.

\bibliography{iclr2025_conference}
\bibliographystyle{iclr2025_conference}

\appendix

\newpage

\begin{center}
{\bf \Large{Supplement to ``Equivariant Neural Functional Networks for Transformers''}}
\end{center}

\DoToC

\section{Maximal Symmetric Group of $\operatorname{MultiHead}$}

This section aims to provide a complete answer to the question of when two sets of parameters of a multi-head attention mechanism will define the same function. The answer to this question will serve as the core of the design principle of NFNs for Transformers. We recall the formulation of $\operatorname{Head}$ and $\operatorname{MultiHead}$ attention maps in Section~\ref{appendix:sec:multihead}. Then, the maximal symmetric group of $\operatorname{Head}$ is derived in Section~\ref{appendix:sec:max_group_multihead}.

\subsection{Multi-head Attention}\label{appendix:sec:multihead}

Let $D$ be a positive integer. Recall the notion of the parameterized map $\operatorname{Head}$ as follows: For a positive integer $L$ and $X \in \mathbb{R}^{L \times D}$, we have
\begin{align*}
\operatorname{Head}\left(X; W^{(Q)}, W^{(K)}, W^{(V)}\right)  &= \operatorname{softmax} \left( \frac{\left(XW^{(Q)}\right) \cdot \left(XW^{(K)}\right)^{\top}}{\sqrt{D_k}} \right) \cdot \left(XW^{(V)}\right), 
\end{align*}

where $W^{(Q)},W^{(K)} \in \mathbb{R}^{D \times D_k}$ and $W^{(V)} \in \mathbb{R}^{D \times D_v}$. By definition, we have
\begin{align*}
    \operatorname{Head}\left(\cdot; W^{(Q)}, W^{(K)}, W^{(V)}\right) ~ \colon ~  \bigsqcup_{l>0} \mathbb{R}^{l \times D} \rightarrow \bigsqcup_{l>0} \mathbb{R}^{l \times D_v},
\end{align*}
and for all $l >0$, the image of $\mathbb{R}^{l \times D}$ is contained in  $\mathbb{R}^{l \times D_v}$. By combining $\left(W^{(Q)}\right)\cdot \left(W^{(K)}\right)^{\top} / \sqrt{D_k} \in \mathbb{R}^{D \times D}$, we can rewrite the map $\operatorname{Head}$ as follows
\begin{align*}
\operatorname{Head}\left(X; W^{(Q)}, W^{(K)}, W^{(V)}\right)  &= \operatorname{softmax} \left( X \cdot \frac{\left(W^{(Q)}\right)\cdot\left(W^{(K)}\right)^{\top}}{\sqrt{D_k}} \cdot X^\top \right) \cdot \left(XW^{(V)}\right), 
\end{align*}
By this observation, we define a class of parameterized maps as follows: For $X \in \mathbb{R}^{L \times D}$, we have
\begin{align*}
    f(X; A) \coloneqq \operatorname{softmax} \left( X  A X^\top \right) \cdot X,
\end{align*}
where $A \in \mathbb{R}^{D \times D}$. Similarly, for the $\operatorname{MultiHead}$ map, we have
\begin{align*}
\operatorname{MultiHead}&\left(X;W^{(O)}, \left \{W^{(Q,i)},W^{(K,i)},W^{(V,i)} \right\}_{i=1}^h\right)\nonumber\\
    &= \left( \bigoplus\limits_{i=1}^h \operatorname{Head}\left(X;W^{(Q,i)},W^{(K,i)},W^{(V,i)}\right) \right) W^{(O)} \\
     &= \sum\limits_{i=1}^h \operatorname{Head}\left(X;W^{(Q,i)},W^{(K,i)},W^{(V,i)}\right) W^{(O,i)} \\
    &= \sum\limits_{i=1}^h \operatorname{softmax}\left( \frac{\left(XW^{(Q,i)}\right) \cdot \left(XW^{(K,i)}\right)^{\top}}{\sqrt{D_k}} \right) \cdot X \cdot \left(W^{(V,i)} \cdot W^{(O,i)}\right),
\end{align*}
where $h$ is a positive integer, $W^{(O)} = \left(W^{(O:1)}, \ldots, W^{(O:h)}\right)$ with each $W^{(O:i)} \in \mathbb{R}^{D_v \times D}$. Considering $W^{(V,i)} \cdot W^{(O,i)}$ as a matrix $B_i \in \mathbb{R}^{D \times D}$, we define a new class of parameterized maps as follows: For $X \in \mathbb{R}^{L \times D}$, we have
\begin{align*}
    F\left(X ;\left\{A_i,B_i\right\}_{i=1}^h \right) = \sum_{i=1}^h f(X; A_i) \cdot B_i,
\end{align*}
where  $h$ is a positive integer and $A_i,B_i \in \mathbb{R}^{D \times D}$. Note that,
\begin{align*}
\operatorname{rank}\left(\left(W^{(Q)}\right)\cdot \left(W^{(K)}\right)^{\top} / \sqrt{D_k}\right) \le \min(D,D_k) \le D \\
\operatorname{rank}\left(W^{(V,i)} \cdot W^{(O,i)}\right) \le \min(D,D_v) \le D
\end{align*}
So, in general, the new class of $F$ maps contains the class of $\operatorname{MultiHead}$ maps. Note that, $F$ simply is a weighted summation of some $f$ maps and is linear with respect to $\{B_i\}_{i =1}^h$. 

\begin{remark}
    We can see, in $f$, the matrix $A$ plays the role as a core matrix that defines $f$. Similarly, in $F$, each $A_i$ defines each $f_i$, and $B_i$ is the weight of each component contributes to $F$. 
\end{remark}

\subsection{Maximal symmetric group of multi-head attention}\label{appendix:sec:max_group_multihead}


In this section, we present a theoretical result that shows the following: Roughly speaking, in the multi-head scenario, each individual head plays its own unique role. The main results are Theorems~\ref{thm:multi_head_zero} (which corresponds to Theorem~\ref{thm:multi_head_zero_mainbody} in the main text) on the unique role of each head and Theorem~\ref{thm:multihead_equality} (which corresponds to Theorem~\ref{thm:multihead_equality_mainbody} in the main text) on the maximal symmetric group of multi-head attention.




\begin{theorem}\label{thm:multi_head_zero}
    Let $D$ be a positive integer. Assume that for a positive integer $h$, matrices $A_1, A_2, \ldots, A_h \in \mathbb{R}^{D \times D}$ and $B_1, B_2, \ldots, B_h \in \mathbb{R}^{D \times D}$, we have
    \begin{align} \label{eq:appendix:main-theorem}
        F\left(X ;\left\{A_i,B_i\right\}_{i=1}^h \right) = \sum_{i=1}^h f(X; A_i) \cdot B_i = 0 
    \end{align}
    for all positive integers $L$ and $X \in \mathbb{R}^{L \times D}$. Then, if $A_1, A_2, \ldots, A_h$ are pairwise distinct, then 
    \begin{align*}
        B_1 = \ldots = B_h = 0.
    \end{align*}
\end{theorem}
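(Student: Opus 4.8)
The plan is to exploit the asymptotic behavior of the softmax function along suitable rays in the input space. The key observation is that $f(X;A) = \operatorname{softmax}(XAX^\top)\cdot X$, and by rescaling $X \mapsto tX$ and letting $t \to \infty$, the softmax matrix $\operatorname{softmax}(t^2 \cdot XAX^\top)$ concentrates its mass (within each row) on the columns achieving the maximum value of the corresponding row of $XAX^\top$. This gives us a way to "read off" the matrix $A$ from the limiting behavior of $f$, and hence to separate the contributions of the distinct $A_i$'s in the sum $\sum_i f(tX;A_i) B_i$.

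\textbf{Step 1: Reduce to a single row / work with small token sets.} Take $L = 2$ and $X = \begin{pmatrix} x^\top \\ y^\top \end{pmatrix}$ with rows $x, y \in \mathbb{R}^D$. Then $XA X^\top$ is the $2\times 2$ matrix with entries $x^\top A x$, $x^\top A y$, $y^\top A x$, $y^\top A y$, and the first row of $f(X;A)$ equals $\sigma_1 x^\top + \sigma_2 y^\top$ where $(\sigma_1,\sigma_2) = \operatorname{softmax}(x^\top A x, x^\top A y)$. So the identity $\sum_i f(X;A_i)B_i = 0$, restricted to the first row, reads $\sum_{i=1}^h \big(\operatorname{softmax}(x^\top A_i x, x^\top A_i y)_1 \, x^\top + \operatorname{softmax}(\cdots)_2\, y^\top\big) B_i = 0$ for all $x,y$.

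\textbf{Step 2: Use scaling limits to isolate heads.} Replace $(x,y)$ by $(tx, ty)$; then the softmax arguments scale by $t^2$. As $t\to\infty$, $\operatorname{softmax}(t^2 a, t^2 b)_1 \to \mathbf{1}[a>b] + \tfrac12\mathbf{1}[a=b]$. For generic $x,y$ the quantities $x^\top A_i x - x^\top A_i y = x^\top A_i (x-y)$ are nonzero and have definite signs, so in the limit the $i$-th term becomes either $x^\top B_i$ or $y^\top B_i$. Thus $\lim_{t\to\infty}\sum_i f(tX;A_i)B_i$ is a sum over the heads, partitioned according to the sign of $x^\top A_i(x-y)$, of $x^\top B_i$ or $y^\top B_i$. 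Now vary $x,y$ continuously: when we cross a hypersurface $\{x^\top A_i(x-y) = 0\}$ for exactly one index $i$ (possible because the $A_i$ are \emph{distinct}, so the quadrics $x^\top A_i(x-y)=0$ are genuinely different hypersurfaces in $(x,y)$-space and hence intersect transversally away from a lower-dimensional set), the limiting value jumps by exactly $\pm(x^\top B_i - y^\top B_i)$. Continuity considerations (the jump must vanish since the sum is identically $0$ for every $t$, hence so is its limit) force $x^\top B_i = y^\top B_i$ on that hypersurface, and then, by moving the hypersurface around / using that these relations hold on a set with nonempty interior worth of directions, $B_i = 0$.

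\textbf{The main obstacle} will be making the "jump across a single hypersurface" argument rigorous: one must verify that for distinct $A_i$ there exist points $(x_0,y_0)$ where $x_0^\top A_{i_0}(x_0-y_0) = 0$ but $x_0^\top A_j(x_0 - y_0) \neq 0$ for all $j \neq i_0$ (genericity of the hypersurface $\{A_{i_0}\text{-quadric}\}$ relative to the others — here is exactly where pairwise distinctness of the $A_i$ enters), and that near such a point the limit function is a well-defined piecewise expression whose two one-sided values differ by $\pm(x^\top - y^\top)B_{i_0}$. An alternative, possibly cleaner route to the same end is to differentiate or integrate against test directions, or to take $L$ larger and engineer the softmax to behave like an indicator supported on a single "winning" token so that one directly extracts $v^\top B_i$ for a chosen $v$; I would fall back to that if the hypersurface bookkeeping with $L=2$ gets unwieldy. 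Once $B_1 = \cdots = B_h = 0$ is obtained the theorem is proved.
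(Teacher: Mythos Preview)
Your scaling idea is natural, but the jump argument has a real gap that is not just bookkeeping. You assume that pairwise distinctness of the $A_i$ forces the hypersurfaces
\[
H_i \;=\; \{(x,y)\in\mathbb{R}^D\times\mathbb{R}^D : x^\top A_i(x-y)=0\}
\]
to be pairwise distinct, so that one can cross a single $H_{i_0}$ without touching the others. This is false: take $A_2=2A_1$ with $A_1\neq 0$. Then $x^\top A_2(x-y)=2\,x^\top A_1(x-y)$, so $H_1=H_2$ and the two signs are perfectly correlated. Crossing this common hypersurface, both heads flip the same way and the jump in your $t\to\infty$ limit equals $(x-y)^\top(B_1+B_2)$. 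Together with the $L=1$ identity (which already gives $\sum_i B_i=0$) you only recover $B_1+B_2=0$, never $B_1=0$ and $B_2=0$ separately. A concrete $D=1$ check: with $A_1=1$, $A_2=2$, $L=2$, $X=(1,0)^\top$, the first entry of the identity reads $B_1\sigma(1)+B_2\sigma(2)=0$, which combined with $B_1+B_2=0$ forces $B_1=B_2=0$; but this uses $\sigma(1)\neq\sigma(2)$, i.e.\ \emph{finite} softmax values, exactly the information your limit discards. Your fallback (large $L$, softmax as an approximate indicator on a winning token) has the same defect: the argmax of $(x^\top A_1 y_k)_k$ equals the argmax of $(x^\top(2A_1)y_k)_k$, so heads with proportional $A_i$ remain indistinguishable.

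The paper's proof takes the same first step as you (two distinct token vectors, one repeated to make $L$ rows), but then keeps the full dependence on the scale parameter rather than passing to the limit. After an algebraic reduction it arrives at
\[
v\cdot\Bigl(\sum_{i=1}^h \tfrac{1}{e^{t s_i}+L}\,b_i\Bigr)=0
\qquad\text{for all }t\in\mathbb{R}\text{ and all positive integers }L,
\]
where $s_i=uA_iv^\top$ can be made pairwise distinct even when some $A_i$ are scalar multiples of one another. The crux is then a linear-independence lemma: for pairwise distinct $s_i$ one can choose $t_1,\dots,t_h$ and an $L$ so that the matrix $\bigl(1/(e^{t_j s_i}+L)\bigr)_{i,j}$ is invertible, and the paper explicitly notes that freedom in $L$ is needed here. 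The moral for your approach: do not throw away the $t$--dependence by taking $t\to\infty$; instead exploit that the functions $t\mapsto \sigma(t\cdot s_i)$ (or the paper's $t\mapsto 1/(e^{t s_i}+L)$) are linearly independent for distinct $s_i$, which is what actually separates the heads.
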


\begin{proof}
    From Equation~(\ref{eq:appendix:main-theorem}), we have 
    \begin{align} \label{eq:appendix:main-theorem-1}
        \sum_{i=1}^h \operatorname{softmax}\left ( XA_iX^\top\right ) \cdot X \cdot B_i = 0 
    \end{align}
    Consider $L>1$. We write $X = (x,y,y, \ldots, y)^{\top} \in \mathbb{R}^{L \times D}$ where $x = (x_1,\ldots,x_D)$ and $y = (y_1,\ldots,y_D)$ in $\mathbb{R}^{1 \times D}$ (So in $X$, $y$ appears $L-1$ times). We will consider the entry in the first row and first column of both sides of Equation~(\ref{eq:appendix:main-theorem-1}). Let $b_1, b_2, \ldots, b_h \in \mathbb{R}^{D \times 1}$ be the first column of matrices $B_1, B_2, \ldots, B_h$. From Equation~(\ref{eq:appendix:main-theorem-1}), we have
    \begin{align} \label{eq:appendix:main-theorem-2}
        \sum_{i=1}^h \left ( \dfrac{e^{xA_ix^\top}}{e^{xA_ix^\top} + Le^{xA_iy^\top}} \cdot x +  \dfrac{Le^{xA_iy^\top}}{e^{xA_ix^\top} + Le^{xA_iy^\top}} \cdot y \right ) \cdot b_i = 0
    \end{align}
    By substituting $x = y = (1,0,\ldots,0) \in \mathbb{R}^{D}$ to Equation~(\ref{eq:appendix:main-theorem-2}), we have the sum of all first entries of $b_1, \ldots, b_h$ is equal to $0$. Similarly, for every $j = 1, \ldots, D$, the sum of $j^{\text{th}}$ entries of $b_1, \ldots, b_h$ is equal to $0$. It shows that 
    \begin{align}
        b_1 + b_2 + \ldots + b_h = 0.
    \end{align}
    From Equation~(\ref{eq:appendix:main-theorem-2}), we have
    \allowdisplaybreaks
    \begin{align}
        0 &= \sum_{i=1}^h \left ( \dfrac{e^{xA_ix^\top}}{e^{xA_ix^\top} + Le^{xA_iy^\top}} \cdot x +  \dfrac{Le^{xA_iy^\top}}{e^{xA_ix^\top} + Le^{xA_iy^\top}} \cdot y \right ) \cdot b_i \\
        & = \sum_{i=1}^h \left ( \dfrac{e^{xA_ix^\top} + Le^{xA_iy^\top}}{e^{xA_ix^\top} + Le^{xA_iy^\top}} \cdot x +  \dfrac{Le^{xA_iy^\top}}{e^{xA_ix^\top} + Le^{xA_iy^\top}} \cdot (y-x) \right ) \cdot b_i \\
        & = \sum_{i=1}^h \left (x +  \dfrac{Le^{xA_iy^\top}}{e^{xA_ix^\top} + Le^{xA_iy^\top}} \cdot (y-x) \right ) \cdot b_i \\
        & = \sum_{i=1}^h x \cdot b_i + \sum_{i=1}^h \dfrac{Le^{xA_iy^\top}}{e^{xA_ix^\top} + Le^{xA_iy^\top}} \cdot (y-x) \cdot b_i \\
        & = x \cdot \left(\sum_{i=1}^h  b_i\right) + (y-x) \cdot \left (\sum_{i=1}^h \dfrac{Le^{xA_iy^\top}}{e^{xA_ix^\top} + Le^{xA_iy^\top}} \cdot b_i \right ) \\
        & = (y-x) \cdot \left (\sum_{i=1}^h \dfrac{Le^{xA_iy^\top}}{e^{xA_ix^\top} + Le^{xA_iy^\top}} \cdot b_i \right ) \\
        & = (y-x) \cdot \left (\sum_{i=1}^h \dfrac{L}{e^{xA_ix^\top - xA_iy^\top} + L} \cdot b_i \right ) \\
        & = (y-x) \cdot \left (\sum_{i=1}^h \dfrac{L}{e^{xA_i(x-y)^\top} + L} \cdot b_i \right )
    \end{align}
    By let $z = x-y$, from the above equations, we have
    \begin{align}
        z \cdot \left (\sum_{i=1}^h \dfrac{L}{e^{xA_iz^\top} + L} \cdot b_i \right ) = 0,
    \end{align}
    or 
    \begin{align}
        z \cdot \left (\sum_{i=1}^h \dfrac{1}{e^{xA_iz^\top} + L} \cdot b_i \right ) = 0,
    \end{align}
    for all $x=(x_1, \ldots, x_D)$ and $z = (z_1, \ldots, z_D)$ in $\mathbb{R}^D$. Now, each $xA_iz^{\top}$ can be viewed as a polynomial in $2D$ indeterminates $x_1, \ldots, x_D$ and $z_1, \ldots, z_D$ as follows
    \begin{align} \label{eq:appendix:main-theorem-3}
        xA_iz^{\top} = \sum_{p =1}^{D} \sum_{q = 1}^{D} (A_i)_{p,q} \cdot x_p z_q.
    \end{align}
    Since $A_1,\ldots,A_h$ are pairwise distinct, so $xA_1z^\top, \ldots, xA_hz^\top$ are pairwise distinct polynomials. By Lemma \ref{appendix:lemma-distinct-polynoimials}, there exists $u \in \mathbb{R}^D$ and a non-empty open set $V \in \mathbb{R}^D$, such that for all $z \in V$, we have $uA_1z^\top, \ldots, uA_hz^\top$ are pairwise distinct real numbers. Obviously $0 \notin V$. Now fix an $v \in V$ and let $z = t \cdot v$ for $t \in \mathbb{R}$. Denote $uA_iv^\top = s_i$, from Equation~(\ref{eq:appendix:main-theorem-3}), we have
    \begin{align}
        t\cdot v \cdot \left (\sum_{i=1}^h \dfrac{1}{e^{t\cdot s_i } + L} \cdot b_i \right ) = 0
    \end{align}
    for all $t \in \mathbb{R}$, or 
    \begin{align}
         v \cdot \left (\sum_{i=1}^h \dfrac{1}{e^{t\cdot s_i } + L} \cdot b_i \right ) = 0
    \end{align}
    for all $t \in \mathbb{R} \setminus \{0\}$. By continuity, this still holds for $t = 0$. So we have
    \begin{align} \label{eq:appendix:main-theorem-4}
         v \cdot \left (\sum_{i=1}^h \dfrac{1}{e^{t\cdot s_i } + L} \cdot b_i \right ) = 0
    \end{align}
    for all $t \in \mathbb{R}$. Now, consider the set
    \begin{align} 
        S \coloneqq \left \{ \left ( \dfrac{1}{e^{t\cdot s_1 } + L}, \dfrac{1}{e^{t\cdot s_2 } + L}, \ldots, \dfrac{1}{e^{t\cdot s_h } + L} \right ) \in \mathbb{R}^h ~ \colon ~ t \in \mathbb{R} \right \}.
    \end{align}
    From Equation~(\ref{eq:appendix:main-theorem-4}), we have the linear span of $S$, i.e. $\operatorname{span}(S)$, satisfies that: For all $s = (s_1, s_2, \ldots, s_h)$ $\in \operatorname{span}(S)$, we have
    \begin{align}
        v \cdot \left (\sum_{i=1}^h s_i  \cdot b_i \right ) = 0.
    \end{align}
    In other words, $v$ and $\sum_{i=1}^h s_i  \cdot b_i$ are orthogonal to each other as two vectors in $\mathbb{R}^D$. Since $s_1, \ldots, s_h \in \mathbb{R}$ are pairwise distinct, by Lemma \ref{appendix:lemma-full-rank}, there exist $t_1, \ldots, t_h \in \mathbb{R}$ such that their $h$ corresponding vectors in $S$ form a basis of $\mathbb{R}^h$. This implies $\operatorname{span}(S) = \mathbb{R}^h$. We have the set
    \begin{align}
        \operatorname{span}\left(\{b_1, \ldots, b_h\}\right) &= \left\{ \sum_{i=1}^h s_i  \cdot b_i  ~ \colon ~ s = (s_1, \ldots, s_h) \in \mathbb{R}^h  \right \}\\
        &= \left\{ \sum_{i=1}^h s_i  \cdot b_i  ~ \colon ~ s = (s_1, \ldots, s_h) \in \operatorname{span}(S) \right \}.
    \end{align}
    By the previous observation, it implies that $v$ is orthogonal to every vectors in $\operatorname{span}\left(\{b_1, \ldots, b_h\}\right)$. In other words, the orthogonal complement of $\operatorname{span}\left(\{b_1, \ldots, b_h\}\right)$, which is denoted by $\operatorname{span}\left(\{b_1, \ldots, b_h\}\right)^\perp$, contains $v$. This holds for every vectors $v$ in the non-empty open set $V$, so 
    \begin{align}
        V \subset \operatorname{span}\left(\{b_1, \ldots, b_h\}\right)^\perp.
    \end{align}
    Since $\operatorname{span}\left(\{b_1, \ldots, b_h\}\right)^\perp$ is a linear subspace of $\mathbb{R}^D$ that contains a non-empty open set of  $\mathbb{R}^d$, so by Lemma \ref{appendix:lemma-subspace-contains-open-set}, we have $\operatorname{span}\left(\{b_1, \ldots, b_h\}\right)^\perp = \mathbb{R}^D$. This implies $\operatorname{span}\left(\{b_1, \ldots, b_h\}\right) = 0$, which means $b_1 = \ldots = b_h = 0$. So the first column of $B_1, \ldots, B_h$ are all equal to $0$, and a similar proof is applied for every other columns of $B_1, \ldots, B_h$. So $B_1 = \ldots = B_h = 0$.
\end{proof}

\begin{lemma} \label{appendix:lemma-subspace-contains-open-set}
    Let $D$ be a positive integer, and $V$ is a non-empty open set of $\mathbb{R}^D$ with the usual topology. If $U$ is a linear subspace of $\mathbb{R}^D$ that contains $V$, then $U = \mathbb{R}^D$.
\end{lemma}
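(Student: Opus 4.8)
The plan is to exploit the elementary fact that a linear subspace of $\mathbb{R}^D$ which contains a nonempty open set must be all of $\mathbb{R}^D$. First I would pick any point $v_0 \in V$; this is possible because $V$ is nonempty. Since $V$ is open in the usual topology, there is a radius $\varepsilon > 0$ such that the open ball $B(v_0, \varepsilon)$ is contained in $V$, and hence in $U$.

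Next I would verify that every $w \in \mathbb{R}^D$ lies in $U$. If $w = 0$ this is immediate, so assume $w \neq 0$ and choose a nonzero scalar $t$ with $|t| < \varepsilon / \|w\|$. Then $\|(v_0 + t w) - v_0\| = |t| \cdot \|w\| < \varepsilon$, so $v_0 + t w \in B(v_0, \varepsilon) \subseteq U$. Using that $U$ is closed under subtraction, $t w = (v_0 + t w) - v_0 \in U$, and then using that $U$ is closed under scalar multiplication (multiply by $1/t$) we obtain $w \in U$. Since $w$ was arbitrary, $U = \mathbb{R}^D$.

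There is essentially no obstacle here: the statement is a standard fact from linear algebra and point-set topology, and the only points requiring a little care are handling the degenerate case $w = 0$, ensuring the scalar $t$ chosen above is nonzero, and invoking the subspace axioms (closure under subtraction and under scalar multiplication) explicitly rather than implicitly. An alternative packaging would translate the ball by $-v_0$ to see that $U \supseteq B(0, \varepsilon)$, deduce that $U$ contains $(\varepsilon/2)$ times each standard basis vector, and conclude since a subspace containing a spanning set is the whole space; but the scaling argument above is the most direct route.
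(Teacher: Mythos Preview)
Your proof is correct and essentially identical to the paper's: both pick a point in $V$, find a ball around it contained in $U$, and then for an arbitrary nonzero vector rescale it so that the translate lands in the ball, concluding via the subspace axioms. The only cosmetic difference is that the paper works with a closed ball and places $x + r \cdot y/\|y\|$ exactly on its boundary, whereas you use an open ball and a strict inequality on $|t|$; this has no bearing on the argument.
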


\begin{proof}
    Since $V$ is non-empty, let $x \in V$. Since $V$ is open, there exists $r > 0$ such that the closed ball
    \begin{align}
        \bar{B}_r(x) = \{y \in \mathbb{R}^D ~ \colon ~ \|x-y\| \le r \} \subset V.
    \end{align}
    Then for all $y \in \mathbb{R}^D$ that $y \neq 0$, we have
    \begin{align}
        \left \|x - \left (x + r \cdot \dfrac{y}{\|y\|} \right ) \right \| = r,
    \end{align}
    which means 
    \begin{align}
        x + r \cdot \dfrac{y}{\|y\|} \in \bar{B}_r(x) \subset V \subset U.
    \end{align}
    Since $x$ is also in $V \subset U$, and $U$ is a linear subspace, then $y \in U$. So, for $y \in \mathbb{R}^D$ that $y \neq 0$, we have $y \in U$, and clearly $0 \in U$, so $U = \mathbb{R}^D$.
\end{proof}

\begin{remark}
    Lemma \ref{appendix:lemma-subspace-contains-open-set} still holds if we replace $\mathbb{R}^D$ by a normed vector space.
\end{remark}

\begin{lemma} \label{appendix:lemma-distinct-polynoimials}
    Let $n, h$ be two positive integers, and  $f_1, \ldots, f_h \in \mathbb{R}[x_1, \ldots, x_n]$ be $h$ pairwise distinct real polynomials in $n$ variables. Then there exists a non-empty open subset $U$ of $\mathbb{R}^n$ such that $f_1(x), \ldots, f_h(x)$ are pairwise distinct for all $x \in U$.
\end{lemma}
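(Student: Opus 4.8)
The plan is to reduce the statement to the classical fact that a nonzero real polynomial cannot vanish on a nonempty open subset of $\mathbb{R}^n$, and then intersect finitely many open dense sets.

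First, for each pair $(i,j)$ with $1 \le i < j \le h$, I would set $g_{ij} := f_i - f_j$. By hypothesis the $f_i$ are pairwise distinct as polynomials, so each $g_{ij}$ is a \emph{nonzero} element of $\mathbb{R}[x_1,\ldots,x_n]$. Writing $Z(g_{ij})$ for its real zero set, the candidate open set is $U := \mathbb{R}^n \setminus \bigcup_{i<j} Z(g_{ij})$, on which $f_i(x) \neq f_j(x)$ for all $i \neq j$ by construction; it is automatically open, being the complement of a finite union of closed sets, so it only remains to show $U$ is nonempty.

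The key step is therefore: a nonzero polynomial $g \in \mathbb{R}[x_1,\ldots,x_n]$ does not vanish identically on any nonempty open set, hence $Z(g)$ has empty interior and $\mathbb{R}^n \setminus Z(g)$ is open and dense. I would prove this by induction on $n$. For $n=1$ a nonzero univariate polynomial has only finitely many roots. For the inductive step, since any nonempty open set contains a nonempty open box, suppose $g$ vanishes on $\prod_{k=1}^n (a_k,b_k)$; expand $g = \sum_{m=0}^{d} c_m(x_1,\ldots,x_{n-1})\, x_n^m$ with $c_d \neq 0$. For each fixed $(x_1,\ldots,x_{n-1}) \in \prod_{k=1}^{n-1}(a_k,b_k)$, the univariate polynomial $g(x_1,\ldots,x_{n-1},\cdot)$ vanishes on the interval $(a_n,b_n)$, hence is the zero polynomial, so every $c_m$ vanishes on $\prod_{k=1}^{n-1}(a_k,b_k)$; by the inductive hypothesis $c_m \equiv 0$ for all $m$, contradicting $c_d \neq 0$. (Alternatively one could invoke that $Z(g)$ has Lebesgue measure zero.) Granting this, each $\mathbb{R}^n \setminus Z(g_{ij})$ is dense and open, so $U$ is a finite intersection of dense open sets, hence dense and in particular nonempty, which finishes the proof.

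I do not expect a genuine obstacle here: the only substantive ingredient is the lemma that a nonzero polynomial is nonvanishing on some open set, and the remainder is the elementary finite-intersection-of-dense-opens argument (a trivial instance of Baire category). The only place to be slightly careful is the bookkeeping in reducing an arbitrary nonempty open set to an open box during the induction.
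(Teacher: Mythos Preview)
Your proposal is correct and rests on the same core fact as the paper's proof: each difference $f_i-f_j$ is a nonzero polynomial, and a nonzero real polynomial in $n$ variables cannot vanish identically (equivalently, its zero set has empty interior). The only difference is packaging: the paper multiplies all the differences into a single polynomial $f=\prod_{i<j}(f_i-f_j)$, invokes the integral-domain property of $\mathbb{R}[x_1,\ldots,x_n]$ to get $f\not\equiv 0$, picks one point $a$ with $f(a)\neq 0$, and uses continuity of $f$ to obtain $U=f^{-1}(V)$ for an open $V\ni f(a)$ avoiding $0$; you instead keep the $\binom{h}{2}$ differences separate, prove explicitly (by induction on $n$) that each complement $\mathbb{R}^n\setminus Z(g_{ij})$ is dense open, and intersect. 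The paper's route is a bit more economical since it avoids both the induction and the finite-intersection-of-dense-opens step, but your version has the virtue of spelling out the ``nonzero polynomial $\Rightarrow$ nonvanishing on an open set'' lemma rather than taking it for granted.
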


\begin{proof}
    Consider the polynomial
    \begin{align}
        f = \prod_{1 \le i < j \le h} (f_i-f_j) \in \mathbb{R}[x_1, \ldots, x_n].
    \end{align}
    Since $f_1, \ldots, f_h$ are pairwise distinct, then $f_i-f_j$ is non-zero for all $1 \le i < j \le h$. It is well-known that the real polynomial ring is an integral domain, so $f$ is non-zero. There exists $a \in \mathbb{R}^n$ such that $f(a) \neq 0$. Since $\mathbb{R}$ is Hausdorff, we can choose a open set $V$ of $\mathbb{R}$ that contains $a$ and does not contain $0$. Let $U = f^{-1}(V)$, and since $f$ is continuous, $U$ is open. We have $f(x) \neq 0$ for all $x \in U$, which means $f_1(x), \ldots, f_n(x)$ are pairwise distinct for all $x \in U$
\end{proof}

\begin{lemma} \label{appendix:lemma-full-rank}
    Let $h$ be a positive integer, and $h$ distinct real numbers $s_1,\ldots,s_h$. Then there exists $h$ real numbers $t_1,\ldots,t_h$ and a positive integer $L$ such that the matrix 
    \begin{align}
        A = (a_{ij})_{1 \le i \le h, 1 \le j \le h} \in \mathbb{R}^{h \times h}, ~~\text{ where } a_{ij} = \dfrac{1}{e^{t_i\cdot s_j}+L},
    \end{align}
    is full rank. In other words, there exists $h$ real numbers $t_1,\ldots,t_h$ and a positive integer $L$ such that $h$ vectors
    \begin{align}
        \left (\dfrac{1}{e^{t_i\cdot s_1}+L}, \dfrac{1}{e^{t_i\cdot s_2}+L}, \ldots, \dfrac{1}{e^{t_i\cdot s_h}+L} \right ) \in \mathbb{R}^h
    \end{align}
    for $i = 1, \ldots, h$, form a basis of $\mathbb{R}^h$.
\end{lemma}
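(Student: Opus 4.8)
The plan is to split the statement into a soft reduction and a concrete independence computation. Fix once and for all $L = 2$ (any integer $L \ge 2$ works equally well), and for $j = 1, \dots, h$ let $\gamma_j \colon \mathbb{R} \to \mathbb{R}$ be the function $\gamma_j(t) = 1/(e^{t s_j} + L)$, so that the $j$-th column of the matrix $A$ in the statement is $\big(\gamma_j(t_1), \dots, \gamma_j(t_h)\big)^{\top}$. First I would invoke the following standard principle: \emph{if $g_1, \dots, g_h \colon \mathbb{R} \to \mathbb{R}$ are linearly independent, then there exist $t_1, \dots, t_h \in \mathbb{R}$ with $\det\big(g_j(t_i)\big)_{i,j=1}^h \ne 0$.} This is proved by induction on $h$: the case $h=1$ is immediate; for the step, use the inductive hypothesis on the (still independent) functions $g_1, \dots, g_{h-1}$ to pick $t_1, \dots, t_{h-1}$ making the top-left $(h-1)\times(h-1)$ minor nonzero, expand the $h\times h$ determinant along the last row as $\Phi(t) = \sum_{j=1}^h (-1)^{h+j} M_j g_j(t)$ with the $M_j$ independent of $t$ and $M_h \ne 0$, note $\Phi \not\equiv 0$ (else the $M_j$ would give a nontrivial dependence), and choose $t_h$ with $\Phi(t_h) \ne 0$. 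Applying this with $g_j = \gamma_j$ reduces the lemma to showing that $\gamma_1, \dots, \gamma_h$ are linearly independent over $\mathbb{R}$.

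For the independence I would pass to complex analysis. Each $\gamma_j$ extends to a meromorphic function on $\mathbb{C}$: when $s_j \ne 0$, the entire function $e^{t s_j} + L$ has only simple zeros, at the points $t$ with $t s_j = \ln L + (2k+1)\pi i$, $k \in \mathbb{Z}$, so $\gamma_j$ has simple poles precisely at $t = (\ln L + (2k+1)\pi i)/s_j$. Since $s_j$ is real, all poles of $\gamma_j$ lie on the vertical line $\operatorname{Re}(t) = (\ln L)/s_j$, and their imaginary parts $(2k+1)\pi/s_j$ are nonzero, so no pole lies on $\mathbb{R}$. Because $\ln L = \ln 2 \ne 0$ and the $s_j$ are pairwise distinct, the numbers $(\ln L)/s_j$ are pairwise distinct, so the pole sets of the $\gamma_j$ with $s_j \ne 0$ are pairwise disjoint (and a $\gamma_j$ with $s_j = 0$ is a nonzero constant, hence has no poles). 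Now suppose $\sum_{j=1}^h c_j \gamma_j \equiv 0$ on $\mathbb{R}$. This sum is holomorphic on $\mathbb{C}$ minus the discrete union of all poles, a connected open set containing $\mathbb{R}$, so by the identity theorem it vanishes identically there, hence as a meromorphic function on $\mathbb{C}$. Fixing any $j$ with $s_j \ne 0$ and any pole $p_j$ of $\gamma_j$, every $\gamma_k$ with $k \ne j$ is holomorphic near $p_j$, so $c_j \gamma_j = -\sum_{k \ne j} c_k \gamma_k$ must be holomorphic at $p_j$; as $\gamma_j$ has a genuine simple pole there, this forces $c_j = 0$. If some $s_{j_0} = 0$, then $c_{j_0}\gamma_{j_0} \equiv 0$ with $\gamma_{j_0} = 1/(1+L) \ne 0$ gives $c_{j_0} = 0$ as well. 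Hence all $c_j = 0$, which is the required linear independence.

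The only delicate step is the second one: everything hinges on the poles of distinct $\gamma_j$ being genuinely separated, and this is exactly where both hypotheses $s_i \ne s_j$ and the choice $L \ne 1$ are used — for $L = 1$ all poles would sit on the imaginary axis and could collide (e.g.\ when one $s_j$ is an odd multiple of another), so the argument would break. A purely real-variable proof, analysing the asymptotics of $\sum_j c_j/(e^{ts_j}+L)$ as $t \to \pm\infty$, is possible but becomes cumbersome once the $s_j$ carry mixed signs; the meromorphic-extension argument treats all sign patterns uniformly, needing only to peel off a possible zero $s_j$ at the end. Finally, I would remark that the lemma only demands the existence of real numbers $t_1, \dots, t_h$ — not that they be distinct — so Step 1 delivers precisely what is needed, and that the construction works for every integer $L \ge 2$, which is all the subsequent application in the proof of Theorem~\ref{thm:multi_head_zero} requires.
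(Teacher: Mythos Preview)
Your proof is correct and takes a genuinely different route from the paper's. The paper's argument is a real-variable induction on $h$: it treats $L$ as a free real parameter, observes that for fixed $t_1,\dots,t_h$ the determinant is a rational function of $L$ (so nonvanishing for some real $L$ suffices to get a positive integer $L$), and then in the inductive step chooses $t_1$ so that $e^{t_1 s_1}$ is far from all other $e^{t_i s_j}$, forcing the Laplace expansion along the first row to blow up as $L \to -e^{t_1 s_1}$ via explicit $\delta_1,\delta_2,\delta_3$ bounds on a compact interval. Your approach instead fixes $L=2$ once, reduces via the standard ``linearly independent functions can be sampled to an invertible matrix'' principle, and then establishes independence of the $\gamma_j$ by meromorphic continuation and pole separation on the distinct vertical lines $\operatorname{Re}(t)=\ln 2/s_j$.

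What each approach buys: the paper's argument is entirely elementary (no complex analysis) but pays for it with a delicate bounding step and the need to let $L$ float until the end. Your argument is cleaner and more conceptual, and in fact proves slightly more --- any fixed integer $L\ge 2$ works, not merely some $L$ --- which is worth noting since the paper's own remark after the lemma (pointing out that a fixed $L$ can fail when $s_1+s_2=s_3+s_4=0$) is really about $L=1$, exactly the case your pole-separation argument excludes because $\ln 1=0$ collapses all poles onto the imaginary axis. The only cost of your route is the appeal to the identity theorem for meromorphic functions, which some readers may view as heavier machinery than the statement warrants; but the argument is short, uniform over sign patterns of the $s_j$, and avoids the paper's somewhat intricate choice of $t_1$ and the interval $\Delta$.
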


\begin{proof}
    We first have an observations. Let $t_1, \ldots, t_h$ be fixed and $x$ be a variable. Consider the matrix  
    \begin{align}
        A(x) = (a_{ij})_{1 \le i \le h, 1 \le j \le h}, ~~\text{ where } a_{ij} = \dfrac{1}{e^{t_i\cdot s_j}+x}
    \end{align}
    Then the determinant of $A(x)$, denoted by $\operatorname{det}(A(x))$ can be viewed as a real rational function, i.e. a function that can be written as the ratio of two real polynomials. So, in the case that this rational function is zero, $\operatorname{det}(A(x)) = 0$ for all $x \in \mathbb{R}$, and in the case that this rational function is non-zero, there are a finite number of $x \in \mathbb{R}$ such that $\operatorname{det}(A(x)) = 0$ or $\operatorname{det}(A(x))$ is not defined. In other words, $A(x)$ is not full rank for all $x$, or for only a finite number of $x$. So, in Lemma \ref{appendix:lemma-full-rank}, if there exists $h$ real numbers $t_1,\ldots,t_h$ and a real number $L$ that makes $A$ becomes full rank, then $L$ can be made to be a positive integer.

Back to the problem, we will prove by mathematical induction. We will show that for every $h$, it is possible to choose $t_1, \ldots, t_h$ and $L$ to make $\operatorname{det}(A)$ becomes non-zero. For $h = 1$, then the matrix $A$ is full rank since its single entry is always positive for $t=L=1$. Assume that the result holds for a positive integer $h-1$, we will show it holds for $h$. For $j = 1, \ldots, h$, let $B_j$ is the $(h-1 ) \times (h-1)$ matrix obtained by removing the first row and the $j^{\text{th}}$ column of matrix $A$. By computing the determinant of $A$ via the Laplace expansion along the first row, we have 
    \begin{align} \label{eq:appendix:main-theorem-5}
        \operatorname{det} (A) = \sum_{j=1}^h (-1)^{1+j} \cdot a_{1j} \cdot \operatorname{det} (B_j) = \sum_{j=1}^h (-1)^{1+j} \cdot \dfrac{1}{e^{t_1\cdot s_j}+2} \cdot \operatorname{det} (B_j)
    \end{align}

    Denote $c_j = (-1)^{1+j} \cdot \operatorname{det}(B_j)$, and note that $c_j$ depends on the choice of $t_2, \ldots, t_h$. Without loss of generality, assume $s_1 \neq 0$. Since $s_2, \ldots, s_{h}$ are $h-1$ pairwise distinct real numbers, by the induction hypothesis, there exists $t_2, \ldots, t_{h}$ such that $c_1$ is non-zero for at least one $L \in \mathbb{R}$. So with this choice of $t_2, \ldots, t_h$, there exists $\alpha \in \mathbb{R}$ such that 
    \begin{align}
        c_1 \text{ is non-zero for all } L < \alpha.
    \end{align}
    Since $s_1, \ldots, s_h$ are pairwise distinct and $s_1 \neq 0$, we can choose $t= t_1 \in \mathbb{R}$ such that:
    \begin{enumerate}
        \item $e^{t_1\cdot s_1} > 1 - \alpha$; and,
        \item $|e^{t_1 \cdot s_1} - e^{t_i \cdot s_j}| > 3$; for all $(i,j) \neq (1,1)$.
    \end{enumerate} 
    With this choice of $t_1$,  let $\Delta = [-1-e^{t_1 \cdot s_1}, 1 -e^{t_1 \cdot s_1}]$. Then for $L \in \Delta$
    \begin{enumerate}
        \item We have $e^{t_1 \cdot s_1} + L \in [-1,1]$.
        \item For $(i,j) \neq (1,1)$, since $|e^{t_1 \cdot s_1} - e^{t_i \cdot s_j}| > 3$, then $e^{t_i \cdot s_j} + L \notin [-1,1]$.
        \item We have $L \le 1-e^{t_1 \cdot s_1} < 1 - (1-\alpha) = \alpha$.
    \end{enumerate}

    We show that with this choice of $t_1,t_2, \ldots, t_h$, there exists $L \in \mathbb{R}$ such that $\operatorname{det}(A)$ is non-zero. Assume the contrary that
    \begin{equation} \label{eq:appendix:main-theorem-6}
        \sum_{j=1}^h  \dfrac{1}{e^{t_1\cdot s_j}+L} \cdot c_j = 0
    \end{equation}
    for all $L \in \mathbb{R}$. This implies that
    \begin{align}
         \dfrac{1}{e^{t_1\cdot s_1}+L} \cdot c_1= \sum_{j=2}^h  \dfrac{1}{e^{t_1 \cdot s_j}+L} \cdot c_j,
    \end{align}
    so 
    \begin{align} \label{eq:appendix:main-theorem-7}
        \left |\dfrac{1}{e^{t_1\cdot s_1}+L} \right | \cdot \left | c_1 \right| = \left |\sum_{j=2}^h  \dfrac{1}{e^{t_1 \cdot s_j}+L} \cdot c_j \right |
         \le \sum_{j=2}^h \left |\dfrac{1}{e^{t_1 \cdot s_j}+L} \right | \cdot \left |c_j \right |
    \end{align}
    Considering $c_1, \ldots, c_h$ as functions in $L$, we have these functions are well-defined on the closed interval $\Delta$, since they are determinants of matrices where their entries are 
    \begin{align}
        \dfrac{1}{e^{t_i\cdot s_j}+L}
    \end{align}
    for $1 < i \le h$ and $1 \le j \le h$, and these entries are defined on $\Delta$ by the choice of $t_1, \ldots, t_h$. Moreover, $c_1, \ldots, c_h$ are continuous on $\Delta$. Since a continuous function on a compact set is bounded, so there exists $\delta_1 >0$ such that 
    \begin{align} \label{eq:appendix:main-theorem-8}
        |c_1|, \ldots, |c_h| < \delta_1, ~ \text{ for all } L \in \Delta.
    \end{align}
    Moreover, since $L \in \Delta$ implies $L < \alpha$, then $|c_1| > 0$ for all $L < \alpha$, which means there exists $\delta_2>0$ such that
    \begin{align} \label{eq:appendix:main-theorem-9}
        |c_1| > \delta_2,  ~ \text{ for all } L \in \Delta.
    \end{align}
    Similarly, for $1 < j \le h$,
    \begin{align}
        \dfrac{1}{e^{t_1 \cdot s_j}+L},
    \end{align}
    considered as functions in $L$, is well-defined and continuous on $\Delta$, so there exist $\delta_3>0$ such that 
    \begin{align} \label{eq:appendix:main-theorem-10}
        \left |\dfrac{1}{e^{t_1 \cdot s_2}+L} \right |, \ldots, \left |\dfrac{1}{e^{t_1 \cdot s_h}+L} \right | < \delta_3 ~ \text{ for all } L \in \Delta.
    \end{align}
    From Equation~(\ref{eq:appendix:main-theorem-7}), since we have Equations (\ref{eq:appendix:main-theorem-8}), (\ref{eq:appendix:main-theorem-9}), (\ref{eq:appendix:main-theorem-10}), then for all $L \in \Delta \setminus \{-e^{t_1 \cdot s_1} \}$,
    \begin{align} \label{eq:appendix:main-theorem-11}
        \delta_2 \cdot \left |\dfrac{1}{e^{t_1\cdot s_1}+L} \right | \le (h-1) \cdot \delta_1 \cdot \delta_3.
    \end{align}
    As $L \rightarrow -e^{t_1 \cdot s_1}$, the LHS of Equation~(\ref{eq:appendix:main-theorem-11}) goes to $\infty$, but the RHS is a constant, which is a contradiction. So with the choice of $t_1, \ldots, t_h$, there exists $L \in \mathbb{R}$ such that $\operatorname{det}(A) \neq 0$. The result holds for $h$. By mathematical induction, it holds for every positive integers $h$. The proof is done.
\end{proof}

\begin{remark}
    If we fix a positive integer $L$, there might not exist $t_1,\ldots,t_h$ satisfy the condition. For instance, if $h \ge 4$, and $s_1+s_2 = s_3+s_4 = 0$, then the matrix $A$ is not full rank for all $t_1,\ldots,t_h$. 
\end{remark}

We have two direct corollaries of Theorem \ref{thm:multi_head_zero}.

\begin{corollary} \label{appendix:corollary-of-the-main-theorem}
    Let $D$ be a positive integer. Assume that, for two positive integers $h,h'$, collections $\{A_i\}_{i=1}^{h}$, $\{A'_i\}_{i=1}^{h'}$, $\{B_i\}_{i=1}^{h}$, $\{B'_i\}_{i=1}^{h'}$ of matrices in $\mathbb{R}^{D \times D}$, we have
    \begin{align} \label{appendix:eq-corollary-multihead}
        F\left(X ;\left\{A_i,B_i\right\}_{i=1}^h \right) = F\left(X ;\left\{A'_i,B'_i\right\}_{i=1}^{h'} \right)
    \end{align}
    for all positive integers $L$ and $X \in \mathbb{R}^{L \times D}$. Then, for all $A \in \mathbb{R}^{D \times D}$, we have
    \begin{align}
        \sum_{i ~ : ~ A_i = A} B_i = \sum_{i ~ : ~ A'_i = A} B'_i.
    \end{align}
\end{corollary}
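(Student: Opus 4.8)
The plan is to deduce the corollary directly from Theorem~\ref{thm:multi_head_zero} by merging the two families into a single $F$-map and then grouping together all terms whose ``$A$-part'' coincides. First I would use the linearity of $F$ in its $B$-arguments, together with the fact that, by definition, $f(X;A)$ depends only on the matrix $A$ (and on $X$), not on the index it happens to carry. Hence the hypothesis~\eqref{appendix:eq-corollary-multihead} can be rewritten as
\begin{align*}
    0 = F\left(X;\{A_i,B_i\}_{i=1}^h\right) - F\left(X;\{A'_i,B'_i\}_{i=1}^{h'}\right)
      = \sum_{i=1}^h f(X;A_i)\cdot B_i \;-\; \sum_{i=1}^{h'} f(X;A'_i)\cdot B'_i,
\end{align*}
for all positive integers $L$ and all $X \in \mathbb{R}^{L \times D}$.

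Next, let $C_1,\ldots,C_m$ be an enumeration of the (pairwise distinct) matrices in the finite set $\{A_1,\ldots,A_h\}\cup\{A'_1,\ldots,A'_{h'}\}\subset \mathbb{R}^{D\times D}$, and set
\begin{align*}
    E_j \;:=\; \sum_{i \,:\, A_i = C_j} B_i \;-\; \sum_{i \,:\, A'_i = C_j} B'_i \;\in\; \mathbb{R}^{D\times D}, \qquad j = 1,\ldots,m.
\end{align*}
Collecting equal $f$-factors in the previous display gives $\sum_{j=1}^m f(X;C_j)\cdot E_j = 0$ for all $L$ and $X$, i.e.\ $F\left(X;\{C_j,E_j\}_{j=1}^m\right)=0$ identically. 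Since $C_1,\ldots,C_m$ are pairwise distinct by construction, Theorem~\ref{thm:multi_head_zero} applies and yields $E_1 = \cdots = E_m = 0$; that is, $\sum_{i:A_i=C_j} B_i = \sum_{i:A'_i=C_j} B'_i$ for every $j$.

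Finally, I would dispose of an arbitrary $A \in \mathbb{R}^{D\times D}$: if $A = C_j$ for some $j$, the claim is exactly what has just been proved; if $A$ equals none of the $C_j$, then both index sets $\{i : A_i = A\}$ and $\{i : A'_i = A\}$ are empty, so both sides of the asserted identity are the zero matrix. This completes the argument.

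There is essentially no serious obstacle here: all the real content is packaged inside Theorem~\ref{thm:multi_head_zero}, and the only point requiring care is the bookkeeping — that $f(X;\cdot)$ genuinely depends only on its matrix argument (so that terms with a repeated $A$ may be amalgamated into one term), that the merged list $C_1,\ldots,C_m$ is made pairwise distinct \emph{before} invoking the theorem, and that matrices $A$ lying outside the merged list are handled by the empty-sum convention.
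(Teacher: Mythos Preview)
Your proof is correct and follows essentially the same approach as the paper: rewrite the hypothesis as a single vanishing $F$-expression, group terms by their distinct $A$-values, and invoke Theorem~\ref{thm:multi_head_zero}. Your version is in fact more carefully written than the paper's (which contains a self-referential typo and omits the empty-sum case), but the underlying idea is identical.
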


\begin{proof}
    From Equation~(\ref{appendix:eq-corollary-multihead}), we have 
    \begin{align}
        \sum_{A \in \mathbb{R}^{D \times D}} f(XAX^\top) \cdot X \cdot \left ( \sum_{i ~ : ~ A_i = A} B_i - \sum_{i ~ : ~ A'_i = A} B'_i\right ) = 0,
    \end{align}
    then the result is directly from Corollary \ref{appendix:corollary-of-the-main-theorem}.
\end{proof}


\begin{corollary}\label{cor:multihead_symmetry}
    Let $D,D_k,D_v$ and $h,h'$ be positive integers. For $ 1 \le i \le h$ and $1\le j \le h'$, let
    \begin{align}
        \left(W^{(Q,i)},W^{(K,i)},W^{(V,i)},W^{(O,i)}\right) \\
        \left(\overline{W}^{(Q,j)},\overline{W}^{(K,j)},\overline{W}^{(V,j)},\overline{W}^{(O,j)}\right)
    \end{align}
    be elements of $\mathbb{R}^{D \times D_k} \times \mathbb{R}^{D \times D_k} \times \mathbb{R}^{D \times D_v} \times \mathbb{R}^{D_v \times D}$. Assume that, the two corresponding MultiHead's are identical, i.e.
    \begin{align}
        &\operatorname{MultiHead}\left(X; \left\{W^{(Q,i)},W^{(K,i)},W^{(V,i)},W^{(O,i)} \right\}_{i=1}^h\right) \nonumber\\
        &\qquad \qquad =
        \operatorname{MultiHead}\left(X;\left \{\left(\overline{W}^{(Q,j)},\overline{W}^{(K,j)},\overline{W}^{(V,j)},\overline{W}^{(O,j)}\right) \right \}_{j = 1}^{h'}\right).
    \end{align}
    for all positive integers $L$ and $X \in \mathbb{R}^{L \times D}$. Then, for all $A \in \mathbb{R}^{D \times D}$, we have
    \begin{align}
        \sum_{i ~ : ~ W^{(Q,i)}\cdot \left( W^{(K,i)}\right)^\top = A} W^{(V,i)} \cdot W^{(O,i)} = \sum_{j ~ : ~ \overline{W}^{(Q,j)} \cdot \left (\overline{W}^{(K,j)}\right )^\top = A} \overline{W}^{(V,j)} \cdot \overline{W}^{(O,j)}.
    \end{align}
\end{corollary}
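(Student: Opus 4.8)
The plan is to deduce this corollary directly from Corollary~\ref{appendix:corollary-of-the-main-theorem} (hence ultimately from Theorem~\ref{thm:multi_head_zero}), so that the only real work is translating between the $\operatorname{MultiHead}$ parametrization and the $F$-parametrization, together with a reindexing of the sums by the value of the product $W^{(Q,i)}\cdot (W^{(K,i)})^{\top}$.

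First I would rewrite both $\operatorname{MultiHead}$ maps in $F$-form. By the computation in Section~\ref{appendix:sec:multihead}, for any tuple $\bigl(W^{(Q,i)},W^{(K,i)},W^{(V,i)},W^{(O,i)}\bigr)_{i=1}^{h}$ one has
\[
  \operatorname{MultiHead}\bigl(X;\{W^{(Q,i)},W^{(K,i)},W^{(V,i)},W^{(O,i)}\}_{i=1}^{h}\bigr)
  = F\bigl(X;\{A_i,B_i\}_{i=1}^{h}\bigr),
\]
where $A_i := \tfrac{1}{\sqrt{D_k}}\,W^{(Q,i)}\cdot(W^{(K,i)})^{\top}\in\mathbb{R}^{D\times D}$ and $B_i := W^{(V,i)}\cdot W^{(O,i)}\in\mathbb{R}^{D\times D}$. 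Setting $\overline{A}_j := \tfrac{1}{\sqrt{D_k}}\,\overline{W}^{(Q,j)}\cdot(\overline{W}^{(K,j)})^{\top}$ and $\overline{B}_j := \overline{W}^{(V,j)}\cdot\overline{W}^{(O,j)}$ for the barred family, the hypothesis becomes exactly $F\bigl(X;\{A_i,B_i\}_{i=1}^{h}\bigr) = F\bigl(X;\{\overline{A}_j,\overline{B}_j\}_{j=1}^{h'}\bigr)$ for all positive integers $L$ and all $X\in\mathbb{R}^{L\times D}$.

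Next I would invoke Corollary~\ref{appendix:corollary-of-the-main-theorem} applied to these two families, which gives, for every $\widetilde{A}\in\mathbb{R}^{D\times D}$,
\[
  \sum_{i\,:\,A_i=\widetilde{A}} B_i \;=\; \sum_{j\,:\,\overline{A}_j=\widetilde{A}} \overline{B}_j .
\]
Finally, given an arbitrary $A\in\mathbb{R}^{D\times D}$ from the statement, I would apply this with $\widetilde{A} := A/\sqrt{D_k}$: the condition $A_i=\widetilde{A}$ is equivalent to $W^{(Q,i)}\cdot(W^{(K,i)})^{\top}=A$, the condition $\overline{A}_j=\widetilde{A}$ is equivalent to $\overline{W}^{(Q,j)}\cdot(\overline{W}^{(K,j)})^{\top}=A$, and unfolding the definitions of $B_i$ and $\overline{B}_j$ recovers precisely the asserted identity. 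Since $A\mapsto A/\sqrt{D_k}$ is a bijection of $\mathbb{R}^{D\times D}$, this handles all $A$ and finishes the argument.

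The main obstacle is not located in this corollary at all: once Corollary~\ref{appendix:corollary-of-the-main-theorem} is available, the argument is essentially bookkeeping. The substantive content is the step already carried out in Theorem~\ref{thm:multi_head_zero}, namely that for pairwise distinct core matrices the maps $X\mapsto \operatorname{softmax}(XAX^{\top})\cdot X$ remain linearly independent even after being multiplied on the right by arbitrary matrices; the only point requiring a little care here is tracking the $1/\sqrt{D_k}$ normalization, so that partitioning heads by the value of $W^{(Q,i)}\cdot(W^{(K,i)})^{\top}$ matches partitioning by the value of the $A_i$ that appear inside $F$.
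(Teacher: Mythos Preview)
Your proposal is correct and matches the paper's approach: the paper presents Corollary~\ref{cor:multihead_symmetry} as a direct corollary of Corollary~\ref{appendix:corollary-of-the-main-theorem} (and hence of Theorem~\ref{thm:multi_head_zero}) without writing out a separate proof, and your argument supplies precisely the intended translation from the $\operatorname{MultiHead}$ parametrization to the $F$-parametrization together with the $1/\sqrt{D_k}$ reindexing.
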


We characterize the symmetries of the weights of $\operatorname{MultiHead}$ in the following theorem.

\begin{theorem}\label{thm:multihead_equality}
    Let $h,D,D_k,D_v$ be positive integers.
    Let $\left(W^{(Q,i)},W^{(K,i)},W^{(V,i)},W^{(O,i)}\right)$ and $\left(\overline{W}^{(Q,i)},\overline{W}^{(K,i)},\overline{W}^{(V,i)},\overline{W}^{(O,i)}\right)$ be arbitrary elements of $\mathbb{R}^{D \times D_k} \times \mathbb{R}^{D \times D_k} \times \mathbb{R}^{D \times D_v} \times \mathbb{R}^{D_v \times D}$ with $i=1,\ldots,h$.
    Assume that
    \begin{itemize}
        \item[(a)] $\max(D_k,D_v) \leq D$,
        \item[(b)] the matrices $W^{(Q,i)} \cdot \left(W^{(K,i)}\right)^{\top}$, $\overline{W}^{(Q,i)} \cdot \left(\overline{W}^{(K,i)}\right)^{\top}$, $W^{(V,i)} \cdot W^{(O,i)}$, and $\overline{W}^{(V,i)} \cdot \overline{W}^{(O,i)}$ are of full rank,
        \item[(c)] the matrices $W^{(Q,i)} \cdot \left( W^{(K,i)} \right)^{\top}$ with $i=1,\ldots,h$ are pairwise distinct,
        \item[(d)] the matrices $\overline{W}^{(Q,i)} \cdot \left( \overline{W}^{(K,i)} \right)^{\top}$ with $i=1,\ldots,h$ are pairwise distinct.
    \end{itemize} 
    Then the following are equivalent:
    \begin{enumerate}
        \item For every positive integer $L$ and every $X \in \mathbb{R}^{L \times D}$, we always have
    \begin{align*}
        &\operatorname{MultiHead}\left(X;\left\{W^{(Q,i)},W^{(K,i)},W^{(V,i)},W^{(O,i)}\right\}_{i=1}^h\right) \nonumber\\
        &\qquad \qquad =
        \operatorname{MultiHead}\left(X;\left\{\overline{W}^{(Q,i)},\overline{W}^{(K,i)},\overline{W}^{(V,i)},\overline{W}^{(O,i)}\right\}_{i=1}^h\right).
    \end{align*}

    \item There exist matrices $M^{(i)} \in \operatorname{GL}_{D_k}(\mathbb{R})$ and $N^{(i)} \in \operatorname{GL}_{D_v}(\mathbb{R})$ for each $i=1,\ldots,h$, as well as a permutation $\tau \in \mathcal{S}_h$, such that
    \begin{align*}
        &\left(\overline{W}^{(Q,\tau(i))},\overline{W}^{(K,\tau(i))},\overline{W}^{(V,i)},\overline{W}^{(O,\tau(i))}\right) \nonumber\\
        &\qquad 
        \qquad =
        \left(W^{(Q,i)} \cdot (M^{(i)})^{\top}, W^{(K,i)} \cdot (M^{(i)})^{-1}, W^{(V,i)} \cdot N^{(i)}, (N^{(i)})^{-1} \cdot W^{(O,i)}\right).
    \end{align*}
    \end{enumerate}
\end{theorem}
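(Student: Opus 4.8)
The plan is to prove the two implications separately; essentially all the analytic work is already contained in Theorem~\ref{thm:multi_head_zero}, and what remains is linear algebra.

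For $(2.)\Rightarrow(1.)$ I would substitute directly. Using the rewriting from Section~\ref{sec:maximal_group_multihead},
$$\operatorname{MultiHead}\left(X;\{W^{(Q,i)},W^{(K,i)},W^{(V,i)},W^{(O,i)}\}_{i=1}^h\right)=\sum_{i=1}^h\operatorname{softmax}\!\left(\tfrac{1}{\sqrt{D_k}}\,XW^{(Q,i)}(W^{(K,i)})^{\top}X^{\top}\right)XW^{(V,i)}W^{(O,i)},$$
and noting that $W^{(Q,i)}(M^{(i)})^{\top}\big(W^{(K,i)}(M^{(i)})^{-1}\big)^{\top}=W^{(Q,i)}(W^{(K,i)})^{\top}$ and $W^{(V,i)}N^{(i)}(N^{(i)})^{-1}W^{(O,i)}=W^{(V,i)}W^{(O,i)}$, the $\tau(i)$-th summand of the barred $\operatorname{MultiHead}$ equals the $i$-th summand of the unbarred one; summing over $i$ and reindexing by $\tau$ gives the identity.

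For $(1.)\Rightarrow(2.)$, set $A_i:=W^{(Q,i)}(W^{(K,i)})^{\top}$, $B_i:=W^{(V,i)}W^{(O,i)}$, $\overline{A}_i:=\overline{W}^{(Q,i)}(\overline{W}^{(K,i)})^{\top}$, $\overline{B}_i:=\overline{W}^{(V,i)}\overline{W}^{(O,i)}$. The first step is to apply Corollary~\ref{cor:multihead_symmetry} (a consequence of Theorem~\ref{thm:multi_head_zero}): hypothesis $(1.)$ forces $\sum_{i\,:\,A_i=A}B_i=\sum_{j\,:\,\overline{A}_j=A}\overline{B}_j$ for every $A\in\mathbb{R}^{D\times D}$. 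By the pairwise-distinctness assumptions $(c)$ and $(d)$ each side has at most one term, and by assumption $(b)$ every $B_i$ and $\overline{B}_j$ is nonzero; hence a matrix occurs among $\{A_i\}_{i=1}^h$ exactly when it occurs among $\{\overline{A}_j\}_{j=1}^h$, and then the two associated $B$-terms coincide. As both lists have length $h$ and distinct entries, this yields a permutation $\tau\in\mathcal{S}_h$ with $\overline{A}_{\tau(i)}=A_i$ and $\overline{B}_{\tau(i)}=B_i$. The second step is to recover the factor-level relations from these product-level equalities: since $D_k\le D$ (assumption $(a)$) and $A_i$ is full rank (assumption $(b)$), the identity $A_i=W^{(Q,i)}(W^{(K,i)})^{\top}=\overline{W}^{(Q,\tau(i))}(\overline{W}^{(K,\tau(i))})^{\top}$ gives two full-rank factorizations of the same matrix, so by uniqueness of the full-rank factorization \citep{piziak1999full} there is (after a harmless transpose/inverse relabeling) a matrix $M^{(i)}\in\operatorname{GL}_{D_k}(\mathbb{R})$ with $\overline{W}^{(Q,\tau(i))}=W^{(Q,i)}(M^{(i)})^{\top}$ and $\overline{W}^{(K,\tau(i))}=W^{(K,i)}(M^{(i)})^{-1}$; applying the same statement (with $D_v\le D$) to $B_i=W^{(V,i)}W^{(O,i)}=\overline{W}^{(V,\tau(i))}\overline{W}^{(O,\tau(i))}$ produces $N^{(i)}\in\operatorname{GL}_{D_v}(\mathbb{R})$ with $\overline{W}^{(V,\tau(i))}=W^{(V,i)}N^{(i)}$ and $\overline{W}^{(O,\tau(i))}=(N^{(i)})^{-1}W^{(O,i)}$. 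These are exactly the relations asserted in $(2.)$.

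I expect the main obstacle to be conceptual rather than computational: the single permutation $\tau$ must simultaneously align the query--key products and the value--output products across heads. This is precisely what Corollary~\ref{cor:multihead_symmetry} secures, since it shows the value--output product carried by a head is a function of that head's query--key product; combined with assumptions $(c)$ and $(d)$, which make the query--key products pairwise distinct, the matching of heads is forced and unambiguous. The remaining care is only in tracking transpose and inverse conventions so that the factorized relations emerge in the exact form stated, and in noting that ``full rank'' together with $\max(D_k,D_v)\le D$ is exactly the hypothesis needed to invoke uniqueness of full-rank factorizations.
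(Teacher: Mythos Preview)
Your proposal is correct and follows essentially the same approach as the paper: apply Corollary~\ref{cor:multihead_symmetry} together with assumptions $(b)$--$(d)$ to obtain the permutation $\tau$ matching heads at the level of the products $A_i,B_i$, then invoke uniqueness of full-rank factorization \citep{piziak1999full} (enabled by assumptions $(a)$ and $(b)$) to recover the invertible matrices $M^{(i)},N^{(i)}$ at the factor level. Your write-up is in fact slightly more explicit than the paper's in justifying why the matching $s\mapsto j_s$ is a bijection (via the nonvanishing of each $B_i$ and equal cardinalities), but the argument is otherwise identical.
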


\begin{proof}
    The implication $(2.) \Rightarrow (1.)$ is clear.
    Let us consider the implication $(1.) \Rightarrow (2.)$.
    For each $s=1,\ldots,h$, we set 
    \begin{align*}
        A^{(s)} &= W^{(Q,s)} \cdot \left( W^{(K,s)} \right)^{\top}, & \overline{A}^{(s)} &= \overline{W}^{(Q,s)} \cdot \left( \overline{W}^{(K,s)} \right)^{\top},\\
        B^{(s)} &= W^{(V,s)} \cdot W^{(O,s)}, & \overline{B}^{(s)} &= \overline{W}^{(V,s)} \cdot \overline{W}^{(O,s)},
    \end{align*}
    which are matrices in $\mathbb{R}^{D \times D}$.
    By applying Corollary~\ref{cor:multihead_symmetry} for $A=A^{(s)}$, we see that
    \begin{align}\label{eq:thm:maximal_group_1}
        B^{(s)} = \sum_{j ~ : ~ \overline{A}^{(j)} = A^{(s)}} \overline{B}^{(j)}.
    \end{align}
    However, since the matrices $\overline{A}^{(j)}$ with $j=1,\ldots,h$ are pairwise distinct, there exist a unique index $j_s \in \{1,\ldots,h\}$ such that $\overline{A}^{(j_s)} = A^{(s)}$.
    The correspondence $s \mapsto j_s$ is a permutation of $\{1,\ldots,h\}$.
    Therefore, we can write $j_s = \tau(s)$ for some $\tau \in \mathcal{S}_h$.
    Hence, $\overline{A}^{(\tau(s))} = A^{(s)}$, and thus~\eqref{eq:thm:maximal_group_1} becomes $$\overline{B}^{(\tau(s))} = B^{(s)}.$$
    The theorem is then followed by the rank factorization of the matrices $A^{(s)}$ and $B^{(s)}$ \citep{piziak1999full}.
\end{proof}


\section{Matrix Group Preserved by $\operatorname{LayerNorm}$}\label{appendix:sec:layer_norm}

In our setting, layer normalization is a row-wise operator.
In particular, for a row vector $x=(x_1,\ldots,x_D) \in \mathbb{R}^D$, the standard layer normalization of $x$, denoted by $\operatorname{LayerNorm}(x)$, is determined as
\begin{align}
    \operatorname{LayerNorm}(x) = \sqrt{D} \cdot \frac{x-\bar{x} \cdot \mathbf{1}_D}{||x-\bar{x} \cdot \mathbf{1}_D||_2},
\end{align}
where $\bar{x} = \frac{1}{D}(x_1+\ldots+x_D)$ is the mean of the coordinates of $x$ and $\mathbf{1}_D$ is a row vector in $\mathbb{R}^D$ whose coordinates are all equal to $1$.
Geometrically, we can view $\operatorname{LayerNorm}$ as the composition 
\begin{align}
    \operatorname{LayerNorm} = \tau \circ \rho,
\end{align}
of two transformations $\tau$ and $\rho$ on $\mathbb{R}^D$ defined belows.
\begin{itemize}
    \item The perpendicular projection map $\rho \colon \mathbb{R}^D \to \left<\mathbf{1}_D\right>^{\perp}$ which maps $x \mapsto x-\bar{x} \cdot \mathbf{1}_D$ for each row vector $x \in \mathbb{R}^D$.
    Here, the hyperplane $\left<\mathbf{1}_D\right>^{\perp}$ is the orthogonal complement of $\left<\mathbf{1}_D\right>$ in $\mathbb{R}^D$ with respect to the standard dot product. It is noted that $\rho$ is a linear map with the kernel $\operatorname{Ker}(\rho) = \left<\mathbf{1}_D\right>$, which is the one-dimensional vector subspace of $\mathbb{R}^D$ generated by the row vector $\mathbf{1}_D$.
    
    \item The scaling map $\tau \colon \mathbb{R}^D \setminus \{0\} \to \mathbb{R}^D$ which maps $x \mapsto \sqrt{D} \frac{x}{||x||_2}$.
    The image of $\tau$ is the sphere centered at the origin of radius $\sqrt{D}$.
\end{itemize}

It is noted that, the $\operatorname{LayerNorm}$ operator is not defined on the whole $\mathbb{R}^D$.
Indeed, since $\tau$ is not defined at the origin, it is necesarily that $\rho(x) \neq 0$.
This means that $x$ should not lie in the kernel of $\rho$.
Therefore, the $\operatorname{LayerNorm}$ operator actually defines a map from the set $\mathbb{R}^D \setminus \left<\mathbf{1}_D\right>$ to $\mathbb{R}^D$.

Under the view of the $\operatorname{LayerNorm}$ as a composition above, it is natural to ask which matrix (or linear map represented by this matrix) will commute with both the projection $\rho$ and the scaling $\tau$. The following theorem gives a complete answer to this question.

\begin{theorem}
    Let $D$ be a positive integer.
    Let $\rho$ and $\tau$ be the projection and scaling map defined above.
    The following are equivalent for an arbitrary matrix $M \in \operatorname{GL}_D(\mathbb{R})$:
    \begin{enumerate}
        \item $M$ commutes with $\rho$ and $\tau$, i.e. $\rho(xM)=\rho(x)M$ and $\tau(xM)=\tau(x)M$ for all row vector $x \in \mathbb{R}^D$.
        \item $M$ is an orthogonal matrix such that its row sums and column sums are all equal.
    \end{enumerate}
\end{theorem}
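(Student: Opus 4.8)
The plan is to reduce each of the two commutation requirements separately to an elementary condition on $M$ and then take the conjunction. Throughout, vectors act on the left as in the statement. Let $J \in \mathbb{R}^{D \times D}$ be the all-ones matrix, so that $\mathbf{1}_D^{\top}\mathbf{1}_D = J$, and observe that $\bar{x}\,\mathbf{1}_D = \tfrac{1}{D}\,x J$, whence $\rho(x) = x\bigl(I_D - \tfrac{1}{D}J\bigr)$. Setting $P := I_D - \tfrac{1}{D}J$ (the matrix of the orthogonal projection onto $\langle\mathbf{1}_D\rangle^{\perp}$), we get $\rho(xM) = xMP$ and $\rho(x)M = xPM$; hence the law $\rho(xM) = \rho(x)M$ holds for all row vectors $x$ if and only if $MP = PM$, equivalently $MJ = JM$.

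Next I would unpack $MJ = JM$ entrywise. Since $(MJ)_{ij}$ is the $i$-th row sum of $M$ and $(JM)_{ij}$ is the $j$-th column sum of $M$, the identity $MJ = JM$ says exactly that the $i$-th row sum equals the $j$-th column sum for every pair $(i,j)$. In particular all row sums coincide and all column sums coincide; conversely, if every row sum equals $r$ and every column sum equals $c$, then comparing each with the total sum of the entries of $M$ gives $Dr = Dc$, so $r = c$ and therefore $MJ = JM$. Thus commuting with $\rho$ is precisely equivalent to the statement that all row sums of $M$ are equal and all column sums of $M$ are equal.

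For the scaling map, note that $M \in \operatorname{GL}_D(\mathbb{R})$ guarantees $xM \neq 0$ whenever $x \neq 0$, so for $x \neq 0$ both $\tau(x)M = \sqrt{D}\,\tfrac{xM}{\|x\|_2}$ and $\tau(xM) = \sqrt{D}\,\tfrac{xM}{\|xM\|_2}$ are well defined. These are two scalar multiples of the nonzero vector $xM$, so they agree if and only if $\|xM\|_2 = \|x\|_2$; hence commuting with $\tau$ is equivalent to $\|xM\|_2 = \|x\|_2$ for all $x$, i.e. $MM^{\top} = I_D$, i.e. $M$ is orthogonal. Combining the two reductions, $M$ commutes with both $\rho$ and $\tau$ if and only if $M$ is orthogonal and has all row sums equal and all column sums equal, which is statement (2).

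I do not anticipate a genuine obstacle; each step is a short computation, and since each reduction is an equivalence both implications of the theorem follow at once. The only points deserving care are that $\tau$ is undefined at the origin — so the commutation statements must be read for $x \neq 0$, and the invertibility of $M$ must be invoked to keep $\tau(xM)$ meaningful — together with the small remark that ``all row sums equal'' and ``all column sums equal'' automatically force the two common values to coincide, so no extra hypothesis linking them is needed.
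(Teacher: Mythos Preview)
Your proposal is correct and takes essentially the same approach as the paper's proof: separately reducing commutation with $\rho$ to the equal-row-and-column-sum condition and commutation with $\tau$ to orthogonality via $\|xM\|_2=\|x\|_2$. Your packaging of the $\rho$-step through the projector matrix $P = I_D - \tfrac{1}{D}J$ (so that the condition becomes $MJ = JM$) is a bit slicker than the paper's direct coordinate expansion, but the underlying argument is identical.
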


\begin{proof}
    One can direct verify (2.) implies (1.). To show (1.) implies (2.), let $x = (x_1,\ldots,x_D) \in \mathbb{R}^D$ and $M = (m_{ij})_{1 \le i,j\le D} \in \mathbb{R}^{D \times D}$. From $\rho(xM) = \rho(x)M$, we have
    \begin{align}
        xM - \overline{xM} \cdot \mathbf{1}_D = (x - \overline{x} \cdot \mathbf{1}_D) M,
    \end{align}
    which means
    \begin{align}
        \overline{xM} \cdot \mathbf{1}_D = \overline{x} \cdot \mathbf{1}_D  \cdot M.
    \end{align}
    So
    \begin{align}
        \left(\sum_{i=1}^D \left( x_i \cdot \left (\sum_{j=1}^D m_{ij} \right) \right)\right) \cdot \mathbf{1}_D = \left(\sum_{i=1}^D x_i \right) \cdot \left( \sum_{i=1}^D m_{i1}, ~ \sum_{i=1}^D m_{i2}, ~\ldots, ~ \sum_{i=1}^D m_{iD}  \right).
    \end{align}
    It implies that
    \begin{align}
        \sum_{i=1}^D x_i \cdot \left (\sum_{j=1}^D m_{ij} \right) = \left(\sum_{i=1}^D x_i \right) \cdot \left( \sum_{i=1}^D m_{i1} \right) = \ldots =  \left(\sum_{i=1}^D x_i \right) \cdot \left( \sum_{i=1}^D m_{iD} \right).
    \end{align}
    The above equation holds for all feasible $x$ (as $\rho$ and $\tau$ are not defined on the whole $\mathbb{R}^d$). But since the set of feasible $x$ is dense in $\mathbb{R}^d$, by continuity, the equation holds for all $x \in \mathbb{R}^d$. It implies row sums and column sums of $M$ are all equal. 
    
    In addition, since $\tau(xM) = \tau(x)M$, we have $\|xM\|_2 = \|x\|_2$. By the same argument, it holds for all $x \in \mathbb{R}^d$. Hence, $M$ is orthogonal.
\end{proof}

\begin{remark}[generalized doubly stochastic matrix]
    A matrix whose row sums and column sums are all equal to one is called a \textit{generalized doubly stochastic matrix}.
    \cite{smoktunowicz2019efficient} characterized all orthogonal generalized doubly stochastic matrices. In particular, let $\operatorname{O}(D)$ is the set of $D \times D$ orthogonal matrices, and $\mathcal{U}_D$ is the subset of $\operatorname{O}(D)$ consists of all orthogonal matrices with the first column is $(1/\sqrt{D}) \cdot (\mathbf{1}_D)^\top$, then  every orthogonal generalized doubly stochastic matrix can be writen in the form
    \begin{align}
        \begin{pmatrix}
            1 & 0^\top \\
            0 & X
        \end{pmatrix}
    \end{align}
    for some $(D-1) \times (D-1)$ orthogonal matrix $X$. 
\end{remark}


There are invertible matrices that do not permute $\phi$ and $\tau$, but still have nice behavior with $\operatorname{LayerNorm}$, as we see in the following theorem.

\begin{theorem}\label{cor:layernorm}
    Let $D$ be a positive integer.
    Then for arbitrary permutation matrix $P \in \mathcal{P}_D$ and real number $\lambda \neq 0$, we have
    \begin{align}
        \operatorname{LayerNorm}(\lambda x P) = \operatorname{sign}(\lambda) \operatorname{LayerNorm}(x)P,
    \end{align}    
    for every row vector $x \in \mathbb{R}^D$.
\end{theorem}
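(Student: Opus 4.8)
The plan is to exploit the decomposition $\operatorname{LayerNorm} = \tau \circ \rho$ introduced just above the statement, and to track separately how each of the two factors $\rho$ (the projection $x \mapsto x - \bar{x}\,\mathbf{1}_D$) and $\tau$ (the scaling $z \mapsto \sqrt{D}\,z/\|z\|_2$) interacts with (i) right multiplication by a permutation matrix $P$ and (ii) multiplication by the scalar $\lambda$. Once both interactions are understood, the identity follows by simply composing.

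First I would record the behaviour of $\rho$. Since $\rho$ is linear, $\rho(\lambda x P) = \lambda\,\rho(x P)$. Next, $\rho$ commutes with right multiplication by $P \in \mathcal{P}_D$: we have $\mathbf{1}_D P = \mathbf{1}_D$ because $P$ has exactly one entry $1$ in each column, and $\overline{xP} = \bar{x}$ because a permutation merely rearranges the coordinates of $x$ and hence does not change their mean; therefore
\[
\rho(xP) = xP - \overline{xP}\,\mathbf{1}_D = xP - \bar{x}\,\mathbf{1}_D P = (x - \bar{x}\,\mathbf{1}_D)P = \rho(x)\,P .
\]
Combining the two observations gives $\rho(\lambda x P) = \lambda\,\rho(x)\,P$.

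Next I would treat $\tau$. For any nonzero row vector $z \in \mathbb{R}^D$ and any $P \in \mathcal{P}_D$ one has $\|zP\|_2 = \|z\|_2$ (permuting coordinates preserves the Euclidean norm) and $\|\lambda z\|_2 = |\lambda|\,\|z\|_2$, so, using $\lambda/|\lambda| = \operatorname{sign}(\lambda)$,
\[
\tau(\lambda z P) = \sqrt{D}\cdot\frac{\lambda z P}{|\lambda|\,\|z\|_2} = \operatorname{sign}(\lambda)\,\Bigl(\sqrt{D}\cdot\frac{z}{\|z\|_2}\Bigr)P = \operatorname{sign}(\lambda)\,\tau(z)\,P .
\]
Finally I would assemble the pieces: whenever $\operatorname{LayerNorm}(x)$ is defined, i.e. $x \notin \langle \mathbf{1}_D\rangle$ so that $\rho(x) \neq 0$, the vector $\lambda\,\rho(x)\,P$ is also nonzero (as $\lambda \neq 0$ and $P$ is invertible), hence $\tau$ may legitimately be applied to it; substituting $z = \rho(x)$ into the $\tau$-identity yields
\[
\operatorname{LayerNorm}(\lambda x P) = \tau\bigl(\rho(\lambda x P)\bigr) = \tau\bigl(\lambda\,\rho(x)\,P\bigr) = \operatorname{sign}(\lambda)\,\tau(\rho(x))\,P = \operatorname{sign}(\lambda)\,\operatorname{LayerNorm}(x)\,P .
\]
There is no genuinely hard step here; the only point that warrants a word of care is the domain bookkeeping — checking that under the hypotheses ($\lambda \neq 0$, $P$ a permutation matrix, $x \notin \langle\mathbf{1}_D\rangle$) every expression appearing above is well-defined, which holds because $P$ fixes the line $\langle\mathbf{1}_D\rangle$ setwise so that $\lambda x P \notin \langle\mathbf{1}_D\rangle$ as well.
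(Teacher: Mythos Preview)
Your proof is correct and follows essentially the same approach as the paper: both use the decomposition $\operatorname{LayerNorm} = \tau \circ \rho$ and track how $\rho$ and $\tau$ interact with right multiplication by $P$ and scalar multiplication by $\lambda$. Your write-up is more detailed (in particular the domain bookkeeping), but the underlying argument is identical.
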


\begin{proof}
    We have
    \allowdisplaybreaks
    \begin{align}
        \operatorname{LayerNorm}(\lambda x P) &= \tau \circ \rho (\lambda x P) \\
        &= \tau(\lambda x P - \overline{\lambda x P} \cdot \mathbf{1}_D) \\
        &=\tau( \lambda x P - \lambda \overline{x} \cdot \mathbf{1}_D \cdot P)
        &= \operatorname{sign}(\lambda) \tau( x - \overline{x} \cdot \mathbf{1}_D)P\\
        &= \operatorname{sign}(\lambda) \operatorname{LayerNorm}(x)P
     \end{align}
     We finish the proof.
\end{proof}


\section{Weight Space and Group Action on Weight Space}

In this section, we recall the weight space of a transformer block and the group action on it. Then, we prove Theorem~\ref{thm:invariant_attn} (which corresponds to Theorem~\ref{thm:invariant_attn_mainbody} in the main text).

\subsection{Weight space}
 
Recall that a standard transformer block, which is denoted by $\operatorname{Attn}$, is defined as follows: for each $X \in \mathbb{R}^{L \times D}$, we have
\begin{align}\label{eq:attn_MLP}
    \operatorname{Attn}(X) &= \operatorname{LayerNorm} \left( \operatorname{ReLU} ( \hat{X} \cdot W^{(A)}+ \mathbf{1}_L \cdot b^{(A)}) \cdot W^{(B)} + \mathbf{1}_L \cdot b^{(B)} \right),
\end{align}
where
\begin{align}\label{eq:attn_multihead}
    \hat{X} &= \operatorname{LayerNorm} \left(\operatorname{MultiHead}(X;\{W^{(Q,i)},W^{(K,i)},W^{(V,i)},W^{(O,i)}\}_{i=1}^h) \right).
\end{align}
with $\mathbf{1}_L = [1,\ldots,1]^{\top} \in \mathbb{R}^{L \times 1}$.


The weight space $\mathcal{U}$ of the above transformer block is the vector space:
\begin{align}\label{eq:weight_space_U}
    \mathcal{U} =
    \left( \mathbb{R}^{D \times D_k} \times 
    \mathbb{R}^{D \times D_k}
    \times \mathbb{R}^{D \times D_v} \times \mathbb{R}^{D_v \times D} \right)^h
    \times \big( \mathbb{R}^{D \times D_A} \times \mathbb{R}^{D_A \times D} \big)
    \times \big( \mathbb{R}^{1 \times D_A} \times \mathbb{R}^{1 \times D} \big).
\end{align}
An element $U \in \mathcal{U}$ is of the form 
\begin{align}\label{eq:U_element}
    U = \Bigl( \left([W]^{(Q,i)},[W]^{(K,i)},[W]^{(V,i)},[W]^{(O,i)}\right)_{i=1,\ldots,h},\left([W]^{(A)},[W]^{(B)}\right),\left([b]^{(A)},[b]^{(B)}\right) \Bigr).
\end{align}
To emphasize the weights of $\operatorname{Attn}$, we will write $\operatorname{Attn}(X;U)$ instead of $\operatorname{Attn}(X)$.

\subsection{Group action on weight space}


%
%

Consider the weight space $\mathcal{U}$ defined in Equation~(\ref{eq:weight_space_U}) and Equation~(\ref{eq:U_element}), set 
\begin{align}\label{eq:group_G}
    \mathcal{G}_{\mathcal{U}} = 
    \operatorname{S}_h \times
    \operatorname{GL}_{D_k}(\mathbb{R})^h \times \operatorname{GL}_{D_v}(\mathbb{R})^h \times \mathcal{P}_D \times \mathcal{P}_{D_A}.
\end{align}
Each element $g \in \mathcal{G}_{\mathcal{U}}$ has the form 
\begin{align}\label{eq:group_element_g}
    g = \Bigl( \tau,(M_i)_{i=1,\ldots,h},(N_i)_{i=1,\ldots,h},P_{\pi_O},P_{\pi_A}\Bigr),
\end{align}
where
$\tau, \pi_O,\pi_A$ are permutations and $M_i,N_i$ are invertible matrices of appropriate sizes.


\begin{definition}[{Group action on $\mathcal{G}_{\mathcal{U}}$}] \label{def:appendix:group_action}
    The action of $\mathcal{G}_{\mathcal{U}}$ on $\mathcal{U}$ is defined to be a map $\mathcal{G}_{\mathcal{U}} \times \mathcal{U} \to \mathcal{U}$
    which maps an element $(g,U) \in \mathcal{G}_{\mathcal{U}} \times \mathcal{U}$ with $U$ and $g$ given in Equation~(\ref{eq:U_element}) and Equation~(\ref{eq:group_element_g}) 
    to the element
    \begin{align}\label{eq:group_action}
        gU &= \biggl( 
        \left([gW]^{(Q,i)},[gW]^{(K,i)},[gW]^{(V,i)},
        [gW]^{(O,i)}\right)_{i=1,\ldots,h},\\
        &\qquad\qquad 
        \left([gW]^{(A)},[gW]^{(B)} \right),
        \left([gb]^{(A)},[gb]^{(B)} \right)
        \biggr),
    \end{align}
    where
    \begin{align*}
        [gW]^{(Q,i)} &= [W]^{(Q,\tau(i))} \cdot \left(M^{(\tau(i))}\right)^{\top},\\
        [gW]^{(K,i)} &= [W]^{(K,\tau(i))} \cdot \left(M^{(\tau(i))}\right)^{-1},\\
        [gW]^{(V,i)} &= [W]^{(V,\tau(i))} \cdot N^{(\tau(i))},\\
        [gW]^{(O,i)} &= \left(N^{(\tau(i))}\right)^{-1} \cdot [W]^{(O,\tau(i))} \cdot P_{\pi_O},\\
        [gW]^{(A)} &= P_{\pi_O}^{-1} \cdot [W]^{(A)} \cdot P_{\pi_A},\\
        [gW]^{(B)} &= P_{\pi_A}^{-1} \cdot [W]^{(B)},\\
        [gb]^{(A)} &= [b]^{(A)} \cdot P_{\pi_A},\\
        [gb]^{(B)} &= [b]^{(B)}.
    \end{align*}
\end{definition}

In addition to the above definition, we will also need the following ones for the construction of the equivariant and invariant maps later.

\begin{definition}\label{def:additional_terms}
    With notations as above, for each $i=1,\ldots,h$, we denote:
\begin{align}
    [WW]^{(QK,i)} := [W]^{(Q,i)} \cdot \left( [W]^{(K,i)} \right)^{\top}, \quad \text{and} \quad
    [WW]^{(VO,i)} := [W]^{(V,i)} \cdot [W]^{(O,i)}.
\end{align}
\end{definition}

\begin{remark}\label{remark:addition_terms}
    The terms $[WW]^{(QK,i)}$ and $[WW]^{(VO,i)}$ are equivariant under the action of $g$, since 
\begin{align*}
    [gWgW]^{(QK,i)} &:= \left([gW]^{(Q,i)} \cdot  [gW]^{(K,i)}\right)^{\top}\\
    &=\left([W]^{(Q,{\tau(i)})} \cdot (M^{\tau(i)})^{\top}\right) \cdot \left([W]^{(K,{\tau(i)})} \cdot (M^{\tau(i)})^{-1}\right)^{\top}\\
    &= [W]^{(Q,{\tau(i)})} \cdot \left([W]^{(K,{\tau(i)})}\right)^{\top}\\
    &= [WW]^{(QK,{\tau(i)})}, 
\end{align*}
and
\begin{align*}
    [gWgW]^{(VO,i)} &:= [gW]^{(V,i)} \cdot [gW]^{(O,i)}\\
    &= \left([W]^{(V,{\tau(i)})} \cdot M^{(\tau(i))}\right) \cdot \left((M^{(\tau(i))})^{-1} \cdot [W]^{(O,{\tau(i)})} \cdot P_{\pi_O}\right)\\
    &= [W]^{(V,{\tau(i)})} \cdot [W]^{(O,{\tau(i)})} \cdot P_{\pi_O}\\
    &= [WW]^{(VO,{\tau(i)})} \cdot P_{\pi_O}.
\end{align*}
Therefore, we can intuitively say that these terms are compatible under the action of $\mathcal{G}_{\mathcal{U}}$.
\end{remark}

\begin{proposition}\label{prop:entries_under_group_action}
    With notations as above, we have
    \begin{align*}
        [gWgW]^{(QK,i)}_{j,k} &= [WW]^{(QK,{\tau(i)})}_{j,k},\\
        [gWgW]^{(VO,i)}_{j,k} &= [WW]^{(VO,{\tau(i)})}_{j,\pi_O(k)},\\
        [gW]^{(Q,i)}_{j,k} &= \left[ [W]^{(Q,\tau(i))} \cdot \left(M^{(\tau(i))}\right)^{\top} \right]_{j,k},\\
        [gW]^{(K,i)}_{j,k} &= \left[ [W]^{(K,\tau(i))} \cdot \left(M^{(\tau(i))}\right)^{-1} \right]_{j,k},\\
        [gW]^{(V,i)}_{j,k} &= \left[ [W]^{(V,\tau(i))} \cdot N^{(\tau(i))} \right]_{j,k},\\
        [gW]^{(O,i)} &= \left[ \left(N^{(\tau(i))}\right)^{-1} \cdot [W]^{(O,\tau(i))} \right]_{j,\pi_O(k)},\\
        [gW]^{(A)}_{j,k} &= [W]^{(A)}_{\pi_O(j),\pi_A(k)},\\
        [gW]^{(B)}_{j,k} &= [W]^{(B)}_{\pi_A(j),k},\\
        [gb]^{(A)}_{k} &= [b]^{(A)}_{\pi_A(k)},\\
        [gb]^{(B)}_{k} &= [b]^{(B)}_k.
    \end{align*}
for all appropriate indices $j$ and $k$.
\end{proposition}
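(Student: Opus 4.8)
The plan is to derive every identity by simply unwinding Definition~\ref{def:appendix:group_action} together with the two elementary rules for permutation matrices recalled after the definition of $\mathcal{P}_n$, namely that for a matrix $A$ and permutations $\pi,\sigma$ of the appropriate sizes one has $\left(P_{\pi} \cdot A \cdot P_{\sigma}\right)_{j,k} = A_{\pi^{-1}(j),\sigma(k)}$, together with $P_{\pi}^{-1} = P_{\pi^{-1}}$. In particular these give the two specializations $\left(A \cdot P_{\sigma}\right)_{j,k} = A_{j,\sigma(k)}$ and $\left(P_{\pi}^{-1} \cdot A \cdot P_{\sigma}\right)_{j,k} = A_{\pi(j),\sigma(k)}$, and the row-vector version $\left(v \cdot P_{\sigma}\right)_{k} = v_{\sigma(k)}$. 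No deeper structural input is needed; the content of the proposition is purely bookkeeping of indices.

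First I would dispose of the six ``weight'' coordinates $(Q,i)$, $(K,i)$, $(V,i)$, $(O,i)$, $(A)$, $(B)$ and the two bias coordinates directly from the formulas in Definition~\ref{def:appendix:group_action}. For $(Q,i)$, $(K,i)$, $(V,i)$ the group action multiplies on the right by an ordinary invertible matrix while permuting the head index by $\tau$, so the stated $(j,k)$-entry identities are just the entrywise reading of the matrix identities $[gW]^{(Q,i)} = [W]^{(Q,\tau(i))}\cdot(M^{(\tau(i))})^{\top}$ and its $K$- and $V$-analogues, with nothing to prove. For $(O,i)$ the action additionally multiplies on the right by $P_{\pi_O}$; applying $\left(A\cdot P_{\pi_O}\right)_{j,k}=A_{j,\pi_O(k)}$ with $A = (N^{(\tau(i))})^{-1}\cdot[W]^{(O,\tau(i))}$ gives the claimed formula. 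For $(A)$ I would rewrite $[gW]^{(A)} = P_{\pi_O}^{-1}\cdot[W]^{(A)}\cdot P_{\pi_A} = P_{\pi_O^{-1}}\cdot[W]^{(A)}\cdot P_{\pi_A}$ and apply the two-sided rule to obtain $[W]^{(A)}_{\pi_O(j),\pi_A(k)}$; for $(B)$ the same rule with a trivial right factor yields $[W]^{(B)}_{\pi_A(j),k}$; for the row vector $[b]^{(A)}$, right multiplication by $P_{\pi_A}$ gives $[b]^{(A)}_{\pi_A(k)}$; and $[b]^{(B)}$ is fixed outright.

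It then remains to handle the two quadratic coordinates $(QK,i)$ and $(VO,i)$, and here I would simply invoke the computations already carried out in Remark~\ref{remark:addition_terms}, which establish the matrix identities $[gWgW]^{(QK,i)} = [WW]^{(QK,\tau(i))}$ and $[gWgW]^{(VO,i)} = [WW]^{(VO,\tau(i))}\cdot P_{\pi_O}$. Reading off the $(j,k)$-entry of the first gives $[WW]^{(QK,\tau(i))}_{j,k}$, and applying $\left(A\cdot P_{\pi_O}\right)_{j,k}=A_{j,\pi_O(k)}$ to the second gives $[WW]^{(VO,\tau(i))}_{j,\pi_O(k)}$, completing the list of ten identities.

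The only place that calls for any care — and hence the ``main obstacle'', such as it is — is keeping the permutation/inverse conventions straight: the factors $P_{\pi_O}^{-1}$ and $P_{\pi_A}^{-1}$ that occur in the action must be converted to $P_{\pi_O^{-1}}$ and $P_{\pi_A^{-1}}$ before the index rule $(P_{\pi} A P_{\sigma})_{j,k}=A_{\pi^{-1}(j),\sigma(k)}$ is applied, so that the $(A)$- and $(B)$-entries emerge with $\pi_O$ and $\pi_A$ — not their inverses — on the row indices, exactly as stated. Everything else is a one-line substitution.
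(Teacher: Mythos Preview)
Your proposal is correct and matches the paper's approach exactly: the paper's entire proof is the single sentence ``This proposition follows immediately from the definition of the group action on $\mathcal{U}$,'' and your write-up is precisely the unwinding of that definition (together with the permutation-matrix index rule and Remark~\ref{remark:addition_terms} for the quadratic terms) that justifies that sentence. There is nothing further to add.
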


\begin{proof}
    This proposition follows immediately from the definition of the group action on $\mathcal{U}$.
\end{proof}

\begin{lemma}\label{lem:ReLU_matrix}
     Let $n$ be a positive integer.
     The following are equivalent for a matrix $M \in \operatorname{GL}_n(\mathbb{R})$:
     \begin{enumerate}
         \item $\operatorname{ReLU}(x \cdot M) =  \operatorname{ReLU}(x) \cdot M$ for all row vector $x \in \mathbb{R}^n$.
         \item $M$ is a monomial matrix whose nonzero entries are positive numbers.
     \end{enumerate}
 \end{lemma}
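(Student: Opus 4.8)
The plan is to establish the two implications separately: $(2.) \Rightarrow (1.)$ is a routine verification, while $(1.) \Rightarrow (2.)$ rests on a well-chosen family of test vectors. For $(2.) \Rightarrow (1.)$, I would assume $M$ is monomial with positive nonzero entries and write $M = P_{\pi} \cdot D$ with $P_{\pi} \in \mathcal{P}_n$ a permutation matrix and $D$ a diagonal matrix with positive diagonal. Then right multiplication by $M$ sends a row vector $x$ to a vector whose coordinates are positive rescalings of a permutation of the coordinates of $x$. Since $\operatorname{ReLU}$ acts coordinatewise, commutes with permutations of coordinates, and satisfies $\operatorname{ReLU}(\lambda t) = \lambda \cdot \operatorname{ReLU}(t)$ for every $\lambda > 0$, the identity $\operatorname{ReLU}(x \cdot M) = \operatorname{ReLU}(x) \cdot M$ follows by checking it entrywise.

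For $(1.) \Rightarrow (2.)$ I would proceed in three steps. First, evaluating the hypothesis at $x = e_i$, the $i$-th standard basis row vector, gives $e_i M = \operatorname{ReLU}(e_i) \cdot M = \operatorname{ReLU}(e_i M)$; since $\operatorname{ReLU}$ replaces every negative coordinate of $e_i M$ (the $i$-th row of $M$) by zero, this forces all entries of $M$ to be nonnegative. Second, I would evaluate at $x = e_i - e_j$ with $i \neq j$: here $\operatorname{ReLU}(e_i - e_j) = e_i$, so the hypothesis reads $\operatorname{ReLU}\!\left(m^{(i)} - m^{(j)}\right) = m^{(i)}$, where $m^{(i)}$ denotes the $i$-th row of $M$. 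Comparing the $k$-th coordinates yields $\max\!\left(M_{ik} - M_{jk},\, 0\right) = M_{ik}$, and whenever $M_{ik} > 0$ this is possible only if $M_{jk} = 0$; hence each column of $M$ has at most one positive entry. Third, since $M \in \operatorname{GL}_n(\mathbb{R})$, no column and no row of $M$ vanishes; combined with nonnegativity and the previous step, every column has exactly one positive entry and zeros elsewhere, which accounts for exactly $n$ nonzero entries, and since no row is zero each row must contain exactly one of them. Therefore $M$ is a monomial matrix with positive nonzero entries.

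The hard part will be the second step of the converse, namely recognizing that the difference $x = e_i - e_j$ is the correct probe: its $\operatorname{ReLU}$ collapses to a single basis vector while $x M$ retains the full difference $m^{(i)} - m^{(j)}$ of two rows, which is exactly what exposes the ``at most one positive entry per column'' constraint. Everything afterward is a short counting argument driven by the invertibility of $M$, and the forward implication $(2.) \Rightarrow (1.)$ is a purely formal check.
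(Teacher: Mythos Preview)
Your proof is correct. The paper itself does not give an argument for this lemma; it simply cites \cite{godfrey2022symmetries}. So there is no ``paper's own proof'' to compare against beyond that reference.

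Your approach is a clean, self-contained elementary argument, and both directions are sound. The forward direction $(2.)\Rightarrow(1.)$ is indeed a routine coordinatewise check. For the converse, your choice of probes is exactly right: $x=e_i$ forces nonnegativity of every row, and $x=e_i-e_j$ yields $\max(M_{ik}-M_{jk},0)=M_{ik}$, which---since the entries are already known to be nonnegative---forces at most one positive entry per column. The final pigeonhole step (exactly $n$ nonzero entries, one per column, and invertibility forbids a zero row, so one per row as well) is correctly stated. Compared with deferring to the literature, your argument has the advantage of being entirely elementary and transparent about which test vectors drive each constraint; the citation in the paper presumably buys a more general statement (covering other activations), but for $\operatorname{ReLU}$ your proof is complete as written.
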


 \begin{proof}
     See \citep{godfrey2022symmetries}.
 \end{proof}

\begin{theorem}[{Invariance of $\operatorname{Attn}$ under the action of $\mathcal{G}_{\mathcal{U}}$}]\label{thm:invariant_attn}
    With notations as above, we have
    \begin{align}
        \operatorname{Attn}(X;gU) = \operatorname{Attn}(X;U) 
    \end{align}
    for all $U \in \mathcal{U}$, $g \in \mathcal{G}$, and $X \in \mathbb{R}^{L \times D}$.
\end{theorem}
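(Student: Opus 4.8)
The plan is to propagate the action of $g$ through the transformer block one layer at a time, showing that every factor introduced by $g$ either cancels internally or migrates to the right as a permutation matrix which the next operation absorbs. Concretely, I would track three quantities in turn: first $\operatorname{MultiHead}(X;gU)$, then $\hat{X}(gU)$, then the pre- and post-activation of the ReLU MLP. For the multi-head stage, set $A_i := [W]^{(Q,i)}\cdot([W]^{(K,i)})^{\top}$ and $B_i := [W]^{(V,i)}\cdot[W]^{(O,i)}$, so that $\operatorname{MultiHead}$ depends on its weights only through the pairs $(A_i,B_i)$, via the rewriting recalled in Section~\ref{sec:maximal_group_multihead}. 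From Definition~\ref{def:appendix:group_action} (equivalently Remark~\ref{remark:addition_terms}), the factors $(M^{(\tau(i))})^{\top}$ and $((M^{(\tau(i))})^{-1})^{\top}$ cancel in $[gW]^{(Q,i)}\cdot([gW]^{(K,i)})^{\top}$, producing $A_{\tau(i)}$, while $N^{(\tau(i))}$ and $(N^{(\tau(i))})^{-1}$ cancel in $[gW]^{(V,i)}\cdot[gW]^{(O,i)}$, leaving $B_{\tau(i)}\cdot P_{\pi_O}$. Substituting these into the head-sum and reindexing by $j=\tau(i)$ (a bijection since $\tau\in\mathcal{S}_h$), I expect to obtain $\operatorname{MultiHead}(X;gU)=\operatorname{MultiHead}(X;U)\cdot P_{\pi_O}$. (Alternatively, the $(M,N,\tau)$-part of this cancellation is precisely the implication $(2.)\Rightarrow(1.)$ of Theorem~\ref{thm:multihead_equality} with $\pi_O=\mathrm{id}$, and $P_{\pi_O}$ is handled separately; the direct computation is cleaner.)

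Next I would push $P_{\pi_O}$ through the remaining operations. Since $\operatorname{LayerNorm}$ is applied row-wise and $P_{\pi_O}$ is a permutation matrix, Theorem~\ref{cor:layernorm} with $\lambda=1$ gives $\hat{X}(gU)=\operatorname{LayerNorm}\big(\operatorname{MultiHead}(X;U)\cdot P_{\pi_O}\big)=\hat{X}(U)\cdot P_{\pi_O}$. Feeding this into the first linear layer and using $[gW]^{(A)}=P_{\pi_O}^{-1}\cdot[W]^{(A)}\cdot P_{\pi_A}$ and $[gb]^{(A)}=[b]^{(A)}\cdot P_{\pi_A}$, the $P_{\pi_O}$ cancels and the pre-activation equals $\big(\hat{X}(U)\cdot[W]^{(A)}+\mathbf{1}_L\cdot[b]^{(A)}\big)\cdot P_{\pi_A}$. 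A permutation matrix is a monomial matrix with positive entries, so Lemma~\ref{lem:ReLU_matrix} lets $\operatorname{ReLU}$ commute past $P_{\pi_A}$ row-wise; then right-multiplying by $[gW]^{(B)}=P_{\pi_A}^{-1}\cdot[W]^{(B)}$ and adding $[gb]^{(B)}=[b]^{(B)}$ cancels the last $P_{\pi_A}$. Hence the argument of the outer $\operatorname{LayerNorm}$ for $gU$ coincides with that for $U$, and the two outputs are equal.

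The difficulty here is bookkeeping rather than anything deep: the main point to get right is the composition order when reindexing the head-sum, since $\tau$ permutes the head index while the matrices $M^{(i)},N^{(i)}$ are attached to the \emph{original} head labels, and one must also check that $P_{\pi_O}$ is threaded consistently through both $\operatorname{LayerNorm}$s. A minor caveat, consistent with the rest of the paper, is that $\operatorname{LayerNorm}$ is undefined on $\langle \mathbf{1}_D \rangle$, so the stated identity is understood on the dense subset of inputs on which every $\operatorname{LayerNorm}$ in the block is defined; this is harmless because right-multiplication by a permutation matrix preserves that subset.
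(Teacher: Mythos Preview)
Your proposal is correct and follows essentially the same approach as the paper's proof, invoking the same three ingredients: the $(M,N)$-cancellation of Remark~\ref{remark:addition_terms} (equivalently the easy direction of Theorem~\ref{thm:multihead_equality}) together with the $\tau$-reindexing, Corollary~\ref{cor:layernorm} for $\operatorname{LayerNorm}$, and Lemma~\ref{lem:ReLU_matrix} for $\operatorname{ReLU}$. The only cosmetic difference is the order of exposition: you push $P_{\pi_O}$ forward from $\operatorname{MultiHead}$ outward, whereas the paper first unwraps the outer MLP layers and then verifies $\hat{X}(gU)\cdot P_{\pi_O}^{-1}=\hat{X}(U)$; the two orderings are equivalent.
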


\begin{proof}
    According to Equation~(\ref{eq:attn_MLP})  and Equation~(\ref{eq:attn_multihead}), we have
    \begin{align}
    &\operatorname{Attn}(X;gU) \\
    &= \operatorname{LayerNorm} \big( \operatorname{ReLU} ( \hat{X} \cdot [gW]^{(A)}+ \mathbf{1}_L \cdot [gb]^{(A)}) \cdot [gW]^{(B)} + \mathbf{1}_L \cdot [gb]^{(B)} \big)\\
    &= \operatorname{LayerNorm} \big( \operatorname{ReLU} ( \hat{X} \cdot P_{\pi_O}^{-1} \cdot [W]^{(A)} P_{\pi_A} + \mathbf{1}_L \cdot [b]^{(A)} P_{\pi_A}) \cdot P_{\pi_A}^{-1} \cdot [W]^{(B)} + \mathbf{1}_L \cdot [b]^{(B)} \big) \label{eq:thm_attn_group_action}\\
    &= \operatorname{LayerNorm} \big( \operatorname{ReLU} ( \hat{X} P_{\pi_O}^{-1} [W]^{(A)} + \mathbf{1}_L [b]^{(A)})  P_{\pi_A} P_{\pi_A}^{-1} [W]^{(B)} + \mathbf{1}_L [b]^{(B)} \big) \label{eq:thm_attn_ReLU}\\
    &= \operatorname{LayerNorm} \big( \operatorname{ReLU} ( \hat{X} P_{\pi_O}^{-1} [W]^{(A)} + \mathbf{1}_L [b]^{(A)}) [W]^{(B)} + \mathbf{1}_L [b]^{(B)} \big). \label{eq:thm_attn_hatX}
\end{align}
In the above equalities, Equation~(\ref{eq:thm_attn_group_action}) following from the definition of the group action of $\mathcal{G}_{\mathcal{U}}$ on $\mathcal{U}$, while Equation~(\ref{eq:thm_attn_ReLU}) follows from Lemma~\ref{lem:ReLU_matrix}.
In addition, we proceed the term $\hat{X}P_{\pi_O}^{-1}$ inside Equation~(\ref{eq:thm_attn_hatX}) as:
\begin{align}
    &\hat{X}P_{\pi_O}^{-1}\\ 
    &= \operatorname{LayerNorm} \big(\operatorname{MultiHead}(X;\{[gW]^{(Q,i)},[gW]^{(K,i)},[gW]^{(V,i)},[gW]^{(O,i)}\}_{i=1}^h) \big) \cdot P_{\pi_O}^{-1}\\
    &= \operatorname{LayerNorm} \left(
        \sum\limits_{i=1}^h \operatorname{Head}(X;\{[gW]^{(Q,i)},[gW]^{(K,i)},[gW]^{(V,i)}\}_{i=1}^h)[gW]^{(O,i)}
    \right) \cdot P_{\pi_O}^{-1} \label{eq:thm_attn_rewrite}\\
    &= \operatorname{LayerNorm} \left(
        \sum\limits_{i=1}^h \operatorname{Head}\left(X;\{[W]^{(Q,\tau(i))},[W]^{(K,\tau(i))},[W]^{(V,\tau(i))}\}_{i=1}^h\right) [W]^{(O,\tau(i))}
    \right) \label{eq:thm_attn_head_LN}\\
    &= \operatorname{LayerNorm} \left(
        \sum\limits_{i=1}^h \operatorname{Head}\left(X;\{[W]^{(Q,i)},[W]^{(K,i)},[W]^{(V,i)}\}_{i=1}^h\right) [W]^{(O,i)}
    \right) \label{eq:thm_attn_tau}\\
    &=\hat{X}.\label{eq:thm_attn_hatX2}
\end{align}
In the above equalities, we used the notations
\begin{align*}
    \overline{[W]}^{(Q,\tau(i))} &= [W]^{(Q,\tau(i))} \cdot (M^{(\tau(i))})^{\top},\\
    \overline{[W]}^{(K,\tau(i))} &= [W]^{(K,\tau(i))} \cdot (M^{(Q,\tau(i))})^{-1},\\
    \overline{[W]}^{(V,\tau(i))} &= [W]^{(V,\tau(i))} \cdot M^{(V,\tau(i))},\\
    \overline{[W]}^{(O,\tau(i))} &=(M^{(V,\tau(i))})^{-1} \cdot [W]^{(O,\tau(i))}.
\end{align*}
In the above equalities, 
 Equation~(\ref{eq:thm_attn_head_LN}) follows from Theorem~\ref{thm:multi_head_zero} and Corollary~\ref{cor:layernorm}.
In addition, Equation~(\ref{eq:thm_attn_tau}) obtained by permuting terms in the sum by $\tau$.
From Equation~(\ref{eq:thm_attn_hatX}) and Equation~(\ref{eq:thm_attn_hatX2}), we see that 
$$\operatorname{Attn}(X;gU)=\operatorname{Attn}(X;U),$$
for all $X,g$ and $U$.
The theorem is then proved.
\end{proof}

\section{An Equivariant Polynomial Layer}\label{appendix:equivariant_layer}

We now proceed to construct a \( \mathcal{G}_{\mathcal{U}} \)-equivariant polynomial layer, denoted as \( E \). These layers serve as the fundamental building blocks for our Transformer-NFNs. Our method is based on parameter sharing mechanism.

\subsection{A general form with unknown coefficients}

We want to build a map $E \colon \mathcal{U} \rightarrow \mathcal{U}$ such that $E$ is $\mathcal{G}_{\mathcal{U}}$-equivariant.
We select $E(U)$ to be a linear combination of entries of the matrices $[W]^{(Q,s)}$, $[W]^{(K,s)}$, $[W]^{(V,s)}$, $[W]^{(O,s)}$, $[W]^{(A)}$, $[W]^{(B)}$, $[b]^{(A)}$, $[b]^{(B)}$ as well as the matrices $[WW]^{(QK,s)}$ and $[WW]^{(VO,s)}$, i.e
\begin{align}\label{eq:E(U)_definition}
    E(U)&=\biggl( \left([E(W)]^{(Q,i)},[E(W)]^{(K,i)},[E(W)]^{(V,i)},[E(W)]^{(O,i)}\right)_{i=1,\ldots,h}, \nonumber\\
    &\qquad\qquad \left([E(W)]^{(A)},[E(W)]^{(B)}\right),\left([E(b)]^{(A)},[E(b)]^{(B)}\right) \biggr),
\end{align}
where
\begin{align}
    [E(W)]^{(Q,i)}_{j,k}
    &= \sum\limits_{s=1}^h \sum\limits_{p=1}^{D} \sum\limits_{q=1}^{D} 
        \Phi^{(Q,i):j,k}_{(QK,s):p,q} [WW]^{{(QK,s)}}_{p,q}
    + \sum\limits_{s=1}^h \sum\limits_{p=1}^{D} \sum\limits_{q=1}^{D} 
        \Phi^{(Q,i):j,k}_{(VO,s):p,q}[WW]^{(VO,s)}_{p,q} \nonumber\\
    &\quad + \sum\limits_{s=1}^h \sum\limits_{p=1}^{D} \sum\limits_{q=1}^{D_k} 
        \Phi^{(Q,i):j,k}_{(Q,s):p,q} [W]^{{(Q,s)}}_{p,q}
    + \sum\limits_{s=1}^h \sum\limits_{p=1}^{D} \sum\limits_{q=1}^{D_k} 
        \Phi^{(Q,i):j,k}_{(K,s):p,q}[W]^{(K,s)}_{p,q} \nonumber\\
    &\quad\quad + \sum\limits_{s=1}^h \sum\limits_{p=1}^{D} \sum\limits_{q=1}^{D_v} 
        \Phi^{(Q,i):j,k}_{(V,s):p,q} [W]^{{(V,s)}}_{p,q}
    + \sum\limits_{s=1}^h \sum\limits_{p=1}^{D_v} \sum\limits_{q=1}^{D} 
        \Phi^{(Q,i):j,k}_{(O,s):p,q}[W]^{(O,s)}_{p,q} \nonumber\\
    &\quad\quad\quad 
    + \sum\limits_{p=1}^{D} \sum\limits_{q=1}^{D_A} 
        \Phi^{(Q,i):j,k}_{(A):p,q} [W]^{(A)}_{p,q} 
    + \sum\limits_{p=1}^{D_A} \sum\limits_{q=1}^{D} 
        \Phi^{(Q,i):j,k}_{(B):p,q} [W]^{(B)}_{p,q} \nonumber\\
    &\quad\quad\quad\quad 
    + \sum\limits_{q=1}^{D_A} 
        \Phi^{(Q,i):j,k}_{(A):q} [b]^{(A)}_q 
    + \sum\limits_{q=1}^{D} 
        \Phi^{(Q,i):j,k}_{(B):q} [b]^{(B)}_q
    + \Phi^{(Q,i):j,k}, \label{eq:EUQ}\\
    [E(W)]^{(K,i)}_{j,k}
    &= \sum\limits_{s=1}^h \sum\limits_{p=1}^{D} \sum\limits_{q=1}^{D} 
        \Phi^{(K,i):j,k}_{(QK,s):p,q} [WW]^{{(QK,s)}}_{p,q}
    + \sum\limits_{s=1}^h \sum\limits_{p=1}^{D} \sum\limits_{q=1}^{D} 
        \Phi^{(K,i):j,k}_{(VO,s):p,q}[WW]^{(VO,s)}_{p,q} \nonumber\\
    &\quad + \sum\limits_{s=1}^h \sum\limits_{p=1}^{D} \sum\limits_{q=1}^{D_k} 
        \Phi^{(K,i):j,k}_{(Q,s):p,q} [W]^{{(Q,s)}}_{p,q}
    + \sum\limits_{s=1}^h \sum\limits_{p=1}^{D} \sum\limits_{q=1}^{D_k} 
        \Phi^{(K,i):j,k}_{(K,s):p,q}[W]^{(K,s)}_{p,q} \nonumber\\
    &\quad\quad + \sum\limits_{s=1}^h \sum\limits_{p=1}^{D} \sum\limits_{q=1}^{D_v} 
        \Phi^{(K,i):j,k}_{(V,s):p,q} [W]^{{(V,s)}}_{p,q}
    + \sum\limits_{s=1}^h \sum\limits_{p=1}^{D_v} \sum\limits_{q=1}^{D} 
        \Phi^{(K,i):j,k}_{(O,s):p,q}[W]^{(O,s)}_{p,q} \nonumber\\
    &\quad\quad\quad 
    + \sum\limits_{p=1}^{D} \sum\limits_{q=1}^{D_A} 
        \phi^{(K,i):j,k}_{(A):p,q} [W]^{(A)}_{p,q} 
    + \sum\limits_{p=1}^{D_A} \sum\limits_{q=1}^{D} 
        \Phi^{(K,i):j,k}_{(B):p,q} [W]^{(B)}_{p,q} \nonumber\\
    &\quad\quad\quad\quad 
    + \sum\limits_{q=1}^{D_A} 
        \Phi^{(K,i):j,k}_{(A):q} [b]^{(A)}_q 
    + \sum\limits_{q=1}^{D} 
        \Phi^{(K,i):j,k}_{(B):q} [b]^{(B)}_q
    + \Phi^{(K,i):j,k}, \label{eq:EUK}\\
    [E(W)]^{(V,i)}_{j,k}
    &= \sum\limits_{s=1}^h \sum\limits_{p=1}^{D} \sum\limits_{q=1}^{D} 
        \Phi^{(V,i):j,k}_{(QK,s):p,q} [WW]^{{(QK,s)}}_{p,q}
    + \sum\limits_{s=1}^h \sum\limits_{p=1}^{D} \sum\limits_{q=1}^{D} 
        \Phi^{(V,i):j,k}_{(VO,s):p,q}[WW]^{(VO,s)}_{p,q} \nonumber\\
    &\quad + \sum\limits_{s=1}^h \sum\limits_{p=1}^{D} \sum\limits_{q=1}^{D_k} 
        \Phi^{(V,i):j,k}_{(Q,s):p,q} [W]^{{(Q,s)}}_{p,q}
    + \sum\limits_{s=1}^h \sum\limits_{p=1}^{D} \sum\limits_{q=1}^{D_k} 
        \Phi^{(V,i):j,k}_{(K,s):p,q}[W]^{(K,s)}_{p,q} \nonumber\\
    &\quad\quad + \sum\limits_{s=1}^h \sum\limits_{p=1}^{D} \sum\limits_{q=1}^{D_v} 
        \Phi^{(V,i):j,k}_{(V,s):p,q} [W]^{{(V,s)}}_{p,q}
    + \sum\limits_{s=1}^h \sum\limits_{p=1}^{D_v} \sum\limits_{q=1}^{D} 
        \Phi^{(V,i):j,k}_{(O,s):p,q}[W]^{(O,s)}_{p,q} \nonumber\\
    &\quad\quad\quad 
    + \sum\limits_{p=1}^{D} \sum\limits_{q=1}^{D_A} 
        \Phi^{(V,i):j,k}_{(A):p,q} [W]^{(A)}_{p,q} 
    + \sum\limits_{p=1}^{D_A} \sum\limits_{q=1}^{D} 
        \Phi^{(V,i):j,k}_{(B):p,q} [W]^{(B)}_{p,q} \nonumber\\
    &\quad\quad\quad\quad 
    + \sum\limits_{q=1}^{D_A} 
        \Phi^{(V,i):j,k}_{(A):q} [b]^{(A)}_q 
    + \sum\limits_{q=1}^{D} 
    \Phi^{(V,i):j,k}_{(B):q} [b]^{(B)}_q
    + \Phi^{(V,i):j,k}, \label{eq:EUV}\\
    [E(W)]^{(O,i)}_{j,k}
    &= \sum\limits_{s=1}^h \sum\limits_{p=1}^{D} \sum\limits_{q=1}^{D} 
        \Phi^{(O,i):j,k}_{(QK,s):p,q} [WW]^{{(QK,s)}}_{p,q}
    + \sum\limits_{s=1}^h \sum\limits_{p=1}^{D} \sum\limits_{q=1}^{D} 
        \Phi^{(O,i):j,k}_{(VO,s):p,q}[WW]^{(VO,s)}_{p,q} \nonumber\\
    &\quad + \sum\limits_{s=1}^h \sum\limits_{p=1}^{D} \sum\limits_{q=1}^{D_k} 
        \Phi^{(O,i):j,k}_{(Q,s):p,q} [W]^{{(Q,s)}}_{p,q}
    + \sum\limits_{s=1}^h \sum\limits_{p=1}^{D} \sum\limits_{q=1}^{D_k} 
        \Phi^{(O,i):j,k}_{(K,s):p,q}[W]^{(K,s)}_{p,q} \nonumber\\
    &\quad\quad + \sum\limits_{s=1}^h \sum\limits_{p=1}^{D} \sum\limits_{q=1}^{D_v} 
        \Phi^{(O,i):j,k}_{(V,s):p,q} [W]^{{(V,s)}}_{p,q}
    + \sum\limits_{s=1}^h \sum\limits_{p=1}^{D_v} \sum\limits_{q=1}^{D} 
        \Phi^{(O,i):j,k}_{(O,s):p,q}[W]^{(O,s)}_{p,q} \nonumber\\
    &\quad\quad\quad 
    + \sum\limits_{p=1}^{D} \sum\limits_{q=1}^{D_A} 
        \Phi^{(O,i):j,k}_{(A):p,q} [W]^{(A)}_{p,q} 
    + \sum\limits_{p=1}^{D_A} \sum\limits_{q=1}^{D} 
        \Phi^{(O,i):j,k}_{(B):p,q} [W]^{(B)}_{p,q} \nonumber\\
    &\quad\quad\quad\quad 
    + \sum\limits_{q=1}^{D_A} 
        \Phi^{(O,i):j,k}_{(A):q} [b]^{(A)}_q 
    + \sum\limits_{q=1}^{D} 
        \Phi^{(O,i):j,k}_{(B):q} [b]^{(B)}_q
    + \Phi^{(O,i):j,k}, \label{eq:EUO}\\
    [E(W)]^{(A)}_{j,k}
    &= \sum\limits_{s=1}^h \sum\limits_{p=1}^{D} \sum\limits_{q=1}^{D} 
        \Phi^{(A):j,k}_{(QK,s):p,q} [WW]^{{(QK,s)}}_{p,q}
    + \sum\limits_{s=1}^h \sum\limits_{p=1}^{D} \sum\limits_{q=1}^{D} 
        \Phi^{(A):j,k}_{(VO,s):p,q}[WW]^{(VO,s)}_{p,q} \nonumber\\
    &\quad + \sum\limits_{s=1}^h \sum\limits_{p=1}^{D} \sum\limits_{q=1}^{D_k} 
        \Phi^{(A):j,k}_{(Q,s):p,q} [W]^{{(Q,s)}}_{p,q}
    + \sum\limits_{s=1}^h \sum\limits_{p=1}^{D} \sum\limits_{q=1}^{D_k} 
        \Phi^{(A):j,k}_{(K,s):p,q}[W]^{(K,s)}_{p,q} \nonumber\\
    &\quad\quad + \sum\limits_{s=1}^h \sum\limits_{p=1}^{D} \sum\limits_{q=1}^{D_v} 
        \Phi^{(A):j,k}_{(V,s):p,q} [W]^{{(V,s)}}_{p,q}
    + \sum\limits_{s=1}^h \sum\limits_{p=1}^{D_v} \sum\limits_{q=1}^{D} 
        \Phi^{(A):j,k}_{(O,s):p,q}[W]^{(O,s)}_{p,q} \nonumber\\
    &\quad\quad\quad 
    + \sum\limits_{p=1}^{D} \sum\limits_{q=1}^{D_A} 
        \Phi^{(A):j,k}_{(A):p,q} [W]^{(A)}_{p,q} 
    + \sum\limits_{p=1}^{D_A} \sum\limits_{q=1}^{D} 
        \Phi^{(A):j,k}_{(B):p,q} [W]^{(B)}_{p,q} \nonumber\\
    &\quad\quad\quad\quad 
    + \sum\limits_{q=1}^{D_A} 
        \Phi^{(A):j,k}_{(A):q} [b]^{(A)}_q 
    + \sum\limits_{q=1}^{D} 
        \Phi^{(A):j,k}_{(B):q} [b]^{(B)}_q
    + \Phi^{(A):j,k}, \label{eq:EUWA}\\
    [E(W)]^{(B)}_{j,k}
    &= \sum\limits_{s=1}^h \sum\limits_{p=1}^{D} \sum\limits_{q=1}^{D} 
        \Phi^{(B):j,k}_{(QK,s):p,q} [WW]^{{(QK,s)}}_{p,q}
    + \sum\limits_{s=1}^h \sum\limits_{p=1}^{D} \sum\limits_{q=1}^{D} 
        \Phi^{(B):j,k}_{(VO,s):p,q}[WW]^{(VO,s)}_{p,q} \nonumber\\
    &\quad + \sum\limits_{s=1}^h \sum\limits_{p=1}^{D} \sum\limits_{q=1}^{D_k} 
        \Phi^{(B):j,k}_{(Q,s):p,q} [W]^{{(Q,s)}}_{p,q}
    + \sum\limits_{s=1}^h \sum\limits_{p=1}^{D} \sum\limits_{q=1}^{D_k} 
        \Phi^{(B):j,k}_{(K,s):p,q}[W]^{(K,s)}_{p,q} \nonumber\\
    &\quad\quad + \sum\limits_{s=1}^h \sum\limits_{p=1}^{D} \sum\limits_{q=1}^{D_v} 
        \Phi^{(B):j,k}_{(V,s):p,q} [W]^{{(V,s)}}_{p,q}
    + \sum\limits_{s=1}^h \sum\limits_{p=1}^{D_v} \sum\limits_{q=1}^{D} 
        \Phi^{(B):j,k}_{(O,s):p,q}[W]^{(O,s)}_{p,q} \nonumber\\
    &\quad\quad\quad 
    + \sum\limits_{p=1}^{D} \sum\limits_{q=1}^{D_A} 
        \Phi^{(B):j,k}_{(A):p,q} [W]^{(A)}_{p,q} 
    + \sum\limits_{p=1}^{D_A} \sum\limits_{q=1}^{D} 
        \Phi^{(B):j,k}_{(B):p,q} [W]^{(B)}_{p,q} \nonumber\\
    &\quad\quad\quad\quad 
    + \sum\limits_{q=1}^{D_A} 
        \Phi^{(B):j,k}_{(A):q} [b]^{(A)}_q 
    + \sum\limits_{q=1}^{D} 
        \Phi^{(B):j,k}_{(B):q} [b]^{(B)}_q
    + \Phi^{(B):j,k}, \label{eq:EUWB}\\
    [E(b)]^{(A)}_{k}
    &= \sum\limits_{s=1}^h \sum\limits_{p=1}^{D} \sum\limits_{q=1}^{D} 
        \Phi^{(A):k}_{(QK,s):p,q} [WW]^{{(QK,s)}}_{p,q}
    + \sum\limits_{s=1}^h \sum\limits_{p=1}^{D} \sum\limits_{q=1}^{D} 
        \Phi^{(A):k}_{(VO,s):p,q}[WW]^{(VO,s)}_{p,q} \nonumber\\
    &\quad + \sum\limits_{s=1}^h \sum\limits_{p=1}^{D} \sum\limits_{q=1}^{D_k} 
        \Phi^{(A):k}_{(Q,s):p,q} [W]^{{(Q,s)}}_{p,q}
    + \sum\limits_{s=1}^h \sum\limits_{p=1}^{D} \sum\limits_{q=1}^{D_k} 
        \Phi^{(A):k}_{(K,s):p,q}[W]^{(K,s)}_{p,q} \nonumber\\
    &\quad\quad + \sum\limits_{s=1}^h \sum\limits_{p=1}^{D} \sum\limits_{q=1}^{D_v} 
        \Phi^{(A):k}_{(V,s):p,q} [W]^{{(V,s)}}_{p,q}
    + \sum\limits_{s=1}^h \sum\limits_{p=1}^{D_v} \sum\limits_{q=1}^{D} 
        \Phi^{(A):k}_{(O,s):p,q}[W]^{(O,s)}_{p,q} \nonumber\\
    &\quad\quad\quad 
    + \sum\limits_{p=1}^{D} \sum\limits_{q=1}^{D_A} 
        \phi^{(A):k}_{(A):p,q} [W]^{(A)}_{p,q} 
    + \sum\limits_{p=1}^{D_A} \sum\limits_{q=1}^{D} 
        \Phi^{(A):k}_{(B):p,q} [W]^{(B)}_{p,q} \nonumber\\
    &\quad\quad\quad\quad 
    + \sum\limits_{q=1}^{D_A} 
        \Phi^{(A):k}_{(A):q} [b]^{(A)}_q 
    + \sum\limits_{q=1}^{D} 
        \Phi^{(A):k}_{(B):q} [b]^{(B)}_q
    + \Phi^{(A):k}, \label{eq:EUbA}\\
     [E(b)]^{(B)}_{k}
    &= \sum\limits_{s=1}^h \sum\limits_{p=1}^{D} \sum\limits_{q=1}^{D} 
        \Phi^{(B):k}_{(QK,s):p,q} [WW]^{{(QK,s)}}_{p,q}
    + \sum\limits_{s=1}^h \sum\limits_{p=1}^{D} \sum\limits_{q=1}^{D} 
        \Phi^{(B):k}_{(VO,s):p,q}[WW]^{(VO,s)}_{p,q} \nonumber\\
    &\quad + \sum\limits_{s=1}^h \sum\limits_{p=1}^{D} \sum\limits_{q=1}^{D_k} 
        \Phi^{(B):k}_{(Q,s):p,q} [W]^{{(Q,s)}}_{p,q}
    + \sum\limits_{s=1}^h \sum\limits_{p=1}^{D} \sum\limits_{q=1}^{D_k} 
        \Phi^{(B):k}_{(K,s):p,q}[W]^{(K,s)}_{p,q} \nonumber\\
    &\quad\quad + \sum\limits_{s=1}^h \sum\limits_{p=1}^{D} \sum\limits_{q=1}^{D_v} 
        \Phi^{(B):k}_{(V,s):p,q} [W]^{{(V,s)}}_{p,q}
    + \sum\limits_{s=1}^h \sum\limits_{p=1}^{D_v} \sum\limits_{q=1}^{D} 
        \Phi^{(B):k}_{(O,s):p,q}[W]^{(O,s)}_{p,q} \nonumber\\
    &\quad\quad\quad 
    + \sum\limits_{p=1}^{D} \sum\limits_{q=1}^{D_A} 
        \Phi^{(B):k}_{(A):p,q} [W]^{(A)}_{p,q} 
    + \sum\limits_{p=1}^{D_A} \sum\limits_{q=1}^{D} 
        \Phi^{(B):k}_{(B):p,q} [W]^{(B)}_{p,q} \nonumber\\
    &\quad\quad\quad\quad 
    + \sum\limits_{q=1}^{D_A} 
        \Phi^{(B):k}_{(A):q} [b]^{(A)}_q 
    + \sum\limits_{q=1}^{D} 
        \Phi^{(B):k}_{(B):q} [b]^{(B)}_q
    + \Phi^{(B):k}, \label{eq:EUbB} 
\end{align}
and the constants $\Phi^{-}_{-}$s are parameters.

In the following subsections, we will determine the constraints on the coefficients $\Phi^-_-$ such that $E(gU) = gE(U)$ for all $U \in \mathcal{U}$ and $g \in \mathcal{G}_{\mathcal{U}}$.

\subsection{Auxiliary lemmas}

We will need the following auxiliary lemmas in order to determine the constraints of the coefficients of $E$ later.

\begin{lemma}\label{lem:E=0}
    Assume that $E \colon \mathcal{U} \to \mathcal{U}$ is a function defined as in Equation~(\ref{eq:E(U)_definition}) for some coefficients $\Phi^-_-$.
    If $E(U)=0$ for all $U \in \mathcal{U}$, then all coefficients are equal to zero.
\end{lemma}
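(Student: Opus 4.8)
The plan is to show that, once the quadratic building blocks are expanded, each coordinate of the output $E(U)$ is a linear combination of \emph{pairwise distinct} monomials in the entries of $U$, and then to invoke the linear independence of distinct monomials in a polynomial ring. Concretely, I would fix one output entry — say $[E(W)]^{(Q,i)}_{j,k}$, with all other blocks ($K$, $V$, $O$, $A$, $B$, and the two bias vectors) treated identically — and substitute $[WW]^{(QK,s)}_{p,q} = \sum_{t=1}^{D_k} [W]^{(Q,s)}_{p,t} [W]^{(K,s)}_{q,t}$ and $[WW]^{(VO,s)}_{p,q} = \sum_{t=1}^{D_v} [W]^{(V,s)}_{p,t} [W]^{(O,s)}_{t,q}$ into the defining formula~(\ref{eq:EUQ}). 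The result is a polynomial of degree at most $2$ in the entries of $U$: its degree-$0$ and degree-$1$ parts are the constant and the individual weight/bias entries, and its degree-$2$ part consists exactly of the products $[W]^{(Q,s)}_{p,t}[W]^{(K,s)}_{q,t}$ and $[W]^{(V,s)}_{p,t}[W]^{(O,s)}_{t,q}$.

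The key step is a bookkeeping observation: all of these monomials are genuinely distinct. A degree-$2$ monomial coming from the $QK$-block involves one variable from the $Q$-family and one from the $K$-family of the \emph{same} head $s$, and the triple $(s,p,q)$ is recovered from it (the head $s$, the row index $p$ of the $Q$-factor, the row index $q$ of the $K$-factor); hence it appears in the expansion of exactly one term $[WW]^{(QK,s)}_{p,q}$, in no $[WW]^{(VO,\cdot)}_{\cdot,\cdot}$ term, and it can never coincide with a monomial of degree $\le 1$. The same holds for the $VO$-block monomials (they use the $V$- and $O$-families). Therefore, collecting coefficients in the identically-zero polynomial $[E(W)]^{(Q,i)}_{j,k}$ forces $\Phi^{(Q,i):j,k}_{(QK,s):p,q}=0$ and $\Phi^{(Q,i):j,k}_{(VO,s):p,q}=0$ for all $s,p,q$, then $\Phi^{(Q,i):j,k}_{(Q,s):p,q}=0$ and likewise the $(K,s),(V,s),(O,s),(A),(B)$ weight coefficients and the $(A),(B)$ bias coefficients, and finally the constant $\Phi^{(Q,i):j,k}=0$. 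Running this argument over every output coordinate of $E$ exhausts all coefficients $\Phi^-_-$, which proves the lemma.

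I expect the only delicate point to be precisely this monomial-disjointness check: one must verify that a single coefficient $\Phi^{(Q,i):j,k}_{(QK,s):p,q}$, which multiplies the \emph{whole} sum $\sum_{t=1}^{D_k}[W]^{(Q,s)}_{p,t}[W]^{(K,s)}_{q,t}$, cannot be entangled with a different coefficient through some accidental equality of monomials — either across distinct triples $(s,p,q)$ or between the $QK$- and $VO$-blocks, or with a linear monomial. Once that is ruled out, each coefficient is literally the unique coefficient of a single monomial in the zero polynomial, so it must vanish, and there is no further computation to do. This is simply the entry-wise counterpart of the irredundancy statement already recorded for the invariant map $I(U)$ in the main text, so no genuinely new idea is required.
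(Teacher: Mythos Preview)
Your proposal is correct and follows the same line as the paper: treat the entries of $U$ as indeterminates, observe that each output coordinate is a polynomial in these indeterminates, and conclude from linear independence of the monomial families that all coefficients vanish. The paper's proof simply asserts that the entries of the $[W]$-matrices together with the entries of the $[WW]^{(QK,s)}$ and $[WW]^{(VO,s)}$ matrices form a linearly independent set and stops there; your expansion of the quadratic blocks and the monomial-disjointness bookkeeping is exactly the justification behind that assertion, so you are just being more explicit about the step the paper leaves implicit.
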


\begin{proof}
    We view the entries of $U$ as indeterminates over $\mathbb{R}$.
    Under this view, since $E(U)=0$ for all $U \in \mathcal{U}$, we can view $E(U)$ as a zero polynomial.
    Moreover, the set including all entries of the matrices $[W]^{(Q,s)}$, $[W]^{(K,s)}$, $[W]^{(V,s)}$, $[W]^{(O,s)}$, $[W]^{(A)}$, $[W]^{(B)}$, $[b]^{(A)}$, $[b]^{(B)}$ as well as the matrices $[WW]^{(QK,s)}$ and $[WW]^{(VO,s)}$ is a linear independent set over $\mathbb{R}$.
    Therefore, the coefficients of $E$, which are $\Phi^-_-$, must be equal to zero.
\end{proof}






\begin{lemma}\label{lem:f1f2=0}
    Let $h$ and $D$ be positive integers.
    Let $f^{(1)}_s,\,f^{(2)}_s \colon \mathbb{R}^{D \times D} \to \mathbb{R}$ be $\mathbb{R}$-linear functions for each $s=1,\ldots,h$.
    Assume that there exists a constant $\lambda \in \mathbb{R}$ such that 
    \begin{align}\label{eq:lem:f1f2=0}
        \sum\limits_{s=1}^h f^{(1)}_s\left(M^{(s)}\right)+f^{(2)}_s\left(\left(M^{(s)}\right)^{-1}\right) = \lambda,
    \end{align}
    for all $\left(M^{(1)},\ldots,M^{(h)}\right) \in \operatorname{GL}_D(\mathbb{R})^h$.
    Then 
    $$f^{(1)}_s\left(M\right) = f^{(2)}_s\left(M\right) = \lambda = 0$$ for all $s=1,\ldots,h$ and $M \in \operatorname{GL}_D(\mathbb{R})$.
\end{lemma}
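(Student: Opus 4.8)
The plan is to decouple the terms built from $M^{(s)}$ from those built from $(M^{(s)})^{-1}$, exploiting that rescaling a matrix by a nonzero scalar rescales its inverse by the reciprocal. Set $A = \sum_{s=1}^h f^{(1)}_s(M^{(s)})$ and $B = \sum_{s=1}^h f^{(2)}_s\bigl((M^{(s)})^{-1}\bigr)$, so that \eqref{eq:lem:f1f2=0} reads $A + B = \lambda$ for every tuple $(M^{(1)},\ldots,M^{(h)}) \in \operatorname{GL}_D(\mathbb{R})^h$. Substituting $M^{(s)} \mapsto 2M^{(s)}$ (which stays in $\operatorname{GL}_D(\mathbb{R})$) and using $\mathbb{R}$-linearity of the $f^{(i)}_s$ together with $(2M)^{-1} = \tfrac12 M^{-1}$ gives $2A + \tfrac12 B = \lambda$. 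Eliminating $B$ between the two relations yields $3A = \lambda$, i.e.\ $\sum_{s=1}^h f^{(1)}_s(M^{(s)}) = \lambda/3$ identically on $\operatorname{GL}_D(\mathbb{R})^h$.

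Next I would peel off one head at a time. Fixing $M^{(s)} = I$ for all $s \ne r$ in the identity just obtained shows $f^{(1)}_r(M) = \kappa_r$ for all $M \in \operatorname{GL}_D(\mathbb{R})$, where $\kappa_r := \lambda/3 - \sum_{s \ne r} f^{(1)}_s(I)$ is a constant. A linear functional that is constant on $\operatorname{GL}_D(\mathbb{R})$ must have that constant equal to $0$: evaluating at $I$ and at $2I$ gives $\kappa_r = f^{(1)}_r(2I) = 2 f^{(1)}_r(I) = 2\kappa_r$. Hence $f^{(1)}_r$ vanishes on $\operatorname{GL}_D(\mathbb{R})$; and since $M - tI$ is invertible for all but finitely many $t \in \mathbb{R}$ while $f^{(1)}_r(M - tI) = f^{(1)}_r(M)$ (because $f^{(1)}_r(I) = 0$), in fact $f^{(1)}_r \equiv 0$ on all of $\mathbb{R}^{D\times D}$. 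This holds for each $r$, so $A \equiv 0$; comparing with $A \equiv \lambda/3$ forces $\lambda = 0$.

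Finally, with $\lambda = 0$ and $A \equiv 0$ the relation $A + B = \lambda$ becomes $\sum_{s=1}^h f^{(2)}_s\bigl((M^{(s)})^{-1}\bigr) = 0$ on $\operatorname{GL}_D(\mathbb{R})^h$. Repeating the ``fix all heads but one to $I$'' step — now using that $M \mapsto M^{-1}$ is a bijection of $\operatorname{GL}_D(\mathbb{R})$, so $(M^{(r)})^{-1}$ still ranges over all invertible matrices — shows each $f^{(2)}_r$ is constant on $\operatorname{GL}_D(\mathbb{R})$, hence identically $0$ by the same linearity/scaling reasoning. Altogether this gives $f^{(1)}_s = f^{(2)}_s = 0$ for all $s$ and $\lambda = 0$, as claimed.

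The argument is short and I do not anticipate a real obstacle; the one genuinely load-bearing idea is the rescaling $M^{(s)} \mapsto 2 M^{(s)}$ that separates $A$ from $B$ (producing the factor $3$, and with it $\lambda = 0$ for free), and the only lemma-level fact to state carefully is that an $\mathbb{R}$-linear functional which is constant on $\operatorname{GL}_D(\mathbb{R})$ must vanish there — justified via the $I$-versus-$2I$ evaluation, together with the genericity of invertibility of $M - tI$ if one wants vanishing on all of $\mathbb{R}^{D\times D}$.
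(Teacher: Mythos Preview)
Your proof is correct. Both your argument and the paper's hinge on the same load-bearing observation — rescaling $M^{(s)} \mapsto \lambda_s M^{(s)}$ sends $(M^{(s)})^{-1} \mapsto \lambda_s^{-1}(M^{(s)})^{-1}$, so linearity produces incompatible homogeneities — but the execution diverges. The paper introduces \emph{independent} scalars $\lambda_1,\ldots,\lambda_h$, clears denominators to form the polynomial
\[
P(t_1,\ldots,t_h) \;=\; \Bigl(\textstyle\prod_i t_i\Bigr)\Bigl(\textstyle\sum_s t_s f^{(1)}_s(M^{(s)}) + t_s^{-1} f^{(2)}_s\bigl((M^{(s)})^{-1}\bigr) - \lambda\Bigr),
\]
and notes that $P$ vanishes on $(\mathbb{R}\setminus\{0\})^h$, hence identically; reading off coefficients then kills $\lambda$ and every $f^{(i)}_s(M^{(s)})$ in one stroke. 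Your route instead applies a single uniform scaling (by $2$) to split the aggregate $A$ from $B$, and then a separate ``fix all heads but one to $I$'' step to isolate each $f^{(i)}_r$. The paper's approach is slicker — the polynomial-identity argument simultaneously handles the head-separation and the $f^{(1)}$-versus-$f^{(2)}$ separation — whereas yours is a touch more elementary, never invoking the fact that a polynomial vanishing on a Zariski-dense set is zero, and instead using only explicit linear algebra with two specific scale factors.
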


\begin{proof}
    Fix an arbitrary element $\left(M^{(1)},\ldots,M^{(h)}\right)$ of $\operatorname{GL}_D(\mathbb{R})^h$.
    Then for arbitrary nonzero numbers $\lambda_1,\ldots,\lambda_h$, the tuple 
    $\left(\lambda_1 M^{(1)},\ldots,\lambda_h M^{(h)}\right)$ is also an element of $\operatorname{GL}_D(\mathbb{R})^h$.
    Since $f^{(1)}_s,f^{(2)}_s$ are linear, it follows from Equation~(\ref{eq:lem:f1f2=0}) that
    \begin{align}\label{eq:lem:f1f2=0-1}
        \sum\limits_{s=1}^h \lambda_sf^{(1)}_s\left(M^{(s)}\right)+ \frac{1}{\lambda_s} \cdot f^{(2)}_s\left(\left(M^{(s)}\right)^{-1}\right) = \lambda.
    \end{align}
    Consider the function
    \begin{align}
        P(t_1,\ldots,t_h) := \left(\prod\limits_{i=1}^h t_i\right) \cdot \left( \sum\limits_{s=1}^h t_sf^{(1)}_s\left(M^{(s)}\right)+ \frac{1}{t_s} \cdot f^{(2)}_s\left(\left(M^{(s)}\right)^{-1}\right) - \lambda \right)
    \end{align}
    as a polynomial in the variables $t_1,\ldots,t_h$ with coefficients in $\mathbb{R}$.
    Then, according to Equation~(\ref{eq:lem:f1f2=0-1}), we have $P(\lambda_1,\ldots,\lambda_h)=0$ for every nonzero numbers $\lambda_1,\ldots,\lambda_h$.
    This happens only when $P$ is a zero polynomial.
    Thus all coefficients of $P$ must be equal to zero.
    In particular, we have $\lambda=0$ and $f^{(1)}_s\left(M^{(s)}\right)=f^{(2)}_s\left(\left(M^{(s)}\right)^{-1}\right)=0$ for all $s=1,\ldots,h$ and all $\left(M^{(1)},\ldots,M^{(h)}\right) \in \operatorname{GL}_D(\mathbb{R})^h$.
    The lemma is then proved.
\end{proof}

\begin{corollary}\label{cor:f1f2=0}
    Let $h$ and $D$ be a positive integers.
    Let $f^{(1)}_s,\,f^{(2)}_s,g^{(1)}_s,\,g^{(2)}_s \colon \mathbb{R}^{D \times D} \to \mathbb{R}$ be linear functions for each $s=1,\ldots,h$.
    Assume that there are constants $\lambda_1,\lambda_2 \in \mathbb{R}$ such that:
    \begin{align}
        \sum\limits_{s=1}^h f^{(1)}_s\left(M^{(s)}\right)+f^{(2)}_s\left(\left(M^{(s)}\right)^{-1}\right) + \lambda_1 
        = 
        \sum\limits_{s=1}^h g^{(1)}_s\left(M^{(s)}\right)+g^{(2)}_s\left(\left(M^{(s)}\right)^{-1}\right) + \lambda_2,
    \end{align}
    for all $\left(M^{(1)},\ldots,M^{(h)}\right) \in \operatorname{GL}_D(\mathbb{R})^h$.
    Then $f^{(1)}_s(M)=g^{(1)}_s(M)$ and $f^{(2)}_s(M)=g^{(2)}_s(M)$ and $\lambda_1=\lambda_2$ for all $s=1,\ldots,h$ and $M \in \operatorname{GL}_D(\mathbb{R})$.
\end{corollary}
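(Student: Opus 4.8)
The plan is to deduce Corollary~\ref{cor:f1f2=0} from Lemma~\ref{lem:f1f2=0} by a single subtraction. First I would introduce, for each $s = 1,\ldots,h$, the functions $F^{(1)}_s := f^{(1)}_s - g^{(1)}_s$ and $F^{(2)}_s := f^{(2)}_s - g^{(2)}_s$ from $\mathbb{R}^{D\times D}$ to $\mathbb{R}$, together with the constant $\lambda := \lambda_1 - \lambda_2$. Since a difference of $\mathbb{R}$-linear maps is again $\mathbb{R}$-linear, each $F^{(j)}_s$ is linear, so these data are eligible inputs for Lemma~\ref{lem:f1f2=0}.

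Next I would rearrange the hypothesis. Bringing the right-hand side of the displayed identity to the left and grouping the summands head by head (using that $F^{(2)}_s$ evaluated at $(M^{(s)})^{-1}$ is exactly $f^{(2)}_s((M^{(s)})^{-1}) - g^{(2)}_s((M^{(s)})^{-1})$), one obtains
\[
\sum_{s=1}^h F^{(1)}_s\!\left(M^{(s)}\right) + F^{(2)}_s\!\left(\left(M^{(s)}\right)^{-1}\right) = \lambda
\]
for every $(M^{(1)},\ldots,M^{(h)}) \in \operatorname{GL}_D(\mathbb{R})^h$. This is precisely the hypothesis of Lemma~\ref{lem:f1f2=0} applied to the functions $F^{(1)}_s, F^{(2)}_s$ and the constant $\lambda$, so the lemma yields $F^{(1)}_s(M) = F^{(2)}_s(M) = 0$ for all $s$ and all $M \in \operatorname{GL}_D(\mathbb{R})$, and $\lambda = 0$. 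Unwinding the definitions of $F^{(1)}_s$, $F^{(2)}_s$ and $\lambda$ then gives exactly $f^{(1)}_s(M) = g^{(1)}_s(M)$, $f^{(2)}_s(M) = g^{(2)}_s(M)$ for all $s$ and $M \in \operatorname{GL}_D(\mathbb{R})$, together with $\lambda_1 = \lambda_2$, which is the claim.

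There is essentially no obstacle here: the corollary is just the ``two arbitrary sides are equal'' reformulation of the ``one side is constant'' case of Lemma~\ref{lem:f1f2=0}, and the reduction is the standard move of comparing an arbitrary solution to another by subtraction. The only points needing a word of care are that the class of $\mathbb{R}$-linear functions is closed under differences, and that Lemma~\ref{lem:f1f2=0} must be applied to the combined functions $F^{(j)}_s$ rather than to $f^{(j)}_s$ and $g^{(j)}_s$ separately; both are immediate.
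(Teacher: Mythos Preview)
Your proof is correct and follows the paper's approach exactly: subtract and apply Lemma~\ref{lem:f1f2=0} to the differences $f^{(j)}_s - g^{(j)}_s$. (One trivial slip: after moving the right-hand side across, the constant on the right is $\lambda_2 - \lambda_1$, not $\lambda_1 - \lambda_2$; since the lemma forces it to vanish anyway, this is immaterial.)
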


\begin{proof}
    Apply Lemma~\ref{lem:f1f2=0} with $f^{(1)}_s$ is replace by $f^{(1)}_s-g^{(1)}_s$, $f^{(2)}_s$ is replace by $f^{(2)}_s-g^{(2)}_s$, and $\lambda$ is replaced by $\lambda_2-\lambda_1$. 
\end{proof}

\subsection{Finding the constraints for the unknown coefficients}

In the following, we will find necessary and sufficient conditions for the coefficients $\Phi^-_-$ such that $E(gU) = gE(U)$ for all $U \in \mathcal{U}$ and $g \in \mathcal{G}_{\mathcal{U}}$.
We follow the parameter-sharing strategy.
In particular, we first determine the entries of $E(gU)$ and $gE(U)$.
Then, we compare the corresponding entries to determine the exact constraints on the coefficients.

\subsubsection{Computing $E(gU)$}

Following Equation~(\ref{eq:group_action}), we have
\begin{align}
    E(gU) &= \left( ([E(gW)]^{(Q,i)},[E(gW)]^{(K,i)},[E(gW)]^{(V,i)},[E(gW)]^{(O,i)})_{i=1,\ldots,h},\right. \nonumber\\
    &\qquad\qquad \left.([E(gW)]^{(A)},[E(gW)]^{(B)}),([E(gb)]^{(A)},[E(gb)]^{(B)}) \right),
\end{align}
where
\begin{align}
    [E(gW)]^{(Q,i)}_{j,k}
    &= \sum\limits_{s=1}^h \sum\limits_{p=1}^{D} \sum\limits_{q=1}^{D} 
        \Phi^{(Q,i):j,k}_{(QK,s):p,q} [gWgW]^{{(QK,s)}}_{p,q} \nonumber\\
    &+ \sum\limits_{s=1}^h \sum\limits_{p=1}^{D} \sum\limits_{q=1}^{D} 
        \Phi^{(Q,i):j,k}_{(VO,s):p,q}[gWgW]^{(VO,s)}_{p,q} \nonumber\\
    &\quad + \sum\limits_{s=1}^h \sum\limits_{p=1}^{D} \sum\limits_{q=1}^{D_k} 
        \Phi^{(Q,i):j,k}_{(Q,s):p,q} [gW]^{{(Q,s)}}_{p,q}
    + \sum\limits_{s=1}^h \sum\limits_{p=1}^{D} \sum\limits_{q=1}^{D_k} 
        \Phi^{(Q,i):j,k}_{(K,s):p,q}[gW]^{(K,s)}_{p,q} \nonumber\\
    &\quad\quad + \sum\limits_{s=1}^h \sum\limits_{p=1}^{D} \sum\limits_{q=1}^{D_v} 
        \Phi^{(Q,i):j,k}_{(V,s):p,q} [gW]^{{(V,s)}}_{p,q}
    + \sum\limits_{s=1}^h \sum\limits_{p=1}^{D_v} \sum\limits_{q=1}^{D} 
        \Phi^{(Q,i):j,k}_{(O,s):p,q}[gW]^{(O,s)}_{p,q} \nonumber\\
    &\quad\quad\quad 
    + \sum\limits_{p=1}^{D} \sum\limits_{q=1}^{D_A} 
        \Phi^{(Q,i):j,k}_{(A):p,q} [gW]^{(A)}_{p,q} 
    + \sum\limits_{p=1}^{D_A} \sum\limits_{q=1}^{D} 
        \Phi^{(Q,i):j,k}_{(B):p,q} [gW]^{(B)}_{p,q} \nonumber\\
    &\quad\quad\quad\quad 
    + \sum\limits_{q=1}^{D_A} 
        \Phi^{(Q,i):j,k}_{(A):q} [gb]^{(A)}_q 
    + \sum\limits_{q=1}^{D} 
        \Phi^{(Q,i):j,k}_{(B):q} [gb]^{(B)}_q
    + \Phi^{(Q,i):j,k},\\
    [E(gW)]^{(K,i)}_{j,k}
    &= \sum\limits_{s=1}^h \sum\limits_{p=1}^{D} \sum\limits_{q=1}^{D} 
        \Phi^{(K,i):j,k}_{(QK,s):p,q} [gWgW]^{{(QK,s)}}_{p,q} \nonumber\\
    &
    + \sum\limits_{s=1}^h \sum\limits_{p=1}^{D} \sum\limits_{q=1}^{D} 
        \Phi^{(K,i):j,k}_{(VO,s):p,q}[gWgW]^{(VO,s)}_{p,q} \nonumber\\
    &\quad + \sum\limits_{s=1}^h \sum\limits_{p=1}^{D} \sum\limits_{q=1}^{D_k} 
        \Phi^{(K,i):j,k}_{(Q,s):p,q} [gW]^{{(Q,s)}}_{p,q}
    + \sum\limits_{s=1}^h \sum\limits_{p=1}^{D} \sum\limits_{q=1}^{D_k} 
        \Phi^{(K,i):j,k}_{(K,s):p,q}[gW]^{(K,s)}_{p,q} \nonumber\\
    &\quad\quad + \sum\limits_{s=1}^h \sum\limits_{p=1}^{D} \sum\limits_{q=1}^{D_v} 
        \Phi^{(K,i):j,k}_{(V,s):p,q} [gW]^{{(V,s)}}_{p,q}
    + \sum\limits_{s=1}^h \sum\limits_{p=1}^{D_v} \sum\limits_{q=1}^{D} 
        \Phi^{(K,i):j,k}_{(O,s):p,q}[gW]^{(O,s)}_{p,q} \nonumber\\
    &\quad\quad\quad 
    + \sum\limits_{p=1}^{D} \sum\limits_{q=1}^{D_A} 
        \phi^{(K,i):j,k}_{(A):p,q} [gW]^{(A)}_{p,q} 
    + \sum\limits_{p=1}^{D_A} \sum\limits_{q=1}^{D} 
        \Phi^{(K,i):j,k}_{(B):p,q} [gW]^{(B)}_{p,q} \nonumber\\
    &\quad\quad\quad\quad 
    + \sum\limits_{q=1}^{D_A} 
        \Phi^{(K,i):j,k}_{(A):q} [gb]^{(A)}_q 
    + \sum\limits_{q=1}^{D} 
        \Phi^{(K,i):j,k}_{(B):q} [gb]^{(B)}_q
    + \Phi^{(K,i):j,k},\\
    [E(gW)]^{(V,i)}_{j,k}
    &= \sum\limits_{s=1}^h \sum\limits_{p=1}^{D} \sum\limits_{q=1}^{D} 
        \Phi^{(V,i):j,k}_{(QK,s):p,q} [gWgW]^{{(QK,s)}}_{p,q} \nonumber\\
    &
    + \sum\limits_{s=1}^h \sum\limits_{p=1}^{D} \sum\limits_{q=1}^{D} 
        \Phi^{(V,i):j,k}_{(VO,s):p,q}[gWgW]^{(VO,s)}_{p,q} \nonumber\\
    &\quad + \sum\limits_{s=1}^h \sum\limits_{p=1}^{D} \sum\limits_{q=1}^{D_k} 
        \Phi^{(V,i):j,k}_{(Q,s):p,q} [gW]^{{(Q,s)}}_{p,q}
    + \sum\limits_{s=1}^h \sum\limits_{p=1}^{D} \sum\limits_{q=1}^{D_k} 
        \Phi^{(V,i):j,k}_{(K,s):p,q}[gW]^{(K,s)}_{p,q} \nonumber\\
    &\quad\quad + \sum\limits_{s=1}^h \sum\limits_{p=1}^{D} \sum\limits_{q=1}^{D_v} 
        \Phi^{(V,i):j,k}_{(V,s):p,q} [gW]^{{(V,s)}}_{p,q}
    + \sum\limits_{s=1}^h \sum\limits_{p=1}^{D_v} \sum\limits_{q=1}^{D} 
        \Phi^{(V,i):j,k}_{(O,s):p,q}[gW]^{(O,s)}_{p,q} \nonumber\\
    &\quad\quad\quad 
    + \sum\limits_{p=1}^{D} \sum\limits_{q=1}^{D_A} 
        \Phi^{(V,i):j,k}_{(A):p,q} [gW]^{(A)}_{p,q} 
    + \sum\limits_{p=1}^{D_A} \sum\limits_{q=1}^{D} 
        \Phi^{(V,i):j,k}_{(B):p,q} [gW]^{(B)}_{p,q} \nonumber\\
    &\quad\quad\quad\quad 
    + \sum\limits_{q=1}^{D_A} 
        \Phi^{(V,i):j,k}_{(A):q} [gb]^{(A)}_q 
    + \sum\limits_{q=1}^{D} 
    \Phi^{(V,i):j,k}_{(B):q} [gb]^{(B)}_q
    + \Phi^{(V,i):j,k},\\
    [E(gW)]^{(O,i)}_{j,k}
    &= \sum\limits_{s=1}^h \sum\limits_{p=1}^{D} \sum\limits_{q=1}^{D} 
        \Phi^{(O,i):j,k}_{(QK,s):p,q} [gWgW]^{{(QK,s)}}_{p,q} \nonumber\\
    &
    + \sum\limits_{s=1}^h \sum\limits_{p=1}^{D} \sum\limits_{q=1}^{D} 
        \Phi^{(O,i):j,k}_{(VO,s):p,q}[gWgW]^{(VO,s)}_{p,q} \nonumber\\
    &\quad + \sum\limits_{s=1}^h \sum\limits_{p=1}^{D} \sum\limits_{q=1}^{D_k} 
        \Phi^{(O,i):j,k}_{(Q,s):p,q} [gW]^{{(Q,s)}}_{p,q}
    + \sum\limits_{s=1}^h \sum\limits_{p=1}^{D} \sum\limits_{q=1}^{D_k} 
        \Phi^{(O,i):j,k}_{(K,s):p,q}[gW]^{(K,s)}_{p,q} \nonumber\\
    &\quad\quad + \sum\limits_{s=1}^h \sum\limits_{p=1}^{D} \sum\limits_{q=1}^{D_v} 
        \Phi^{(O,i):j,k}_{(V,s):p,q} [gW]^{{(V,s)}}_{p,q}
    + \sum\limits_{s=1}^h \sum\limits_{p=1}^{D_v} \sum\limits_{q=1}^{D} 
        \Phi^{(O,i):j,k}_{(O,s):p,q}[gW]^{(O,s)}_{p,q} \nonumber\\
    &\quad\quad\quad 
    + \sum\limits_{p=1}^{D} \sum\limits_{q=1}^{D_A} 
        \Phi^{(O,i):j,k}_{(A):p,q} [gW]^{(A)}_{p,q} 
    + \sum\limits_{p=1}^{D_A} \sum\limits_{q=1}^{D} 
        \Phi^{(O,i):j,k}_{(B):p,q} [gW]^{(B)}_{p,q} \nonumber\\
    &\quad\quad\quad\quad 
    + \sum\limits_{q=1}^{D_A} 
        \Phi^{(O,i):j,k}_{(A):q} [gb]^{(A)}_q 
    + \sum\limits_{q=1}^{D} 
        \Phi^{(O,i):j,k}_{(B):q} [gb]^{(B)}_q
    + \Phi^{(O,i):j,k},\\
    [E(gW)]^{(A)}_{j,k}
    &= \sum\limits_{s=1}^h \sum\limits_{p=1}^{D} \sum\limits_{q=1}^{D} 
        \Phi^{(A):j,k}_{(QK,s):p,q} [gWgW]^{{(QK,s)}}_{p,q} \nonumber\\
    &
    + \sum\limits_{s=1}^h \sum\limits_{p=1}^{D} \sum\limits_{q=1}^{D} 
        \Phi^{(A):j,k}_{(VO,s):p,q}[gWgW]^{(VO,s)}_{p,q} \nonumber\\
    &\quad + \sum\limits_{s=1}^h \sum\limits_{p=1}^{D} \sum\limits_{q=1}^{D_k} 
        \Phi^{(A):j,k}_{(Q,s):p,q} [gW]^{{(Q,s)}}_{p,q}
    + \sum\limits_{s=1}^h \sum\limits_{p=1}^{D} \sum\limits_{q=1}^{D_k} 
        \Phi^{(A):j,k}_{(K,s):p,q}[gW]^{(K,s)}_{p,q} \nonumber\\
    &\quad\quad + \sum\limits_{s=1}^h \sum\limits_{p=1}^{D} \sum\limits_{q=1}^{D_v} 
        \Phi^{(A):j,k}_{(V,s):p,q} [gW]^{{(V,s)}}_{p,q}
    + \sum\limits_{s=1}^h \sum\limits_{p=1}^{D_v} \sum\limits_{q=1}^{D} 
        \Phi^{(A):j,k}_{(O,s):p,q}[gW]^{(O,s)}_{p,q} \nonumber\\
    &\quad\quad\quad 
    + \sum\limits_{p=1}^{D} \sum\limits_{q=1}^{D_A} 
        \Phi^{(A):j,k}_{(A):p,q} [gW]^{(A)}_{p,q} 
    + \sum\limits_{p=1}^{D_A} \sum\limits_{q=1}^{D} 
        \Phi^{(A):j,k}_{(B):p,q} [gW]^{(B)}_{p,q} \nonumber\\
    &\quad\quad\quad\quad 
    + \sum\limits_{q=1}^{D_A} 
        \Phi^{(A):j,k}_{(A):q} [gb]^{(A)}_q 
    + \sum\limits_{q=1}^{D} 
        \Phi^{(A):j,k}_{(B):q} [gb]^{(B)}_q
    + \Phi^{(A):j,k},\\
    [E(gW)]^{(B)}_{j,k}
    &= \sum\limits_{s=1}^h \sum\limits_{p=1}^{D} \sum\limits_{q=1}^{D} 
        \Phi^{(B):j,k}_{(QK,s):p,q} [gWgW]^{{(QK,s)}}_{p,q} \nonumber\\
    &
    + \sum\limits_{s=1}^h \sum\limits_{p=1}^{D} \sum\limits_{q=1}^{D} 
        \Phi^{(B):j,k}_{(VO,s):p,q}[gWgW]^{(VO,s)}_{p,q} \nonumber\\
    &\quad + \sum\limits_{s=1}^h \sum\limits_{p=1}^{D} \sum\limits_{q=1}^{D_k} 
        \Phi^{(B):j,k}_{(Q,s):p,q} [gW]^{{(Q,s)}}_{p,q}
    + \sum\limits_{s=1}^h \sum\limits_{p=1}^{D} \sum\limits_{q=1}^{D_k} 
        \Phi^{(B):j,k}_{(K,s):p,q}[gW]^{(K,s)}_{p,q} \nonumber\\
    &\quad\quad + \sum\limits_{s=1}^h \sum\limits_{p=1}^{D} \sum\limits_{q=1}^{D_v} 
        \Phi^{(B):j,k}_{(V,s):p,q} [gW]^{{(V,s)}}_{p,q}
    + \sum\limits_{s=1}^h \sum\limits_{p=1}^{D_v} \sum\limits_{q=1}^{D} 
        \Phi^{(B):j,k}_{(O,s):p,q}[gW]^{(O,s)}_{p,q} \nonumber\\
    &\quad\quad\quad 
    + \sum\limits_{p=1}^{D} \sum\limits_{q=1}^{D_A} 
        \Phi^{(B):j,k}_{(A):p,q} [gW]^{(A)}_{p,q} 
    + \sum\limits_{p=1}^{D_A} \sum\limits_{q=1}^{D} 
        \Phi^{(B):j,k}_{(B):p,q} [gW]^{(B)}_{p,q} \nonumber\\
    &\quad\quad\quad\quad 
    + \sum\limits_{q=1}^{D_A} 
        \Phi^{(B):j,k}_{(A):q} [gb]^{(A)}_q 
    + \sum\limits_{q=1}^{D} 
        \Phi^{(B):j,k}_{(B):q} [gb]^{(B)}_q
    + \Phi^{(B):j,k},\\
    [E(gb)]^{(A)}_{k}
    &= \sum\limits_{s=1}^h \sum\limits_{p=1}^{D} \sum\limits_{q=1}^{D} 
        \Phi^{(A):k}_{(QK,s):p,q} [gWgW]^{{(QK,s)}}_{p,q} \nonumber\\
    &
    + \sum\limits_{s=1}^h \sum\limits_{p=1}^{D} \sum\limits_{q=1}^{D} 
        \Phi^{(A):k}_{(VO,s):p,q}[gWgW]^{(VO,s)}_{p,q} \nonumber\\
    &\quad + \sum\limits_{s=1}^h \sum\limits_{p=1}^{D} \sum\limits_{q=1}^{D_k} 
        \Phi^{(A):k}_{(Q,s):p,q} [gW]^{{(Q,s)}}_{p,q}
    + \sum\limits_{s=1}^h \sum\limits_{p=1}^{D} \sum\limits_{q=1}^{D_k} 
        \Phi^{(A):k}_{(K,s):p,q}[gW]^{(K,s)}_{p,q} \nonumber\\
    &\quad\quad + \sum\limits_{s=1}^h \sum\limits_{p=1}^{D} \sum\limits_{q=1}^{D_v} 
        \Phi^{(A):k}_{(V,s):p,q} [gW]^{{(V,s)}}_{p,q}
    + \sum\limits_{s=1}^h \sum\limits_{p=1}^{D_v} \sum\limits_{q=1}^{D} 
        \Phi^{(A):k}_{(O,s):p,q}[gW]^{(O,s)}_{p,q} \nonumber\\
    &\quad\quad\quad 
    + \sum\limits_{p=1}^{D} \sum\limits_{q=1}^{D_A} 
        \phi^{(A):k}_{(A):p,q} [gW]^{(A)}_{p,q} 
    + \sum\limits_{p=1}^{D_A} \sum\limits_{q=1}^{D} 
        \Phi^{(A):k}_{(B):p,q} [gW]^{(B)}_{p,q} \nonumber\\
    &\quad\quad\quad\quad 
    + \sum\limits_{q=1}^{D_A} 
        \Phi^{(A):k}_{(A):q} [gb]^{(A)}_q 
    + \sum\limits_{q=1}^{D} 
        \Phi^{(A):k}_{(B):q} [gb]^{(B)}_q
    + \Phi^{(A):k},\\
     [E(gb)]^{(B)}_{k}
    &= \sum\limits_{s=1}^h \sum\limits_{p=1}^{D} \sum\limits_{q=1}^{D} 
        \Phi^{(B):k}_{(QK,s):p,q} [gWgW]^{{(QK,s)}}_{p,q} \nonumber\\
    &
    + \sum\limits_{s=1}^h \sum\limits_{p=1}^{D} \sum\limits_{q=1}^{D} 
        \Phi^{(B):k}_{(VO,s):p,q}[gWgW]^{(VO,s)}_{p,q} \nonumber\\
    &\quad + \sum\limits_{s=1}^h \sum\limits_{p=1}^{D} \sum\limits_{q=1}^{D_k} 
        \Phi^{(B):k}_{(Q,s):p,q} [gW]^{{(Q,s)}}_{p,q}
    + \sum\limits_{s=1}^h \sum\limits_{p=1}^{D} \sum\limits_{q=1}^{D_k} 
        \Phi^{(B):k}_{(K,s):p,q}[gW]^{(K,s)}_{p,q} \nonumber\\
    &\quad\quad + \sum\limits_{s=1}^h \sum\limits_{p=1}^{D} \sum\limits_{q=1}^{D_v} 
        \Phi^{(B):k}_{(V,s):p,q} [gW]^{{(V,s)}}_{p,q}
    + \sum\limits_{s=1}^h \sum\limits_{p=1}^{D_v} \sum\limits_{q=1}^{D} 
        \Phi^{(B):k}_{(O,s):p,q}[gW]^{(O,s)}_{p,q} \nonumber\\
    &\quad\quad\quad 
    + \sum\limits_{p=1}^{D} \sum\limits_{q=1}^{D_A} 
        \Phi^{(B):k}_{(A):p,q} [gW]^{(A)}_{p,q} 
    + \sum\limits_{p=1}^{D_A} \sum\limits_{q=1}^{D} 
        \Phi^{(B):k}_{(B):p,q} [gW]^{(B)}_{p,q} \nonumber\\
    &\quad\quad\quad\quad 
    + \sum\limits_{q=1}^{D_A} 
        \Phi^{(B):k}_{(A):q} [gb]^{(A)}_q 
    + \sum\limits_{q=1}^{D} 
        \Phi^{(B):k}_{(B):q} [gb]^{(B)}_q
    + \Phi^{(B):k}.
\end{align}
Using Proposition~\ref{prop:entries_under_group_action}, we can proceed the right hand side of the above equation further as:
\begin{align}
    [E(gW)]^{(Q,i)}_{j,k}
    &= \sum\limits_{s=1}^h \sum\limits_{p=1}^{D} \sum\limits_{q=1}^{D} 
        \Phi^{(Q,i):j,k}_{(QK,s):p,q} [WW]^{{(QK,\tau(s))}}_{p,q} \nonumber\\
    &
    + \sum\limits_{s=1}^h \sum\limits_{p=1}^{D} \sum\limits_{q=1}^{D} 
        \Phi^{(Q,i):j,k}_{(VO,s):p,q}[WW]^{(VO,\tau(s))}_{p,\pi_O(q)} \nonumber\\
    &\quad + \sum\limits_{s=1}^h \sum\limits_{p=1}^{D} \sum\limits_{q=1}^{D_k} 
        \Phi^{(Q,i):j,k}_{(Q,s):p,q} \left[[W]^{{(Q,\tau(s))}} \cdot \left(M^{(\tau(s))} \right)^{\top}\right]_{p,q} \nonumber\\
    &\quad + \sum\limits_{s=1}^h \sum\limits_{p=1}^{D} \sum\limits_{q=1}^{D_k} 
        \Phi^{(Q,i):j,k}_{(K,s):p,q}\left[[W]^{(K,\tau(s))} \cdot \left( M^{(\tau(s))} \right)^{-1}\right]_{p,q} \nonumber\\
    &\quad\quad + \sum\limits_{s=1}^h \sum\limits_{p=1}^{D} \sum\limits_{q=1}^{D_v} 
        \Phi^{(Q,i):j,k}_{(V,s):p,q} \left[[W]^{{(V,\tau(s))}} \cdot N^{(\tau(s))}\right]_{p,q} \nonumber\\
    &\quad\quad + \sum\limits_{s=1}^h \sum\limits_{p=1}^{D_v} \sum\limits_{q=1}^{D} 
        \Phi^{(Q,i):j,k}_{(O,s):p,q} \left[ \left(N^{(\tau(s))}\right)^{-1} \cdot [W]^{(O,\tau(s))} \right]_{p,\pi_O(q)} \nonumber\\
    &\quad\quad\quad 
    + \sum\limits_{p=1}^{D} \sum\limits_{q=1}^{D_A} 
        \Phi^{(Q,i):j,k}_{(A):p,q} [W]^{(A)}_{\pi_O(p),\pi_A(q)} 
    + \sum\limits_{p=1}^{D_A} \sum\limits_{q=1}^{D} 
        \Phi^{(Q,i):j,k}_{(B):p,q} [W]^{(B)}_{\pi_A(p),q} \nonumber\\
    &\quad\quad\quad\quad 
    + \sum\limits_{q=1}^{D_A} 
        \Phi^{(Q,i):j,k}_{(A):q} [b]^{(A)}_{\pi_A(q)} 
    + \sum\limits_{q=1}^{D} 
        \Phi^{(Q,i):j,k}_{(B):q} [b]^{(B)}_q
    + \Phi^{(Q,i):j,k},\\
    [E(gW)]^{(K,i)}_{j,k}
    &= \sum\limits_{s=1}^h \sum\limits_{p=1}^{D} \sum\limits_{q=1}^{D} 
        \Phi^{(K,i):j,k}_{(QK,s):p,q} [WW]^{{(QK,\tau(s))}}_{p,q} \nonumber\\
    &
    + \sum\limits_{s=1}^h \sum\limits_{p=1}^{D} \sum\limits_{q=1}^{D} 
        \Phi^{(K,i):j,k}_{(VO,s):p,q}[WW]^{(VO,\tau(s))}_{p,\pi_O(q)} \nonumber\\
    &\quad + \sum\limits_{s=1}^h \sum\limits_{p=1}^{D} \sum\limits_{q=1}^{D_k} 
        \Phi^{(K,i):j,k}_{(Q,s):p,q} \left[[W]^{{(Q,\tau(s))}} \cdot \left(M^{(\tau(s))}\right)^{\top}\right]_{p,q} \nonumber\\
    &\quad + \sum\limits_{s=1}^h \sum\limits_{p=1}^{D} \sum\limits_{q=1}^{D_k} 
        \Phi^{(K,i):j,k}_{(K,s):p,q}\left[[W]^{(K,\tau(s))} \cdot \left( M^{(\tau(s))} \right)^{-1}\right]_{p,q} \nonumber\\
    &\quad\quad + \sum\limits_{s=1}^h \sum\limits_{p=1}^{D} \sum\limits_{q=1}^{D_v} 
        \Phi^{(K,i):j,k}_{(V,s):p,q} \left[[W]^{{(V,\tau(s))}} \cdot N^{(\tau(s))}\right]_{p,q} \nonumber\\
    &\quad\quad + \sum\limits_{s=1}^h \sum\limits_{p=1}^{D_v} \sum\limits_{q=1}^{D} 
        \Phi^{(K,i):j,k}_{(O,s):p,q} \left[ \left(N^{(\tau(s))}\right)^{-1} \cdot [W]^{(O,\tau(s))} \right]_{p,\pi_O(q)} \nonumber\\
    &\quad\quad\quad 
    + \sum\limits_{p=1}^{D} \sum\limits_{q=1}^{D_A} 
        \Phi^{(K,i):j,k}_{(A):p,q} [W]^{(A)}_{\pi_O(p),\pi_A(q)} 
    + \sum\limits_{p=1}^{D_A} \sum\limits_{q=1}^{D} 
        \Phi^{(K,i):j,k}_{(B):p,q} [W]^{(B)}_{\pi_A(p),q} \nonumber\\
    &\quad\quad\quad\quad 
    + \sum\limits_{q=1}^{D_A} 
        \Phi^{(K,i):j,k}_{(A):q} [b]^{(A)}_{\pi_A(q)} 
    + \sum\limits_{q=1}^{D} 
        \Phi^{(K,i):j,k}_{(B):q} [b]^{(B)}_q
    + \Phi^{(K,i):j,k},\\
    [E(gW)]^{(V,i)}_{j,k}
    &= \sum\limits_{s=1}^h \sum\limits_{p=1}^{D} \sum\limits_{q=1}^{D} 
        \Phi^{(V,i):j,k}_{(QK,s):p,q} [WW]^{{(QK,\tau(s))}}_{p,q} \nonumber\\
    &
    + \sum\limits_{s=1}^h \sum\limits_{p=1}^{D} \sum\limits_{q=1}^{D} 
        \Phi^{(V,i):j,k}_{(VO,s):p,q}[WW]^{(VO,\tau(s))}_{p,\pi_O(q)} \nonumber\\
    &\quad + \sum\limits_{s=1}^h \sum\limits_{p=1}^{D} \sum\limits_{q=1}^{D_k} 
        \Phi^{(V,i):j,k}_{(Q,s):p,q} \left[[W]^{{(Q,\tau(s))}} \cdot \left(M^{(\tau(s))} \right)^{\top}\right]_{p,q}\nonumber\\
    &\quad + \sum\limits_{s=1}^h \sum\limits_{p=1}^{D} \sum\limits_{q=1}^{D_k} 
        \Phi^{(V,i):j,k}_{(K,s):p,q}\left[[W]^{(K,\tau(s))} \cdot \left( M^{(\tau(s))} \right)^{-1}\right]_{p,q} \nonumber\\
    &\quad\quad + \sum\limits_{s=1}^h \sum\limits_{p=1}^{D} \sum\limits_{q=1}^{D_v} 
        \Phi^{(V,i):j,k}_{(V,s):p,q} \left[[W]^{{(V,\tau(s))}} \cdot N^{(\tau(s))}\right]_{p,q} \nonumber\\
    &\quad\quad + \sum\limits_{s=1}^h \sum\limits_{p=1}^{D_v} \sum\limits_{q=1}^{D} 
        \Phi^{(V,i):j,k}_{(O,s):p,q} \left[ \left(N^{(\tau(s))}\right)^{-1} \cdot [W]^{(O,\tau(s))} \right]_{p,\pi_O(q)} \nonumber\\
    &\quad\quad\quad 
    + \sum\limits_{p=1}^{D} \sum\limits_{q=1}^{D_A} 
        \Phi^{(V,i):j,k}_{(A):p,q} [W]^{(A)}_{\pi_O(p),\pi_A(q)} 
    + \sum\limits_{p=1}^{D_A} \sum\limits_{q=1}^{D} 
        \Phi^{(V,i):j,k}_{(B):p,q} [W]^{(B)}_{\pi_A(p),q} \nonumber\\
    &\quad\quad\quad\quad 
    + \sum\limits_{q=1}^{D_A} 
        \Phi^{(V,i):j,k}_{(A):q} [b]^{(A)}_{\pi_A(q)} 
    + \sum\limits_{q=1}^{D} 
    \Phi^{(V,i):j,k}_{(B):q} [b]^{(B)}_q
    + \Phi^{(V,i):j,k},\\
    [E(gW)]^{(O,i)}_{j,k}
    &= \sum\limits_{s=1}^h \sum\limits_{p=1}^{D} \sum\limits_{q=1}^{D} 
        \Phi^{(O,i):j,k}_{(QK,s):p,q} [WW]^{{(QK,\tau(s))}}_{p,q} \nonumber\\
    &
    + \sum\limits_{s=1}^h \sum\limits_{p=1}^{D} \sum\limits_{q=1}^{D} 
        \Phi^{(O,i):j,k}_{(VO,s):p,q}[WW]^{(VO,\tau(s))}_{p,\pi_O(q)} \nonumber\\
    &\quad + \sum\limits_{s=1}^h \sum\limits_{p=1}^{D} \sum\limits_{q=1}^{D_k} 
        \Phi^{(O,i):j,k}_{(Q,s):p,q} \left[[W]^{{(Q,\tau(s))}} \cdot \left(M^{(\tau(s))} \right)^{\top}\right]_{p,q}\nonumber\\
    &\quad + \sum\limits_{s=1}^h \sum\limits_{p=1}^{D} \sum\limits_{q=1}^{D_k} 
        \Phi^{(O,i):j,k}_{(K,s):p,q}\left[[W]^{(K,\tau(s))} \cdot \left( M^{(\tau(s))} \right)^{-1}\right]_{p,q} \nonumber\\
    &\quad\quad + \sum\limits_{s=1}^h \sum\limits_{p=1}^{D} \sum\limits_{q=1}^{D_v} 
        \Phi^{(O,i):j,k}_{(V,s):p,q} \left[[W]^{{(V,\tau(s))}} \cdot N^{(\tau(s))}\right]_{p,q} \nonumber\\
    &\quad\quad + \sum\limits_{s=1}^h \sum\limits_{p=1}^{D_v} \sum\limits_{q=1}^{D} 
        \Phi^{(O,i):j,k}_{(O,s):p,q} \left[ \left(N^{(\tau(s))}\right)^{-1} \cdot [W]^{(O,\tau(s))} \right]_{p,\pi_O(q)} \nonumber\\
    &\quad\quad\quad 
    + \sum\limits_{p=1}^{D} \sum\limits_{q=1}^{D_A} 
        \Phi^{(O,i):j,k}_{(A):p,q} [W]^{(A)}_{\pi_O(p),\pi_A(q)} 
    + \sum\limits_{p=1}^{D_A} \sum\limits_{q=1}^{D} 
        \Phi^{(O,i):j,k}_{(B):p,q} [W]^{(B)}_{\pi_A(p),q} \nonumber\\
    &\quad\quad\quad\quad 
    + \sum\limits_{q=1}^{D_A} 
        \Phi^{(O,i):j,k}_{(A):q} [b]^{(A)}_{\pi_A(q)} 
    + \sum\limits_{q=1}^{D} 
        \Phi^{(O,i):j,k}_{(B):q} [b]^{(B)}_q
    + \Phi^{(O,i):j,k},\\
    [E(gW)]^{(A)}_{j,k}
    &= \sum\limits_{s=1}^h \sum\limits_{p=1}^{D} \sum\limits_{q=1}^{D} 
        \Phi^{(A):j,k}_{(QK,s):p,q} [WW]^{{(QK,\tau(s))}}_{p,q} \nonumber\\
    &
    + \sum\limits_{s=1}^h \sum\limits_{p=1}^{D} \sum\limits_{q=1}^{D} 
        \Phi^{(A):j,k}_{(VO,s):p,q}[WW]^{(VO,\tau(s))}_{p,\pi_O(q)} \nonumber\\
    &\quad + \sum\limits_{s=1}^h \sum\limits_{p=1}^{D} \sum\limits_{q=1}^{D_k} 
        \Phi^{(A):j,k}_{(Q,s):p,q} \left[[W]^{{(Q,\tau(s))}} \cdot \left(M^{(\tau(s))} \right)^{\top}\right]_{p,q} \nonumber\\
    &\quad + \sum\limits_{s=1}^h \sum\limits_{p=1}^{D} \sum\limits_{q=1}^{D_k} 
        \Phi^{(A):j,k}_{(K,s):p,q}\left[[W]^{(K,\tau(s))} \cdot \left( M^{(\tau(s))} \right)^{-1}\right]_{p,q} \nonumber\\
    &\quad\quad + \sum\limits_{s=1}^h \sum\limits_{p=1}^{D} \sum\limits_{q=1}^{D_v} 
        \Phi^{(A):j,k}_{(V,s):p,q} \left[[W]^{{(V,\tau(s))}} \cdot N^{(\tau(s))}\right]_{p,q} \nonumber\\
    &\quad\quad + \sum\limits_{s=1}^h \sum\limits_{p=1}^{D_v} \sum\limits_{q=1}^{D} 
        \Phi^{(A):j,k}_{(O,s):p,q} \left[ \left(N^{(\tau(s))}\right)^{-1} \cdot [W]^{(O,\tau(s))} \right]_{p,\pi_O(q)} \nonumber\\
    &\quad\quad\quad 
    + \sum\limits_{p=1}^{D} \sum\limits_{q=1}^{D_A} 
        \Phi^{(A):j,k}_{(A):p,q} [W]^{(A)}_{\pi_O(p),\pi_A(q)} 
    + \sum\limits_{p=1}^{D_A} \sum\limits_{q=1}^{D} 
        \Phi^{(A):j,k}_{(B):p,q} [W]^{(B)}_{\pi_A(p),q} \nonumber\\
    &\quad\quad\quad\quad 
    + \sum\limits_{q=1}^{D_A} 
        \Phi^{(A):j,k}_{(A):q} [b]^{(A)}_{\pi_A(q)} 
    + \sum\limits_{q=1}^{D} 
        \Phi^{(A):j,k}_{(B):q} [b]^{(B)}_q
    + \Phi^{(A):j,k},\\
    [E(gW)]^{(B)}_{j,k}
    &= \sum\limits_{s=1}^h \sum\limits_{p=1}^{D} \sum\limits_{q=1}^{D} 
        \Phi^{(B):j,k}_{(QK,s):p,q} [WW]^{{(QK,\tau(s))}}_{p,q} \nonumber\\
    &
    + \sum\limits_{s=1}^h \sum\limits_{p=1}^{D} \sum\limits_{q=1}^{D} 
        \Phi^{(B):j,k}_{(VO,s):p,q}[WW]^{(VO,\tau(s))}_{p,\pi_O(q)} \nonumber\\
    &\quad + \sum\limits_{s=1}^h \sum\limits_{p=1}^{D} \sum\limits_{q=1}^{D_k} 
        \Phi^{(B):j,k}_{(Q,s):p,q} \left[[W]^{{(Q,\tau(s))}} \cdot \left(M^{(\tau(s))} \right)^{\top}\right]_{p,q}\\
    &\quad+ \sum\limits_{s=1}^h \sum\limits_{p=1}^{D} \sum\limits_{q=1}^{D_k} 
        \Phi^{(B):j,k}_{(K,s):p,q}\left[[W]^{(K,\tau(s))} \cdot \left( M^{(\tau(s))} \right)^{-1}\right]_{p,q} \nonumber\\
    &\quad\quad + \sum\limits_{s=1}^h \sum\limits_{p=1}^{D} \sum\limits_{q=1}^{D_v} 
        \Phi^{(B):j,k}_{(V,s):p,q} \left[[W]^{{(V,\tau(s))}} \cdot N^{(\tau(s))}\right]_{p,q} \nonumber\\
    &\quad\quad + \sum\limits_{s=1}^h \sum\limits_{p=1}^{D_v} \sum\limits_{q=1}^{D} 
        \Phi^{(B):j,k}_{(O,s):p,q} \left[ \left(N^{(\tau(s))}\right)^{-1} \cdot [W]^{(O,\tau(s))} \right]_{p,\pi_O(q)} \nonumber\\
    &\quad\quad\quad 
    + \sum\limits_{p=1}^{D} \sum\limits_{q=1}^{D_A} 
        \Phi^{(B):j,k}_{(A):p,q} [W]^{(A)}_{\pi_O(p),\pi_A(q)} 
    + \sum\limits_{p=1}^{D_A} \sum\limits_{q=1}^{D} 
        \Phi^{(B):j,k}_{(B):p,q} [W]^{(B)}_{\pi_A(p),q} \nonumber\\
    &\quad\quad\quad\quad 
    + \sum\limits_{q=1}^{D_A} 
        \Phi^{(B):j,k}_{(A):q} [b]^{(A)}_{\pi_A(q)} 
    + \sum\limits_{q=1}^{D} 
        \Phi^{(B):j,k}_{(B):q} [b]^{(B)}_q
    + \Phi^{(B):j,k},\\
    [E(gb)]^{(A)}_{k}
    &= \sum\limits_{s=1}^h \sum\limits_{p=1}^{D} \sum\limits_{q=1}^{D} 
        \Phi^{(A):k}_{(QK,s):p,q} [WW]^{{(QK,\tau(s))}}_{p,q} \nonumber\\
    &
    + \sum\limits_{s=1}^h \sum\limits_{p=1}^{D} \sum\limits_{q=1}^{D} 
        \Phi^{(A):k}_{(VO,s):p,q}[WW]^{(VO,\tau(s))}_{p,\pi_O(q)} \nonumber\\
    &\quad + \sum\limits_{s=1}^h \sum\limits_{p=1}^{D} \sum\limits_{q=1}^{D_k} 
        \Phi^{(A):k}_{(Q,s):p,q} \left[[W]^{{(Q,\tau(s))}} \cdot \left(M^{(\tau(s))} \right)^{\top}\right]_{p,q} \nonumber\\
    &\quad + \sum\limits_{s=1}^h \sum\limits_{p=1}^{D} \sum\limits_{q=1}^{D_k} 
        \Phi^{(A):k}_{(K,s):p,q}\left[[W]^{(K,\tau(s))} \cdot \left( M^{(\tau(s))} \right)^{-1}\right]_{p,q} \nonumber\\
    &\quad\quad + \sum\limits_{s=1}^h \sum\limits_{p=1}^{D} \sum\limits_{q=1}^{D_v} 
        \Phi^{(A):k}_{(V,s):p,q} \left[[W]^{{(V,\tau(s))}} \cdot N^{(\tau(s))}\right]_{p,q} \nonumber\\
    &\quad\quad + \sum\limits_{s=1}^h \sum\limits_{p=1}^{D_v} \sum\limits_{q=1}^{D} 
        \Phi^{(A):k}_{(O,s):p,q} \left[ \left(N^{(\tau(s))}\right)^{-1} \cdot [W]^{(O,\tau(s))} \right]_{p,\pi_O(q)} \nonumber\\
    &\quad\quad\quad 
    + \sum\limits_{p=1}^{D} \sum\limits_{q=1}^{D_A} 
        \Phi^{(A):k}_{(A):p,q} [W]^{(A)}_{\pi_O(p),\pi_A(q)} 
    + \sum\limits_{p=1}^{D_A} \sum\limits_{q=1}^{D} 
        \Phi^{(A):k}_{(B):p,q} [W]^{(B)}_{\pi_A(p),q} \nonumber\\
    &\quad\quad\quad\quad 
    + \sum\limits_{q=1}^{D_A} 
        \Phi^{(A):k}_{(A):q} [b]^{(A)}_{\pi_A(q)} 
    + \sum\limits_{q=1}^{D} 
        \Phi^{(A):k}_{(B):q} [b]^{(B)}_q
    + \Phi^{(A):k},\\
     [E(gb)]^{(B)}_{k}
    &= \sum\limits_{s=1}^h \sum\limits_{p=1}^{D} \sum\limits_{q=1}^{D} 
        \Phi^{(B):k}_{(QK,s):p,q} [WW]^{{(QK,\tau(s))}}_{p,q} \nonumber\\
    &
    + \sum\limits_{s=1}^h \sum\limits_{p=1}^{D} \sum\limits_{q=1}^{D} 
        \Phi^{(B):k}_{(VO,s):p,q}[WW]^{(VO,\tau(s))}_{p,\pi_O(q)} \nonumber\\
    &\quad + \sum\limits_{s=1}^h \sum\limits_{p=1}^{D} \sum\limits_{q=1}^{D_k} 
        \Phi^{(B):k}_{(Q,s):p,q} \left[[W]^{{(Q,\tau(s))}} \cdot \left(M^{(\tau(s))} \right)^{\top}\right]_{p,q}\nonumber\\
    &\quad + \sum\limits_{s=1}^h \sum\limits_{p=1}^{D} \sum\limits_{q=1}^{D_k} 
        \Phi^{(B):k}_{(K,s):p,q}\left[[W]^{(K,\tau(s))} \cdot \left( M^{(\tau(s))} \right)^{-1}\right]_{p,q} \nonumber\\
    &\quad\quad + \sum\limits_{s=1}^h \sum\limits_{p=1}^{D} \sum\limits_{q=1}^{D_v} 
        \Phi^{(B):k}_{(V,s):p,q} \left[[W]^{{(V,\tau(s))}} \cdot N^{(\tau(s))}\right]_{p,q} \nonumber\\
    &\quad\quad + \sum\limits_{s=1}^h \sum\limits_{p=1}^{D_v} \sum\limits_{q=1}^{D} 
        \Phi^{(B):k}_{(O,s):p,q} \left[ \left(N^{(\tau(s))}\right)^{-1} \cdot [W]^{(O,\tau(s))} \right]_{p,\pi_O(q)} \nonumber\\
    &\quad\quad\quad 
    + \sum\limits_{p=1}^{D} \sum\limits_{q=1}^{D_A} 
        \Phi^{(B):k}_{(A):p,q} [W]^{(A)}_{\pi_O(p),\pi_A(q)} 
    + \sum\limits_{p=1}^{D_A} \sum\limits_{q=1}^{D} 
        \Phi^{(B):k}_{(B):p,q} [W]^{(B)}_{\pi_A(p),q} \nonumber\\
    &\quad\quad\quad\quad 
    + \sum\limits_{q=1}^{D_A} 
        \Phi^{(B):k}_{(A):q} [b]^{(A)}_{\pi_A(q)} 
    + \sum\limits_{q=1}^{D} 
        \Phi^{(B):k}_{(B):q} [b]^{(B)}_q
    + \Phi^{(B):k}.
\end{align}
Using the symmetries of the indices, we can rearrange the right hand sides of the above equations as:
\begin{align}
    [E(gW)]^{(Q,i)}_{j,k}
    &= \sum\limits_{s=1}^h \sum\limits_{p=1}^{D} \sum\limits_{q=1}^{D} 
        \Phi^{(Q,i):j,k}_{(QK,\tau^{-1}(s)):p,q} [WW]^{{(QK,s)}}_{p,q} \nonumber\\
    &
    + \sum\limits_{s=1}^h \sum\limits_{p=1}^{D} \sum\limits_{q=1}^{D} 
        \Phi^{(Q,i):j,k}_{(VO,\tau^{-1}(s)):p,\pi_O^{-1}(q)}[WW]^{(VO,s)}_{p,q} \nonumber\\
    &\quad + \sum\limits_{s=1}^h \sum\limits_{p=1}^{D} \sum\limits_{q=1}^{D_k} 
        \Phi^{(Q,i):j,k}_{(Q,\tau^{-1}(s)):p,q} \left[[W]^{{(Q,s)}} \cdot \left(M^{(s)} \right)^{\top}\right]_{p,q} \nonumber\\
    &\quad + \sum\limits_{s=1}^h \sum\limits_{p=1}^{D} \sum\limits_{q=1}^{D_k} 
        \Phi^{(Q,i):j,k}_{(K,\tau^{-1}(s)):p,q}\left[[W]^{(K,s)} \cdot \left( M^{(s)} \right)^{-1}\right]_{p,q} \nonumber\\
    &\quad\quad + \sum\limits_{s=1}^h \sum\limits_{p=1}^{D} \sum\limits_{q=1}^{D_v} 
        \Phi^{(Q,i):j,k}_{(V,\tau^{-1}(s)):p,q} \left[[W]^{{(V,s)}} \cdot N^{(s)}\right]_{p,q} \nonumber\\
    &\quad\quad + \sum\limits_{s=1}^h \sum\limits_{p=1}^{D_v} \sum\limits_{q=1}^{D} 
        \Phi^{(Q,i):j,k}_{(O,\tau^{-1}(s)):p,\pi_O^{-1}(q)} \left[ \left(N^{(s)}\right)^{-1} \cdot [W]^{(O,s)} \right]_{p,q} \nonumber\\
    &\quad\quad\quad 
    + \sum\limits_{p=1}^{D} \sum\limits_{q=1}^{D_A} 
        \Phi^{(Q,i):j,k}_{(A):\pi_O^{-1}(p),\pi_A^{-1}(q)} [W]^{(A)}_{p,q} 
    + \sum\limits_{p=1}^{D_A} \sum\limits_{q=1}^{D} 
        \Phi^{(Q,i):j,k}_{(B):\pi_A^{-1}(p),q} [W]^{(B)}_{p,q} \nonumber\\
    &\quad\quad\quad\quad 
    + \sum\limits_{q=1}^{D_A} 
        \Phi^{(Q,i):j,k}_{(A):\pi_A^{-1}(q)} [b]^{(A)}_{q} 
    + \sum\limits_{q=1}^{D} 
        \Phi^{(Q,i):j,k}_{(B):q} [b]^{(B)}_q
    + \Phi^{(Q,i):j,k},\\
    [E(gW)]^{(K,i)}_{j,k}
    &= \sum\limits_{s=1}^h \sum\limits_{p=1}^{D} \sum\limits_{q=1}^{D} 
        \Phi^{(K,i):j,k}_{(QK,\tau^{-1}(s)):p,q} [WW]^{{(QK,s)}}_{p,q} \nonumber\\
    &
    + \sum\limits_{s=1}^h \sum\limits_{p=1}^{D} \sum\limits_{q=1}^{D} 
        \Phi^{(K,i):j,k}_{(VO,\tau^{-1}(s)):p,\pi_O^{-1}(q)}[WW]^{(VO,s)}_{p,q} \nonumber\\
    &\quad + \sum\limits_{s=1}^h \sum\limits_{p=1}^{D} \sum\limits_{q=1}^{D_k} 
        \Phi^{(K,i):j,k}_{(Q,\tau^{-1}(s)):p,q} \left[[W]^{{(Q,s)}} \cdot \left(M^{(s)}\right)^{\top}\right]_{p,q} \nonumber\\
    &\quad + \sum\limits_{s=1}^h \sum\limits_{p=1}^{D} \sum\limits_{q=1}^{D_k} 
        \Phi^{(K,i):j,k}_{(K,\tau^{-1}(s)):p,q}\left[[W]^{(K,s)} \cdot \left( M^{(s)} \right)^{-1}\right]_{p,q} \nonumber\\
    &\quad\quad + \sum\limits_{s=1}^h \sum\limits_{p=1}^{D} \sum\limits_{q=1}^{D_v} 
        \Phi^{(K,i):j,k}_{(V,\tau^{-1}(s)):p,q} \left[[W]^{{(V,s)}} \cdot N^{(s)}\right]_{p,q} \nonumber\\
    &\quad\quad + \sum\limits_{s=1}^h \sum\limits_{p=1}^{D_v} \sum\limits_{q=1}^{D} 
        \Phi^{(K,i):j,k}_{(O,\tau^{-1}(s)):p,\pi_O^{-1}(q)} \left[ \left(N^{(s)}\right)^{-1} \cdot [W]^{(O,s)} \right]_{p,q} \nonumber\\
    &\quad\quad\quad 
    + \sum\limits_{p=1}^{D} \sum\limits_{q=1}^{D_A} 
        \Phi^{(K,i):j,k}_{(A):\pi_O^{-1}(p),\pi_A^{-1}(q)} [W]^{(A)}_{p,q} 
    + \sum\limits_{p=1}^{D_A} \sum\limits_{q=1}^{D} 
        \Phi^{(K,i):j,k}_{(B):\pi_A^{-1}(p),q} [W]^{(B)}_{p,q} \nonumber\\
    &\quad\quad\quad\quad 
    + \sum\limits_{q=1}^{D_A} 
        \Phi^{(K,i):j,k}_{(A):\pi_A^{-1}(q)} [b]^{(A)}_{q} 
    + \sum\limits_{q=1}^{D} 
        \Phi^{(K,i):j,k}_{(B):q} [b]^{(B)}_q
    + \Phi^{(K,i):j,k},\\
    [E(gW)]^{(V,i)}_{j,k}
    &= \sum\limits_{s=1}^h \sum\limits_{p=1}^{D} \sum\limits_{q=1}^{D} 
        \Phi^{(V,i):j,k}_{(QK,\tau^{-1}(s)):p,q} [WW]^{{(QK,s)}}_{p,q} \nonumber\\
    &
    + \sum\limits_{s=1}^h \sum\limits_{p=1}^{D} \sum\limits_{q=1}^{D} 
        \Phi^{(V,i):j,k}_{(VO,\tau^{-1}(s)):p,\pi_O^{-1}(q)}[WW]^{(VO,s)}_{p,q} \nonumber\\
    &\quad + \sum\limits_{s=1}^h \sum\limits_{p=1}^{D} \sum\limits_{q=1}^{D_k} 
        \Phi^{(V,i):j,k}_{(Q,\tau^{-1}(s)):p,q} \left[[W]^{{(Q,s)}} \cdot \left(M^{(s)} \right)^{\top}\right]_{p,q}\nonumber\\
    &\quad + \sum\limits_{s=1}^h \sum\limits_{p=1}^{D} \sum\limits_{q=1}^{D_k} 
        \Phi^{(V,i):j,k}_{(K,\tau^{-1}(s)):p,q}\left[[W]^{(K,s)} \cdot \left( M^{(s)} \right)^{-1}\right]_{p,q} \nonumber\\
    &\quad\quad + \sum\limits_{s=1}^h \sum\limits_{p=1}^{D} \sum\limits_{q=1}^{D_v} 
        \Phi^{(V,i):j,k}_{(V,\tau^{-1}(s)):p,q} \left[[W]^{{(V,s)}} \cdot N^{(s)}\right]_{p,q} \nonumber\\
    &\quad\quad + \sum\limits_{s=1}^h \sum\limits_{p=1}^{D_v} \sum\limits_{q=1}^{D} 
        \Phi^{(V,i):j,k}_{(O,\tau^{-1}(s)):p,\pi_O^{-1}(q)} \left[ \left(N^{(s)}\right)^{-1} \cdot [W]^{(O,s)} \right]_{p,q} \nonumber\\
    &\quad\quad\quad 
    + \sum\limits_{p=1}^{D} \sum\limits_{q=1}^{D_A} 
        \Phi^{(V,i):j,k}_{(A):\pi_O^{-1}(p),\pi_A^{-1}(q)} [W]^{(A)}_{p,q} 
    + \sum\limits_{p=1}^{D_A} \sum\limits_{q=1}^{D} 
        \Phi^{(V,i):j,k}_{(B):\pi_A^{-1}(p),q} [W]^{(B)}_{p,q} \nonumber\\
    &\quad\quad\quad\quad 
    + \sum\limits_{q=1}^{D_A} 
        \Phi^{(V,i):j,k}_{(A):\pi_A^{-1}(q)} [b]^{(A)}_{q} 
    + \sum\limits_{q=1}^{D} 
    \Phi^{(V,i):j,k}_{(B):q} [b]^{(B)}_q
    + \Phi^{(V,i):j,k},\\
    [E(gW)]^{(O,i)}_{j,k}
    &= \sum\limits_{s=1}^h \sum\limits_{p=1}^{D} \sum\limits_{q=1}^{D} 
        \Phi^{(O,i):j,k}_{(QK,\tau^{-1}(s)):p,q} [WW]^{{(QK,s)}}_{p,q} \nonumber\\
    &
    + \sum\limits_{s=1}^h \sum\limits_{p=1}^{D} \sum\limits_{q=1}^{D} 
        \Phi^{(O,i):j,k}_{(VO,\tau^{-1}(s)):p,\pi_O^{-1}(q)}[WW]^{(VO,s)}_{p,q} \nonumber\\
    &\quad + \sum\limits_{s=1}^h \sum\limits_{p=1}^{D} \sum\limits_{q=1}^{D_k} 
        \Phi^{(O,i):j,k}_{(Q,\tau^{-1}(s)):p,q} \left[[W]^{{(Q,s)}} \cdot \left(M^{(s)} \right)^{\top}\right]_{p,q}\nonumber\\
    &\quad + \sum\limits_{s=1}^h \sum\limits_{p=1}^{D} \sum\limits_{q=1}^{D_k} 
        \Phi^{(O,i):j,k}_{(K,\tau^{-1}(s)):p,q}\left[[W]^{(K,s)} \cdot \left( M^{(s)} \right)^{-1}\right]_{p,q} \nonumber\\
    &\quad\quad + \sum\limits_{s=1}^h \sum\limits_{p=1}^{D} \sum\limits_{q=1}^{D_v} 
        \Phi^{(O,i):j,k}_{(V,\tau^{-1}(s)):p,q} \left[[W]^{{(V,s)}} \cdot N^{(s)}\right]_{p,q} \nonumber\\
    &\quad\quad + \sum\limits_{s=1}^h \sum\limits_{p=1}^{D_v} \sum\limits_{q=1}^{D} 
        \Phi^{(O,i):j,k}_{(O,\tau^{-1}(s)):p,\pi_O^{-1}(q)} \left[ \left(N^{(s)}\right)^{-1} \cdot [W]^{(O,s)} \right]_{p,q} \nonumber\\
    &\quad\quad\quad 
    + \sum\limits_{p=1}^{D} \sum\limits_{q=1}^{D_A} 
        \Phi^{(O,i):j,k}_{(A):\pi_O^{-1}(p),\pi_A^{-1}(q)} [W]^{(A)}_{p,q} 
    + \sum\limits_{p=1}^{D_A} \sum\limits_{q=1}^{D} 
        \Phi^{(O,i):j,k}_{(B):\pi_A^{-1}(p),q} [W]^{(B)}_{p,q} \nonumber\\
    &\quad\quad\quad\quad 
    + \sum\limits_{q=1}^{D_A} 
        \Phi^{(O,i):j,k}_{(A):\pi_A^{-1}(q)} [b]^{(A)}_{q} 
    + \sum\limits_{q=1}^{D} 
        \Phi^{(O,i):j,k}_{(B):q} [b]^{(B)}_q
    + \Phi^{(O,i):j,k},\\
    [E(gW)]^{(A)}_{j,k}
    &= \sum\limits_{s=1}^h \sum\limits_{p=1}^{D} \sum\limits_{q=1}^{D} 
        \Phi^{(A):j,k}_{(QK,\tau^{-1}(s)):p,q} [WW]^{{(QK,s)}}_{p,q} \nonumber\\
    &
    + \sum\limits_{s=1}^h \sum\limits_{p=1}^{D} \sum\limits_{q=1}^{D} 
        \Phi^{(A):j,k}_{(VO,\tau^{-1}(s)):p,\pi_O^{-1}(q)}[WW]^{(VO,s)}_{p,q} \nonumber\\
    &\quad + \sum\limits_{s=1}^h \sum\limits_{p=1}^{D} \sum\limits_{q=1}^{D_k} 
        \Phi^{(A):j,k}_{(Q,\tau^{-1}(s)):p,q} \left[[W]^{{(Q,s)}} \cdot \left(M^{(s)} \right)^{\top}\right]_{p,q} \nonumber\\
    &\quad + \sum\limits_{s=1}^h \sum\limits_{p=1}^{D} \sum\limits_{q=1}^{D_k} 
        \Phi^{(A):j,k}_{(K,\tau^{-1}(s)):p,q}\left[[W]^{(K,s)} \cdot \left( M^{(s)} \right)^{-1}\right]_{p,q} \nonumber\\
    &\quad\quad + \sum\limits_{s=1}^h \sum\limits_{p=1}^{D} \sum\limits_{q=1}^{D_v} 
        \Phi^{(A):j,k}_{(V,\tau^{-1}(s)):p,q} \left[[W]^{{(V,s)}} \cdot N^{(s)}\right]_{p,q} \nonumber\\
    &\quad\quad + \sum\limits_{s=1}^h \sum\limits_{p=1}^{D_v} \sum\limits_{q=1}^{D} 
        \Phi^{(A):j,k}_{(O,\tau^{-1}(s)):p,\pi_O^{-1}(q)} \left[ \left(N^{(s)}\right)^{-1} \cdot [W]^{(O,s)} \right]_{p,q} \nonumber\\
    &\quad\quad\quad 
    + \sum\limits_{p=1}^{D} \sum\limits_{q=1}^{D_A} 
        \Phi^{(A):j,k}_{(A):\pi_O^{-1}(p),\pi_A^{-1}(q)} [W]^{(A)}_{p,q} 
    + \sum\limits_{p=1}^{D_A} \sum\limits_{q=1}^{D} 
        \Phi^{(A):j,k}_{(B):\pi_A^{-1}(p),q} [W]^{(B)}_{p,q} \nonumber\\
    &\quad\quad\quad\quad 
    + \sum\limits_{q=1}^{D_A} 
        \Phi^{(A):j,k}_{(A):\pi_A^{-1}(q)} [b]^{(A)}_{q} 
    + \sum\limits_{q=1}^{D} 
        \Phi^{(A):j,k}_{(B):q} [b]^{(B)}_q
    + \Phi^{(A):j,k},\\
    [E(gW)]^{(B)}_{j,k}
    &= \sum\limits_{s=1}^h \sum\limits_{p=1}^{D} \sum\limits_{q=1}^{D} 
        \Phi^{(B):j,k}_{(QK,\tau^{-1}(s)):p,q} [WW]^{{(QK,s)}}_{p,q} \nonumber\\
    &
    + \sum\limits_{s=1}^h \sum\limits_{p=1}^{D} \sum\limits_{q=1}^{D} 
        \Phi^{(B):j,k}_{(VO,\tau^{-1}(s)):p,\pi_O^{-1}(q)}[WW]^{(VO,s)}_{p,q} \nonumber\\
    &\quad + \sum\limits_{s=1}^h \sum\limits_{p=1}^{D} \sum\limits_{q=1}^{D_k} 
        \Phi^{(B):j,k}_{(Q,\tau^{-1}(s)):p,q} \left[[W]^{{(Q,s)}} \cdot \left(M^{(s)} \right)^{\top}\right]_{p,q}\\
    &\quad+ \sum\limits_{s=1}^h \sum\limits_{p=1}^{D} \sum\limits_{q=1}^{D_k} 
        \Phi^{(B):j,k}_{(K,\tau^{-1}(s)):p,q}\left[[W]^{(K,s)} \cdot \left( M^{(s)} \right)^{-1}\right]_{p,q} \nonumber\\
    &\quad\quad + \sum\limits_{s=1}^h \sum\limits_{p=1}^{D} \sum\limits_{q=1}^{D_v} 
        \Phi^{(B):j,k}_{(V,\tau^{-1}(s)):p,q} \left[[W]^{{(V,s)}} \cdot N^{(s)}\right]_{p,q} \nonumber\\
    &\quad\quad + \sum\limits_{s=1}^h \sum\limits_{p=1}^{D_v} \sum\limits_{q=1}^{D} 
        \Phi^{(B):j,k}_{(O,\tau^{-1}(s)):p,\pi_O^{-1}(q)} \left[ \left(N^{(s)}\right)^{-1} \cdot [W]^{(O,s)} \right]_{p,q} \nonumber\\
    &\quad\quad\quad 
    + \sum\limits_{p=1}^{D} \sum\limits_{q=1}^{D_A} 
        \Phi^{(B):j,k}_{(A):\pi_O^{-1}(p),\pi_A^{-1}(q)} [W]^{(A)}_{p,q} 
    + \sum\limits_{p=1}^{D_A} \sum\limits_{q=1}^{D} 
        \Phi^{(B):j,k}_{(B):\pi_A^{-1}(p),q} [W]^{(B)}_{p,q} \nonumber\\
    &\quad\quad\quad\quad 
    + \sum\limits_{q=1}^{D_A} 
        \Phi^{(B):j,k}_{(A):\pi_A^{-1}(q)} [b]^{(A)}_{q} 
    + \sum\limits_{q=1}^{D} 
        \Phi^{(B):j,k}_{(B):q} [b]^{(B)}_q
    + \Phi^{(B):j,k},\\
    [E(gb)]^{(A)}_{k}
    &= \sum\limits_{s=1}^h \sum\limits_{p=1}^{D} \sum\limits_{q=1}^{D} 
        \Phi^{(A):k}_{(QK,\tau^{-1}(s)):p,q} [WW]^{{(QK,s)}}_{p,q} \nonumber\\
    &
    + \sum\limits_{s=1}^h \sum\limits_{p=1}^{D} \sum\limits_{q=1}^{D} 
        \Phi^{(A):k}_{(VO,\tau^{-1}(s)):p,\pi_O^{-1}(q)}[WW]^{(VO,s)}_{p,q} \nonumber\\
    &\quad + \sum\limits_{s=1}^h \sum\limits_{p=1}^{D} \sum\limits_{q=1}^{D_k} 
        \Phi^{(A):k}_{(Q,\tau^{-1}(s)):p,q} \left[[W]^{{(Q,s)}} \cdot \left(M^{(s)} \right)^{\top}\right]_{p,q} \nonumber\\
    &\quad + \sum\limits_{s=1}^h \sum\limits_{p=1}^{D} \sum\limits_{q=1}^{D_k} 
        \Phi^{(A):k}_{(K,\tau^{-1}(s)):p,q}\left[[W]^{(K,s)} \cdot \left( M^{(s)} \right)^{-1}\right]_{p,q} \nonumber\\
    &\quad\quad + \sum\limits_{s=1}^h \sum\limits_{p=1}^{D} \sum\limits_{q=1}^{D_v} 
        \Phi^{(A):k}_{(V,\tau^{-1}(s)):p,q} \left[[W]^{{(V,s)}} \cdot N^{(s)}\right]_{p,q} \nonumber\\
    &\quad\quad + \sum\limits_{s=1}^h \sum\limits_{p=1}^{D_v} \sum\limits_{q=1}^{D} 
        \Phi^{(A):k}_{(O,\tau^{-1}(s)):p,\pi_O^{-1}(q)} \left[ \left(N^{(s)}\right)^{-1} \cdot [W]^{(O,s)} \right]_{p,q} \nonumber\\
    &\quad\quad\quad 
    + \sum\limits_{p=1}^{D} \sum\limits_{q=1}^{D_A} 
        \Phi^{(A):k}_{(A):\pi_O^{-1}(p),\pi_A^{-1}(q)} [W]^{(A)}_{p,q} 
    + \sum\limits_{p=1}^{D_A} \sum\limits_{q=1}^{D} 
        \Phi^{(A):k}_{(B):\pi_A^{-1}(p),q} [W]^{(B)}_{p,q} \nonumber\\
    &\quad\quad\quad\quad 
    + \sum\limits_{q=1}^{D_A} 
        \Phi^{(A):k}_{(A):\pi_A^{-1}(q)} [b]^{(A)}_{q} 
    + \sum\limits_{q=1}^{D} 
        \Phi^{(A):k}_{(B):q} [b]^{(B)}_q
    + \Phi^{(A):k},\\
     [E(gb)]^{(B)}_{k}
    &= \sum\limits_{s=1}^h \sum\limits_{p=1}^{D} \sum\limits_{q=1}^{D} 
        \Phi^{(B):k}_{(QK,\tau^{-1}(s)):p,q} [WW]^{{(QK,s)}}_{p,q} \nonumber\\
    &
    + \sum\limits_{s=1}^h \sum\limits_{p=1}^{D} \sum\limits_{q=1}^{D} 
        \Phi^{(B):k}_{(VO,\tau^{-1}(s)):p,\pi_O^{-1}(q)}[WW]^{(VO,s)}_{p,q} \nonumber\\
    &\quad + \sum\limits_{s=1}^h \sum\limits_{p=1}^{D} \sum\limits_{q=1}^{D_k} 
        \Phi^{(B):k}_{(Q,\tau^{-1}(s)):p,q} \left[[W]^{{(Q,s)}} \cdot \left(M^{(s)} \right)^{\top}\right]_{p,q}\nonumber\\
    &\quad + \sum\limits_{s=1}^h \sum\limits_{p=1}^{D} \sum\limits_{q=1}^{D_k} 
        \Phi^{(B):k}_{(K,\tau^{-1}(s)):p,q}\left[[W]^{(K,s)} \cdot \left( M^{(s)} \right)^{-1}\right]_{p,q} \nonumber\\
    &\quad\quad + \sum\limits_{s=1}^h \sum\limits_{p=1}^{D} \sum\limits_{q=1}^{D_v} 
        \Phi^{(B):k}_{(V,\tau^{-1}(s)):p,q} \left[[W]^{{(V,s)}} \cdot N^{(s)}\right]_{p,q} \nonumber\\
    &\quad\quad + \sum\limits_{s=1}^h \sum\limits_{p=1}^{D_v} \sum\limits_{q=1}^{D} 
        \Phi^{(B):k}_{(O,\tau^{-1}(s)):p,\pi_O^{-1}(q)} \left[ \left(N^{(s)}\right)^{-1} \cdot [W]^{(O,s)} \right]_{p,q} \nonumber\\
    &\quad\quad\quad 
    + \sum\limits_{p=1}^{D} \sum\limits_{q=1}^{D_A} 
        \Phi^{(B):k}_{(A):\pi_O^{-1}(p),\pi_A^{-1}(q)} [W]^{(A)}_{p,q} 
    + \sum\limits_{p=1}^{D_A} \sum\limits_{q=1}^{D} 
        \Phi^{(B):k}_{(B):\pi_A^{-1}(p),q} [W]^{(B)}_{p,q} \nonumber\\
    &\quad\quad\quad\quad 
    + \sum\limits_{q=1}^{D_A} 
        \Phi^{(B):k}_{(A):\pi_A^{-1}(q)} [b]^{(A)}_{q} 
    + \sum\limits_{q=1}^{D} 
        \Phi^{(B):k}_{(B):q} [b]^{(B)}_q
    + \Phi^{(B):k}.
\end{align}

\subsubsection{Computing $gE(U)$.}
According to Equation~(\ref{eq:group_action}) and Equation~(\ref{eq:E(U)}), we have:

\begin{align*}
        gE(U) &= \left( 
        \left([gE(W)]^{(Q,i)},[gE(W)]^{(K,i)},[gE(W)]^{(V,i)},
        [gE(W)]^{(O,i)}\right)_{i=1,\ldots,h},\right.\\
        &\qquad\qquad \left.
        \left([gE(W)]^{(A)},[gE(W)]^{(B)} \right),
        \left([gE(b)]^{(A)},[gE(b)]^{(B)} \right)
        \right),
    \end{align*}
    where
    \begin{align*}
        [gE(W)]^{(Q,i)}_{j,k} &= \left[ [E(W)]^{(Q,\tau(i))} \cdot \left(M^{(\tau(i))}\right)^{\top} \right]_{j,k},\\
        [gE(W)]^{(K,i)}_{j,k} &= \left[ [E(W)]^{(K,\tau(i))} \cdot \left(M^{(\tau(i))}\right)^{-1} \right]_{j,k},\\
        [gE(W)]^{(V,i)}_{j,k} &= \left[ [E(W)]^{(V,\tau(i))} \cdot N^{(\tau(i))} \right]_{j,k},\\
        [gE(W)]^{(O,i)}_{j,k} &= \left[ \left(N^{(\tau(i))}\right)^{-1} \cdot [E(W)]^{(O,\tau(i))} \right]_{j,\pi_O(k)},\\
        [gE(W)]^{(A)}_{j,k} &= [E(W)]^{(A)}_{\pi_O(j),\pi_A(k)},\\
        [gE(W)]^{(B)}_{j,k} &= [E(W)]^{(B)}_{\pi_A(j),k},\\
        [gE(b)]^{(A)}_{k} &= [E(b)]^{(A)}_{\pi_A(k)},\\
        [gE(b)]^{(B)}_{k} &= [E(b)]^{(B)}_k.
    \end{align*}
We calculate these entries in detail below.
\begin{align}
    &[gE(W)]^{(Q,i)}_{j,k} = \left[ [E(W)]^{(Q,\tau(i))} \cdot \left(M^{(\tau(i))}\right)^{\top} \right]_{j,k} \nonumber\\
    &= \sum\limits_{l=1}^{D_k} [E(W)]^{(Q,\tau(i))}_{j,l} \cdot \left(M^{(\tau(i))}\right)^{\top}_{l,k} \nonumber\\
    &= \sum\limits_{l=1}^{D_k} M^{(\tau(i))}_{k,l} \cdot 
    \left\{{\small
    \begin{aligned}
    &\sum\limits_{s=1}^h \sum\limits_{p=1}^{D} \sum\limits_{q=1}^{D} 
        \Phi^{(Q,\tau(i)):j,l}_{(QK,s):p,q} [WW]^{{(QK,s)}}_{p,q}
    + \sum\limits_{s=1}^h \sum\limits_{p=1}^{D} \sum\limits_{q=1}^{D} 
        \Phi^{(Q,\tau(i)):j,l}_{(VO,s):p,q}[WW]^{(VO,s)}_{p,q} \nonumber\\
    &\quad + \sum\limits_{s=1}^h \sum\limits_{p=1}^{D} \sum\limits_{q=1}^{D_k} 
        \Phi^{(Q,\tau(i)):j,l}_{(Q,s):p,q} [W]^{{(Q,s)}}_{p,q}
    + \sum\limits_{s=1}^h \sum\limits_{p=1}^{D} \sum\limits_{q=1}^{D_k} 
        \Phi^{(Q,\tau(i)):j,l}_{(K,s):p,q}[W]^{(K,s)}_{p,q} \nonumber\\
    &\quad\quad + \sum\limits_{s=1}^h \sum\limits_{p=1}^{D} \sum\limits_{q=1}^{D_v} 
        \Phi^{(Q,\tau(i)):j,l}_{(V,s):p,q} [W]^{{(V,s)}}_{p,q}
    + \sum\limits_{s=1}^h \sum\limits_{p=1}^{D_v} \sum\limits_{q=1}^{D} 
        \Phi^{(Q,\tau(i)):j,l}_{(O,s):p,q}[W]^{(O,s)}_{p,q} \nonumber\\
    &\quad\quad\quad 
    + \sum\limits_{p=1}^{D} \sum\limits_{q=1}^{D_A} 
        \phi^{(Q,\tau(i)):j,l}_{(A):p,q} [W]^{(A)}_{p,q} 
    + \sum\limits_{p=1}^{D_A} \sum\limits_{q=1}^{D} 
        \Phi^{(Q,\tau(i)):j,l}_{(B):p,q} [W]^{(B)}_{p,q} \nonumber\\
    &\quad\quad\quad\quad 
    + \sum\limits_{q=1}^{D_A} 
        \Phi^{(Q,\tau(i)):j,l}_{(A):q} [b]^{(A)}_q 
    + \sum\limits_{q=1}^{D} 
        \Phi^{(Q,\tau(i)):j,l}_{(B):q} [b]^{(B)}_q
    + \Phi^{(Q,\tau(i)):j,l}
    \end{aligned}
    }\right\}.
\end{align}
\begin{align}
    &[gE(W)]^{(K,i)}_{j,k} = \left[ [E(W)]^{(K,\tau(i))} \cdot \left(M^{(\tau(i))}\right)^{-1} \right]_{j,k} \nonumber\\
    &= \sum\limits_{l=1}^{D_k} [E(W)]^{(K,\tau(i))}_{j,l} \cdot \left(\left(M^{(\tau(i))}\right)^{-1}\right)_{l,k} \nonumber\\
    &= \sum\limits_{l=1}^{D_k} [E(W)]^{(K,\tau(i))}_{j,l} \cdot \left(\left(M^{(\tau(i))}\right)^{-1}\right)_{l,k}\\
    &= \sum\limits_{l=1}^{D_k} \left(\left(M^{(\tau(i))}\right)^{-1}\right)_{l,k} \cdot
    \left\{{\small
    \begin{aligned}
    &\sum\limits_{s=1}^h \sum\limits_{p=1}^{D} \sum\limits_{q=1}^{D} 
        \Phi^{(K,\tau(i)):j,l}_{(QK,s):p,q} [WW]^{{(QK,s)}}_{p,q}
    + \sum\limits_{s=1}^h \sum\limits_{p=1}^{D} \sum\limits_{q=1}^{D} 
        \Phi^{(K,\tau(i)):j,l}_{(VO,s):p,q}[WW]^{(VO,s)}_{p,q} \nonumber\\
    &\quad + \sum\limits_{s=1}^h \sum\limits_{p=1}^{D} \sum\limits_{q=1}^{D_k} 
        \Phi^{(K,\tau(i)):j,l}_{(Q,s):p,q} [W]^{{(Q,s)}}_{p,q}
    + \sum\limits_{s=1}^h \sum\limits_{p=1}^{D} \sum\limits_{q=1}^{D_k} 
        \Phi^{(K,\tau(i)):j,l}_{(K,s):p,q}[W]^{(K,s)}_{p,q} \nonumber\\
    &\quad\quad + \sum\limits_{s=1}^h \sum\limits_{p=1}^{D} \sum\limits_{q=1}^{D_v} 
        \Phi^{(K,\tau(i)):j,l}_{(V,s):p,q} [W]^{{(V,s)}}_{p,q}
    + \sum\limits_{s=1}^h \sum\limits_{p=1}^{D_v} \sum\limits_{q=1}^{D} 
        \Phi^{(K,\tau(i)):j,l}_{(O,s):p,q}[W]^{(O,s)}_{p,q} \nonumber\\
    &\quad\quad\quad 
    + \sum\limits_{p=1}^{D} \sum\limits_{q=1}^{D_A} 
        \phi^{(K,\tau(i)):j,l}_{(A):p,q} [W]^{(A)}_{p,q} 
    + \sum\limits_{p=1}^{D_A} \sum\limits_{q=1}^{D} 
        \Phi^{(K,\tau(i)):j,l}_{(B):p,q} [W]^{(B)}_{p,q} \nonumber\\
    &\quad\quad\quad\quad 
    + \sum\limits_{q=1}^{D_A} 
        \Phi^{(K,\tau(i)):j,l}_{(A):q} [b]^{(A)}_q 
    + \sum\limits_{q=1}^{D} 
        \Phi^{(K,\tau(i)):j,l}_{(B):q} [b]^{(B)}_q
    + \Phi^{(K,\tau(i)):j,l}
    \end{aligned}
    }\right\}.
\end{align}
\begin{align}
    &[gE(W)]^{(V,i)}_{j,k} = \left[ [E(W)]^{(V,\tau(i))} \cdot N^{(\tau(i))} \right]_{j,k},\\
    &= \sum\limits_{l=1}^{D_v} [E(W)]^{(V,\tau(i))}_{j,l} \cdot N^{(\tau(i))}_{l,k}\\
    &= \sum\limits_{l=1}^{D_v} N^{(\tau(i))}_{l,k} \cdot 
    \left\{{\small
    \begin{aligned}
    &\sum\limits_{s=1}^h \sum\limits_{p=1}^{D} \sum\limits_{q=1}^{D} 
        \Phi^{(V,\tau(i)):j,l}_{(QK,s):p,q} [WW]^{{(QK,s)}}_{p,q}
    + \sum\limits_{s=1}^h \sum\limits_{p=1}^{D} \sum\limits_{q=1}^{D} 
        \Phi^{(V,\tau(i)):j,l}_{(VO,s):p,q}[WW]^{(VO,s)}_{p,q} \nonumber\\
    &\quad + \sum\limits_{s=1}^h \sum\limits_{p=1}^{D} \sum\limits_{q=1}^{D_k} 
        \Phi^{(V,\tau(i)):j,l}_{(Q,s):p,q} [W]^{{(Q,s)}}_{p,q}
    + \sum\limits_{s=1}^h \sum\limits_{p=1}^{D} \sum\limits_{q=1}^{D_k} 
        \Phi^{(V,\tau(i)):j,l}_{(K,s):p,q}[W]^{(K,s)}_{p,q} \nonumber\\
    &\quad\quad + \sum\limits_{s=1}^h \sum\limits_{p=1}^{D} \sum\limits_{q=1}^{D_v} 
        \Phi^{(V,\tau(i)):j,l}_{(V,s):p,q} [W]^{{(V,s)}}_{p,q}
    + \sum\limits_{s=1}^h \sum\limits_{p=1}^{D_v} \sum\limits_{q=1}^{D} 
        \Phi^{(V,\tau(i)):j,l}_{(O,s):p,q}[W]^{(O,s)}_{p,q} \nonumber\\
    &\quad\quad\quad 
    + \sum\limits_{p=1}^{D} \sum\limits_{q=1}^{D_A} 
        \Phi^{(V,\tau(i)):j,l}_{(A):p,q} [W]^{(A)}_{p,q} 
    + \sum\limits_{p=1}^{D_A} \sum\limits_{q=1}^{D} 
        \Phi^{(V,\tau(i)):j,l}_{(B):p,q} [W]^{(B)}_{p,q} \nonumber\\
    &\quad\quad\quad\quad 
    + \sum\limits_{q=1}^{D_A} 
        \Phi^{(V,\tau(i)):j,l}_{(A):q} [b]^{(A)}_q 
    + \sum\limits_{q=1}^{D} 
    \Phi^{(V,\tau(i)):j,l}_{(B):q} [b]^{(B)}_q
    + \Phi^{(V,\tau(i)):j,l}
    \end{aligned}
    }\right\}.
\end{align}
\begin{align}
    &[gE(W)]^{(O,i)}_{j,k} = \left[ \left(N^{(\tau(i))}\right)^{-1} \cdot [E(W)]^{(O,\tau(i))} \right]_{j,\pi_O(k)},\\
    &= \sum\limits_{l=1}^{D_v} \left(\left(N^{(\tau(i))}\right)^{-1}\right)_{j,l} \cdot [E(W)]^{(O,\tau(i))}_{l,\pi_O(k)}\\
    &= \sum\limits_{l=1}^{D_v} \left(\left(N^{(\tau(i))}\right)^{-1}\right)_{j,l} \cdot
    \left\{{\small
    \begin{aligned}
    &\sum\limits_{s=1}^h \sum\limits_{p=1}^{D} \sum\limits_{q=1}^{D} 
        \Phi^{(O,\tau(i)):l,\pi_O(k)}_{(QK,s):p,q} [WW]^{{(QK,s)}}_{p,q}
    + \sum\limits_{s=1}^h \sum\limits_{p=1}^{D} \sum\limits_{q=1}^{D} 
        \Phi^{(O,\tau(i)):l,\pi_O(k)}_{(VO,s):p,q}[WW]^{(VO,s)}_{p,q} \nonumber\\
    &\quad + \sum\limits_{s=1}^h \sum\limits_{p=1}^{D} \sum\limits_{q=1}^{D_k} 
        \Phi^{(O,\tau(i)):l,\pi_O(k)}_{(Q,s):p,q} [W]^{{(Q,s)}}_{p,q}
    + \sum\limits_{s=1}^h \sum\limits_{p=1}^{D} \sum\limits_{q=1}^{D_k} 
        \Phi^{(O,\tau(i)):l,\pi_O(k)}_{(K,s):p,q}[W]^{(K,s)}_{p,q} \nonumber\\
    &\quad\quad + \sum\limits_{s=1}^h \sum\limits_{p=1}^{D} \sum\limits_{q=1}^{D_v} 
        \Phi^{(O,\tau(i)):l,\pi_O(k)}_{(V,s):p,q} [W]^{{(V,s)}}_{p,q}
    + \sum\limits_{s=1}^h \sum\limits_{p=1}^{D_v} \sum\limits_{q=1}^{D} 
        \Phi^{(O,\tau(i)):l,\pi_O(k)}_{(O,s):p,q}[W]^{(O,s)}_{p,q} \nonumber\\
    &\quad\quad\quad 
    + \sum\limits_{p=1}^{D} \sum\limits_{q=1}^{D_A} 
        \Phi^{(O,\tau(i)):l,\pi_O(k)}_{(A):p,q} [W]^{(A)}_{p,q} 
    + \sum\limits_{p=1}^{D_A} \sum\limits_{q=1}^{D} 
        \Phi^{(O,\tau(i)):l,\pi_O(k)}_{(B):p,q} [W]^{(B)}_{p,q} \nonumber\\
    &\quad\quad\quad\quad 
    + \sum\limits_{q=1}^{D_A} 
        \Phi^{(O,\tau(i)):l,\pi_O(k)}_{(A):q} [b]^{(A)}_q 
    + \sum\limits_{q=1}^{D} 
        \Phi^{(O,\tau(i)):l,\pi_O(k)}_{(B):q} [b]^{(B)}_q
    + \Phi^{(O,\tau(i)):l,\pi_O(k)}
    \end{aligned}
    }\right\}.
\end{align}
\begin{align}
    [gE(W)]^{(A)}_{j,k} &= [E(W)]^{(A)}_{\pi_O(j),\pi_A(k)},\\
    &= \sum\limits_{s=1}^h \sum\limits_{p=1}^{D} \sum\limits_{q=1}^{D} 
        \Phi^{(A):\pi_O(j),\pi_A(k)}_{(QK,s):p,q} [WW]^{{(QK,s)}}_{p,q} \nonumber\\
    &+ \sum\limits_{s=1}^h \sum\limits_{p=1}^{D} \sum\limits_{q=1}^{D} 
        \Phi^{(A):\pi_O(j),\pi_A(k)}_{(VO,s):p,q}[WW]^{(VO,s)}_{p,q} \nonumber\\
    &\quad + \sum\limits_{s=1}^h \sum\limits_{p=1}^{D} \sum\limits_{q=1}^{D_k} 
        \Phi^{(A):\pi_O(j),\pi_A(k)}_{(Q,s):p,q} [W]^{{(Q,s)}}_{p,q}
    + \sum\limits_{s=1}^h \sum\limits_{p=1}^{D} \sum\limits_{q=1}^{D_k} 
        \Phi^{(A):\pi_O(j),\pi_A(k)}_{(K,s):p,q}[W]^{(K,s)}_{p,q} \nonumber\\
    &\quad\quad + \sum\limits_{s=1}^h \sum\limits_{p=1}^{D} \sum\limits_{q=1}^{D_v} 
        \Phi^{(A):\pi_O(j),\pi_A(k)}_{(V,s):p,q} [W]^{{(V,s)}}_{p,q}
    + \sum\limits_{s=1}^h \sum\limits_{p=1}^{D_v} \sum\limits_{q=1}^{D} 
        \Phi^{(A):\pi_O(j),\pi_A(k)}_{(O,s):p,q}[W]^{(O,s)}_{p,q} \nonumber\\
    &\quad\quad\quad 
    + \sum\limits_{p=1}^{D} \sum\limits_{q=1}^{D_A} 
        \Phi^{(A):\pi_O(j),\pi_A(k)}_{(A):p,q} [W]^{(A)}_{p,q} 
    + \sum\limits_{p=1}^{D_A} \sum\limits_{q=1}^{D} 
        \Phi^{(A):\pi_O(j),\pi_A(k)}_{(B):p,q} [W]^{(B)}_{p,q} \nonumber\\
    &\quad\quad\quad\quad 
    + \sum\limits_{q=1}^{D_A} 
        \Phi^{(A):\pi_O(j),\pi_A(k)}_{(A):q} [b]^{(A)}_q 
    + \sum\limits_{q=1}^{D} 
        \Phi^{(A):\pi_O(j),\pi_A(k)}_{(B):q} [b]^{(B)}_q \nonumber\\
    &\quad\quad\quad\quad
    + \Phi^{(A):\pi_O(j),\pi_A(k)}.
\end{align}
\begin{align}
    [gE(W)]^{(B)}_{j,k} &= [E(W)]^{(B)}_{\pi_A(j),k},\\
    &= \sum\limits_{s=1}^h \sum\limits_{p=1}^{D} \sum\limits_{q=1}^{D} 
        \Phi^{(B):\pi_A(j),k}_{(QK,s):p,q} [WW]^{{(QK,s)}}_{p,q} \nonumber\\
    & 
    + \sum\limits_{s=1}^h \sum\limits_{p=1}^{D} \sum\limits_{q=1}^{D} 
        \Phi^{(B):\pi_A(j),k}_{(VO,s):p,q}[WW]^{(VO,s)}_{p,q} \nonumber\\
    &\quad + \sum\limits_{s=1}^h \sum\limits_{p=1}^{D} \sum\limits_{q=1}^{D_k} 
        \Phi^{(B):\pi_A(j),k}_{(Q,s):p,q} [W]^{{(Q,s)}}_{p,q}
    + \sum\limits_{s=1}^h \sum\limits_{p=1}^{D} \sum\limits_{q=1}^{D_k} 
        \Phi^{(B):\pi_A(j),k}_{(K,s):p,q}[W]^{(K,s)}_{p,q} \nonumber\\
    &\quad\quad + \sum\limits_{s=1}^h \sum\limits_{p=1}^{D} \sum\limits_{q=1}^{D_v} 
        \Phi^{(B):\pi_A(j),k}_{(V,s):p,q} [W]^{{(V,s)}}_{p,q}
    + \sum\limits_{s=1}^h \sum\limits_{p=1}^{D_v} \sum\limits_{q=1}^{D} 
        \Phi^{(B):\pi_A(j),k}_{(O,s):p,q}[W]^{(O,s)}_{p,q} \nonumber\\
    &\quad\quad\quad 
    + \sum\limits_{p=1}^{D} \sum\limits_{q=1}^{D_A} 
        \Phi^{(B):\pi_A(j),k}_{(A):p,q} [W]^{(A)}_{p,q} 
    + \sum\limits_{p=1}^{D_A} \sum\limits_{q=1}^{D} 
        \Phi^{(B):\pi_A(j),k}_{(B):p,q} [W]^{(B)}_{p,q} \nonumber\\
    &\quad\quad\quad\quad 
    + \sum\limits_{q=1}^{D_A} 
        \Phi^{(B):\pi_A(j),k}_{(A):q} [b]^{(A)}_q 
    + \sum\limits_{q=1}^{D} 
        \Phi^{(B):\pi_A(j),k}_{(B):q} [b]^{(B)}_q \nonumber\\
    &\quad\quad\quad\quad
    + \Phi^{(B):\pi_A(j),k}
\end{align}
\begin{align}
    [gE(b)]^{(A)}_{k} &= [E(b)]^{(A)}_{\pi_A(k)},\\
    &= \sum\limits_{s=1}^h \sum\limits_{p=1}^{D} \sum\limits_{q=1}^{D} 
        \Phi^{(A):\pi_A(k)}_{(QK,s):p,q} [WW]^{{(QK,s)}}_{p,q} \nonumber\\
    &
    + \sum\limits_{s=1}^h \sum\limits_{p=1}^{D} \sum\limits_{q=1}^{D} 
        \Phi^{(A):\pi_A(k)}_{(VO,s):p,q}[WW]^{(VO,s)}_{p,q} \nonumber\\
    &\quad + \sum\limits_{s=1}^h \sum\limits_{p=1}^{D} \sum\limits_{q=1}^{D_k} 
        \Phi^{(A):\pi_A(k)}_{(Q,s):p,q} [W]^{{(Q,s)}}_{p,q}
    + \sum\limits_{s=1}^h \sum\limits_{p=1}^{D} \sum\limits_{q=1}^{D_k} 
        \Phi^{(A):\pi_A(k)}_{(K,s):p,q}[W]^{(K,s)}_{p,q} \nonumber\\
    &\quad\quad + \sum\limits_{s=1}^h \sum\limits_{p=1}^{D} \sum\limits_{q=1}^{D_v} 
        \Phi^{(A):\pi_A(k)}_{(V,s):p,q} [W]^{{(V,s)}}_{p,q}
    + \sum\limits_{s=1}^h \sum\limits_{p=1}^{D_v} \sum\limits_{q=1}^{D} 
        \Phi^{(A):\pi_A(k)}_{(O,s):p,q}[W]^{(O,s)}_{p,q} \nonumber\\
    &\quad\quad\quad 
    + \sum\limits_{p=1}^{D} \sum\limits_{q=1}^{D_A} 
        \phi^{(A):\pi_A(k)}_{(A):p,q} [W]^{(A)}_{p,q} 
    + \sum\limits_{p=1}^{D_A} \sum\limits_{q=1}^{D} 
        \Phi^{(A):\pi_A(k)}_{(B):p,q} [W]^{(B)}_{p,q} \nonumber\\
    &\quad\quad\quad\quad 
    + \sum\limits_{q=1}^{D_A} 
        \Phi^{(A):\pi_A(k)}_{(A):q} [b]^{(A)}_q 
    + \sum\limits_{q=1}^{D} 
        \Phi^{(A):\pi_A(k)}_{(B):q} [b]^{(B)}_q
    + \Phi^{(A):\pi_A(k)}.
\end{align}
\begin{align}
    [gE(b)]^{(B)}_{k} &= [E(b)]^{(B)}_k\\
    &= \sum\limits_{s=1}^h \sum\limits_{p=1}^{D} \sum\limits_{q=1}^{D} 
        \Phi^{(B):k}_{(QK,s):p,q} [WW]^{{(QK,s)}}_{p,q} \nonumber\\
    &
    + \sum\limits_{s=1}^h \sum\limits_{p=1}^{D} \sum\limits_{q=1}^{D} 
        \Phi^{(B):k}_{(VO,s):p,q}[WW]^{(VO,s)}_{p,q} \nonumber\\
    &\quad + \sum\limits_{s=1}^h \sum\limits_{p=1}^{D} \sum\limits_{q=1}^{D_k} 
        \Phi^{(B):k}_{(Q,s):p,q} [W]^{{(Q,s)}}_{p,q}
    + \sum\limits_{s=1}^h \sum\limits_{p=1}^{D} \sum\limits_{q=1}^{D_k} 
        \Phi^{(B):k}_{(K,s):p,q}[W]^{(K,s)}_{p,q} \nonumber\\
    &\quad\quad + \sum\limits_{s=1}^h \sum\limits_{p=1}^{D} \sum\limits_{q=1}^{D_v} 
        \Phi^{(B):k}_{(V,s):p,q} [W]^{{(V,s)}}_{p,q}
    + \sum\limits_{s=1}^h \sum\limits_{p=1}^{D_v} \sum\limits_{q=1}^{D} 
        \Phi^{(B):k}_{(O,s):p,q}[W]^{(O,s)}_{p,q} \nonumber\\
    &\quad\quad\quad 
    + \sum\limits_{p=1}^{D} \sum\limits_{q=1}^{D_A} 
        \Phi^{(B):k}_{(A):p,q} [W]^{(A)}_{p,q} 
    + \sum\limits_{p=1}^{D_A} \sum\limits_{q=1}^{D} 
        \Phi^{(B):k}_{(B):p,q} [W]^{(B)}_{p,q} \nonumber\\
    &\quad\quad\quad\quad 
    + \sum\limits_{q=1}^{D_A} 
        \Phi^{(B):k}_{(A):q} [b]^{(A)}_q 
    + \sum\limits_{q=1}^{D} 
        \Phi^{(B):k}_{(B):q} [b]^{(B)}_q
    + \Phi^{(B):k}.
\end{align}

\subsubsection{Coefficients comparison in the equation $E(gU)=gE(U)$}

Since $E(gU)=gE(U)$, we must have:
\begin{align*}
    [E(gW)]^{(Q,i)} &= [gE(W)]^{(Q,i)},\\
    [E(gW)]^{(K,i)} &= [gE(W)]^{(K,i)},\\
    [E(gW)]^{(V,i)} &= [gE(W)]^{(V,i)},\\
    [E(gW)]^{(O,i)} &= [gE(W)]^{(O,i)},\\
    [E(gW)]^{(A)} &= [gE(W)]^{(A)},\\
    [E(gW)]^{(B)} &= [gE(W)]^{(B)},\\
    [E(gb)]^{(A)} &= [gE(b)]^{(A)},\\
    [E(gb)]^{(B)} &= [gE(b)]^{(B)},
\end{align*}
and these equalities hold for all $i=1,\ldots,h$, $U \in \mathcal{U}$ and $g \in \mathcal{G}_{\mathcal{U}}$.

In the following, we will solve these equalities, one by one, to find the constraints of the parameters $\Phi^-_-$s of $E(U)$.

\paragraph{Case 1. Solve the equation $[E(gW)]^{(Q,i)} = [gE(W)]^{(Q,i)}$.}
For every $j,k$, we have
$$[E(gW)]^{(Q,i)}_{j,k} = [gE(W)]^{(Q,i)}_{j,k},$$
or equivalently,
\begin{align}
    &\sum\limits_{s=1}^h \sum\limits_{p=1}^{D} \sum\limits_{q=1}^{D} 
        \Phi^{(Q,i):j,k}_{(QK,\tau^{-1}(s)):p,q} [WW]^{{(QK,s)}}_{p,q} \nonumber\\
    &
    + \sum\limits_{s=1}^h \sum\limits_{p=1}^{D} \sum\limits_{q=1}^{D} 
        \Phi^{(Q,i):j,k}_{(VO,\tau^{-1}(s)):p,\pi_O^{-1}(q)}[WW]^{(VO,s)}_{p,q} \nonumber\\
    &\quad + \sum\limits_{s=1}^h \sum\limits_{p=1}^{D} \sum\limits_{q=1}^{D_k} 
        \Phi^{(Q,i):j,k}_{(Q,\tau^{-1}(s)):p,q} \left[[W]^{{(Q,s)}} \cdot \left(M^{(s)} \right)^{\top}\right]_{p,q} \nonumber\\
    &\quad + \sum\limits_{s=1}^h \sum\limits_{p=1}^{D} \sum\limits_{q=1}^{D_k} 
        \Phi^{(Q,i):j,k}_{(K,\tau^{-1}(s)):p,q}\left[[W]^{(K,s)} \cdot \left( M^{(s)} \right)^{-1}\right]_{p,q} \nonumber\\
    &\quad\quad + \sum\limits_{s=1}^h \sum\limits_{p=1}^{D} \sum\limits_{q=1}^{D_v} 
        \Phi^{(Q,i):j,k}_{(V,\tau^{-1}(s)):p,q} \left[[W]^{{(V,s)}} \cdot N^{(s)}\right]_{p,q} \nonumber\\
    &\quad\quad + \sum\limits_{s=1}^h \sum\limits_{p=1}^{D_v} \sum\limits_{q=1}^{D} 
        \Phi^{(Q,i):j,k}_{(O,\tau^{-1}(s)):p,\pi_O^{-1}(q)} \left[ \left(N^{(s)}\right)^{-1} \cdot [W]^{(O,s)} \right]_{p,q} \nonumber\\
    &\quad\quad\quad 
    + \sum\limits_{p=1}^{D} \sum\limits_{q=1}^{D_A} 
        \Phi^{(Q,i):j,k}_{(A):\pi_O^{-1}(p),\pi_A^{-1}(q)} [W]^{(A)}_{p,q} 
    + \sum\limits_{p=1}^{D_A} \sum\limits_{q=1}^{D} 
        \Phi^{(Q,i):j,k}_{(B):\pi_A^{-1}(p),q} [W]^{(B)}_{p,q} \nonumber\\
    &\quad\quad\quad\quad 
    + \sum\limits_{q=1}^{D_A} 
        \Phi^{(Q,i):j,k}_{(A):\pi_A^{-1}(q)} [b]^{(A)}_{q} 
    + \sum\limits_{q=1}^{D} 
        \Phi^{(Q,i):j,k}_{(B):q} [b]^{(B)}_q
    + \Phi^{(Q,i):j,k} \nonumber\\
    
    &\quad\quad\quad\quad = \sum\limits_{l=1}^{D_k} [E(W)]^{(Q,\tau(i))}_{j,l} \cdot \left(M^{(\tau(i))}\right)^{\top}_{l,k}.
\end{align}
Observe that the right hand side of the above equation is just a linear function on $M^{(\tau(i))}$. By applying Corollary~\ref{cor:f1f2=0}, we see that all terms of the left hand side are equal to zero, except those containing $M^{(\tau(i))}$ will be identically equal to the right hand side, i.e.
\begin{align}\label{eq:EgUQ-1}
    &\sum\limits_{s=1}^h \sum\limits_{p=1}^{D} \sum\limits_{q=1}^{D} 
        \Phi^{(Q,i):j,k}_{(QK,\tau^{-1}(s)):p,q} [WW]^{{(QK,s)}}_{p,q} \nonumber\\
    &
    + \sum\limits_{s=1}^h \sum\limits_{p=1}^{D} \sum\limits_{q=1}^{D} 
        \Phi^{(Q,i):j,k}_{(VO,\tau^{-1}(s)):p,\pi_O^{-1}(q)}[WW]^{(VO,s)}_{p,q} \nonumber\\
    &\quad + \sum\limits_{\substack{s=1\\s\neq \tau(i)}}^h \sum\limits_{p=1}^{D} \sum\limits_{q=1}^{D_k} 
        \Phi^{(Q,i):j,k}_{(Q,\tau^{-1}(s)):p,q} \left[[W]^{{(Q,s)}} \cdot \left(M^{(s)} \right)^{\top}\right]_{p,q} \nonumber\\
    &\quad + \sum\limits_{s=1}^h \sum\limits_{p=1}^{D} \sum\limits_{q=1}^{D_k} 
       \Phi^{(Q,i):j,k}_{(K,\tau^{-1}(s)):p,q}\left[[W]^{(K,s)} \cdot \left( M^{(s)} \right)^{-1}\right]_{p,q} \nonumber\\
    &\quad\quad + \sum\limits_{s=1}^h \sum\limits_{p=1}^{D} \sum\limits_{q=1}^{D_v} 
        \Phi^{(Q,i):j,k}_{(V,\tau^{-1}(s)):p,q} \left[[W]^{{(V,s)}} \cdot N^{(s)}\right]_{p,q} \nonumber\\
    &\quad\quad + \sum\limits_{s=1}^h \sum\limits_{p=1}^{D_v} \sum\limits_{q=1}^{D} 
        \Phi^{(Q,i):j,k}_{(O,\tau^{-1}(s)):p,\pi_O^{-1}(q)} \left[ \left(N^{(s)}\right)^{-1} \cdot [W]^{(O,s)} \right]_{p,q} \nonumber\\
    &\quad\quad\quad 
    + \sum\limits_{p=1}^{D} \sum\limits_{q=1}^{D_A} 
        \Phi^{(Q,i):j,k}_{(A):\pi_O^{-1}(p),\pi_A^{-1}(q)} [W]^{(A)}_{p,q} 
    + \sum\limits_{p=1}^{D_A} \sum\limits_{q=1}^{D} 
        \Phi^{(Q,i):j,k}_{(B):\pi_A^{-1}(p),q} [W]^{(B)}_{p,q} \nonumber\\
    &\quad\quad\quad\quad 
    + \sum\limits_{q=1}^{D_A} 
        \Phi^{(Q,i):j,k}_{(A):\pi_A^{-1}(q)} [b]^{(A)}_{q} 
    + \sum\limits_{q=1}^{D} 
        \Phi^{(Q,i):j,k}_{(B):q} [b]^{(B)}_q
    + \Phi^{(Q,i):j,k} \nonumber\\
    &\quad\quad\quad\quad = 0,
\end{align}
and
\begin{align}\label{eq:EgUQ-2}
    \sum\limits_{p=1}^{D} \sum\limits_{q=1}^{D_k} 
        \Phi^{(Q,i):j,k}_{(Q,i):p,q} \left[[W]^{{(Q,\tau(i))}} \cdot \left(M^{(\tau(i))} \right)^{\top}\right]_{p,q}
    =
    \sum\limits_{l=1}^{D_k} [E(W)]^{(Q,\tau(i))}_{j,l} \cdot \left(M^{(\tau(i))}\right)^{\top}_{l,k}.
\end{align}
We choose $g$ to be the identity of $\mathcal{G}_{\mathcal{U}}$ (i.e. $\tau$, $\pi_O$, $\pi_A$, $M^{(s)}$ and $N^{(s)}$ are all identities) in Equation~(\ref{eq:EgUQ-1}), then substituting the obtained result into Equation~(\ref{eq:EUQ}), we obtain a more compact formula for $[E(W)]^{(Q,i)}_{j,k}$ as follows:
\begin{align}\label{eq:EUQ-shorter}
    [E(W)]^{(Q,i)}_{j,k} =
     \sum\limits_{p=1}^{D} \sum\limits_{q=1}^{D_k} 
        \Phi^{(Q,i):j,k}_{(Q,i):p,q} [W]^{{(Q,i)}}_{p,q}.
\end{align}
Then by substituting this compact formula of $[E(W)]^{(Q,i)}_{j,k}$ to Equation~(\ref{eq:EgUQ-2}), we obtain the first constraints:
\begin{align*}
    &\sum\limits_{p=1}^{D} \sum\limits_{q=1}^{D_k} 
        \Phi^{(Q,i):j,k}_{(Q,i):p,q} \left[[W]^{{(Q,\tau(i))}} \cdot \left(M^{(\tau(i))} \right)^{\top}\right]_{p,q} \nonumber\\
    &\qquad\qquad\qquad
    =
    \sum\limits_{l=1}^{D_k} 
    \left(\sum\limits_{p=1}^{D} \sum\limits_{q=1}^{D_k} 
        \Phi^{(Q,\tau(i)):j,l}_{(Q,\tau(i)):p,q} [W]^{{(Q,\tau(i))}}_{p,q}\right) \cdot M^{(\tau(i))}_{k,l},
\end{align*}
or equivalently,
\begin{align}\label{eq:EUQ-shorter-2}
    &\sum\limits_{p=1}^{D} \sum\limits_{q=1}^{D_k} 
        \Phi^{(Q,\tau(i)):j,k}_{(Q,\tau(i)):p,q} \left[[W]^{{(Q,i)}} \cdot \left(M^{(i)} \right)^{\top}\right]_{p,q} \nonumber\\
    &\qquad\qquad\qquad =
    \sum\limits_{l=1}^{D_k} 
    \left(\sum\limits_{p=1}^{D} \sum\limits_{q=1}^{D_k} 
        \Phi^{(Q,i):j,l}_{(Q,i):p,q} [W]^{{(Q,i)}}_{p,q}\right) \cdot M^{(i)}_{k,l},
\end{align}
We view the left hand side of Equation~(\ref{eq:EUQ-shorter-2}) as a sum of linear functions on the $q$-th rows of $M^{(i)}$ with $q=1,\ldots,D_k$, while the right hand side is just a linear function on the $k$-th row of $M^{(i)}$.
The by applying Corollary~\ref{cor:f1f2=0}, we see that all of the coefficients $\Phi^{(Q,\tau(i)):j,k}_{(Q,\tau(i)):p,q}$ of the left side are equal to zero, except those with the indices $k=q$.
Therefore, we can reduce the formula for $[E(W)]^{(Q,i)}$ and the constraint further as follows:
\begin{align}\label{eq:EUQ-shorter-3}
    [E(W)]^{(Q,i)}_{j,k} =
     \sum\limits_{p=1}^{D} 
        \Phi^{(Q,i):j,k}_{(Q,i):p,k} [W]^{{(Q,i)}}_{p,k}.
\end{align}
To find the constraints for the coefficients, we observe that, the left hand side of Equation~(\ref{eq:EUQ-shorter-2}) depends on $\tau$, while the right hand side is not.
Therefore, by varying $\tau$, the left hand side does not change.
As a consequence, we must have
\begin{align}\label{eq:EUQ-constraint-3}
    \Phi^{(Q,\tau(i)):j,k}_{(Q,\tau(i)):p,k} = \Phi^{(Q,i):j,k}_{(Q,i):p,k},
\end{align}
for all $i,j,p,k$ and $\tau$.

Next, by substituting Equation~(\ref{eq:EUQ-shorter-3}) and Equation~(\ref{eq:EUQ-constraint-3}) to Equation~(\ref{eq:EgUQ-2}), we obtain:
\begin{align}
    \sum\limits_{p=1}^{D}     \Phi^{(Q,i):j,k}_{(Q,i):p,k}
        \left(
        \sum\limits_{l=1}^{D_k}
        [W]^{{(Q,i)}}_{p,l} \cdot M^{(i)}_{k,l}
        \right)
    =
    \sum\limits_{l=1}^{D_k} 
    \left(\sum\limits_{p=1}^{D}
        \Phi^{(Q,i):j,l}_{(Q,i):p,l} [W]^{{(Q,i)}}_{p,l}\right) \cdot M^{(i)}_{k,l}.
\end{align}
By comparing the corresponding coefficients both sides, we finally obtain the constraints
\begin{align}
    \Phi^{(Q,\tau(i)):j,k}_{(Q,\tau(i)):p,k} = \Phi^{(Q,i):j,l}_{(Q,i):p,l},
\end{align}
for all $i,j,p,k,l$ and $\tau$.

One can verify directly that $[E(W)]^{(K,i)}_{j,k}$ defined in the form
\begin{align} 
    [E(W)]^{(Q,i)}_{j,k} =
     \sum\limits_{p=1}^{D} 
        \Phi^{(Q,i):j,k}_{(Q,i):p,k} [W]^{{(Q,i)}}_{p,k},
\end{align}
with the constraints
\begin{align}\label{eq:EUQ-constraint-second-last}
    \Phi^{(Q,i):j,k}_{(Q,i):p,k} = \Phi^{(Q,\tau(i)):j,k'}_{(Q,\tau(i)):p,k'},
\end{align}
for all $i,j,p,k,k',\tau$, satisfies the constraint in Equation~(\ref{eq:EgUQ-2}).


\paragraph{Case 2. Solve the equation $[E(gW)]^{(K,i)} = [gE(W)]^{(K,i)}$.}

By using the same argument, we also obtain $[E(W)]^{(K,i)}_{j,k}$ in the form
\begin{align} 
    [E(W)]^{(K,i)}_{j,k} = \sum\limits_{p=1}^{D} 
        \Phi^{(K,i):j,k}_{(K,i):p,k} [W]^{(K,i)}_{p,k},
\end{align}
with the constraints
\begin{align}\label{eq:EUK-constraint-second-last}
    \Phi^{(K,i):j,k}_{(K,i):p,k} = \Phi^{(K,\tau(i)):j,k'}_{(K,\tau(i)):p,k'},
\end{align}
for all $i,j,p,k,k',\tau$.


\paragraph{Case 3. Solve the equation $[E(gW)]^{(V,i)} = [gE(W)]^{(V,i)}$.}

By using the same argument, we also obtain $[E(W)]^{(V,i)}_{j,k}$ in the form
\begin{align} 
    [E(W)]^{(V,i)}_{j,k} = \sum\limits_{p=1}^{D} 
        \Phi^{(V,i):j,k}_{(V,i):p,k} [W]^{(V,i)}_{p,k},
\end{align}
with the constraints
\begin{align}\label{eq:EUV-second-constraint-last}
    \Phi^{(V,i):j,k}_{(V,i):p,k} = \Phi^{(V,\tau(i)):j,k'}_{(V,\tau(i)):p,k'},
\end{align}
for all $i,j,p,k,k',\tau$.



\paragraph{Case 4. Solve the equation $[E(gW)]^{(O,i)} = [gE(W)]^{(O,i)}$.}

Since $$[E(gW)]^{(O,i)}_{j,k} = [gE(W)]^{(O,i)}_{j,k},$$ it follows from Equation~(\ref{eq:EUO}) that

\begin{align}
    &\sum\limits_{s=1}^h \sum\limits_{p=1}^{D} \sum\limits_{q=1}^{D} 
        \Phi^{(O,i):j,k}_{(QK,\tau^{-1}(s)):p,q} [WW]^{{(QK,s)}}_{p,q} \nonumber\\
    &+ \sum\limits_{s=1}^h \sum\limits_{p=1}^{D} \sum\limits_{q=1}^{D} 
        \Phi^{(O,i):j,k}_{(VO,\tau^{-1}(s)):p,\pi_O^{-1}(q)}[WW]^{(VO,s)}_{p,q} \nonumber\\
    &\quad + \sum\limits_{s=1}^h \sum\limits_{p=1}^{D} \sum\limits_{q=1}^{D_k} 
        \Phi^{(O,i):j,k}_{(Q,\tau^{-1}(s)):p,q} \left[[W]^{{(Q,s)}} \cdot \left(M^{(s)} \right)^{\top}\right]_{p,q}\nonumber\\
    &\quad + \sum\limits_{s=1}^h \sum\limits_{p=1}^{D} \sum\limits_{q=1}^{D_k} 
        \Phi^{(O,i):j,k}_{(K,\tau^{-1}(s)):p,q}\left[[W]^{(K,s)} \cdot \left( M^{(s)} \right)^{-1}\right]_{p,q} \nonumber\\
    &\quad\quad + \sum\limits_{s=1}^h \sum\limits_{p=1}^{D} \sum\limits_{q=1}^{D_v} 
        \Phi^{(O,i):j,k}_{(V,\tau^{-1}(s)):p,q} \left[[W]^{{(V,s)}} \cdot N^{(s)}\right]_{p,q} \nonumber\\
    &\quad\quad + \sum\limits_{s=1}^h \sum\limits_{p=1}^{D_v} \sum\limits_{q=1}^{D} 
        \Phi^{(O,i):j,k}_{(O,\tau^{-1}(s)):p,\pi_O^{-1}(q)} \left[ \left(N^{(s)}\right)^{-1} \cdot [W]^{(O,s)} \right]_{p,q} \nonumber\\
    &\quad\quad\quad 
    + \sum\limits_{p=1}^{D} \sum\limits_{q=1}^{D_A} 
        \Phi^{(O,i):j,k}_{(A):\pi_O^{-1}(p),\pi_A^{-1}(q)} [W]^{(A)}_{p,q} 
    + \sum\limits_{p=1}^{D_A} \sum\limits_{q=1}^{D} 
        \Phi^{(O,i):j,k}_{(B):\pi_A^{-1}(p),q} [W]^{(B)}_{p,q} \nonumber\\
    &\quad\quad\quad\quad 
    + \sum\limits_{q=1}^{D_A} 
        \Phi^{(O,i):j,k}_{(A):\pi_A^{-1}(q)} [b]^{(A)}_{q} 
    + \sum\limits_{q=1}^{D} 
        \Phi^{(O,i):j,k}_{(B):q} [b]^{(B)}_q
    + \Phi^{(O,i):j,k} \nonumber\\
    &\quad\quad\quad\quad = \sum\limits_{l=1}^{D_v} \left(\left(N^{(\tau(i))}\right)^{-1}\right)_{j,l} \cdot [E(W)]^{(O,\tau(i))}_{l,\pi_O(k)}.
\end{align}
The right hand side of the above equation is just a linear function on $\left(N^{(\tau(i))}\right)^{-1}$.
Therefore, it follows from Corollary~\ref{cor:f1f2=0} that all terms of the left hand side are equal to zero, except those containing $\left(N^{(\tau(i))}\right)^{-1}$ will be identically equal to the right hand side, i.e.
\begin{align}\label{eq:EUO-zero}
    &\sum\limits_{s=1}^h \sum\limits_{p=1}^{D} \sum\limits_{q=1}^{D} 
        \Phi^{(O,i):j,k}_{(QK,\tau^{-1}(s)):p,q} [WW]^{{(QK,s)}}_{p,q} \nonumber\\
    &
    + \sum\limits_{s=1}^h \sum\limits_{p=1}^{D} \sum\limits_{q=1}^{D} 
        \Phi^{(O,i):j,k}_{(VO,\tau^{-1}(s)):p,\pi_O^{-1}(q)}[WW]^{(VO,s)}_{p,q} \nonumber\\
    &\quad + \sum\limits_{s=1}^h \sum\limits_{p=1}^{D} \sum\limits_{q=1}^{D_k} 
        \Phi^{(O,i):j,k}_{(Q,\tau^{-1}(s)):p,q} \left[[W]^{{(Q,s)}} \cdot \left(M^{(s)} \right)^{\top}\right]_{p,q}\nonumber\\
    &\quad + \sum\limits_{s=1}^h \sum\limits_{p=1}^{D} \sum\limits_{q=1}^{D_k} 
        \Phi^{(O,i):j,k}_{(K,\tau^{-1}(s)):p,q}\left[[W]^{(K,s)} \cdot \left( M^{(s)} \right)^{-1}\right]_{p,q} \nonumber\\
    &\quad\quad + \sum\limits_{s=1}^h \sum\limits_{p=1}^{D} \sum\limits_{q=1}^{D_v} 
        \Phi^{(O,i):j,k}_{(V,\tau^{-1}(s)):p,q} \left[[W]^{{(V,s)}} \cdot N^{(s)}\right]_{p,q} \nonumber\\
    &\quad\quad + \sum\limits_{\substack{s=1\\s\neq\tau(i)}}^h \sum\limits_{p=1}^{D_v} \sum\limits_{q=1}^{D} 
        \Phi^{(O,i):j,k}_{(O,\tau^{-1}(s)):p,\pi_O^{-1}(q)} \left[ \left(N^{(s)}\right)^{-1} \cdot [W]^{(O,s)} \right]_{p,q} \nonumber\\
    &\quad\quad\quad 
    + \sum\limits_{p=1}^{D} \sum\limits_{q=1}^{D_A} 
        \Phi^{(O,i):j,k}_{(A):\pi_O^{-1}(p),\pi_A^{-1}(q)} [W]^{(A)}_{p,q} 
    + \sum\limits_{p=1}^{D_A} \sum\limits_{q=1}^{D} 
        \Phi^{(O,i):j,k}_{(B):\pi_A^{-1}(p),q} [W]^{(B)}_{p,q} \nonumber\\
    &\quad\quad\quad\quad 
    + \sum\limits_{q=1}^{D_A} 
        \Phi^{(O,i):j,k}_{(A):\pi_A^{-1}(q)} [b]^{(A)}_{q} 
    + \sum\limits_{q=1}^{D} 
        \Phi^{(O,i):j,k}_{(B):q} [b]^{(B)}_q
    + \Phi^{(O,i):j,k} \nonumber\\
    &\quad\quad\quad\quad =0.
\end{align}
By choosing $g$ to be the identity element of $\mathcal{G}_{\mathcal{U}}$ in Equation~(\ref{eq:EUO-zero}), and substituting the obtained result to Equation~(\ref{eq:E(U)}), we obtain a shorter formula for $[E(W)]^{(O,i)}_{j,k}$ as follow:
\begin{align} 
    [E(W)]^{(O,i)}_{j,k} = \sum\limits_{p=1}^{D_v} \sum\limits_{q=1}^{D} 
        \Phi^{(O,i):j,k}_{(O,i):p,q} [W]^{(O,i)}_{p,q}.
\end{align}
Thus, the constraint becomes
\begin{align*}
    &\sum\limits_{p=1}^{D_v} \sum\limits_{q=1}^{D} 
        \Phi^{(O,i):j,k}_{(O,i):p,\pi_O^{-1}(q)} \left[ \left(N^{(\tau(i))}\right)^{-1} \cdot [W]^{(O,\tau(i))} \right]_{p,q}\\
    &\qquad\qquad\qquad = 
    \sum\limits_{l=1}^{D_v} \left(\left(N^{(\tau(i))}\right)^{-1}\right)_{j,l} \cdot [E(W)]^{(O,\tau(i))}_{l,\pi_O(k)},
\end{align*}
or equivalently,
\begin{align}\label{eq:EUO-constraint}
    &\sum\limits_{p=1}^{D_v} \sum\limits_{q=1}^{D} 
        \Phi^{(O,\tau(i)):j,\pi_O(k)}_{(O,\tau(i)):p,\pi_O(q)} \left[ \left(N^{(i)}\right)^{-1} \cdot [W]^{(O,i)} \right]_{p,q} \nonumber\\
    &\qquad\qquad\qquad 
    = 
    \sum\limits_{l=1}^{D_v} \left(\left(N^{(i)}\right)^{-1}\right)_{j,l} \cdot [E(W)]^{(O,i)}_{l,k}.
\end{align}
Observe that, the left hand side of Equation~(\ref{eq:EUO-constraint}) is just a linear function on the $j$-th row of $\left(N^{(i)}\right)^{-1}$, while the left hand side is a sum of linear combination of $p$-th rows of $\left(N^{(i)}\right)^{-1}$ with $p=1,\ldots,D_v$.
Therefore, only coefficients of $[E(W)]^{(O,i)}_{j,k}$ with $p=j$ are nonzero, i.e
\begin{align}\label{eq:EUO-shorter-2}
    [E(W)]^{(O,i)}_{j,k} = \sum\limits_{q=1}^{D} 
        \Phi^{(O,i):j,k}_{(O,i):j,q} [W]^{(O,i)}_{j,q}.
\end{align}
In addition, since the right hand side of Equation~(\ref{eq:EUO-constraint}) is independent of $\tau$ and $\pi_O$, the left hand side must remain unchange if we vary $\tau$ and $\pi_O$.
As a consequence, we have
\begin{align}\label{eq:EUO-constraint-2}
    \Phi^{(O,\tau(i)):j,\pi_O(k)}_{(O,\tau(i)):j,\pi_O(q)}
    =
    \Phi^{(O,i):j,k}_{(O,i):j,q}
\end{align}
for all $i,j,k,q,\tau$ and $\pi_O$.
Next, by substituting Equation~(\ref{eq:EUO-shorter-2}) and Equation~(\ref{eq:EUO-constraint-2}) to Equation~(\ref{eq:EUO-constraint}), we obtain
\begin{align*}
    &\sum\limits_{q=1}^{D} 
        \Phi^{(O,\tau(i)):j,\pi_O(k)}_{(O,\tau(i)):j,\pi_O(q)} \left[ \left(N^{(i)}\right)^{-1} \cdot [W]^{(O,i)} \right]_{j,q}\\
    &\qquad\qquad\qquad = 
    \sum\limits_{l=1}^{D_v} \left(\left(N^{(i)}\right)^{-1}\right)_{j,l} \cdot \left( \sum\limits_{q=1}^{D} 
        \Phi^{(O,i):l,k}_{(O,i):l,q} [W]^{(O,i)}_{l,q} \right),
\end{align*}
or equivalently,
\begin{align}\label{eq:EUO-constraint-3}
    \sum\limits_{q=1}^{D} \sum\limits_{l=1}^{D_v}  
        &\Phi^{(O,\tau(i)):j,\pi_O(k)}_{(O,\tau(i)):j,\pi_O(q)} \left( \left(N^{(i)}\right)^{-1} \right)_{j,l} \cdot [W]^{(O,i)}_{l,q} \nonumber\\
    &= 
    \sum\limits_{q=1}^{D} \sum\limits_{l=1}^{D_v} \Phi^{(O,i):l,k}_{(O,i):l,q} \left(\left(N^{(i)}\right)^{-1}\right)_{j,l} \cdot [W]^{(O,i)}_{l,q}.
\end{align}
By comparing corresponding coefficients both sides, we have
\begin{align}
    \Phi^{(O,\tau(i)):j,\pi_O(k)}_{(O,\tau(i)):j,\pi_O(q)}
    =
    \Phi^{(O,i):l,k}_{(O,i):l,q}.
\end{align}

Finally, one can verify directly that $[E(W)]^{(O,i)}_{j,k}$ defined as
$[E(W)]^{(O,i)}_{j,k}$ with $p=j$ are nonzero, i.e
\begin{align} 
    [E(W)]^{(O,i)}_{j,k} = \sum\limits_{q=1}^{D} 
        \Phi^{(O,i):j,k}_{(O,i):j,q} [W]^{(O,i)}_{j,q},
\end{align}
with the constraints
\begin{align}\label{eq:EUO-constraint-second-last}
    \Phi^{(O,i):j,k}_{(O,i):j,q}
    =
    \Phi^{(O,\tau(i)):j',\pi_O(k)}_{(O,\tau(i)):j',\pi_O(q)}
\end{align}
satisfy the constraint in Equation~(\ref{eq:EUO-constraint}) for all $i,j,j',k,q$ and $\tau,\pi_O$.
%
%
%


\paragraph{Case 5. Solve the equation $[E(gW)]^{(A)} = [gE(W)]^{(A)}$.}

Since $[E(gW)]^{(A)}_{j,k} = [gE(W)]^{(A)}_{j,k}$, we have
\begin{align}
    &\sum\limits_{s=1}^h \sum\limits_{p=1}^{D} \sum\limits_{q=1}^{D} 
        \Phi^{(A):j,k}_{(QK,\tau^{-1}(s)):p,q} [WW]^{{(QK,s)}}_{p,q} \nonumber\\
    &+ \sum\limits_{s=1}^h \sum\limits_{p=1}^{D} \sum\limits_{q=1}^{D} 
        \Phi^{(A):j,k}_{(VO,\tau^{-1}(s)):p,\pi_O^{-1}(q)}[WW]^{(VO,s)}_{p,q} \nonumber\\
    &\quad + \sum\limits_{s=1}^h \sum\limits_{p=1}^{D} \sum\limits_{q=1}^{D_k} 
        \Phi^{(A):j,k}_{(Q,\tau^{-1}(s)):p,q} \left[[W]^{{(Q,s)}} \cdot \left(M^{(s)} \right)^{\top}\right]_{p,q} \nonumber\\
    &\quad + \sum\limits_{s=1}^h \sum\limits_{p=1}^{D} \sum\limits_{q=1}^{D_k} 
        \Phi^{(A):j,k}_{(K,\tau^{-1}(s)):p,q}\left[[W]^{(K,s)} \cdot \left( M^{(s)} \right)^{-1}\right]_{p,q} \nonumber\\
    &\quad\quad + \sum\limits_{s=1}^h \sum\limits_{p=1}^{D} \sum\limits_{q=1}^{D_v} 
        \Phi^{(A):j,k}_{(V,\tau^{-1}(s)):p,q} \left[[W]^{{(V,s)}} \cdot N^{(s)}\right]_{p,q} \nonumber\\
    &\quad\quad + \sum\limits_{s=1}^h \sum\limits_{p=1}^{D_v} \sum\limits_{q=1}^{D} 
        \Phi^{(A):j,k}_{(O,\tau^{-1}(s)):p,\pi_O^{-1}(q)} \left[ \left(N^{(s)}\right)^{-1} \cdot [W]^{(O,s)} \right]_{p,q} \nonumber\\
    &\quad\quad\quad 
    + \sum\limits_{p=1}^{D} \sum\limits_{q=1}^{D_A} 
        \Phi^{(A):j,k}_{(A):\pi_O^{-1}(p),\pi_A^{-1}(q)} [W]^{(A)}_{p,q} 
    + \sum\limits_{p=1}^{D_A} \sum\limits_{q=1}^{D} 
        \Phi^{(A):j,k}_{(B):\pi_A^{-1}(p),q} [W]^{(B)}_{p,q} \nonumber\\
    &\quad\quad\quad\quad 
    + \sum\limits_{q=1}^{D_A} 
        \Phi^{(A):j,k}_{(A):\pi_A^{-1}(q)} [b]^{(A)}_{q} 
    + \sum\limits_{q=1}^{D} 
        \Phi^{(A):j,k}_{(B):q} [b]^{(B)}_q
    + \Phi^{(A):j,k} \nonumber\\
    = 
    &\sum\limits_{s=1}^h \sum\limits_{p=1}^{D} \sum\limits_{q=1}^{D} 
        \Phi^{(A):\pi_O(j),\pi_A(k)}_{(QK,s):p,q} [WW]^{{(QK,s)}}_{p,q} \nonumber\\
    &+ \sum\limits_{s=1}^h \sum\limits_{p=1}^{D} \sum\limits_{q=1}^{D} 
        \Phi^{(A):\pi_O(j),\pi_A(k)}_{(VO,s):p,q}[WW]^{(VO,s)}_{p,q} \nonumber\\
    &\quad + \sum\limits_{s=1}^h \sum\limits_{p=1}^{D} \sum\limits_{q=1}^{D_k} 
        \Phi^{(A):\pi_O(j),\pi_A(k)}_{(Q,s):p,q} [W]^{{(Q,s)}}_{p,q}
    + \sum\limits_{s=1}^h \sum\limits_{p=1}^{D} \sum\limits_{q=1}^{D_k} 
        \Phi^{(A):\pi_O(j),\pi_A(k)}_{(K,s):p,q}[W]^{(K,s)}_{p,q} \nonumber\\
    &\quad\quad + \sum\limits_{s=1}^h \sum\limits_{p=1}^{D} \sum\limits_{q=1}^{D_v} 
        \Phi^{(A):\pi_O(j),\pi_A(k)}_{(V,s):p,q} [W]^{{(V,s)}}_{p,q}
    + \sum\limits_{s=1}^h \sum\limits_{p=1}^{D_v} \sum\limits_{q=1}^{D} 
        \Phi^{(A):\pi_O(j),\pi_A(k)}_{(O,s):p,q}[W]^{(O,s)}_{p,q} \nonumber\\
    &\quad\quad\quad 
    + \sum\limits_{p=1}^{D} \sum\limits_{q=1}^{D_A} 
        \Phi^{(A):\pi_O(j),\pi_A(k)}_{(A):p,q} [W]^{(A)}_{p,q} 
    + \sum\limits_{p=1}^{D_A} \sum\limits_{q=1}^{D} 
        \Phi^{(A):\pi_O(j),\pi_A(k)}_{(B):p,q} [W]^{(B)}_{p,q} \nonumber\\
    &\quad\quad\quad\quad 
    + \sum\limits_{q=1}^{D_A} 
        \Phi^{(A):\pi_O(j),\pi_A(k)}_{(A):q} [b]^{(A)}_q 
    + \sum\limits_{q=1}^{D} 
        \Phi^{(A):\pi_O(j),\pi_A(k)}_{(B):q} [b]^{(B)}_q
    + \Phi^{(A):\pi_O(j),\pi_A(k)}.
\end{align}
We observe that the left hand side is a sum of linear functions on the matrices $M^{(s)}$, $\left(M^{(s)}\right)^{-1}$, $N^{(s)}$, and $\left(N^{(s)}\right)^{-1}$, while the right hand side is independent of these matrices.
Therefore, according to Corollary~\ref{cor:f1f2=0}, the terms containing these matrices in the left hand side must be equal to zero.
For the remaining terms, we can use Lemma~\ref{lem:E=0} to compare the corresponding ones with those in the right hand side and obtain the constraints:
\begin{align*}
    \Phi^{(A):j,k}_{(QK,\tau^{-1}(s)):p,q}
        &= \Phi^{(A):\pi_O(j),\pi_A(k)}_{(QK,s):p,q},\\
    \Phi^{(A):j,k}_{(VO,\tau^{-1}(s)):p,\pi_O^{-1}(q)}
        &= \Phi^{(A):\pi_O(j),\pi_A(k)}_{(VO,s):p,q},\\
    \Phi^{(A):\pi_O(j),\pi_A(k)}_{(Q,s):p,q}
        &=0,\\
    \Phi^{(A):\pi_O(j),\pi_A(k)}_{(K,s):p,q}
        &=0,\\
    \Phi^{(A):\pi_O(j),\pi_A(k)}_{(V,s):p,q}
        &=0,\\
    \Phi^{(A):\pi_O(j),\pi_A(k)}_{(O,s):p,q}
        &=0,\\
    \Phi^{(A):j,k}_{(A):\pi_O^{-1}(p),\pi_A^{-1}(q)} 
        &= \Phi^{(A):\pi_O(j),\pi_A(k)}_{(A):p,q},\\
    \Phi^{(A):j,k}_{(B):\pi_A^{-1}(p),q} 
        &= \Phi^{(A):\pi_O(j),\pi_A(k)}_{(B):p,q},\\
    \Phi^{(A):j,k}_{(A):\pi_A^{-1}(q)} 
        &= \Phi^{(A):\pi_O(j),\pi_A(k)}_{(A):q},\\
    \Phi^{(A):j,k}_{(B):q}
        &= \Phi^{(A):\pi_O(j),\pi_A(k)}_{(B):q},\\
    \Phi^{(A):j,k}
        &= \Phi^{(A):\pi_O(j),\pi_A(k)}.
\end{align*}
Therefore,
\begin{align*}
           \Phi^{(A):j,k}_{(QK,s):p,q}
        &= \Phi^{(A):\pi_O(j),\pi_A(k)}_{(QK,\tau(s)):p,q},\\
           \Phi^{(A):j,k}_{(VO,s):p,q}
        &= \Phi^{(A):\pi_O(j),\pi_A(k)}_{(VO,\tau(s)):p,\pi_O(q)},\\
        \Phi^{(A):j,k}_{(Q,s):p,q}
        &=0,\\
        \Phi^{(A):j,k}_{(K,s):p,q}
        &=0,\\
        \Phi^{(A):j,k}_{(V,s):p,q}
        &=0,\\
        \Phi^{(A):j,k}_{(O,s):p,q}
        &=0,\\
           \Phi^{(A):j,k}_{(A):p,q} 
        &= \Phi^{(A):\pi_O(j),\pi_A(k)}_{(A):\pi_O(p),\pi_A(q)},\\
           \Phi^{(A):j,k}_{(B):p,q} 
        &= \Phi^{(A):\pi_O(j),\pi_A(k)}_{(B):\pi_A(p),q},\\
           \Phi^{(A):j,k}_{(A):q} 
        &= \Phi^{(A):\pi_O(j),\pi_A(k)}_{(A):\pi_A(q)},\\
           \Phi^{(A):j,k}_{(B):q}
        &= \Phi^{(A):\pi_O(j),\pi_A(k)}_{(B):q},\\
           \Phi^{(A):j,k}
        &= \Phi^{(A):\pi_O(j),\pi_A(k)}.
\end{align*}
%
%
%
One can verify directly that $[E(W)]^{(A)}_{j,k}$ defined by
\begin{align}\label{eq:EUWA-second-last}
    [E(W)]^{(A)}_{j,k}
    &= \sum\limits_{s=1}^h \sum\limits_{p=1}^{D} \sum\limits_{q=1}^{D} 
        \Phi^{(A):j,k}_{(QK,s):p,q} [WW]^{{(QK,s)}}_{p,q} \nonumber\\
    &+ \sum\limits_{s=1}^h \sum\limits_{p=1}^{D} \sum\limits_{q=1}^{D} 
        \Phi^{(A):j,k}_{(VO,s):p,q}[WW]^{(VO,s)}_{p,q} \nonumber\\
    &\quad\quad\quad 
    + \sum\limits_{p=1}^{D} \sum\limits_{q=1}^{D_A} 
        \Phi^{(A):j,k}_{(A):p,q} [W]^{(A)}_{p,q} 
    + \sum\limits_{p=1}^{D_A} \sum\limits_{q=1}^{D} 
        \Phi^{(A):j,k}_{(B):p,q} [W]^{(B)}_{p,q} \nonumber\\
    &\quad\quad\quad\quad 
    + \sum\limits_{q=1}^{D_A} 
        \Phi^{(A):j,k}_{(A):q} [b]^{(A)}_q 
    + \sum\limits_{q=1}^{D} 
        \Phi^{(A):j,k}_{(B):q} [b]^{(B)}_q
    + \Phi^{(A):j,k},
\end{align}
with the constraints
\begin{align*} 
           \Phi^{(A):j,k}_{(QK,s):p,q}
        &= \Phi^{(A):\pi_O(j),\pi_A(k)}_{(QK,\tau(s)):p,q},\\
           \Phi^{(A):j,k}_{(VO,s):p,q}
        &= \Phi^{(A):\pi_O(j),\pi_A(k)}_{(VO,\tau(s)):p,\pi_O(q)},\\
    %
    %
           \Phi^{(A):j,k}_{(A):p,q} 
        &= \Phi^{(A):\pi_O(j),\pi_A(k)}_{(A):\pi_O(p),\pi_A(q)},\\
           \Phi^{(A):j,k}_{(B):p,q} 
        &= \Phi^{(A):\pi_O(j),\pi_A(k)}_{(B):\pi_A(p),q},\\
           \Phi^{(A):j,k}_{(A):q} 
        &= \Phi^{(A):\pi_O(j),\pi_A(k)}_{(A):\pi_A(q)},\\
           \Phi^{(A):j,k}_{(B):q}
        &= \Phi^{(A):\pi_O(j),\pi_A(k)}_{(B):q},\\
           \Phi^{(A):j,k}
        &= \Phi^{(A):\pi_O(j),\pi_A(k)}.
\end{align*}
for all $j,j',k,k',s,s',p,p',q,q',\pi_O,\pi_A$ satisfies the condition $[E(gW)]^{(A)}_{j,k} = [gE(W)]^{(A)}_{j,k}$.


\paragraph{Case 6: Solve the equation $[E(gW)]^{(B)} = [gE(W)]^{(B)}$.}

By using the similar argument as Case 5, we also see that:

\begin{align} 
    [E(W)]^{(B)}_{j,k}
    &= \sum\limits_{s=1}^h \sum\limits_{p=1}^{D} \sum\limits_{q=1}^{D} 
        \Phi^{(B):j,k}_{(QK,s):p,q} [WW]^{{(QK,s)}}_{p,q} \nonumber\\
    &+ \sum\limits_{s=1}^h \sum\limits_{p=1}^{D} \sum\limits_{q=1}^{D} 
        \Phi^{(B):j,k}_{(VO,s):p,q}[WW]^{(VO,s)}_{p,q} \nonumber\\
    &\quad\quad\quad 
    + \sum\limits_{p=1}^{D} \sum\limits_{q=1}^{D_A} 
        \Phi^{(B):j,k}_{(A):p,q} [W]^{(A)}_{p,q} 
    + \sum\limits_{p=1}^{D_A} \sum\limits_{q=1}^{D} 
        \Phi^{(B):j,k}_{(B):p,q} [W]^{(B)}_{p,q} \nonumber\\
    &\quad\quad\quad\quad 
    + \sum\limits_{q=1}^{D_A} 
        \Phi^{(B):j,k}_{(A):q} [b]^{(A)}_q 
    + \sum\limits_{q=1}^{D} 
        \Phi^{(B):j,k}_{(B):q} [b]^{(B)}_q
    + \Phi^{(B):j,k},
\end{align}
with the constraints
\begin{align*}
       \Phi^{(B):j,k}_{(QK,s):p,q}
    &= \Phi^{(B):\pi_A(j),k}_{(QK,\tau(s)):p,q}\\
       \Phi^{(B):j,k}_{(VO,s):p,q}
    &= \Phi^{(B):\pi_A(j),k}_{(VO,\tau(s)):p,\pi_O(q)}\\
       \Phi^{(B):j,k}_{(A):p,q}
    &= \Phi^{(B):\pi_A(j),k}_{(A):\pi_O(p),\pi_A(q)}\\
    \Phi^{(B):j,k}_{(B):p,q}
    &= \Phi^{(B):\pi_A(j),k}_{(B):\pi_A(p),q}\\
    \Phi^{(B):j,k}_{(A):q}
    &= \Phi^{(B):\pi_A(j),k}_{(A):\pi_A(q)}\\
    \Phi^{(B):j,k}_{(B):q}
    &= \Phi^{(B):\pi_A(j),k}_{(B):q}\\
    \Phi^{(B):j,k}
    &= \Phi^{(B):\pi_A(j),k}.
\end{align*}
for all $j,k,s,p,q,\tau,\pi_O,\pi_A$,
satisfies the constraint $[E(gW)]^{(B)}_{j,k} = [gE(W)]^{(B)}_{j,k}$.


\paragraph{Case 7. Solve the equation $[E(gb)]^{(A)} = [gE(b)]^{(A)}$.}
By using the same argument as Case 5, we also see that $[E(b)]^{(A)}_{k}$ defined by
\begin{align} 
    [E(b)]^{(A)}_{k} 
    &= \sum\limits_{s=1}^h \sum\limits_{p=1}^{D} \sum\limits_{q=1}^{D} 
        \Phi^{(A):k}_{(QK,s):p,q} [WW]^{{(QK,s)}}_{p,q} \nonumber\\
    &+ \sum\limits_{s=1}^h \sum\limits_{p=1}^{D} \sum\limits_{q=1}^{D} 
        \Phi^{(A):k}_{(VO,s):p,q}[WW]^{(VO,s)}_{p,q} \nonumber\\
    &\quad\quad\quad 
    + \sum\limits_{p=1}^{D} \sum\limits_{q=1}^{D_A} 
        \phi^{(A):k}_{(A):p,q} [W]^{(A)}_{p,q} 
    + \sum\limits_{p=1}^{D_A} \sum\limits_{q=1}^{D} 
        \Phi^{(A):k}_{(B):p,q} [W]^{(B)}_{p,q} \nonumber\\
    &\quad\quad\quad\quad 
    + \sum\limits_{q=1}^{D_A} 
        \Phi^{(A):k}_{(A):q} [b]^{(A)}_q 
    + \sum\limits_{q=1}^{D} 
        \Phi^{(A):k}_{(B):q} [b]^{(B)}_q
    + \Phi^{(A):k},
\end{align}
with the constraints
\begin{align*} 
       \Phi^{(A):k}_{(QK,s):p,q}
    &= \Phi^{(A):\pi_A(k)}_{(QK,\tau(s)):p,q}\\
       \Phi^{(A):k}_{(VO,s):p,q}
    &= \Phi^{(A):\pi_A(k)}_{(VO,\tau(s)):p,\pi_O(q)}\\
       \Phi^{(A):k}_{(A):p,q}
    &= \phi^{(A):\pi_A(k)}_{(A):\pi_O(p),\pi_A(q)}\\
       \Phi^{(A):k}_{(B):p,q}
    &= \Phi^{(A):\pi_A(k)}_{(B):\pi_A(p),q}\\
    \Phi^{(A):k}_{(A):q}
    &= \Phi^{(A):\pi_A(k)}_{(A):\pi_A(q)}\\
    \Phi^{(A):k}_{(B):q}
    &= \Phi^{(A):\pi_A(k)}_{(B):q}\\
    \Phi^{(A):k} 
    &= \Phi^{(A):\pi_A(k)}.
\end{align*}
for all $j,k,s,p,q,\tau,\pi_O,\pi_A$,
satisfies the constraint $[E(gW)]^{(A)}_{j,k} = [gE(W)]^{(A)}_{j,k}$.

\paragraph{Case 8. Solve the equation $[E(gb)]^{(B)} = [gE(b)]^{(B)}$.}

By using the same argument as in Case 5, we see that $[E(b)]^{(B)}_q$ defined by
\begin{align} 
    [E(b)]^{(B)}_{k}
    &= \sum\limits_{s=1}^h \sum\limits_{p=1}^{D} \sum\limits_{q=1}^{D} 
        \Phi^{(B):k}_{(QK,s):p,q} [WW]^{{(QK,s)}}_{p,q} \nonumber\\
    &+ \sum\limits_{s=1}^h \sum\limits_{p=1}^{D} \sum\limits_{q=1}^{D} 
        \Phi^{(B):k}_{(VO,s):p,q}[WW]^{(VO,s)}_{p,q} \nonumber\\
    &\quad\quad\quad 
    + \sum\limits_{p=1}^{D} \sum\limits_{q=1}^{D_A} 
        \Phi^{(B):k}_{(A):p,q} [W]^{(A)}_{p,q} 
    + \sum\limits_{p=1}^{D_A} \sum\limits_{q=1}^{D} 
        \Phi^{(B):k}_{(B):p,q} [W]^{(B)}_{p,q} \nonumber\\
    &\quad\quad\quad\quad 
    + \sum\limits_{q=1}^{D_A} 
        \Phi^{(B):k}_{(A):q} [b]^{(A)}_q 
    + \sum\limits_{q=1}^{D} 
        \Phi^{(B):k}_{(B):q} [b]^{(B)}_q
    + \Phi^{(B):k}, 
\end{align}
with the constraints
\begin{align*} 
    \Phi^{(B):k}_{(QK,s):p,q}
    &= \Phi^{(B):k}_{(QK,\tau(s)):p,q}\\
    \Phi^{(B):k}_{(VO,s):p,q}
    &= \Phi^{(B):k}_{(VO,\tau(s)):p,\pi_O(q)}\\
    \Phi^{(B):k}_{(A):p,q}
    &= \Phi^{(B):k}_{(A):\pi_O(p),\pi_A(q)}\\
    \Phi^{(B):k}_{(B):p,q}
    &= \Phi^{(B):k}_{(B):\pi_A(p),q}\\
    \Phi^{(B):k}_{(A):q}
    &= \Phi^{(B):k}_{(A):\pi_A(q)}\\
    \Phi^{(B):k}_{(B):q}
    &= \Phi^{(B):k}_{(B):q}\\
    \Phi^{(B):k}
    &= \Phi^{(B):k},
\end{align*}
 for all $j,k,s,p,q,\tau,\pi_A,\pi_O$,
satisfies the constraint $[E(gW)]^{(A)}_{j,k} = [gE(W)]^{(A)}_{j,k}$.

\subsection{Final formula for the equivariant polynomial layer}

We gather the results of the computations of the equivariant layer in the previous section here for further implementation.

\begin{theorem}\label{thm:equivariant_layer}
    The map $E \colon \mathcal{U} \to \mathcal{U}$ with
    \begin{align}\label{eq:E(U)}
    E(U)&=\biggl( \left([E(W)]^{(Q,i)},[E(W)]^{(K,i)},[E(W)]^{(V,i)},[E(W)]^{(O,i)}\right)_{i=1,\ldots,h}, \nonumber\\
    &\qquad\qquad \left([E(W)]^{(A)},[E(W)]^{(B)}\right),\left([E(b)]^{(A)},[E(b)]^{(B)}\right) \biggr),
\end{align}
is $\mathcal{G}_{\mathcal{U}}$-equivariant.

Here, the components of $E(U)$ are given below.
\begin{enumerate}
    \item $[E(W)]^{(Q,i)}$ is defined as 
    \begin{align*}
        [E(W)]^{(Q,i)}_{j,k} &=
     \sum\limits_{p=1}^{D} 
        \Phi^{(Q,i):j,k}_{(Q,i):p,k} [W]^{{(Q,i)}}_{p,k}, 
    \end{align*}
    with the constraints
    \begin{align*} 
    \Phi^{(Q,i):j,k}_{(Q,i):p,k} = \Phi^{(Q,\tau(i)):j,k'}_{(Q,\tau(i)):p,k'},
    \end{align*}

    \item $[E(W)]^{(K,i)}$ is defined as 
    \begin{align*}
        [E(W)]^{(K,i)}_{j,k} &= \sum\limits_{p=1}^{D} 
        \Phi^{(K,i):j,k}_{(K,i):p,k} [W]^{(K,i)}_{p,k}, 
    \end{align*}
    with the constraints
    \begin{align*} 
    \Phi^{(K,i):j,k}_{(K,i):p,k} = \Phi^{(K,\tau(i)):j,k'}_{(K,\tau(i)):p,k'},
    \end{align*}

    \item $[E(W)]^{(V,i)}$ is defined by
    \begin{align*} 
       [E(W)]^{(V,i)}_{j,k} &= \sum\limits_{p=1}^{D} 
        \Phi^{(V,i):j,k}_{(V,i):p,k} [W]^{(V,i)}_{p,k}, 
    \end{align*}
    with the constraints
    \begin{align*} 
    \Phi^{(V,i):j,k}_{(V,i):p,k} = \Phi^{(V,\tau(i)):j,k'}_{(V,\tau(i)):p,k'},
    \end{align*}

    \item $[E(W)]^{(O,i)}$ is defined by 
    \begin{align*}
        [E(W)]^{(O,i)}_{j,k} &= \sum\limits_{q=1}^{D} 
        \Phi^{(O,i):j,k}_{(O,i):j,q} [W]^{(O,i)}_{j,q}, 
    \end{align*}
    with the constraints
    \begin{align*} 
    \Phi^{(O,i):j,k}_{(O,i):j,q}
    =
    \Phi^{(O,\tau(i)):j',\pi_O(k)}_{(O,\tau(i)):j',\pi_O(q)}
    
    \end{align*}

    \item $[E(W)]^{(A)}$ is defined by
    \begin{align*}
        [E(W)]^{(A)}_{j,k}
    &= \sum\limits_{s=1}^h \sum\limits_{p=1}^{D} \sum\limits_{q=1}^{D} 
        \Phi^{(A):j,k}_{(QK,s):p,q} [WW]^{{(QK,s)}}_{p,q} \\
    &+ \sum\limits_{s=1}^h \sum\limits_{p=1}^{D} \sum\limits_{q=1}^{D} 
        \Phi^{(A):j,k}_{(VO,s):p,q}[WW]^{(VO,s)}_{p,q} \\
    &\quad\quad\quad 
    + \sum\limits_{p=1}^{D} \sum\limits_{q=1}^{D_A} 
        \Phi^{(A):j,k}_{(A):p,q} [W]^{(A)}_{p,q} 
    + \sum\limits_{p=1}^{D_A} \sum\limits_{q=1}^{D} 
        \Phi^{(A):j,k}_{(B):p,q} [W]^{(B)}_{p,q} \\
    &\quad\quad\quad\quad 
    + \sum\limits_{q=1}^{D_A} 
        \Phi^{(A):j,k}_{(A):q} [b]^{(A)}_q 
    + \sum\limits_{q=1}^{D} 
        \Phi^{(A):j,k}_{(B):q} [b]^{(B)}_q
    + \Phi^{(A):j,k}, 
    \end{align*}
    with the constraints
\begin{align*} 
           \Phi^{(A):j,k}_{(QK,s):p,q}
        &= \Phi^{(A):\pi_O(j),\pi_A(k)}_{(QK,\tau(s)):p,q},\\
           \Phi^{(A):j,k}_{(VO,s):p,q}
        &= \Phi^{(A):\pi_O(j),\pi_A(k)}_{(VO,\tau(s)):p,\pi_O(q)},\\
    %
    %
           \Phi^{(A):j,k}_{(A):p,q} 
        &= \Phi^{(A):\pi_O(j),\pi_A(k)}_{(A):\pi_O(p),\pi_A(q)},\\
           \Phi^{(A):j,k}_{(B):p,q} 
        &= \Phi^{(A):\pi_O(j),\pi_A(k)}_{(B):\pi_A(p),q},\\
           \Phi^{(A):j,k}_{(A):q} 
        &= \Phi^{(A):\pi_O(j),\pi_A(k)}_{(A):\pi_A(q)},\\
           \Phi^{(A):j,k}_{(B):q}
        &= \Phi^{(A):\pi_O(j),\pi_A(k)}_{(B):q},\\
           \Phi^{(A):j,k}
        &= \Phi^{(A):\pi_O(j),\pi_A(k)}.
\end{align*}

    \item $[E(W)]^{(B)}$ is defined by
\begin{align*}
    [E(W)]^{(B)}_{j,k}
    &= \sum\limits_{s=1}^h \sum\limits_{p=1}^{D} \sum\limits_{q=1}^{D} 
        \Phi^{(B):j,k}_{(QK,s):p,q} [WW]^{{(QK,s)}}_{p,q}\\
    &+ \sum\limits_{s=1}^h \sum\limits_{p=1}^{D} \sum\limits_{q=1}^{D} 
        \Phi^{(B):j,k}_{(VO,s):p,q}[WW]^{(VO,s)}_{p,q} \\
    &\quad\quad\quad 
    + \sum\limits_{p=1}^{D} \sum\limits_{q=1}^{D_A} 
        \Phi^{(B):j,k}_{(A):p,q} [W]^{(A)}_{p,q} 
    + \sum\limits_{p=1}^{D_A} \sum\limits_{q=1}^{D} 
        \Phi^{(B):j,k}_{(B):p,q} [W]^{(B)}_{p,q} \\
    &\quad\quad\quad\quad 
    + \sum\limits_{q=1}^{D_A} 
        \Phi^{(B):j,k}_{(A):q} [b]^{(A)}_q 
    + \sum\limits_{q=1}^{D} 
        \Phi^{(B):j,k}_{(B):q} [b]^{(B)}_q
    + \Phi^{(B):j,k}, 
\end{align*}
with the constraints
\begin{align*} 
       \Phi^{(B):j,k}_{(QK,s):p,q}
    &= \Phi^{(B):\pi_A(j),k}_{(QK,\tau(s)):p,q}\\
       \Phi^{(B):j,k}_{(VO,s):p,q}
    &= \Phi^{(B):\pi_A(j),k}_{(VO,\tau(s)):p,\pi_O(q)}\\
       \Phi^{(B):j,k}_{(A):p,q}
    &= \Phi^{(B):\pi_A(j),k}_{(A):\pi_O(p),\pi_A(q)}\\
    \Phi^{(B):j,k}_{(B):p,q}
    &= \Phi^{(B):\pi_A(j),k}_{(B):\pi_A(p),q}\\
    \Phi^{(B):j,k}_{(A):q}
    &= \Phi^{(B):\pi_A(j),k}_{(A):\pi_A(q)}\\
    \Phi^{(B):j,k}_{(B):q}
    &= \Phi^{(B):\pi_A(j),k}_{(B):q}\\
    \Phi^{(B):j,k}
    &= \Phi^{(B):\pi_A(j),k}.
\end{align*}

    \item $[E(b)]^{(A)}$ is defined by
\begin{align*} 
    [E(b)]^{(A)}_{k} 
    &= \sum\limits_{s=1}^h \sum\limits_{p=1}^{D} \sum\limits_{q=1}^{D} 
        \Phi^{(A):k}_{(QK,s):p,q} [WW]^{{(QK,s)}}_{p,q}\\
    &+ \sum\limits_{s=1}^h \sum\limits_{p=1}^{D} \sum\limits_{q=1}^{D} 
        \Phi^{(A):k}_{(VO,s):p,q}[WW]^{(VO,s)}_{p,q} \\
    &\quad\quad\quad 
    + \sum\limits_{p=1}^{D} \sum\limits_{q=1}^{D_A} 
        \phi^{(A):k}_{(A):p,q} [W]^{(A)}_{p,q} 
    + \sum\limits_{p=1}^{D_A} \sum\limits_{q=1}^{D} 
        \Phi^{(A):k}_{(B):p,q} [W]^{(B)}_{p,q} \\
    &\quad\quad\quad\quad 
    + \sum\limits_{q=1}^{D_A} 
        \Phi^{(A):k}_{(A):q} [b]^{(A)}_q 
    + \sum\limits_{q=1}^{D} 
        \Phi^{(A):k}_{(B):q} [b]^{(B)}_q
    + \Phi^{(A):k},
\end{align*}
with the constraints
\begin{align*} 
       \Phi^{(A):k}_{(QK,s):p,q}
    &= \Phi^{(A):\pi_A(k)}_{(QK,\tau(s)):p,q}\\
       \Phi^{(A):k}_{(VO,s):p,q}
    &= \Phi^{(A):\pi_A(k)}_{(VO,\tau(s)):p,\pi_O(q)}\\
       \Phi^{(A):k}_{(A):p,q}
    &= \phi^{(A):\pi_A(k)}_{(A):\pi_O(p),\pi_A(q)}\\
       \Phi^{(A):k}_{(B):p,q}
    &= \Phi^{(A):\pi_A(k)}_{(B):\pi_A(p),q}\\
    \Phi^{(A):k}_{(A):q}
    &= \Phi^{(A):\pi_A(k)}_{(A):\pi_A(q)}\\
    \Phi^{(A):k}_{(B):q}
    &= \Phi^{(A):\pi_A(k)}_{(B):q}\\
    \Phi^{(A):k} 
    &= \Phi^{(A):\pi_A(k)}.
\end{align*}

    \item $[E(b)]^{(B)}$ is defined by
\begin{align*} 
    [E(b)]^{(B)}_{k}
    &= \sum\limits_{s=1}^h \sum\limits_{p=1}^{D} \sum\limits_{q=1}^{D} 
        \Phi^{(B):k}_{(QK,s):p,q} [WW]^{{(QK,s)}}_{p,q}\\
    &+ \sum\limits_{s=1}^h \sum\limits_{p=1}^{D} \sum\limits_{q=1}^{D} 
        \Phi^{(B):k}_{(VO,s):p,q}[WW]^{(VO,s)}_{p,q} \\
    &\quad\quad\quad 
    + \sum\limits_{p=1}^{D} \sum\limits_{q=1}^{D_A} 
        \Phi^{(B):k}_{(A):p,q} [W]^{(A)}_{p,q} 
    + \sum\limits_{p=1}^{D_A} \sum\limits_{q=1}^{D} 
        \Phi^{(B):k}_{(B):p,q} [W]^{(B)}_{p,q} \\
    &\quad\quad\quad\quad 
    + \sum\limits_{q=1}^{D_A} 
        \Phi^{(B):k}_{(A):q} [b]^{(A)}_q 
    + \sum\limits_{q=1}^{D} 
        \Phi^{(B):k}_{(B):q} [b]^{(B)}_q
    + \Phi^{(B):k}, 
\end{align*}
with the constraints
\begin{align*} 
    \Phi^{(B):k}_{(QK,s):p,q}
    &= \Phi^{(B):k}_{(QK,\tau(s)):p,q}\\
    \Phi^{(B):k}_{(VO,s):p,q}
    &= \Phi^{(B):k}_{(VO,\tau(s)):p,\pi_O(q)}\\
    \Phi^{(B):k}_{(A):p,q}
    &= \Phi^{(B):k}_{(A):\pi_O(p),\pi_A(q)}\\
    \Phi^{(B):k}_{(B):p,q}
    &= \Phi^{(B):k}_{(B):\pi_A(p),q}\\
    \Phi^{(B):k}_{(A):q}
    &= \Phi^{(B):k}_{(A):\pi_A(q)}\\
    \Phi^{(B):k}_{(B):q}
    &= \Phi^{(B):k}_{(B):q}\\
    \Phi^{(B):k}
    &= \Phi^{(B):k},
\end{align*}
for all $j,j',k,k',s,s',p,p',q,q'$ and $\tau,\pi_O,\pi_A$.
\end{enumerate}

\end{theorem}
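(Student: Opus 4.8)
The theorem simply collects the outcome of the componentwise analysis in Cases~1--8 above, so the plan is to assemble that analysis into one statement. What must be checked is that, for every $g\in\mathcal{G}_{\mathcal{U}}$ and $U\in\mathcal{U}$, the eight equalities $[E(gW)]^{(\bullet)}=[gE(W)]^{(\bullet)}$ and $[E(gb)]^{(\bullet)}=[gE(b)]^{(\bullet)}$ hold, with $\bullet$ running over $(Q,i),(K,i),(V,i),(O,i)$ for $i=1,\dots,h$, over the MLP weights $(A),(B)$, and over the biases $(A),(B)$. Each side is to be computed from two ingredients: Proposition~\ref{prop:entries_under_group_action}, which expresses every entry of $gU$ — including the entries of the quadratic blocks $[WW]^{(QK,s)}$ and $[WW]^{(VO,s)}$ — through entries of $U$, the conjugators $M^{(s)},N^{(s)}$, and the permutations $\tau,\pi_O,\pi_A$; and the action formula~(\ref{eq:group_action}), which expresses every entry of $gE(U)$ (legitimate because $E(U)\in\mathcal{U}$) through entries of $E(U)$. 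After both substitutions each equality becomes an identity of polynomials in the independent indeterminate entries of $U$ and of the $M^{(s)},N^{(s)}$; matching monomials — the linear-independence fact behind Lemma~\ref{lem:E=0} — turns it into a finite list of equations on the coefficients $\Phi^-_-$. Necessity of these equations is precisely what Cases~1--8 established, so the remaining task is to verify that the forms and constraints quoted in the theorem are in fact sufficient.

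First I would handle the four attention-weight components. For $(Q,i)$ I would plug the reduced column-local form~(\ref{eq:EUQ-shorter-3}) into both sides: on the left, Proposition~\ref{prop:entries_under_group_action} turns $[E(gW)]^{(Q,i)}_{j,k}$ into a sum over $p,l$ of $\Phi^{(Q,i):j,k}_{(Q,i):p,k}\,M^{(\tau(i))}_{k,l}\,[W]^{(Q,\tau(i))}_{p,l}$, while formula~(\ref{eq:group_action}) turns $[gE(W)]^{(Q,i)}_{j,k}$ into the same sum but with coefficient $\Phi^{(Q,\tau(i)):j,l}_{(Q,\tau(i)):p,l}$; matching the monomial $[W]^{(Q,\tau(i))}_{p,l}M^{(\tau(i))}_{k,l}$ gives exactly the stated constraint. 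The $(K,i)$ and $(V,i)$ cases are structurally identical, with $(M^{(\tau(i))})^{\top}$ replaced by $(M^{(\tau(i))})^{-1}$ and by $N^{(\tau(i))}$; the bilinear comparison is unchanged since $(M^{-1})_{l,k}$ and $N_{l,k}$ enter as fresh indeterminates. The $(O,i)$ component is row-local rather than column-local, and the action both conjugates $[E(W)]^{(O,\tau(i))}$ by $(N^{(\tau(i))})^{-1}$ and reindexes its columns by $\pi_O$; I would run the same substitute-and-match computation and observe that the single quoted constraint $\Phi^{(O,i):j,k}_{(O,i):j,q}=\Phi^{(O,\tau(i)):j',\pi_O(k)}_{(O,\tau(i)):j',\pi_O(q)}$ simultaneously absorbs the head relabeling, the conjugation, and the permutation.

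Next I would handle the MLP weights and the biases. By Proposition~\ref{prop:entries_under_group_action}, the entries of $gU$ entering these reduced formulas — $[gWgW]^{(QK,s)}_{p,q}=[WW]^{(QK,\tau(s))}_{p,q}$, $[gWgW]^{(VO,s)}_{p,q}=[WW]^{(VO,\tau(s))}_{p,\pi_O(q)}$, $[gW]^{(A)}_{p,q}=[W]^{(A)}_{\pi_O(p),\pi_A(q)}$, $[gW]^{(B)}_{p,q}=[W]^{(B)}_{\pi_A(p),q}$, $[gb]^{(A)}_q=[b]^{(A)}_{\pi_A(q)}$, $[gb]^{(B)}_q=[b]^{(B)}_q$ — contain no conjugators, only index relabelings. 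Hence $[E(gW)]^{(A)}_{j,k}$ is the reduced polynomial for $[E(W)]^{(A)}$ with its argument monomials relabeled, while $[gE(W)]^{(A)}_{j,k}=[E(W)]^{(A)}_{\pi_O(j),\pi_A(k)}$ is the same polynomial with its output index relabeled; equating the two monomial by monomial produces exactly the quoted list (for instance $\Phi^{(A):j,k}_{(QK,s):p,q}=\Phi^{(A):\pi_O(j),\pi_A(k)}_{(QK,\tau(s)):p,q}$, and likewise for the $(VO,s)$, $(A)$, $(B)$, bias, and constant terms). I would also record why the would-be terms of the general form~(\ref{eq:E(U)_definition}) that are linear in $[W]^{(Q,s)},[W]^{(K,s)},[W]^{(V,s)},[W]^{(O,s)}$ are absent from $[E(W)]^{(A)}$: under $E(gU)$ they acquire factors of $M^{(s)}$ or $N^{(s)}$ that have no match in $gE(U)$ for this component, forcing their coefficients to zero — the place where Corollary~\ref{cor:f1f2=0} (through Lemma~\ref{lem:f1f2=0}) is used. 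The components $(B)$, the bias $(A)$, and the bias $(B)$ are handled verbatim with the relevant permutations, the bias $(B)$ being the trivial case since $[gb]^{(B)}=[b]^{(B)}$.

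The main obstacle is not conceptual but organizational: there are eight components, each coupling a head permutation $\tau$, the permutations $\pi_O,\pi_A$, and — for the $Q,K,V,O$ blocks only — a genuine invertible conjugator, so for each one must isolate the surviving monomials and read off that the listed constraints coincide with the coefficient-matching conditions. The one genuinely delicate point is that the $Q,K,V,O$ comparisons must be made \emph{after} substituting the reduced column-/row-local forms of $E(W)$: only then is each side bilinear in the entries of $W$ and of the conjugator, so that the monomials $[W]^{(\cdot)}_{p,l}M_{k,l}$ are separated and the coefficient equations are determined; carrying out the match on the unreduced general form leaves an underdetermined tangle of coefficients. Once that ordering is respected, every step is a routine substitution and comparison, and the eight identities follow.
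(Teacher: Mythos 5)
Your plan is correct and follows essentially the same route as the paper: a general linear ansatz, computation of $E(gU)$ via Proposition~\ref{prop:entries_under_group_action} and of $gE(U)$ via the action formula, coefficient matching justified by Lemma~\ref{lem:E=0}, and elimination of the terms coupled to $M^{(s)},N^{(s)}$ via Lemma~\ref{lem:f1f2=0}/Corollary~\ref{cor:f1f2=0}, carried out componentwise exactly as in the paper's Cases~1--8, with sufficiency checked by direct substitution. The only loose phrase is treating the entries of $\left(M^{(s)}\right)^{-1}$ as ``fresh indeterminates''; the rigorous justification is precisely the scaling argument of Lemma~\ref{lem:f1f2=0}, which you already invoke, so no genuine gap remains.
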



\section{An Invariant Polynomial Layer}\label{appendix:invariant_layer}

In this section, we will construct a map $I \colon \mathcal{U} \to \mathbb{R}^{D'}$ such that $I$ is equivariant with respect to $\mathcal{G}_{\mathcal{U}}$, i.e.
\begin{align}
    I(gU)=I(U),
\end{align}
for all $U \in \mathcal{U}$ and $g \in \mathcal{G}_{\mathcal{U}}$.

\subsection{A general form with unknown coefficients}

We select $I(U)$ to be a linear combination of entries of the matrices $[W]^{(Q,s)}$, $[W]^{(K,s)}$, $[W]^{(V,s)}$, $[W]^{(O,s)}$, $[W]^{(A)}$, $[W]^{(B)}$, $[b]^{(A)}$, $[b]^{(B)}$ as well as the matrices $[WW]^{(QK,s)}$ and $[WW]^{(VO,s)}$, i.e
\begin{align}\label{eq:I(U)}
    I(U) &= \sum\limits_{s=1}^h \sum\limits_{p=1}^{D} \sum\limits_{q=1}^{D} 
        \Phi_{(QK,s):p,q} \cdot [WW]^{{(QK,s)}}_{p,q} \nonumber\\
    &+ \sum\limits_{s=1}^h \sum\limits_{p=1}^{D} \sum\limits_{q=1}^{D} 
        \Phi_{(VO,s):p,q} \cdot [WW]^{(VO,s)}_{p,q} \nonumber\\
    &\quad + \sum\limits_{s=1}^h \sum\limits_{p=1}^{D} \sum\limits_{q=1}^{D_k} 
        \Phi_{(Q,s):p,q} \cdot [W]^{{(Q,s)}}_{p,q}
    + \sum\limits_{s=1}^h \sum\limits_{p=1}^{D} \sum\limits_{q=1}^{D_k} 
        \Phi_{(K,s):p,q} \cdot [W]^{(K,s)}_{p,q} \nonumber\\
    &\quad\quad + \sum\limits_{s=1}^h \sum\limits_{p=1}^{D} \sum\limits_{q=1}^{D_v} 
        \Phi_{(V,s):p,q} \cdot [W]^{{(V,s)}}_{p,q}
    + \sum\limits_{s=1}^h \sum\limits_{p=1}^{D_v} \sum\limits_{q=1}^{D} 
        \Phi_{(O,s):p,q} \cdot [W]^{(O,s)}_{p,q} \nonumber\\
    &\quad\quad\quad 
    + \sum\limits_{p=1}^{D} \sum\limits_{q=1}^{D_A} 
        \Phi_{(A):p,q} \cdot [W]^{(A)}_{p,q} 
    + \sum\limits_{p=1}^{D_A} \sum\limits_{q=1}^{D} 
        \Phi_{(B):p,q} \cdot [W]^{(B)}_{p,q} \nonumber\\
    &\quad\quad\quad\quad 
    + \sum\limits_{q=1}^{D_A} 
        \Phi_{(A):q} \cdot [b]^{(A)}_q 
    + \sum\limits_{q=1}^{D} 
        \Phi_{(B):q} \cdot [b]^{(B)}_q
    + \Phi_1,
\end{align}
where the coefficients $\Phi_{-}$s are matrices of size $D' \times 1$.

In the following, we will determine constraints on the coefficients such that $I$ will be $\mathcal{G}_{\mathcal{U}}$-invariant under these constraints.


\subsection{Compute $I(gU)$}

By using the same arguments in the computation of $E(gU)$, we have
\begin{align}
    I(gW)
    &= \sum\limits_{s=1}^h \sum\limits_{p=1}^{D} \sum\limits_{q=1}^{D} 
        \Phi_{(QK,\tau^{-1}(s)):p,q} [WW]^{{(QK,s)}}_{p,q} \nonumber\\
    &+ \sum\limits_{s=1}^h \sum\limits_{p=1}^{D} \sum\limits_{q=1}^{D} 
        \Phi_{(VO,\tau^{-1}(s)):p,\pi_O^{-1}(q)}[WW]^{(VO,s)}_{p,q} \nonumber\\
    &\quad + \sum\limits_{s=1}^h \sum\limits_{p=1}^{D} \sum\limits_{q=1}^{D_k} 
        \Phi_{(Q,\tau^{-1}(s)):p,q} \left[[W]^{{(Q,s)}} \cdot \left(M^{(s)} \right)^{\top}\right]_{p,q} \nonumber\\
    &\quad + \sum\limits_{s=1}^h \sum\limits_{p=1}^{D} \sum\limits_{q=1}^{D_k} 
        \Phi_{(K,\tau^{-1}(s)):p,q}\left[[W]^{(K,s)} \cdot \left( M^{(s)} \right)^{-1}\right]_{p,q} \nonumber\\
    &\quad\quad + \sum\limits_{s=1}^h \sum\limits_{p=1}^{D} \sum\limits_{q=1}^{D_v} 
        \Phi_{(V,\tau^{-1}(s)):p,q} \left[[W]^{{(V,s)}} \cdot N^{(s)}\right]_{p,q} \nonumber\\
    &\quad\quad + \sum\limits_{s=1}^h \sum\limits_{p=1}^{D_v} \sum\limits_{q=1}^{D} 
        \Phi_{(O,\tau^{-1}(s)):p,\pi_O^{-1}(q)} \left[ \left(N^{(s)}\right)^{-1} \cdot [W]^{(O,s)} \right]_{p,q} \nonumber\\
    &\quad\quad\quad 
    + \sum\limits_{p=1}^{D} \sum\limits_{q=1}^{D_A} 
        \Phi_{(A):\pi_O^{-1}(p),\pi_A^{-1}(q)} [W]^{(A)}_{p,q} 
    + \sum\limits_{p=1}^{D_A} \sum\limits_{q=1}^{D} 
        \Phi_{(B):\pi_A^{-1}(p),q} [W]^{(B)}_{p,q} \nonumber\\
    &\quad\quad\quad\quad 
    + \sum\limits_{q=1}^{D_A} 
        \Phi_{(A):\pi_A^{-1}(q)} [b]^{(A)}_{q} 
    + \sum\limits_{q=1}^{D} 
        \Phi_{(B):q} [b]^{(B)}_q
    + \Phi.
\end{align}

\subsection{Compare $I(gU)$ and $I(U)$}

According to Lemma~\ref{lem:E=0}, in order to have $I(gU)=I(U)$, the corresponding coefficients of $I(gU)$ and $I(U)$ must be equal.
As a consequence, we have

\begin{align*}
    \Phi_{(QK,\tau^{-1}(s)):p,q}
    &=  \Phi_{(QK,s):p,q}\\
    \Phi_{(VO,\tau^{-1}(s)):p,\pi_O^{-1}(q)}
    &=  \Phi_{(VO,s):p,q}\\
    \Phi_{(Q,\tau^{-1}(s)):p,q} 
    &=0\\
    \Phi_{(K,\tau^{-1}(s)):p,q}
    &=0\\
    \Phi_{(V,\tau^{-1}(s)):p,q}
    &=0\\
    \Phi_{(O,\tau^{-1}(s)):p,\pi_O^{-1}(q)}
    &=0\\
    \Phi_{(A):\pi_O^{-1}(p),\pi_A^{-1}(q)}
    &= \Phi_{(A):p,q}\\
    \Phi_{(B):\pi_A^{-1}(p),q}
    &= \Phi_{(B):p,q}\\
    \Phi_{(A):\pi_A^{-1}(q)}
    &= \Phi_{(A):q}\\
    \Phi_{(B):q}
    &= \Phi_{(B):q}\\
    \Phi_1
    &=  \Phi_1,
\end{align*}
or equivalently,
\begin{align*}
    \Phi_{(QK,s):p,q}
    &=  \Phi_{(QK,\tau(s)):p,q}\\
    \Phi_{(VO,s):p,q}
    &=  \Phi_{(VO,\tau(s)):p,\pi_O(q)}\\
    \Phi_{(Q,s):p,q} 
    &=0\\
    \Phi_{(K,s):p,q}
    &=0\\
    \Phi_{(V,s):p,q}
    &=0\\
    \Phi_{(O,s):p,q}
    &=0\\
    \Phi_{(A):p,q}
    &= \Phi_{(A):\pi_O(p),\pi_A(q)}\\
    \Phi_{(B):p,q}
    &= \Phi_{(B):\pi_A(p),q}\\
    \Phi_{(A):q}
    &= \Phi_{(A):\pi_A(q)}\\
    \Phi_{(B):q}
    &= \Phi_{(B):q}\\
    \Phi_1
    &=  \Phi_1.
\end{align*}

\subsection{Final formula for the invariant polynomial layer}

We summarize the above computation here for implementation.

\begin{theorem}\label{thm:invariant_layer}
    The map $I \colon \mathcal{U} \to \mathbb{R}^{D'}$ defined by
    \begin{align}\label{eq:IU-last}
        I(U) &= \sum\limits_{s=1}^h \sum\limits_{p=1}^{D} \sum\limits_{q=1}^{D} 
        \Phi_{(QK,s):p,q} \cdot [WW]^{{(QK,s)}}_{p,q} \nonumber\\
    &+ \sum\limits_{s=1}^h \sum\limits_{p=1}^{D} \sum\limits_{q=1}^{D} 
        \Phi_{(VO,s):p,q} \cdot [WW]^{(VO,s)}_{p,q} \nonumber\\
    &\quad\quad\quad 
    + \sum\limits_{p=1}^{D} \sum\limits_{q=1}^{D_A} 
        \Phi_{(A):p,q} \cdot [W]^{(A)}_{p,q} 
    + \sum\limits_{p=1}^{D_A} \sum\limits_{q=1}^{D} 
        \Phi_{(B):p,q} \cdot [W]^{(B)}_{p,q} \nonumber\\
    &\quad\quad\quad\quad 
    + \sum\limits_{q=1}^{D_A} 
        \Phi_{(A):q} \cdot [b]^{(A)}_q 
    + \sum\limits_{q=1}^{D} 
        \Phi_{(B):q} \cdot [b]^{(B)}_q
    + \Phi_1,
    \end{align}
    with the constraints
    \begin{align*}
        \Phi_{(QK,s):p,q}
    &=  \Phi_{(QK,\tau(s)):p,q}\\
    \Phi_{(VO,s):p,q}
    &=  \Phi_{(VO,\tau(s)):p,\pi_O(q)}\\
    %
    %
    \Phi_{(A):p,q}
    &= \Phi_{(A):\pi_O(p),\pi_A(q)}\\
    \Phi_{(B):p,q}
    &= \Phi_{(B):\pi_A(p),q}\\
    \Phi_{(A):q}
    &= \Phi_{(A):\pi_A(q)}\\
    \Phi_{(B):q}
    &= \Phi_{(B):q}\\
    \Phi_1
    &=  \Phi_1.
    \end{align*}
    is $\mathcal{G}_{\mathcal{U}}$-equivariant.

\end{theorem}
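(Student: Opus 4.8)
The plan is to follow the parameter-sharing strategy already used for the equivariant layer. I would begin with the general ansatz $I(U)$ in Equation~(\ref{eq:I(U)}), whose coefficient matrices $\Phi_-$ are to be constrained, and regard the entries of $U$ as formal indeterminates over $\mathbb{R}$. The family consisting of the entries of $[W]^{(Q,s)},[W]^{(K,s)},[W]^{(V,s)},[W]^{(O,s)},[W]^{(A)},[W]^{(B)},[b]^{(A)},[b]^{(B)}$ together with the entries of the quadratic matrices $[WW]^{(QK,s)}$ and $[WW]^{(VO,s)}$ is linearly independent over $\mathbb{R}$ (the fact underlying Lemma~\ref{lem:E=0}), so the requirement $I(gU)=I(U)$ for all $U$ reduces, for each fixed $g\in\mathcal{G}_{\mathcal{U}}$, to equating the coefficient of each of these monomials on the two sides.

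Next I would compute $I(gU)$ explicitly using Proposition~\ref{prop:entries_under_group_action}. The quadratic entries transform without any $\operatorname{GL}$-factors: $[gWgW]^{(QK,s)}_{p,q}=[WW]^{(QK,\tau(s))}_{p,q}$ and $[gWgW]^{(VO,s)}_{p,q}=[WW]^{(VO,\tau(s))}_{p,\pi_O(q)}$ by Remark~\ref{remark:addition_terms}; the linear $Q,K,V,O$ entries pick up the invertible matrices $M^{(\tau(s))}$ or $N^{(\tau(s))}$ (and the permutation $\pi_O$ for the $O$-block); the MLP entries only absorb the permutations $\pi_O,\pi_A$. After the substitution I reindex the outer sums ($s\mapsto\tau^{-1}(s)$, and the index variables through $\pi_O^{-1},\pi_A^{-1}$) so that each monomial of $U$ appears with an explicit coefficient, exactly as in the displayed formula for $I(gW)$ in the preceding subsection.

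Then I would solve the resulting linear system. Comparing the coefficients of $[W]^{(Q,s)}_{p,q}$ (and symmetrically of $[W]^{(K,s)}_{p,q}$, $[W]^{(V,s)}_{p,q}$, $[W]^{(O,s)}_{p,q}$) produces an identity of the form $\sum_{q'}\Phi_{(Q,\tau^{-1}(s)):p,q'}M^{(s)}_{q',q}=\Phi_{(Q,s):p,q}$ valid for every $M^{(s)}\in\operatorname{GL}_{D_k}(\mathbb{R})$; the left side is linear in $M^{(s)}$ while the right side is a constant, so Lemma~\ref{lem:f1f2=0} (through Corollary~\ref{cor:f1f2=0}) forces both sides to vanish, giving $\Phi_{(Q,\cdot)}=\Phi_{(K,\cdot)}=\Phi_{(V,\cdot)}=\Phi_{(O,\cdot)}=0$. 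For the remaining monomials — $[WW]^{(QK,s)}$, $[WW]^{(VO,s)}$, $[W]^{(A)}$, $[W]^{(B)}$, $[b]^{(A)}$, $[b]^{(B)}$ and the constant — no invertible matrices occur, so direct coefficient comparison (again Lemma~\ref{lem:E=0}) yields precisely the sharing constraints in the statement, namely $\Phi_{(QK,s):p,q}=\Phi_{(QK,\tau(s)):p,q}$, $\Phi_{(VO,s):p,q}=\Phi_{(VO,\tau(s)):p,\pi_O(q)}$, $\Phi_{(A):p,q}=\Phi_{(A):\pi_O(p),\pi_A(q)}$, $\Phi_{(B):p,q}=\Phi_{(B):\pi_A(p),q}$, $\Phi_{(A):q}=\Phi_{(A):\pi_A(q)}$, and no condition on $\Phi_{(B):q}$ or $\Phi_1$. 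This gives necessity; for sufficiency I would substitute these constraints back into $I(gU)$ and verify, index by index, that the reindexing of the sums absorbs $\tau,\pi_O,\pi_A$ so that $I(gU)=I(U)$ identically.

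I expect the main obstacle to be the vanishing of the $Q,K,V,O$ linear coefficients: this is the only place where the continuous part $\operatorname{GL}_{D_k}(\mathbb{R})^h\times\operatorname{GL}_{D_v}(\mathbb{R})^h$ of the symmetry group really bites, and it requires the nontriviality argument of Lemma~\ref{lem:f1f2=0} to rule out a nonzero linear combination of the entries of $M^{(s)}$ and $(M^{(s)})^{-1}$ being constant. A secondary bookkeeping point is to be sure the degree-one $W$-monomials and the degree-two $[WW]$-monomials are genuinely linearly independent so that the coefficient comparison is legitimate; this is clear since they lie in different graded pieces and, within each piece, are indexed by distinct patterns. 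Everything else is routine reindexing.
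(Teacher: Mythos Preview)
Your proposal is correct and follows essentially the same parameter-sharing route as the paper: start from the general ansatz in Equation~(\ref{eq:I(U)}), expand $I(gU)$ via Proposition~\ref{prop:entries_under_group_action} and Remark~\ref{remark:addition_terms}, reindex, and compare coefficients. In fact you are more explicit than the paper's invariant section, which simply writes down the constraint list citing Lemma~\ref{lem:E=0}; your observation that the vanishing of $\Phi_{(Q,\cdot)},\Phi_{(K,\cdot)},\Phi_{(V,\cdot)},\Phi_{(O,\cdot)}$ genuinely requires Lemma~\ref{lem:f1f2=0}/Corollary~\ref{cor:f1f2=0} (since those terms carry the $\operatorname{GL}$-factors) is exactly the argument the paper uses in the equivariant derivation and tacitly imports here.
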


\section{Computation Complexity of Equivariant and Invariant Layers}
The computational complexity of the Transformer-NFN is derived from its invariant and equivariant layers as outlined in their pseudocode (Appendix~\ref{appendix:pseudocode_equivariant},~\ref{appendix:pseudocode_invariant}):

\begin{itemize}
    \item \textbf{Equivariant Layer Complexity:} $\mathcal{O}(d \cdot e \cdot h \cdot D^2 \cdot \max(D_q, D_k, D_v))$
    \item \textbf{Invariant Layer Complexity:} $\mathcal{O}(d \cdot e \cdot D' \cdot D \cdot \max(D \cdot h, D_A))$
\end{itemize}

Where the parameters follow our notation in Table~\ref{appendix:dimensions}.

The Transformer-NFN implementation leverages optimized tensor contraction operations (e.g., \texttt{einsum}), enabling efficient and highly parallelizable computations on modern GPUs. This ensures computational practicality while delivering significant performance improvements.

\section{Additional Dataset Details}\label{appendix:dataset}
\begin{figure}[ht]
    \centering
    \includegraphics[width=1\linewidth]{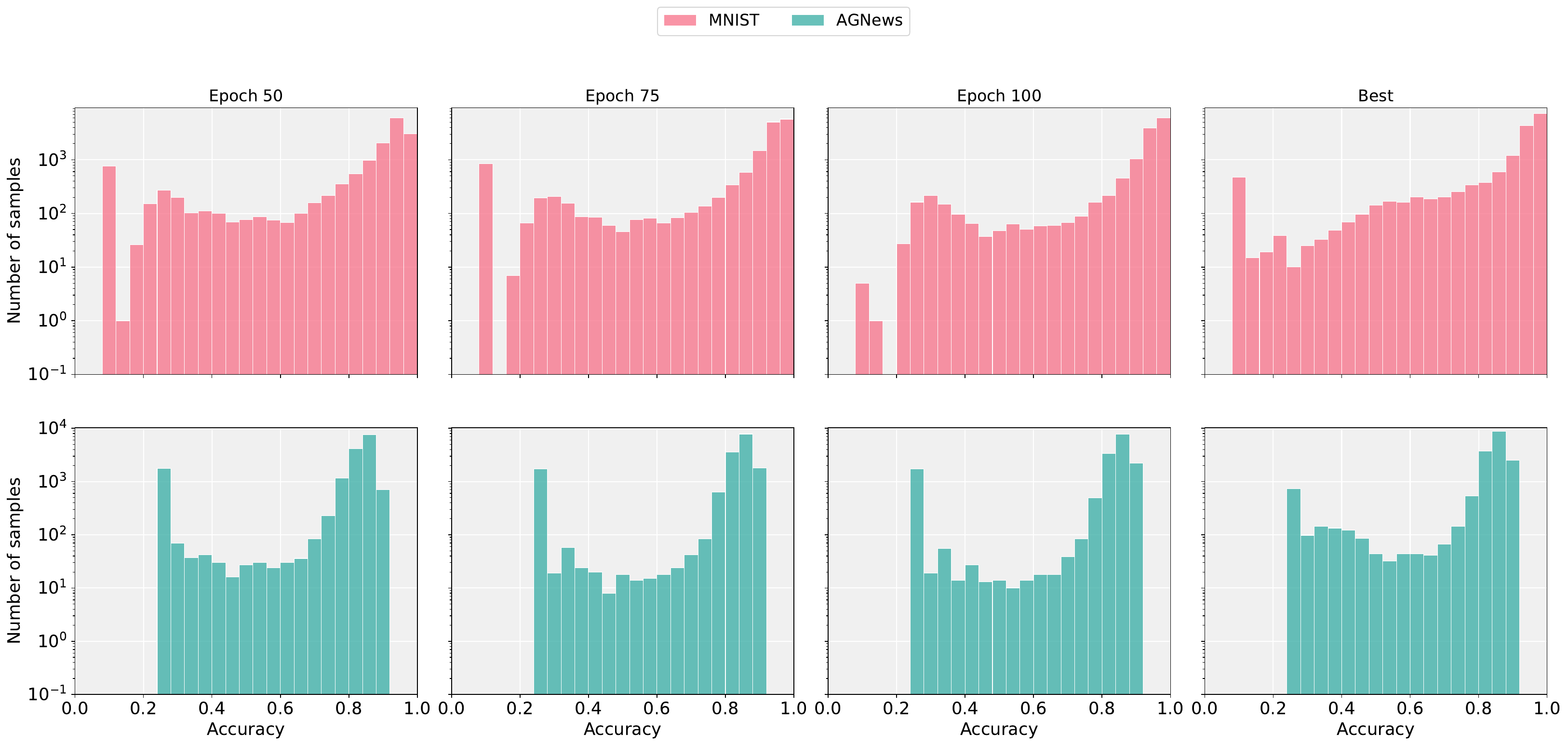}
    \caption{Accuracy histogram of MNIST task and AGNews task in the Small Transformer Zoo. The number of samples is showed in log scale for improved visibility.}
    \label{fig:dataset_acc_hist}
\end{figure}
To create a wide range of transformer model, we opt to vary six hyperparameters in our experiments: train fraction, optimizer (SGD, SGDm, Adam, or RMSprop), learning rate, L2 regularization coefficient, weight initialization standard deviation, and dropout probability. The train fraction determines the proportion of the original training dataset used, while the optimizer dictates the algorithm for parameter updates. Learning rate, L2 regularization, and weight initialization standard deviation control various aspects of the training process, and dropout probability helps in preventing overfitting.

We select a range of typical values for each hyperparameter independently, then generate all possible combinations to form our set of hyperparameter configurations. During our preliminary experiments, we observed that the optimal range of hyperparameters varies significantly between optimizer types. Consequently, we divided our settings into two categories: Adam-RMSprop and SGD-SGDm. Table~\ref{table:zoo_hyperparameters_configurations} provides a detailed overview of our hyperparameter configurations. Overall, there are $8000$ configurations for each category, resulting in $16000$ configurations in total. These configurations are consistently applied across both tasks to ensure comparability. All models are trained for $100$ epochs, with checkpoints and accuracy measurements recorded at epochs $50$, $75$, $100$, and at the epoch with the best accuracy. During the process, we eliminate any runs that crash.

\begin{table}[htbp]
\centering
\caption{Hyperparameter configurations of the Small Transformer Zoo Dataset}
\label{table:zoo_hyperparameters_configurations}
\begin{tabular}{lll}
\toprule 
Hyperparameter & SGD-SGDm & Adam-RMSprop \\
\midrule 
Train Fraction & [1.0, 0.9, 0.8, 0.7] & [1.0, 0.9, 0.8, 0.7] \\
Dropout & [0.2, 0.15, 0.1, 0.05, 0] & [0.2, 0.15, 0.1, 0.05, 0] \\
Learning Rate & \begin{tabular}[c]{@{}l@{}}[1e-3, 3e-3, 5e-3, 7e-3,\\ 1e-2, 3e-2, 5e-2, 7e-2]\end{tabular} & \begin{tabular}[c]{@{}l@{}}[3e-4, 5e-4, 1e-3, 3e-3,\\ 5e-3, 1e-2, 3e-2, 5e-2]\end{tabular} \\
Weight Init Standard Deviation & [0.1, 0.15, 0.2, 0.25, 0.3] & [0.1, 0.2, 0.3, 0.4, 0.5] \\
L2 Regularization & [1e-8, 1e-7, 1e-6, 1e-4, 1e-2] & [1e-8, 1e-7, 1e-6, 1e-4, 1e-2] \\
\bottomrule
\end{tabular}
\end{table}

\paragraph{MNIST-Transformers.} The MNIST dataset \citep{mnist} is a widely-used benchmark in computer vision, consisting of $28\times28$ pixel grayscale images of handwritten digits ($0$-$9$). For this vision task, the model objective is to perform digit classification. The embedding component first applies a 2D convolution to encode image patches and then adds a fixed positional encoding to provide spatial information. The encoder, comprising two stacked transformer blocks, processes these embeddings to capture complex relationships between different parts of the image. The classifier applies global average pooling to the encoder's output and then passes it through two fully connected layers with a ReLU activation in between, finally outputting probabilities for ten classes corresponding to the digits $0$-$9$. Using our hyperparameter configurations, we generate a total of $62756$ model samples for the MNIST task, including $15689$ checkpoints at epochs $50$, $75$, $100$, and the best-performing epoch, each trained with a distinct combination of hyperparameters. The accuracy distribution for MNIST in Figure~\ref{fig:dataset_acc_hist} displays a strong concentration in the $80\%$ to $100\%$ range while remaining models distribute almost uniformly across lower accuracies, with a slight increase around $10\%$.

\paragraph{AGNews-Transformers.} The AG's News Topic Classification Dataset \citep{zhang2015characteragnews} is a collection of news articles from the AG's corpus of news articles on the web, categorized into four classes: World, Sports, Business, and Sci/Tech. We take the description of the articles input and train models to predict its corresponding topic. Our transformer-based model is adapted for this task as follows: The embedding component uses a pre-trained Word2Vec model to map each token to an embedding vector. These embeddings are combined with fixed positional encodings to retain sequential information. The encoder, consisting of two stacked transformer blocks, processes these embeddings to capture contextual relationships within the text. The classifier applies global average pooling to the encoder's output, then passes it through two fully connected layers with a ReLU activation in between, finally outputting probabilities for the four categories. Our experiments on this task yield a diverse set of $63796$ model checkpoints, derived from $15949$ unique model configurations. These checkpoints are collected at four key points during training: epochs $50$, $75$, $100$, and at the epoch of peak performance. The accuracy distribution for AGNews in Figure~\ref{fig:dataset_acc_hist} shows a notable concentration in the $50\%$ to $90\%$ range, with a peak around $80\%$ while also exhibiting a smaller cluster of models around $25\%$.

\section{Additional Experimental Details}
\label{appendix:experimental-details}
\begin{table}[t]
    \caption{Ablation study on varying the hidden dimension and number of layers in Transformer-NFN, trained on the AGNews-Transformers dataset. Dimensions of all equivariant layers are indicated inside square brackets.}
    \medskip
    \label{tab:ablation-hidden-dim}
    \centering
    \renewcommand*{\arraystretch}{1}
    \begin{adjustbox}{width=1.0\textwidth}
    \begin{tabular}{lcccccccc}
        \toprule
        Transformer-NFN & $\multirow{2}{*}{[3]}$ & $\multirow{2}{*}{[5]}$ & $\multirow{2}{*}{[10]}$ & $\multirow{2}{*}{[15]}$ & $\multirow{2}{*}{[3, 3]}$ & $\multirow{2}{*}{[5, 5]}$ & $\multirow{2}{*}{[10, 10]}$ & $\multirow{2}{*}{[15, 15]}$ \\
        dimensions & & & & & & & & \\
        \midrule
        Kendall's $\tau$ & $0.907 \pm 0.002$ & $0.909 \pm 0.002$ &  $0.909 \pm 0.001$ & $0.913 \pm 0.001$ & $0.905 \pm 0.003$ & $0.911 \pm 0.002$ & $0.913 \pm 0.001$ & $0.913 \pm 0.002$ \\
        Num. params & $0.491$M & $0.840$M & $1.793$M & $2.857$M & $1.901$M & $4.757$M & $17.457$M & $38.101$M  \\
        \bottomrule
    \end{tabular}
    \end{adjustbox}
\end{table}
\subsection{General details}\label{appendix:training-details}
\textbf{Training details} The models were trained for a total of 50 epochs, using a batch size of 16. We employed the Adam optimizer with a maximum learning rate of $\num{e-3}$. A linear warmup strategy was applied to the learning rate, spanning the initial 10 epochs for gradual warmup. We utilize Binary Cross Entropy for the loss function. 

\textbf{Number of parameters} 
Table~\ref{tab:params} summarizes the total parameter count for all models. The architectural details and hyperparameter configurations for each model are provided in ~\ref{appendix:transformer-nfn} and ~\ref{appendix:baselines}. Importantly, we optimized the baseline models to their best configurations, and further increasing the number of parameters would likely result in overfitting.

\begin{table}
\centering
\caption{Number of parameters for all models}
\label{tab:params}
\begin{tabular}{lcc}
    \toprule
    \centering
    Model &  MNIST & AGNews \\
    \midrule
    Transformer-NFN & 1.812M & 1.804M \\
    MLP & 0.933M &  0.896M\\
    STATNN & 0.203M & 0.168M\\
    \bottomrule
\end{tabular}
\end{table}

\subsection{Archiecture and hyperparameters of Transformer-NFN} \label{appendix:transformer-nfn}
The Transformer-NFN model is composed of three main components designed to handle the input weights of a transformer network. For the embedding and classifier components, both of which are MLP-based, we employ a standard MLP with ReLU activation to process individual component. The transformer block itself is handled by an invariant architecture, incorporating several equivariant polynomial layers of Transformer-NFN. These layers utilize ReLU activation, which specifically operates on the two MLP components within the transformer block. Following this, the output is passed through an invariant polynomial layer of Transformer-NFN. The outputs of each of these components are represented as vectors, which are concatenated and passed through a final MLP head with Sigmoid activation for prediction.

In our experimental setup, the embedding component is modeled using a single-layer MLP with 10 hidden neurons, while the classifier component is a two-layer MLP, each layer containing 10 hidden neurons. For the invariant architecture of Transformer-NFN, we apply an equivariant polynomial layer of Transformer-NFN with 10 hidden channels to process the weights, followed by an invariant polynomial layer of Transformer-NFN and an MLP that outputs a 10-dimensional vector. The resulting vectors from these components are concatenated and passed through a single classification head to generate the final prediction.

\subsection{Architecture and hyperparameters for other baselines}\label{appendix:baselines}
Here we describe the architecture of all baselines:
\begin{itemize}
    \item \textbf{MLP} In the MLP model, the weights of all components are first flattened and processed through a separate MLP for each component. Specifically, the MLP handling the transformer block and embedding components consists of a single layer with 50 hidden neurons, while the classifier component is modeled using a two-layer MLP, each layer containing 50 neurons. The outputs from all components are concatenated and passed through a final MLP to produce the prediction. 
    \item \textbf{STATNN} \citep{unterthiner2020predicting} For the STATNN model, we adapt the original approach to work with the transformer block. In this case, we compute statistical features from the weights of the query, key, value, output, as well as the weights and biases of the two linear layers. These features are concatenated and passed through a single-layer MLP with 256 hidden neurons. The classifier component uses the original STATNN model to extract features, which are then processed through an MLP with 256 hidden neurons. For the embedding component, a single-layer MLP with 64 hidden neurons is employed. The outputs from all components are concatenated and passed through a single-layer MLP for the final prediction.
    \item \textbf{XGBoost} \citep{Chen_2016xgboost}, \textbf{LightGBM} \citep{ke2017lightgbm}, \textbf{Random Forest} \citep{breiman2001random}: We flattened the weights of all components and directly employed popular gradient boosting libraries for regression. The hyperparameters for all three tree-based models were set to a maximum depth of 10, a minimum child weight of 50, and a maximum of 256 leaves.
\end{itemize}

\begin{figure}
    \centering
    \includegraphics[width=0.8\linewidth]{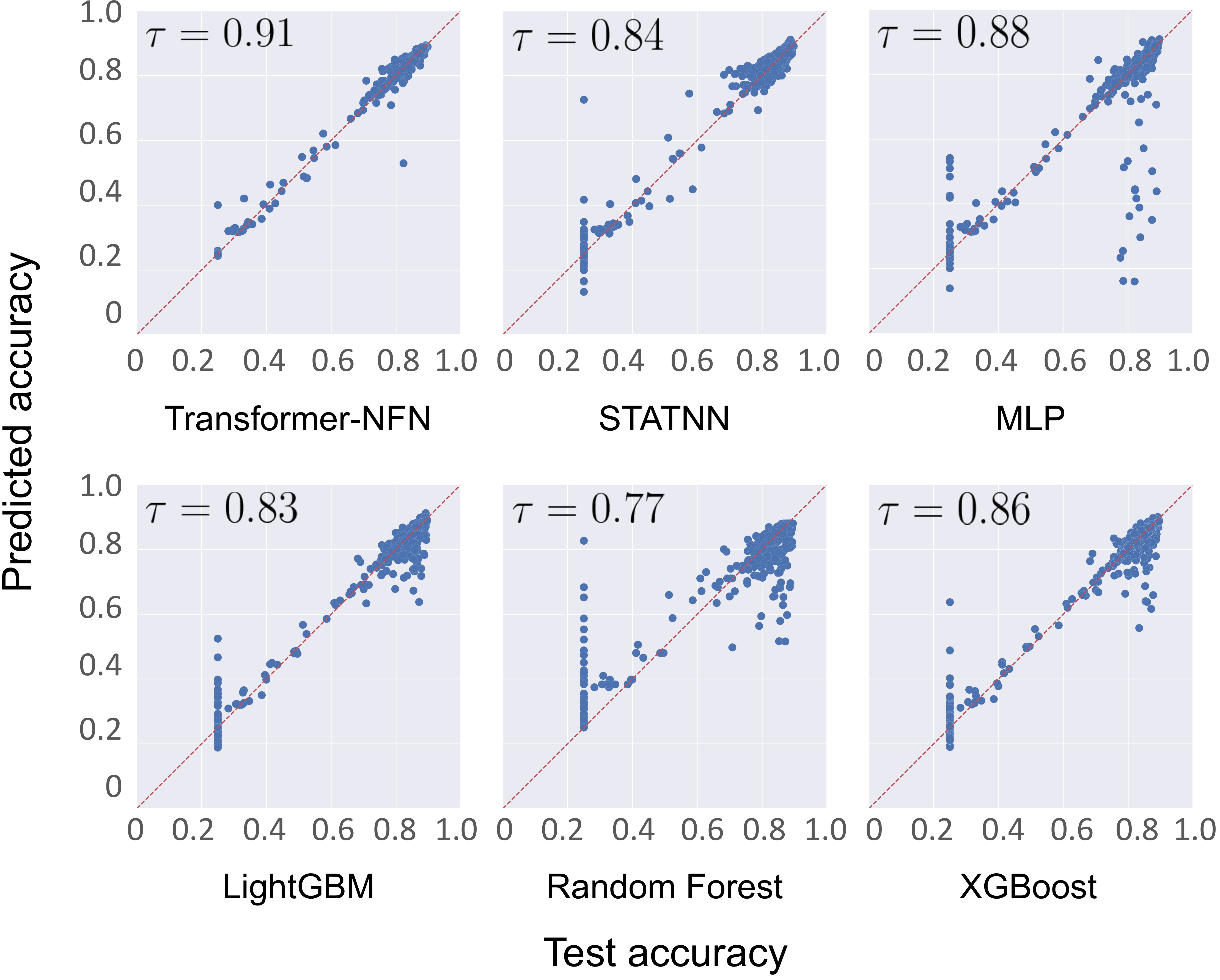}
    \caption{Visualization of all models on test set of AGNews-Transformers dataset.}
    \label{fig:vis_agnews}
\end{figure}

\subsection{Experiment on augmented AGNews-Transformers dataset}

\paragraph{Experiment Setup} In this experiment, we evaluate the performance of Transformer-NFN on the AGNews-Transformers dataset augmented using the group action $\mathcal{G}_\mathcal{U}$. We perform a 2-fold augmentation for both the train and test sets. The original weights are retained, and additional weights are constructed by applying permutations and scaling transformations to transformer modules. The elements in $M$ and $N$ (see Section~\ref{subsec:group_action}) are uniformly sampled across $[-1, 1]$, $[-10, 10]$, and $[-100, 100]$.

\paragraph{Results} The results for all models are presented in Table~\ref{tab:augmented_agnews}. The findings demonstrate that Transformer-NFN maintains stable Kendall’s $\tau$ across different ranges of scale operators. Notably, as the weights are augmented, the performance of Transformer-NFN improves, whereas the performance of other baseline methods declines significantly. This performance disparity results in a widening gap between Transformer-NFN and the second-best model, increasing from 0.031 to 0.082.

\begin{table}[h]
\centering
\caption{Performance measured by Kendall's $\tau$ of all models on augmented AGNews-Transformers dataset using the group action $\mathcal{G}_\mathcal{U}$. Uncertainties indicate standard error over 5 runs.}
\label{tab:augmented_agnews}
\begin{tabular}{lcccc}
    \toprule
    \centering
     &  Original & $[-1, 1]$ & $[-10, 10]$ & $[-100,100]$  \\
    \midrule
    XGBoost & $0.859 \pm 0.001$ & $0.799 \pm 0.003$ & $0.800 \pm 0.001$ & $0.802 \pm 0.003$ \\
    LightGBM & $0.835 \pm 0.001$ &  $0.785 \pm 0.003$ & $0.784 \pm 0.003$ & $0.786 \pm 0.004$\\
    Random Forest & $0.774 \pm 0.003$ & $0.714 \pm 0.001$ & $0.715 \pm 0.002$ & $0.716 \pm 0.002$\\
    MLP & $\underline{0.879 \pm 0.006}$ & $\underline{0.830 \pm 0.002}$ & $\underline{0.833 \pm 0.002}$ & $\underline{0.833 \pm 0.005}$ \\
    STATNN & $0.841 \pm 0.002$ & $0.793 \pm 0.003$ & $0.791 \pm 0.003$ &  $0.771 \pm 0.013$ \\
    Transformer-NFN & $\mathbf{0.910 \pm 0.001}$ & $\mathbf{0.912 \pm 0.001}$ & $\mathbf{0.912 \pm 0.002}$ & $\mathbf{0.913 \pm 0.001}$ \\
    \midrule
    Gap & $0.031$ & $0.082$ & $0.079$ & $0.080$ \\
    \bottomrule
\end{tabular}
\end{table}

The general decline observed in baseline models highlights their lack of symmetry. In contrast, Transformer-NFN's equivariance to $\mathcal{G}_\mathcal{U}$ ensures stability and even slight performance improvements under augmentation. This underscores the importance of defining the maximal symmetric group, to which Transformer-NFN is equivariant, in overcoming the limitations of baseline methods.

\section{Implementation of Equivariant and Invariant Layers}

We present the multi-channel implementations of the $\mathcal{G}_{\mathcal{U}}$-equivariant map $E \colon \mathcal{U}^d \to \mathcal{U}^e$ and the $\mathcal{G}_{\mathcal{U}}$-invariant map $I \colon \mathcal{U}^d \to \mathbb{R}^{e \times D'}$. We summarize the Equivariant and Invariant Layers with the bullet notation $\bullet$ and adopt einops-like pseudocode to maintain consistency and standardization in transformer weight space manipulations.

To facilitate understanding of the implementation, we summarize the key dimensions involved in Table~\ref{appendix:dimensions} and define the shapes of the input terms in Table~\ref{appendix:shapes}.

\begin{table}
\centering
\caption{Summary of key dimensions involved in the implementation}
\label{appendix:dimensions}
\begin{tabular}{ll}
    \toprule
    \textbf{Symbol} & \textbf{Description} \\
    \midrule
    $d$   & Number of input channels for the equivariant and invariant layer \\
    $e$   & Number of output channels for the equivariant and invariant layer \\
    $D$   & Embedding dimension of the input and output sequences of the transformer block \\
    $D_k = D_q$ & Embedding dimension for key and query vectors in the transformer block \\
    $D_v$ & Embedding dimension for value vectors in the transformer block \\
    $D_A$ & Embedding dimension for the linear projection step in the transformer block \\
    $h$   & Number of attention heads in the transformer block\\
    $b$   & Batch size \\
    $D'$  & Embedding dimension of the invariant layer's output \\
    \bottomrule
\end{tabular}
\end{table}

\begin{table}
\centering
\caption{Shapes of input terms used in the implementation}
\label{appendix:shapes}
\begin{tabular}{lc}
    \toprule
    \textbf{Term} & \textbf{Shape} \\
    \midrule
    $[W]^{(Q, i)}_{p, q}$   & $[b, d, h, D, D_q]$ \\
    $[W]^{(K, i)}_{p, q}$   & $[b, d, h, D, D_k]$ \\
    $[W]^{(V, i)}_{p, q}$   & $[b, d, h, D, D_v]$ \\
    $[W]^{(O, i)}_{p, q}$   & $[b, d, h, D_v, D]$ \\
    $[WW]^{(QK, i)}_{p, q}$ & $[b, d, h, D, D]$ \\
    $[WW]^{(VO, i)}_{p, q}$ & $[b, d, h, D, D]$ \\
    $[W]^{A}_{p, q}$        & $[b, d, D, D_A]$ \\
    $[b]^{A}_q$             & $[b, d, D_A]$ \\
    $[W]^{B}_{p, q}$        & $[b, d, D_A, D]$ \\
    $[b]^{B}_q$             & $[b, d, D]$ \\
    \bottomrule
\end{tabular}
\end{table}

\subsection{Summary of Equivariant and Invariant Layers}

We summarize our derived Equivariant and Invariant Layers from Appendix \ref{appendix:equivariant_layer} and Appendix \ref{appendix:invariant_layer}. We use the bullet notation $\bullet$ to simlify the notation, making it easier to implementation. Roughly speaking, the bullet stands for "the parameter is unchanged when the bullet varies across all possible index values".
\subsubsection{Equivariant Layers with bullet notation}

\allowdisplaybreaks
\begin{align*}
    [E(W)]^{(Q:i)}_{j,k} &= \Phi^{(Q:\bullet):j,\bullet}_{(Q:\bullet):p,\bullet} \cdot [W]^{(Q:i)}_{p,q},\\
    [E(W)]^{(K:i)}_{j,k} &= \Phi^{(K:\bullet):j,\bullet}_{(K:\bullet):p,\bullet} \cdot [W]^{(K:i)}_{p,q},\\
    [E(W)]^{(V:i)}_{j,k} &= \Phi^{(V:\bullet):j,\bullet}_{(V:\bullet):p,\bullet} \cdot [W]^{(V:i)}_{p,q},\\
    [E(W)]^{(O:i)}_{j,k} &=  \left(\Phi^{(O:\bullet):\bullet,\bullet}_{(O:\bullet): \bullet,\bullet}\right)_1 \cdot 
        \sum\limits_{k=1}^{D} [W]^{(O:i)}_{j,k}   + \left(\Phi^{(O:\bullet):\bullet,\bullet}_{(O:\bullet):\bullet,\bullet}\right)_2 \cdot [W]^{(O:i)}_{j,k},\\
    [E(W)]^{(A)}_{j,k}
    &= \sum\limits_{p=1}^{D} \sum\limits_{q=1}^{D} 
    \Phi^{(A):\bullet, \bullet}_{(QK,\bullet):p,q}  \left(\sum\limits_{s=1}^h [WW]^{{(QK,s )}}_{p,q} \right)\\
        &+ \sum\limits_{p=1}^{D} \sum\limits_{q=1}^{D} \left(\Phi^{(A):\bullet,\bullet}_{(VO,\bullet):p,\bullet}\right)_1 \left(\sum\limits_{s=1}^h [WW]^{(VO,s)}_{p,q} \right)\\
        &+ \sum\limits_{p=1}^{D} \left(\Phi^{(A):\bullet,\bullet}_{(VO,\bullet):p,\bullet}\right)_2 \left(\sum\limits_{s=1}^h [WW]^{(VO,s)}_{p,j} \right)\\
        & + \sum\limits_{p=1}^{D} \sum\limits_{q=1}^{D_A} \left(\Phi^{(A):\bullet, \bullet}_{(A):\bullet, \bullet}\right)_1 [W]^{(A)}_{p,q}
        + \sum\limits_{q=1}^{D_A} \left(\Phi^{(A):\bullet, \bullet}_{(A):\bullet, \bullet}\right)_2 [W]^{(A)}_{j,q}\\
        &+ \sum\limits_{p=1}^{D} \left(\Phi^{(A):\bullet, \bullet}_{(A):\bullet, \bullet}\right)_3 [W]^{(A)}_{p,k}
        + \left(\Phi^{(A):\bullet, \bullet}_{(A):\bullet, \bullet}\right)_4 [W]^{(A)}_{j,k} \notag\\
    &+ \sum\limits_{p=1}^{D_A} \sum\limits_{q=1}^{D} \left(\Phi^{(A):\bullet,\bullet}_{(B):\bullet,q}\right)_1 [W]^{(B)}_{p,q}
    + \sum\limits_{q=1}^{D} \left(\Phi^{(A):\bullet,\bullet}_{(B):\bullet,q}\right)_2 [W]^{(B)}_{k,q} \nonumber\\
    &+ \sum\limits_{q=1}^{D_A} \left(\Phi^{(A):\bullet,\bullet}_{(A):\bullet} \right )_1 [b]^{(A)}_q
    + \left(\Phi^{(A):\bullet,\bullet}_{(A):\bullet} \right)_2 [b]^{(A)}_k \notag \\
    &+ \sum\limits_{q=1}^{D} 
    \Phi^{(A):\bullet, \bullet}_{(B):q} [b]^{(B)}_q
    + \Phi^{(A):\bullet, \bullet},\\
    [E(W)]^{(B)}_{j,k} &=\sum\limits_{p=1}^{D} \sum\limits_{q=1}^{D} \sum\limits_{s=1}^h 
        \Phi^{(B):\bullet,k}_{(QK,\bullet):p,q} \left ( [WW]^{{(QK,s)}}_{p,q} \right) \\   
    &+\sum\limits_{p=1}^{D} \sum\limits_{q=1}^{D} \sum\limits_{s=1}^h \Phi^{(B):\bullet,k}_{(VO,\bullet):p,\bullet} \left([WW]^{(VO,s)}_{p,q} \right) \nonumber\\
    &+ \sum\limits_{p=1}^{D} \sum\limits_{q=1}^{D_A} 
        \left(\Phi^{(B):\bullet,k}_{(A):\bullet,\bullet}\right)_1 [W]^{(A)}_{p,q} + \sum\limits_{p=1}^{D} \left(\Phi^{(B):\bullet,k}_{(A):\bullet,\bullet}\right)_2 [W]^{(A)}_{p,j} \\
    &+ \sum\limits_{p=1}^{D_A} \sum\limits_{q=1}^{D}  \left(
        \Phi^{(B):\bullet,k}_{(B):\bullet,q} \right)_1 [W]^{(B)}_{p,q} + \sum\limits_{q=1}^{D} \left(
        \Phi^{(B):\bullet,k}_{(B):\bullet,q} \right)_2  [W]^{(B)}_{j,q} \\
    &+ \sum\limits_{q=1}^{D_A} 
        \left(\Phi^{(B):\bullet,k}_{(A):\bullet}\right)_1 [b]^{(A)}_q + \left(\Phi^{(B):\bullet,k}_{(A):\bullet}\right)_2 [b]^{(A)}_j  \\
    &+ \sum\limits_{q=1}^{D} 
        \Phi^{(B):\bullet,k}_{(B):q} [b]^{(B)}_q
    + \Phi^{(B):\bullet,k},\\
    [E(b)]^{(A)}_{k} 
    &= \sum\limits_{p=1}^{D} \sum\limits_{q=1}^{D} 
        \Phi^{(A):\bullet}_{(QK,\bullet):p,q} \left(\sum\limits_{s=1}^h[WW]^{{(QK,s)}}_{p,q}\right) \notag\\
    &+ \sum\limits_{p=1}^{D} \sum\limits_{q=1}^{D} \Phi^{(A):\bullet}_{(VO,\bullet):p,\bullet}\left(\sum\limits_{s=1}^h[WW]^{(VO,s)}_{p,q} \right) \nonumber\\
    &+ \sum\limits_{p=1}^{D} \sum\limits_{q=1}^{D_A} \left(\phi^{(A):\bullet}_{(A):\bullet, \bullet} \right)_1 [W]^{(A)}_{p,q} 
    + \sum\limits_{p=1}^{D} \left(\phi^{(A):\bullet}_{(A):\bullet, \bullet}\right)_{2} [W]^{(A)}_{p,k} \notag \\
    &+ \sum\limits_{p=1}^{D_A} \sum\limits_{q=1}^{D} \left(\Phi^{(A):\bullet}_{(B):\bullet,q} \right)_1 [W]^{(B)}_{p,q}
    + \sum\limits_{q=1}^{D} \left(\Phi^{(A):\bullet}_{(B):\bullet,q} \right)_2 [W]^{(B)}_{k,q} \nonumber\\
    &+ \sum\limits_{q=1}^{D_A} \left(\Phi^{(A):\bullet}_{(A):\bullet}\right)_1 [b]^{(A)}_q 
    + \left(\Phi^{(A):\bullet}_{(A):\bullet} \right)_2 [b]^{(A)}_k \notag \\
    &+ \sum\limits_{q=1}^{D} 
        \Phi^{(A):\bullet}_{(B):q} [b]^{(B)}_q + \Phi^{(A):\bullet},\\
    [E(b)]^{(B)}_{k}
    &= \sum\limits_{p=1}^{D} \sum\limits_{q=1}^{D} \sum\limits_{s=1}^h \Phi^{(B):k}_{(QK,\bullet):p,q} \left( [WW]^{{(QK,s)}}_{p,q} \right) \notag \\
    &+ \sum\limits_{p=1}^{D} \sum\limits_{q=1}^{D} \sum\limits_{s=1}^h 
    \Phi^{(B):k}_{(VO,\bullet):p,\bullet}\left([WW]^{(VO,s)}_{p,q} \right) \nonumber\\
    & + \sum\limits_{p=1}^{D} \sum\limits_{q=1}^{D_A} 
        \Phi^{(B):k}_{(A):\bullet, \bullet} [W]^{(A)}_{p,q} 
    + \sum\limits_{p=1}^{D_A} \sum\limits_{q=1}^{D} 
        \Phi^{(B):k}_{(B):\bullet,q} [W]^{(B)}_{p,q} \nonumber\\
    &+ \sum\limits_{q=1}^{D_A} 
        \Phi^{(B):k}_{(A):\bullet} [b]^{(A)}_q + \sum\limits_{q=1}^{D} 
        \Phi^{(B):k}_{(B):q} [b]^{(B)}_q + \Phi^{(B):k}.
\end{align*}

\subsubsection{Invariant Layers with bullet notation}

\begin{align*}
    I(U) &= \sum\limits_{p=1}^{D} \sum\limits_{q=1}^{D} 
        \Phi_{(QK,s):p,q} \cdot \left( \sum\limits_{s=1}^h [WW]^{{(QK,s)}}_{p,q} \right ) \notag \\
    &+ \sum\limits_{s=1}^h \sum\limits_{p=1}^{D} \sum\limits_{q=1}^{D} 
        \Phi_{(VO,s):p,\bullet} \cdot \left( \sum\limits_{s=1}^h [WW]^{(VO,s)}_{p,q} \right )\nonumber\\
    & + \sum\limits_{p=1}^{D} \sum\limits_{q=1}^{D_A} 
        \Phi_{(A):\bullet, \bullet} \cdot [W]^{(A)}_{p,q} + \sum\limits_{p=1}^{D_A} \sum\limits_{q=1}^{D} 
        \Phi_{(B):\bullet,q} \cdot [W]^{(B)}_{p,q} \nonumber\\
    &+ \sum\limits_{q=1}^{D_A}    \Phi_{(A):\bullet} \cdot [b]^{(A)}_q + \sum\limits_{q=1}^{D} \Phi_{(B):q} \cdot [b]^{(B)}_q+ \Phi_1
\end{align*}

\subsection{Equivariant Layers Pseudocode}
\label{appendix:pseudocode_equivariant}

\subsubsection{$[E(W)]^{(Q:i)}_{j,k}$ Pseudocode} 

From the formula:
\begin{align*}
    [E(W)]^{(Q:i)}_{j,k} &= \Phi^{(Q:\bullet):j,\bullet}_{(Q:\bullet):p,\bullet} \cdot [W]^{(Q:i)}_{p,q}\\
\end{align*}
We define the pseudocode for each term:
\begin{align*}
    &\text{For} \ \Phi^{(Q:\bullet):j,\bullet}_{(Q:\bullet):p,\bullet} \cdot [W]^{(Q:i)}_{p,q}, \text{ with } [W]^{(Q:i)}_{p,q} \ \text{of shape } [b, d, h, D, D_q] \ \text{and} \ \Phi^{(Q:\bullet):j,\bullet}_{(Q:\bullet):p,\bullet} \ \text{of shape } [e, d, D, D] \\
    &\quad \text{Corresponding pseudocode:} \ \texttt{einsum}(bdhpk, edjp \rightarrow behjk)
\end{align*}

\subsubsection{$[E(W)]^{(K:i)}_{j,k}$ Pseudocode}

From the formula:
\begin{align*}
    [E(W)]^{(K:i)}_{j,k} &= \Phi^{(K:\bullet):j,\bullet}_{(K:\bullet):p,\bullet} \cdot [W]^{(K:i)}_{p,q}\\
\end{align*}
We define the pseudocode for each term:
\begin{align*}
    &\text{For} \ \Phi^{(K:\bullet):j,\bullet}_{(K:\bullet):p,\bullet} \cdot [W]^{(K:i)}_{p,q}, \text{ with } [W]^{(K:i)}_{p,q} \ \text{of shape } [b, d, h, D, D_k] \ \text{and} \ \Phi^{(K:\bullet):j,\bullet}_{(K:\bullet):p,\bullet} \ \text{of shape } [e, d, D, D] \\
    &\quad \text{Corresponding pseudocode:} \ \texttt{einsum}(bdhpk, edjp \rightarrow behjk)
\end{align*}

\subsubsection{$[E(W)]^{(V:i)}_{j,k}$ Pseudocode}

From the formula:
\begin{align*}
    [E(W)]^{(V:i)}_{j,k} &= \Phi^{(V:\bullet):j,\bullet}_{(V:\bullet):p,\bullet} \cdot [W]^{(V:i)}_{p,q}\\
\end{align*}
We define the pseudocode for each term:
\begin{align*}
    &\text{For} \ \Phi^{(V:\bullet):j,\bullet}_{(V:\bullet):p,\bullet} \cdot [W]^{(V:i)}_{p,q}, \text{ with } [W]^{(V:i)}_{p,q} \ \text{of shape } [b, d, h, D, D_v] \ \text{and} \ \Phi^{(V:\bullet):j,\bullet}_{(V:\bullet):p,\bullet} \ \text{of shape } [e, d, D, D] \\
    &\quad \text{Corresponding pseudocode:} \ \texttt{einsum}(bdhpk, edjp \rightarrow behjk)
\end{align*}

\subsubsection{$[E(W)]^{(O:i)}_{j,k}$ Pseudocode}
From the formula:
\begin{align*}
[E(W)]^{(O:i)}_{j,k} &=  \left(\Phi^{(O:\bullet):\bullet,\bullet}_{(O:\bullet): \bullet,\bullet}\right)_1 \cdot 
        \sum\limits_{k=1}^{D} [W]^{(O:i)}_{j,k}   + \left(\Phi^{(O:\bullet):\bullet,\bullet}_{(O:\bullet):\bullet,\bullet}\right)_2 \cdot [W]^{(O:i)}_{j,k}
\end{align*}
We define the pseudocode for each term:
\begin{align*}
    &\text{For}\left(\Phi^{(O:\bullet):\bullet,\bullet}_{(O:\bullet):\bullet,\bullet}\right)_1 \cdot \sum\limits_{k=1}^{D} [W]^{(O:i)}_{j,k}, \text{ with } [W]^{(O:i)}_{j,k} \ \text{of shape } [b, d, h, D_v, D] \ \text{and} \ \left(\Phi^{(O:\bullet):\bullet,\bullet}_{(O:\bullet):\bullet,\bullet}\right)_1 \ \text{of shape } [e, d] \\
    &\quad \text{Corresponding pseudocode:} \ \texttt{einsum}(bdhjk, ed \rightarrow behj) \ \texttt{.unsqueeze}(-1) \\ 
    &\text{For}\left(\Phi^{(O:\bullet):\bullet,\bullet}_{(O:\bullet):\bullet,\bullet}\right)_2 \cdot [W]^{(O:i)}_{j,k}, \text{ with } [W]^{(O:i)}_{j,k} \ \text{of shape } [b, d, h, D_v, D] \ \text{and} \ \left(\Phi^{(O:\bullet):\bullet,\bullet}_{(O:\bullet):\bullet,\bullet}\right)_2 \ \text{of shape } [e, d] \\
    &\quad \text{Corresponding pseudocode:} \ \texttt{einsum}(bdhjk, ed \rightarrow behjk)
\end{align*}

\subsubsection{$[E(W)]^{(A)}_{j,k}$ Pseudocode}


From the formula:
\begin{align*}
    [E(W)]^{(A)}_{j,k}
&= \sum\limits_{p=1}^{D} \sum\limits_{q=1}^{D} 
    \Phi^{(A):\bullet, \bullet}_{(QK,\bullet):p,q}  \left(\sum\limits_{s=1}^h [WW]^{{(QK,s )}}_{p,q} \right) \notag\\
&+ \sum\limits_{p=1}^{D} \sum\limits_{q=1}^{D} \left(\Phi^{(A):\bullet,\bullet}_{(VO,\bullet):p,\bullet}\right)_1 \left(\sum\limits_{s=1}^h [WW]^{(VO,s)}_{p,q} \right) \notag\\
&+ \sum\limits_{p=1}^{D} \left(\Phi^{(A):\bullet,\bullet}_{(VO,\bullet):p,\bullet}\right)_2 \left(\sum\limits_{s=1}^h [WW]^{(VO,s)}_{p,j} \right) \nonumber\\
& + \sum\limits_{p=1}^{D} \sum\limits_{q=1}^{D_A} \left(\Phi^{(A):\bullet, \bullet}_{(A):\bullet, \bullet}\right)_1 [W]^{(A)}_{p,q}
+ \sum\limits_{q=1}^{D_A} \left(\Phi^{(A):\bullet, \bullet}_{(A):\bullet, \bullet}\right)_2 [W]^{(A)}_{j,q} \notag\\
&+ \sum\limits_{p=1}^{D} \left(\Phi^{(A):\bullet, \bullet}_{(A):\bullet, \bullet}\right)_3 [W]^{(A)}_{p,k}
+ \left(\Phi^{(A):\bullet, \bullet}_{(A):\bullet, \bullet}\right)_4 [W]^{(A)}_{j,k} \notag\\
&+ \sum\limits_{p=1}^{D_A} \sum\limits_{q=1}^{D} \left(\Phi^{(A):\bullet,\bullet}_{(B):\bullet,q}\right)_1 [W]^{(B)}_{p,q}
+ \sum\limits_{q=1}^{D} \left(\Phi^{(A):\bullet,\bullet}_{(B):\bullet,q}\right)_2 [W]^{(B)}_{k,q} \nonumber\\
&+ \sum\limits_{q=1}^{D_A} \left(\Phi^{(A):\bullet,\bullet}_{(A):\bullet} \right )_1 [b]^{(A)}_q
+ \left(\Phi^{(A):\bullet,\bullet}_{(A):\bullet} \right)_2 [b]^{(A)}_k \notag \\
&+ \sum\limits_{q=1}^{D} 
    \Phi^{(A):\bullet, \bullet}_{(B):q} [b]^{(B)}_q
+ \Phi^{(A):\bullet, \bullet}
\end{align*}
We define the pseudocode for each term:
\begin{align*}
    &\text{For } \Phi^{(A):\bullet, \bullet}_{(QK,\bullet):p,q} \cdot \sum\limits_{s=1}^h [WW]^{{(QK,s )}}_{p,q}, \text{ with } [WW]^{(QK,s)}_{p,q} \ \text{of shape } [b, d, h, D, D] \ 
    \text{and} \ \Phi^{(A):\bullet, \bullet}_{(QK,\bullet):p,q} \ \\&\qquad\text{of shape } [e, d, D, D] \\
    &\quad \text{Corresponding pseudocode:} \ \texttt{einsum}(bdhpq, edpq \rightarrow be).unsqueeze(-1).unsqueeze(-1) \\
    
    &\text{For } \left(\Phi^{(A):\bullet,\bullet}_{(VO,\bullet):p,\bullet}\right)_1 \cdot \sum\limits_{s=1}^h [WW]^{(VO,s)}_{p,q}, \text{ with } [WW]^{(VO,s)}_{p,q} \ \text{of shape } [b, d, h, D, D] \ \text{and} \ \\&\qquad\left(\Phi^{(A):\bullet,\bullet}_{(VO,\bullet):p,\bullet}\right)_1 \ \text{of shape } [e, d, D] \\
    &\quad \text{Corresponding pseudocode:} \ \texttt{einsum}(bdhpq, edp \rightarrow be).unsqueeze(-1).unsqueeze(-1) \\
    
    &\text{For } \left(\Phi^{(A):\bullet,\bullet}_{(VO,\bullet):p,\bullet}\right)_2 \cdot \sum\limits_{s=1}^h [WW]^{(VO,s)}_{p,j}, \text{ with } [WW]^{(VO,s)}_{p,j} \ \text{of shape } [b, d, h, D, D] \ \text{and} \ \\&\qquad\left(\Phi^{(A):\bullet,\bullet}_{(VO,\bullet):p,\bullet}\right)_2 \ \text{of shape } [e, d, D] \\
    &\quad \text{Corresponding pseudocode:} \ \texttt{einsum}(bdhpj, edp \rightarrow bej).unsqueeze(-1) \\
    
    &\text{For } \left(\Phi^{(A):\bullet, \bullet}_{(A):\bullet, \bullet}\right)_1 \cdot [W]^{(A)}_{p,q}, \text{ with } [W]^{(A)}_{p,q} \ \text{of shape } [b, d, D, D_A] \ \text{and} \ \left(\Phi^{(A):\bullet, \bullet}_{(A):\bullet, \bullet}\right)_1 \ \text{of shape } [e, d] \\
    &\quad \text{Corresponding pseudocode:} \ \texttt{einsum}(bdpq, ed \rightarrow be).unsqueeze(-1).unsqueeze(-1) \\
    
    &\text{For } \left(\Phi^{(A):\bullet, \bullet}_{(A):\bullet, \bullet}\right)_2 \cdot [W]^{(A)}_{j,q}, \text{ with } [W]^{(A)}_{j,q} \ \text{of shape } [b, d, D, D_A] \ \text{and} \ \left(\Phi^{(A):\bullet, \bullet}_{(A):\bullet, \bullet}\right)_2 \ \text{of shape } [e, d] \\
    &\quad \text{Corresponding pseudocode:} \ \texttt{einsum}(bdjq, ed \rightarrow bej).unsqueeze(-1) \\
    
    &\text{For } \left(\Phi^{(A):\bullet, \bullet}_{(A):\bullet, \bullet}\right)_3 \cdot [W]^{(A)}_{p,k}, \text{ with } [W]^{(A)}_{p,k} \ \text{of shape } [b, d, D, D_A] \ \text{and} \ \left(\Phi^{(A):\bullet, \bullet}_{(A):\bullet, \bullet}\right)_3 \ \text{of shape } [e, d] \\
    &\quad \text{Corresponding pseudocode:} \ \texttt{einsum}(bdpk, ed \rightarrow bek).unsqueeze(-2) \\
    
    &\text{For } \left(\Phi^{(A):\bullet, \bullet}_{(A):\bullet, \bullet}\right)_4 \cdot [W]^{(A)}_{j,k}, \text{ with } [W]^{(A)}_{j,k} \ \text{of shape } [b, d, D, D_A] \ \text{and} \ \left(\Phi^{(A):\bullet, \bullet}_{(A):\bullet, \bullet}\right)_4 \ \text{of shape } [e, d] \\
    &\quad \text{Corresponding pseudocode:} \ \texttt{einsum}(bdjk, ed \rightarrow bejk) \\
    
    &\text{For } \left(\Phi^{(A):\bullet,\bullet}_{(B):\bullet,q}\right)_1 \cdot [W]^{(B)}_{p,q}, \text{ with } [W]^{(B)}_{p,q} \ \text{of shape } [b, d, D_A, D] \ \text{and} \ \left(\Phi^{(A):\bullet,\bullet}_{(B):\bullet,q}\right)_1 \ \text{of shape } [e, d, D] \\
    &\quad \text{Corresponding pseudocode:} \ \texttt{einsum}(bdpq, edq \rightarrow be).unsqueeze(-1).unsqueeze(-1) \\
    
    &\text{For } \left(\Phi^{(A):\bullet,\bullet}_{(B):\bullet,q}\right)_2 \cdot [W]^{(B)}_{k,q}, \text{ with } [W]^{(B)}_{k,q} \ \text{of shape } [b, d, D_A, D] \ \text{and} \ \left(\Phi^{(A):\bullet,\bullet}_{(B):\bullet,q}\right)_2 \ \text{of shape } [e, d, D] \\
    &\quad \text{Corresponding pseudocode:} \ \texttt{einsum}(bdkq, edq \rightarrow bek).unsqueeze(-2) \\
    
    &\text{For } \left(\Phi^{(A):\bullet,\bullet}_{(A):\bullet} \right )_1 \cdot [b]^{(A)}_q, \text{ with } [b]^{(A)}_q \ \text{of shape } [b, d, D_A] \ \text{and} \ \left(\Phi^{(A):\bullet,\bullet}_{(A):\bullet} \right)_1 \ \text{of shape } [e, d] \\
    &\quad \text{Corresponding pseudocode:} \ \texttt{einsum}(bdq, ed \rightarrow be).unsqueeze(-1).unsqueeze(-1) \\
    
    &\text{For } \left(\Phi^{(A):\bullet,\bullet}_{(A):\bullet} \right)_2 \cdot [b]^{(A)}_k, \text{ with } [b]^{(A)}_k \ \text{of shape } [b, d, D_A] \ \text{and} \ \left(\Phi^{(A):\bullet,\bullet}_{(A):\bullet} \right)_2 \ \text{of shape } [e, d] \\
    &\quad \text{Corresponding pseudocode:} \ \texttt{einsum}(bdk, ed \rightarrow bek).unsqueeze(-2) \\
    
    &\text{For } \Phi^{(A):\bullet, \bullet}_{(B):q} \cdot [b]^{(B)}_q, \text{ with } [b]^{(B)}_q \ \text{of shape } [b, d, D] \ \text{and} \ \Phi^{(A):\bullet, \bullet}_{(B):q} \ \text{of shape } [e, d, D] \\
    &\quad \text{Corresponding pseudocode:} \ \texttt{einsum}(bdq, edq \rightarrow be).unsqueeze(-1).unsqueeze(-1) \\
    
    &\text{For } \Phi^{(A):\bullet, \bullet} \text{ with shape } [e] \\
    &\quad \text{Corresponding pseudocode:} \ \texttt{einsum}(e \rightarrow e).unsqueeze(0).unsqueeze(-1).unsqueeze(-1)
\end{align*}

\subsubsection{$[E(b)]^{(A)}_{k}$ Pseudocode}


From the formula:
\begin{align*}
    [E(b)]^{(A)}_{k} 
    &= \sum\limits_{p=1}^{D} \sum\limits_{q=1}^{D} 
        \Phi^{(A):\bullet}_{(QK,\bullet):p,q} \left(\sum\limits_{s=1}^h[WW]^{{(QK,s)}}_{p,q}\right) \notag\\
    &+ \sum\limits_{p=1}^{D} \sum\limits_{q=1}^{D} \Phi^{(A):\bullet}_{(VO,\bullet):p,\bullet}\left(\sum\limits_{s=1}^h[WW]^{(VO,s)}_{p,q} \right) \nonumber\\
    &+ \sum\limits_{p=1}^{D} \sum\limits_{q=1}^{D_A} \left(\phi^{(A):\bullet}_{(A):\bullet, \bullet} \right)_1 [W]^{(A)}_{p,q} 
    + \sum\limits_{p=1}^{D} \left(\phi^{(A):\bullet}_{(A):\bullet, \bullet}\right)_{2} [W]^{(A)}_{p,k} \notag \\
    &+ \sum\limits_{p=1}^{D_A} \sum\limits_{q=1}^{D} \left(\Phi^{(A):\bullet}_{(B):\bullet,q} \right)_1 [W]^{(B)}_{p,q}
    + \sum\limits_{q=1}^{D} \left(\Phi^{(A):\bullet}_{(B):\bullet,q} \right)_2 [W]^{(B)}_{k,q} \nonumber\\
    &+ \sum\limits_{q=1}^{D_A} \left(\Phi^{(A):\bullet}_{(A):\bullet}\right)_1 [b]^{(A)}_q 
    + \left(\Phi^{(A):\bullet}_{(A):\bullet} \right)_2 [b]^{(A)}_k \notag \\
    &+ \sum\limits_{q=1}^{D} 
        \Phi^{(A):\bullet}_{(B):q} [b]^{(B)}_q + \Phi^{(A):\bullet}.
\end{align*}
We define the pseudocode for each term:
\begin{align*}
    &\text{For } \Phi^{(A):\bullet}_{(QK,\bullet):p,q} \cdot \left(\sum\limits_{s=1}^h[WW]^{{(QK,s)}}_{p,q}\right), \text{ with } [WW]^{(QK,s)}_{p,q} \ \text{of shape } [b, d, h, D, D] \ \text{and} \ \Phi^{(A):\bullet}_{(QK,\bullet):p,q} \ \\&\qquad\text{of shape } [e, d, D, D] \\
    &\quad \text{Corresponding pseudocode:} \ \texttt{einsum}(bdhpq, edpq \rightarrow be) \ \texttt{.unsqueeze}(-1) \\
    
    &\text{For } \Phi^{(A):\bullet}_{(VO,\bullet):p,\bullet} \cdot \left(\sum\limits_{s=1}^h[WW]^{(VO,s)}_{p,q}\right), \text{ with } [WW]^{(VO,s)}_{p,q} \ \text{of shape } [b, d, h, D, D] \ \text{and} \ \Phi^{(A):\bullet}_{(VO,\bullet):p,\bullet} \ \\&\qquad\text{of shape } [e, d, D] \\
    &\quad \text{Corresponding pseudocode:} \ \texttt{einsum}(bdhpq, edp \rightarrow be) \ \texttt{.unsqueeze}(-1) \\
    
    &\text{For } \left(\phi^{(A):\bullet}_{(A):\bullet, \bullet} \right)_1 \cdot [W]^{(A)}_{p,q}, \text{ with } [W]^{(A)}_{p,q} \ \text{of shape } [b, d, D, D_A] \ \text{and} \ \left(\phi^{(A):\bullet}_{(A):\bullet, \bullet}\right)_1 \ \text{of shape } [e, d] \\
    &\quad \text{Corresponding pseudocode:} \ \texttt{einsum}(bdpq, ed \rightarrow be) \ \texttt{.unsqueeze}(-1) \\
    
    &\text{For } \left(\phi^{(A):\bullet}_{(A):\bullet, \bullet}\right)_{2} \cdot [W]^{(A)}_{p,k}, \text{ with } [W]^{(A)}_{p,k} \ \text{of shape } [b, d, D, D_A] \ \text{and} \ \left(\phi^{(A):\bullet}_{(A):\bullet, \bullet}\right)_{2} \ \text{of shape } [e, d] \\
    &\quad \text{Corresponding pseudocode:} \ \texttt{einsum}(bdpk, ed \rightarrow bek) \\
    
    &\text{For } \left(\Phi^{(A):\bullet}_{(B):\bullet,q} \right)_1 \cdot [W]^{(B)}_{p,q}, \text{ with } [W]^{(B)}_{p,q} \ \text{of shape } [b, d, D_A, D] \ \text{and} \ \left(\Phi^{(A):\bullet}_{(B):\bullet,q}\right)_1 \ \text{of shape } [e, d, D] \\
    &\quad \text{Corresponding pseudocode:} \ \texttt{einsum}(bdpq, edq \rightarrow be) \ \texttt{.unsqueeze}(-1) \\
    
    &\text{For } \left(\Phi^{(A):\bullet}_{(B):\bullet,q}\right)_2 \cdot [W]^{(B)}_{k,q}, \text{ with } [W]^{(B)}_{k,q} \ \text{of shape } [b, d, D_A, D] \ \text{and} \ \left(\Phi^{(A):\bullet}_{(B):\bullet,q}\right)_2 \ \text{of shape } [e, d, D] \\
    &\quad \text{Corresponding pseudocode:} \ \texttt{einsum}(bdkq, edq \rightarrow bek) \\
    
    &\text{For } \left(\Phi^{(A):\bullet}_{(A):\bullet}\right)_1 \cdot [b]^{(A)}_q, \text{ with } [b]^{(A)}_{q} \ \text{of shape } [b, d, D_A] \ \text{and} \ \left(\Phi^{(A):\bullet}_{(A):\bullet}\right)_1 \ \text{of shape } [e, d] \\
    &\quad \text{Corresponding pseudocode:} \ \texttt{einsum}(bdq, ed \rightarrow be) \ \texttt{.unsqueeze}(-1) \\
    
    &\text{For } \left(\Phi^{(A):\bullet}_{(A):\bullet}\right)_2 \cdot [b]^{(A)}_{k}, \text{ with } [b]^{(A)}_{k} \ \text{of shape } [b, d, D_A] \ \text{and} \ \left(\Phi^{(A):k}_{(A):\bullet}\right)_2 \ \text{of shape } [e, d] \\
    &\quad \text{Corresponding pseudocode:} \ \texttt{einsum}(bdk, ed \rightarrow bek) \\
    
    &\text{For } \Phi^{(A):\bullet}_{(B):q} \cdot [b]^{(B)}_{q}, \text{ with } [b]^{(B)}_{q} \ \text{of shape } [b, d, D] \ \text{and} \ \Phi^{(A):\bullet}_{(B):q} \ \text{of shape } [e, d, D] \\
    &\quad \text{Corresponding pseudocode:} \ \texttt{einsum}(bdq, edq \rightarrow be) \ \texttt{.unsqueeze}(-1) \\
    
    &\text{For } \Phi^{(A):\bullet} \text{of shape} \ [e], \\
    &\quad \text{Corresponding pseudocode:} \ \texttt{einsum}(e \rightarrow e) \ \texttt{.unsqueeze}(0)\texttt{.unsqueeze}(-1)
\end{align*}

\subsubsection{$[E(W)]^{(B)}_{j,k}$ Pseudocode}

From the formula:
\begin{align*}
    [E(W)]^{(B)}_{j,k} &=\sum\limits_{p=1}^{D} \sum\limits_{q=1}^{D} \sum\limits_{s=1}^h 
        \Phi^{(B):\bullet,k}_{(QK,\bullet):p,q} \left ( [WW]^{{(QK,s)}}_{p,q} \right) \\   
    &+\sum\limits_{p=1}^{D} \sum\limits_{q=1}^{D} \sum\limits_{s=1}^h \Phi^{(B):\bullet,k}_{(VO,\bullet):p,\bullet} \left([WW]^{(VO,s)}_{p,q} \right) \nonumber\\
    &+ \sum\limits_{p=1}^{D} \sum\limits_{q=1}^{D_A} 
        \left(\Phi^{(B):\bullet,k}_{(A):\bullet,\bullet}\right)_1 [W]^{(A)}_{p,q} + \sum\limits_{p=1}^{D} \left(\Phi^{(B):\bullet,k}_{(A):\bullet,\bullet}\right)_2 [W]^{(A)}_{p,j} \\
    &+ \sum\limits_{p=1}^{D_A} \sum\limits_{q=1}^{D}  \left(
        \Phi^{(B):\bullet,k}_{(B):\bullet,q} \right)_1 [W]^{(B)}_{p,q} + \sum\limits_{q=1}^{D} \left(
        \Phi^{(B):\bullet,k}_{(B):\bullet,q} \right)_2  [W]^{(B)}_{j,q} \\
    &+ \sum\limits_{q=1}^{D_A} 
        \left(\Phi^{(B):\bullet,k}_{(A):\bullet}\right)_1 [b]^{(A)}_q + \left(\Phi^{(B):\bullet,k}_{(A):\bullet}\right)_2 [b]^{(A)}_j  \\
    &+ \sum\limits_{q=1}^{D} 
        \Phi^{(B):\bullet,k}_{(B):q} [b]^{(B)}_q
    + \Phi^{(B):\bullet,k}
\end{align*}
We define the pseudocode for each term:
\begin{align*}
    &\text{For } \Phi^{(B):\bullet,k}_{(QK,\bullet):p,q} \cdot [WW]^{{(QK,s)}}_{p,q}, \text{ with } [WW]^{(QK,s)}_{p,q} \ \text{of shape } [b, d, h, D, D] \ \text{and} \ \Phi^{(B):\bullet,k}_{(QK,\bullet):p,q} \ \\&\qquad\text{of shape } [e, d, D, D, D] \\
    &\quad \text{Corresponding pseudocode:} \ \texttt{einsum}(bdhpq, edkpq \rightarrow bek) \ \texttt{.unsqueeze}(-2) \\
    
    &\text{For } \Phi^{(B):\bullet,k}_{(VO,\bullet):p,\bullet} \cdot [WW]^{(VO,s)}_{p,q}, \text{ with } [WW]^{(VO,s)}_{p,q} \ \text{of shape } [b, d, h, D, D] \ \text{and} \ \Phi^{(B):\bullet,k}_{(VO,\bullet):p,\bullet} \ \\&\qquad\text{of shape } [e, d, D, D] \\
    &\quad \text{Corresponding pseudocode:} \ \texttt{einsum}(bdhpq, edkp \rightarrow bek) \ \texttt{.unsqueeze}(-2) \\
    
    &\text{For } \left(\Phi^{(B):\bullet,k}_{(A):\bullet,\bullet}\right)_1 \cdot [W]^{(A)}_{p,q}, \text{ with } [W]^{(A)}_{p,q} \ \text{of shape } [b, d, D, D_A] \ \text{and} \ \left(\Phi^{(B):\bullet,k}_{(A):\bullet,\bullet}\right)_1 \ \text{of shape } [e, d, D] \\
    &\quad \text{Corresponding pseudocode:} \ \texttt{einsum}(bdpq, edk \rightarrow bek) \ \texttt{.unsqueeze}(-2) \\
    
    &\text{For } \left(\Phi^{(B):\bullet,k}_{(A):\bullet,\bullet}\right)_2 \cdot [W]^{(A)}_{p,j}, \text{ with } [W]^{(A)}_{p,j} \ \text{of shape } [b, d, D, D_A] \ \text{and} \ \left(\Phi^{(B):\bullet,k}_{(A):\bullet,\bullet}\right)_2 \ \text{of shape } [e, d, D] \\
    &\quad \text{Corresponding pseudocode:} \ \texttt{einsum}(bdpj, edk \rightarrow bejk) \\
    
    &\text{For } \left(\Phi^{(B):\bullet,k}_{(B):\bullet,q} \right)_1 \cdot [W]^{(B)}_{p,q}, \text{ with } [W]^{(B)}_{p,q} \ \text{of shape } [b, d, D_A, D] \ \text{and} \ \left(\Phi^{(B):\bullet,k}_{(B):\bullet,q} \right)_1 \ \text{of shape } [e, d, D, D] \\
    &\quad \text{Corresponding pseudocode:} \ \texttt{einsum}(bdpq, edkq \rightarrow bek) \ \texttt{.unsqueeze}(-2) \\
    
    &\text{For } \left(\Phi^{(B):\bullet,k}_{(B):\bullet,q}\right)_2 \cdot [W]^{(B)}_{j,q}, \text{ with } [W]^{(B)}_{j,q} \ \text{of shape } [b, d, D_A, D] \ \text{and} \ \left(\Phi^{(B):\bullet,k}_{(B):\bullet,q}\right)_2 \ \text{of shape } [e, d, D, D] \\
    &\quad \text{Corresponding pseudocode:} \ \texttt{einsum}(bdjq, edkq \rightarrow bejk) \\
    
    &\text{For } \left(\Phi^{(B):\bullet,k}_{(A):\bullet}\right)_1 \cdot [b]^{(A)}_q, \text{ with } [b]^{(A)}_q \ \text{of shape } [b, d, D_A] \ \text{and} \ \left(\Phi^{(B):\bullet,k}_{(A):\bullet}\right)_1 \ \text{of shape } [e, d, D] \\
    &\quad \text{Corresponding pseudocode:} \ \texttt{einsum}(bdq, edk \rightarrow bek) \ \texttt{.unsqueeze}(-2) \\
    
    &\text{For } \left(\Phi^{(B):\bullet,k}_{(A):\bullet}\right)_2 \cdot [b]^{(A)}_j, \text{ with } [b]^{(A)}_j \ \text{of shape } [b, d, D_A] \ \text{and} \ \left(\Phi^{(B):\bullet,k}_{(A):\bullet}\right)_2 \ \text{of shape } [e, d, D] \\
    &\quad \text{Corresponding pseudocode:} \ \texttt{einsum}(bdj, edk \rightarrow bejk) \\
    
    &\text{For } \Phi^{(B):\bullet,k}_{(B):q} \cdot [b]^{(B)}_q, \text{ with } [b]^{(B)}_q \ \text{of shape } [b, d, D] \ \text{and} \ \Phi^{(B):\bullet,k}_{(B):q} \ \text{of shape } [e, d, D, D] \\
    &\quad \text{Corresponding pseudocode:} \ \texttt{einsum}(bdq, edkq \rightarrow bek) \ \texttt{.unsqueeze}(-2) \\
    
    &\text{For } \Phi^{(B):\bullet,k} \text{of shape} \ [e, D], \\
    &\quad \text{Corresponding pseudocode:} \ \texttt{einsum}(ek \rightarrow ek) \ \texttt{.unsqueeze}(0)
\end{align*}

\subsubsection{$[E(b)]^{(B)}_{k}$ Pseudocode}

From the formula:
\begin{align*}
[E(b)]^{(B)}_{k}
&= \sum\limits_{p=1}^{D} \sum\limits_{q=1}^{D} \sum\limits_{s=1}^h \Phi^{(B):k}_{(QK,\bullet):p,q} \left( [WW]^{{(QK,s)}}_{p,q} \right) \notag \\
&+ \sum\limits_{p=1}^{D} \sum\limits_{q=1}^{D} \sum\limits_{s=1}^h 
\Phi^{(B):k}_{(VO,\bullet):p,\bullet}\left([WW]^{(VO,s)}_{p,q} \right) \nonumber\\
& + \sum\limits_{p=1}^{D} \sum\limits_{q=1}^{D_A} 
    \Phi^{(B):k}_{(A):\bullet, \bullet} [W]^{(A)}_{p,q} 
+ \sum\limits_{p=1}^{D_A} \sum\limits_{q=1}^{D} 
    \Phi^{(B):k}_{(B):\bullet,q} [W]^{(B)}_{p,q} \nonumber\\
&+ \sum\limits_{q=1}^{D_A} 
    \Phi^{(B):k}_{(A):\bullet} [b]^{(A)}_q + \sum\limits_{q=1}^{D} 
    \Phi^{(B):k}_{(B):q} [b]^{(B)}_q + \Phi^{(B):k}
\end{align*}
We define the pseudocode for each term:
\begin{align*}
    &\text{For } \Phi^{(B):k}_{(QK,\bullet):p,q} \cdot [WW]^{{(QK,s)}}_{p,q}, \text{ with } [WW]^{{(QK,s)}}_{p,q} \ \text{of shape } [b, d, h, D, D] \ \text{and} \ \Phi^{(B):k}_{(QK,\bullet):p,q} \ \\&\qquad\text{of shape } [e, d, D, D, D] \\
    &\quad \text{Corresponding pseudocode:} \ \texttt{einsum}(bdhpq, edkpq \rightarrow bek) \\
    
    &\text{For } \Phi^{(B):k}_{(VO,\bullet):p,\bullet} \cdot [WW]^{{(VO,s)}}_{p,q}, \text{ with } [WW]^{{(VO,s)}}_{p,q} \ \text{of shape } [b, d, h, D, D] \ \text{and} \ \Phi^{(B):k}_{(VO,\bullet):p,\bullet} \ \\&\qquad\text{of shape } [e, d, D, D] \\
    &\quad \text{Corresponding pseudocode:} \ \texttt{einsum}(bdhpq, edkp \rightarrow bek) \\
    
    &\text{For } \Phi^{(B):k}_{(A):\bullet, \bullet} \cdot [W]^{(A)}_{p,q}, \text{ with } [W]^{(A)}_{p,q} \ \text{of shape } [b, d, D, D_A] \ \text{and} \ \Phi^{(B):k}_{(A):\bullet, \bullet} \ \text{of shape } [e, d, D] \\
    &\quad \text{Corresponding pseudocode:} \ \texttt{einsum}(bdpq, edk \rightarrow bek) \\
    
    &\text{For } \Phi^{(B):k}_{(B):\bullet,q} \cdot [W]^{(B)}_{p,q}, \text{ with } [W]^{(B)}_{p,q} \ \text{of shape } [b, d, D_A, D] \ \text{and} \ \Phi^{(B):k}_{(B):\bullet,q} \ \text{of shape } [e, d, D, D] \\
    &\quad \text{Corresponding pseudocode:} \ \texttt{einsum}(bdpq, edkq \rightarrow bek) \\
    
    &\text{For } \Phi^{(B):k}_{(A):\bullet} \cdot [b]^{(A)}_q, \text{ with } [b]^{(A)}_q \ \text{of shape } [b, d, D_A] \ \text{and} \ \Phi^{(B):k}_{(A):\bullet} \ \text{of shape } [e, d, D] \\
    &\quad \text{Corresponding pseudocode:} \ \texttt{einsum}(bdq, edk \rightarrow bek) \\
    
    &\text{For } \Phi^{(B):k}_{(B):q} \cdot [b]^{(B)}_q, \text{with } [b]^{(B)}_q \ \text{of shape } [b, d, D] \ \text{and} \ \Phi^{(B):k}_{(B):q} \ \text{of shape } [e, d, D, D] \\
    &\quad \text{Corresponding pseudocode:} \ \texttt{einsum}(bdq, edkq \rightarrow bek) \\
    
    &\text{For } \Phi^{(B):k} \text{of shape } [e, D], \\
    &\quad \text{Corresponding pseudocode:} \ \texttt{einsum}(ek \rightarrow ek) \ \texttt{.unsqueeze}(0)
\end{align*}

\subsection{Invariant Layers Pseudocode}
\label{appendix:pseudocode_invariant}

From the formula:
\begin{align*}
    I(U) &= \sum\limits_{p=1}^{D} \sum\limits_{q=1}^{D} 
        \Phi_{(QK,\bullet):p,q} \cdot \left( \sum\limits_{s=1}^h [WW]^{{(QK,s)}}_{p,q} \right ) \notag \\
    &+ \sum\limits_{p=1}^{D} \sum\limits_{q=1}^{D} 
        \Phi_{(VO,\bullet):p,\bullet} \cdot \left( \sum\limits_{s=1}^h [WW]^{(VO,s)}_{p,q} \right ) \nonumber\\
    &\quad + \sum\limits_{p=1}^{D} \sum\limits_{q=1}^{D_A} 
        \Phi_{(A):\bullet, \bullet} \cdot [W]^{(A)}_{p,q} + \sum\limits_{p=1}^{D_A} \sum\limits_{q=1}^{D} 
        \Phi_{(B):\bullet,q} \cdot [W]^{(B)}_{p,q} \nonumber\\
    &\quad + \sum\limits_{q=1}^{D_A}    \Phi_{(A):\bullet} \cdot [b]^{(A)}_q + \sum\limits_{q=1}^{D} \Phi_{(B):q} \cdot [b]^{(B)}_q+ \Phi_1
\end{align*}
    
    
    
    
    
    
We define the pseudocode for each term:
\begin{align*}
    &\text{For } \Phi_{(QK,\bullet):p,q} \cdot [WW]^{(QK,s)}_{p,q}, \text{ with } [WW]^{(QK,s)}_{p,q} \ \text{of shape } [b, d, h, D, D] \ \text{and} \ \Phi_{(QK,\bullet):p,q} \ \\&\qquad\text{of shape } [e, d, D, D, D'] \\
    &\quad \text{Corresponding pseudocode:} \ \texttt{einsum}(bdhpq, edpqk \rightarrow bek) \\
    
    &\text{For } \Phi_{(VO,\bullet):p,\bullet} \cdot [WW]^{(VO,s)}_{p,q}, \text{ with } [WW]^{(VO,s)}_{p,q} \ \text{of shape } [b, d, h, D, D] \ \text{and} \ \Phi_{(VO,\bullet):p,\bullet} \ \\&\qquad\text{of shape } [e, d, D, D'] \\
    &\quad \text{Corresponding pseudocode:} \ \texttt{einsum}(bdhpq, edpk \rightarrow bek) \\
    
    &\text{For } \Phi_{(A):\bullet, \bullet} \cdot [W]^{(A)}_{p,q}, \text{ with } [W]^{(A)}_{p,q} \ \text{of shape } [b, d, D, D_A] \ \text{and} \ \Phi_{(A):\bullet, \bullet} \ \text{of shape } [e, d, D'] \\
    &\quad \text{Corresponding pseudocode:} \ \texttt{einsum}(bdpq, edk \rightarrow bek) \\
    
    &\text{For } \Phi_{(B):\bullet,q} \cdot [W]^{(B)}_{p,q}, \text{ with } [W]^{(B)}_{p,q} \ \text{of shape } [b, d, D_A, D] \ \text{and} \ \Phi_{(B):\bullet,q} \ \text{of shape } [e, d, D, D'] \\
    &\quad \text{Corresponding pseudocode:} \ \texttt{einsum}(bdpq, edqk \rightarrow bek) \\
    
    &\text{For } \Phi_{(A):\bullet} \cdot [b]^{(A)}_q, \text{ with } [b]^{(A)}_q \ \text{of shape } [b, d, D_A] \ \text{and} \ \Phi_{(A):\bullet} \ \text{of shape } [e, d, D'] \\
    &\quad \text{Corresponding pseudocode:} \ \texttt{einsum}(bdq, edk \rightarrow bek) \\
    
    &\text{For } \Phi_{(B):q} \cdot [b]^{(B)}_q, \text{with } [b]^{(B)}_q \ \text{of shape } [b, d, D] \ \text{and} \ \Phi_{(B):q} \ \text{of shape } [e, d, D, D'] \\
    &\quad \text{Corresponding pseudocode:} \ \texttt{einsum}(bdq, edqk \rightarrow bek) \\
    
    &\text{For } \Phi_1 \text{of shape } [e, D'], \\
    &\quad \text{Corresponding pseudocode:} \ \texttt{einsum}(ek \rightarrow ek) \ \texttt{.unsqueeze}(0)
\end{align*}



\end{document}